\def\TimesFont{} 
\newtheorem{theorem}{Theorem}
\newtheorem{lemma}{Lemma}
\newtheorem{corollary}{Corollary}
\newtheorem{proposition}{Proposition}
\theoremstyle{definition}
\newtheorem{remark}{Remark}
\newtheorem{definition}{Definition}
\newtheorem*{assumption*}{\assumptionnumber}
\providecommand{\assumptionnumber}{}
\newenvironment{assumption}[2]{
  \renewcommand{\assumptionnumber}{Assumption #1#2}
  \begin{assumption*}
  \protected@edef\@currentlabel{#1#2}}
{\end{assumption*}}
\newcommand*\rel@kern[1]{\kern#1\dimexpr\macc@kerna}
\newcommand*\widebar[1]{%
  \begingroup
  \def\mathaccent##1##2{%
    \rel@kern{0.8}%
    \overline{\rel@kern{-0.8}\macc@nucleus\rel@kern{0.2}}%
    \rel@kern{-0.2}%
  }%
  \macc@depth\@ne
  \let\math@bgroup\@empty \let\math@egroup\macc@set@skewchar
  \mathsurround\z@ \frozen@everymath{\mathgroup\macc@group\relax}%
  \macc@set@skewchar\relax
  \let\mathaccentV\macc@nested@a
  \macc@nested@a\relax111{#1}%
  \endgroup
}
\newcommand{\argmin}{\mathop{\mathrm{argmin}}}
\newcommand{\minimize}{\mathop{\mathrm{minimize}}}
\def\R{\mathbb{R}}
\def\Z{\mathbb{Z}}
\def\E{\mathbb{E}}
\def\P{\mathbb{P}}
\def\T{\mathsf{T}}
\def\Cov{\mathrm{Cov}}
\def\Var{\mathrm{Var}}
\def\half{\frac{1}{2}}
\def\th{^{\text{th}}}
\def\df{\mathrm{df}}
\def\dim{\mathrm{dim}}
\def\row{\mathrm{row}}
\def\nul{\mathrm{null}}
\def\nuli{\mathrm{nullity}}
\def\spa{\mathrm{span}}
\def\sign{\mathrm{sign}}
\def\hf{\hat{f}}
\def\htheta{\hat{\theta}}
\def\cE{\mathcal{E}}
\def\cF{\mathcal{F}}
\def\cH{\mathcal{H}}
\def\cN{\mathcal{N}}
\def\cZ{\mathcal{Z}}
\newcommand{\Ebb}{\mathbb{E}}   
\newcommand{\Reals}{\mathbb{R}} 
\newcommand{\wt}[1]{\widetilde{#1}}
\newcommand{\wh}[1]{\widehat{#1}}
\newcommand{\mc}[1]{\mathcal{#1}}
\newcommand{\Rd}{\Reals^d}
\newcommand{\wb}[1]{\widebar{#1}}
\newcommand{\1}{1} 
\DeclareFontFamily{U}{mathx}{\hyphenchar\font45}
\DeclareFontShape{U}{mathx}{m}{n}{<-> mathx10}{}
\DeclareSymbolFont{mathx}{U}{mathx}{m}{n}
\DeclareMathAccent{\wc}{0}{mathx}{"71}
\DeclareMathOperator\DTV{\mathrm{DTV}}
\DeclareMathOperator\TV{\mathrm{TV}}
\DeclareMathOperator\BV{\mathrm{BV}}
\DeclareMathOperator\diver{\mathrm{div}}
\DeclareMathOperator{\proj}{\mathrm{proj}}
\def\Leb{\mu}  
\def\Part{V}
\def\Xint#1{\mathchoice
{\XXint\displaystyle\textstyle{#1}}%
{\XXint\textstyle\scriptstyle{#1}}%
{\XXint\scriptstyle\scriptscriptstyle{#1}}%
{\XXint\scriptscriptstyle\scriptscriptstyle{#1}}%
\!\int}
\def\XXint#1#2#3{{\setbox0=\hbox{$#1{#2#3}{\int}$ }
\vcenter{\hbox{$#2#3$ }}\kern-.6\wd0}}
\def\dashint{\Xint-}
\def\AvInt{\dashint}
\def\Vor{\mathrm{V}} 
\def\Eps{\varepsilon}
\def\kNN{k}
\def\WVor{\mathrm{V}}
\def\nVor{{n, \Vor}}
\def\nWVor{{\WVor}}
\def\nEps{{\varepsilon}}
\def\nKnn{{k}}
\def\PenMat{D}
\def\SurPenMat{T}
\def\SqAvMat{A}
\def\AvMat{\bar A}
\def\CellMax{M}
\def\ErrVec{{z_{1:n}}}
\def\Cell{\mathcal{C}}
\def\mn{\mathrm{min}}
\def\mx{\mathrm{max}}
\def\EpsPenMat{{\PenMat^\nEps}}
\def\KnnPenMat{{\PenMat^\nKnn}}
\def\CVorPenMat{{\tilde{\PenMat}^{\nWVor}}}
\def\UVorPenMat{{\check{\PenMat}^{\nWVor}}}
\def\XDom{\Omega}
\def\XSet{\mathscr{X}}
\def\one{{1}}
\def\cC{\mathcal{C}}
\def\per{\mathrm{per}}
\title{The Voronoigram: Minimax Estimation of Bounded Variation Functions From 
  Scattered Data}
\author{Addison J.\ Hu \and Alden Green \and Ryan J.\ Tibshirani}
\date{}
\begin{document}
\maketitle

\begin{abstract}
We consider the problem of estimating a multivariate function $f_0$ of bounded
variation (BV), from noisy observations $y_i = f_0(x_i) + z_i$ made at random
design points $x_i \in \R^d$, $i=1,\ldots,n$. We study an estimator that forms
the Voronoi diagram of the design points, and then solves an optimization
problem that regularizes according to a certain discrete notion of total
variation (TV): the sum of weighted absolute differences of parameters
$\theta_i,\theta_j$ (which estimate the function values $f_0(x_i),f_0(x_j)$) at
all neighboring cells $i,j$ in the Voronoi diagram. This is seen to be
equivalent to a variational optimization problem that regularizes according to
the usual continuum (measure-theoretic) notion of TV, once we restrict the
domain to functions that are piecewise constant over the Voronoi diagram. 

The regression estimator under consideration hence performs (shrunken) local
averaging over adaptively formed unions of Voronoi cells, and we refer to it as
the \emph{Voronoigram}, following the ideas in \citet{koenker2005quantile},   
and drawing inspiration from Tukey's \emph{regressogram}
\citep{tukey1961curves}. Our contributions in this paper span both the 
conceptual and theoretical frontiers: we discuss some of the unique properties
of the Voronoigram in comparison to TV-regularized estimators that use other 
graph-based discretizations; we derive the asymptotic limit of the Voronoi TV
functional; and we prove that the Voronoigram is minimax rate optimal (up to log
factors) for estimating BV functions that are essentially bounded.   

\end{abstract}

\section{Introduction}
\label{sec:introduction}
Consider a standard nonparametric regression setting, given observations
$(x_i,y_i)\in\XDom\times\R$, $i=1,\dots,n$, for an open and connected subset 
$\XDom$ of $\R^d$, and with
\begin{equation}
\label{eq:model}
y_i = f_0(x_i) + z_i, \quad i=1,\dots,n,
\end{equation}
for i.i.d.\ mean zero stochastic errors $z_i$, $i=1,\dots,n$. We are interested
in estimating the function $f_0$ under the working assumption that $f_0$
adheres to a certain notion of smoothness. A traditional smoothness assumption 
on $f_0$ involves its integrated squared derivatives, for example, the
assumption that 
\[
\int_\XDom \sum_{\|\alpha\|_1=2} (D^\alpha f)^2(x) \, dx 
\]
is small, where $\alpha = (\alpha_1,\dots,\alpha_d) \in \Z_+^d$ is a
multi-index and we write \smash{$D^\alpha = \frac{\partial^{\alpha_1}}{\partial 
    x_1} \dots \frac{\partial^{\alpha_d}}{\partial x_d}$} for the corresponding 
mixed partial derivative operator. This is the notion of smoothness
that underlies the celebrated \emph{smoothing spline} estimator in the
univariate case $d=1$ \citep{schoenberg1964spline} and the \emph{thin-plate
  spline} estimator when $d=2$ or 3 \citep{duchon1977splines}. We also note that
assuming $f_0$ is smooth in the sense of the above display is known as
second-order $L^2$ Sobolev smoothness (where we interpret $D^\alpha f$ as a weak  
derivative). 

Smoothing splines and thin-plate splines are quite popular and come with a
number of advantages. However, one shortcoming of using these methods, i.e., to
using the working model of Sobolev smoothness, is that it does not permit $f_0$
to have discontinuities, which limits its applicability. More broadly, smoothing
splines and thin-plate splines do not fare well when the estimand $f_0$
possesses heterogeneous smoothness, meaning that $f_0$ is more smooth at some
parts of its domain $\XDom$ and more wiggly at others. This motivates us to
consider regularity measured by the \emph{total variation} (TV) seminorm
\begin{equation}
\label{eq:total_variation}
\TV(f; \XDom) = \sup\left\{
    \int_\XDom f(x) \diver\phi(x) \, dx : \phi\in C_c^1(\XDom;\R^d), \,  
    \|\phi(x)\|_2 \leq 1 \; \text{for all $x\in\XDom$} \right\},
\end{equation}
where \smash{$C_c^1(\XDom;\R^d)$} denotes the space of continuously 
differentiable compactly supported functions from $\XDom$ to $\R^d$, and 
we use \smash{$\diver\phi = \sum_{i=1}^d \partial\phi_i/\partial_{x_i}$} for
the divergence of $\phi = (\phi_1, \dots, \phi_d)$. Accordingly, we define the
\emph{bounded variation} (BV) class on $\XDom$ by 
\[
\BV(\XDom) = \{ f \in L^1(\XDom): \TV(f; \XDom) < \infty \}, 
\] 
to contain all $L^1(\XDom)$ functions with finite TV. The definition in
\eqref{eq:total_variation} is often called the measure-theoretic definition of
multivariate TV. For simplicity we will often drop the notational dependence on 
$\XDom$ and simply write this as $\TV(f)$. This definition may appear
complicated at first, but it admits a few natural interpretations, which we
present next to help build intuition.    

\subsection{Perspectives on total variation}

Below are three perspectives on total variation. The first two reveal the way 
that TV acts on special types of functions; the third is a general equivalent
form of TV.  

\paragraph{Smooth functions.}

If $f$ is (weakly) differentiable with (weak) gradient \smash{$\nabla f = 
(\frac{\partial f}{\partial x_1}, \dots, \frac{\partial f}{\partial x_d})$},
then  
\begin{equation}
\label{eq:total_variation_smooth}
\TV(f) =  \int_\XDom \| \nabla f(x) \|_2 \, dx,
\end{equation}
provided that the right-hand here is well-defined and finite. Consider the
difference between this and the first-order $L^2$ Sobolev seminorm
\[
\int_{\XDom} \sum_{\|\alpha\|_1=1} (D^\alpha f)^2(x) \, dx 
= \int_{\XDom} \|\nabla f(x)\|_2^2 \, dx. 
\]
The latter uses the squared $\ell_2$ norm $\|\cdot\|_2^2$ in the integrand, 
whereas the former \eqref{eq:total_variation_smooth} uses the $\ell_2$ norm
$\|\cdot\|_2$. It turns out that this is a meaningful difference---one way to 
interpret this is as a difference between $L^2$ and $L^1$ regularity. Noting
that \smash{$\|x\|_1 \leq \sqrt{d} \|x\|_2$} for all $x \in \R^d$, the space
$\BV(\XDom)$ contains the first-order $L^1$ Sobolev space   
\[
W^{1,1}(\XDom) = \{ f \in L^1(\XDom): \int_{\XDom} \|\nabla f(x)\|_1 \, dx 
< \infty \}, 
\] 
which, loosely speaking, contains functions that can be more locally peaked and 
less evenly spread out (i.e., permits a greater degree of heterogeneity in
smoothness) compared to the first-order $L^2$ Sobolev space 
\[
W^{1,2}(\XDom) = \{ f \in L^2(\XDom): \int_{\XDom} \|\nabla f(x)\|_2^2 \, dx 
< \infty \}. 
\]      
It is important to emphasize, however, that $\BV(\XDom)$ is still much larger
than $W^{1,1}(\XDom)$, because it permits functions to have sharp
discontinuities. We discuss this next.

\paragraph{Indicator functions.}

If $S \subseteq \XDom$ is a set with locally finite perimeter, then the
indicator function $1_S$, which we define by $1_S(x) = 1$ for $x \in S$ 
and $0$ otherwise, satisfies 
\begin{equation}
\label{eq:total_variation_indicator}
\TV(1_S) =  \per(S),
\end{equation}
where $\per(S)$ is the perimeter of $S$ (or equivalently, the codimension 1 
Hausdorff measure of $\partial S$, the boundary of $S$). Thus, we see that that 
TV is tied to the geometry of the level sets of the function in question. 
Indeed, there is a precise sense in which this is true in full generality, as we
discuss next.   

\paragraph{Coarea formula.}

In general, for any $f \in \BV(\XDom)$, we have
\begin{equation}
\label{eq:total_variation_coarea}
\TV(f) = \int_{-\infty}^\infty \per\big(\{ x \in \XDom : f(x) > t \}\big) \, dt.
\end{equation}
This is known as the \emph{coarea formula} for BV functions (see, e.g., Theorem
5.9 in \citealp{evans2015measure}). It offers a highly intuitive picture of
what total variation is measuring: we take a slice through the graph of a
function $f$, calculate the perimeter of the set of points (projected down to
the $\XDom$-axis) that lie above this slice, and add up these perimeters over
all possible slices.

The coarea formula \eqref{eq:total_variation_coarea} also sheds light on why BV
functions are able to portray such a great deal of heterogeneous 
smoothness: all that matters is the total integrated amount of function growth,
according to the perimeter of the level sets, as we traverse the heights of
level sets. For example, if the perimeter has a component $\rho$ that persists
for a range of level set heights $[t, t+h]$, then this contributes the same
amount $\rho h$ to the TV as does a smaller perimeter component $\rho/100$
that persists for a larger range of level set heights $[t, t + 100h]$. To put it
differently, the former might represent a local behavior that is more spread 
out, and the latter a local behavior that is more peaked, but these two
behaviors can contribute the same amount to the TV in the end. Therefore, a ball
in the BV space---all $L^1$ functions $f$ such that $\TV(f) \leq r$---contains 
functions with a huge variety in local smoothness.  

\subsection{Why is estimating BV functions hard?}

Now that we have motivated the study of BV functions, let us turn towards the
problem of estimating a BV function from noisy samples. Given the centrality of
penalized empirical risk minimization in nonparametric regression, one might be
tempted to solve the TV-penalized variational problem   
\begin{equation}
\label{eq:tv_problem}
\minimize_{f \in \BV(\XDom)} \; \half \sum_{i=1}^{n} (y_i - f(x_i))^2 +  
\lambda \TV(f), 
\end{equation}
given data $(x_i, y_i)$, $i=1,\dots,n$ from the model \eqref{eq:model}, and
under the working assumption that $f$ has small TV. However, in short, solving
\eqref{eq:tv_problem} will ``not work'' in any dimension $d \geq 2$, in the
sense that it does not yield a well-defined estimator regardless of the choice
of tuning parameter $\lambda > 0$.  

When $d=1$, solving \eqref{eq:tv_problem} produces a celebrated estimator known
as the (univariate) \emph{TV denoising} estimator \citep{rudin1992nonlinear} or
the \emph{fused lasso} signal approximator \citep{tibshirani2005sparsity}. (More 
will be said about this shortly, under the related work subsection.) But for any
$d \geq 2$, problem \eqref{eq:tv_problem} is ill-posed, as the criterion does
not achieve its infimum. To see this, consider the function     
\[
  f_\epsilon = \sum_{i=1}^{n} y_i \cdot 1_{B(x_i, \epsilon)},
\]
where $B(x_i, \epsilon)$ denotes the closed $\ell_2$ ball of radius $\epsilon$
centered at $x_i$, and \smash{$1_{B(x_i, \epsilon)}$} denotes its indicator
function (which equals 1 on $B(x_i, \epsilon)$ and 0 outside of it). Now let us
examine the criterion in problem \eqref{eq:tv_problem}: for small enough
$\epsilon>0$, the function $f_\epsilon$ has a squared loss equal to 0, and has
TV penalty equal to $\lambda n c_d \epsilon^{d-1}$ (here $c_d>0$ is a constant
depending only on $d$). Hence, as $\epsilon \to 0$, the criterion value in
\eqref{eq:tv_problem} achieved by $f_\epsilon$ tends to 0. However, as $\epsilon
\to 0$, the function $f_\epsilon$ itself trivially approaches the zero function,
defined as $f(x) = 0$ for all $x$.\footnote{Just as with $L^p$ classes, elements
  in $\BV(\XDom)$ are actually only defined up to equivalence classes of
  functions. Hence, to make point evaluation well-defined in the random design 
  model \eqref{eq:model}, we must identify each equivalence class with a 
  representative; we use the \emph{precise representative}, which is defined at 
  almost every point $x$ by the limiting local average of a function around $x$;
  see Appendix \ref{app:precise_representative} for details. It is
  straightforward to see that the precise representative associated with
  $f_\epsilon$ converges to the zero function as $\epsilon \to 0$.}  
Note that this is true for any $\lambda > 0$, whereas the zero function
certainly cannot minimize the objective in \eqref{eq:tv_problem} for all
$\lambda > 0$. 

The problem here, informally speaking, is that the BV class is ``too big'' when 
$d \geq 2$; more formally, the evaluation operator is not continuous over the BV 
space---or in other words, convergence in BV norm\footnote{Traditionally defined
  by equipping the TV seminorm with the $L^1$ norm, as in \smash{$\|f\|_{\BV} =
    \|f\|_{L^1} + \TV(f)$}.}  
does not imply pointwise convergence---for $d \geq 2$. It is worth noting that
this problem is not specific to BV spaces and it also occurs with the $k\th$
order $L^p$ Sobolev space \smash{$W^{k,p}(\XDom) = \{ f \in L^p :
  \int_\XDom \sum_{\|\alpha\|_1=k} (D^\alpha f)^p(x) \, dx < \infty \}$}
when $pk \leq d$, 
which is called the subcritical regime. In the supercritical regime $pk > d$,
convergence in Sobolev norm implies pointwise convergence,\footnote{This is  
  effectively a statement about the everywhere continuity of the precise
  representative, which is a consequence of Morrey's inequality; see, e.g.,
  Theorem 4.10 in \citet{evans2015measure}.}   
but all bets are off when $pk \leq d$. Thus, just as the TV-penalized problem
\eqref{eq:tv_problem} is ill-posed for $d \geq 2$, the more familiar thin-plate
spline problem   
\[
\minimize_{f \in W^{2,2}(\XDom)} \; \half \sum_{i=1}^{n} (y_i - f(x_i))^2 + 
\lambda \int_\XDom \|\nabla^2 f(x)\|_F^2 \, dx
\]
is itself ill-posed when $d \geq 4$. (Here we use $\nabla^2 f(x)$ for the weak 
Hessian of $f$, and $\| \cdot \|_F$ for the Frobenius norm, so that the
second-order $L^2$ Sobolev seminorm can be written as
\smash{$\int_\XDom \sum_{\|\alpha\|_1=2} (D^\alpha f)^2(x) \, dx  
= \int_\XDom \|\nabla^2 f(x)\|_F^2 \, dx$}.)   

What can we do to circumvent this issue? Broadly speaking, previous approaches
from the literature can be stratified into two types. The first maintains the
smoothness assumption on $\TV(f_0)$ for the regression function $f_0$, but 
replaces the sampling model \eqref{eq:model} by a white noise model of the form 
\[
dY(x) = f_0(x) dx + \frac{\sigma}{\sqrt{n}} dW(x), \quad x \in \XDom, 
\]
where $dW$ is a Gaussian white noise process. Given this continuous-space
observation model, we can then replace the empirical loss term
\smash{$\sum_{i=1}^{n} (y_i - f(x_i))^2$} in \eqref{eq:tv_problem} by the
squared $L^2$ loss \smash{$\|Y - f\|_{L^2(\XDom)}^2 = \int_\XDom (Y(x) - f(x))^2
  \, dx$} (or some multiscale variant of this). The second type of approach
keeps the sampling model \eqref{eq:model}, but replaces the assumption on
$\TV(f_0)$ by an assumption on discrete total variation (which is based on the
evaluations of $f_0$ at the design points alone) of the form  
\[
\DTV(f_0) = \sum_{\{i,j\} \in E} w_{ij} |f_0(x_i) - f_0(x_j)|, 
\]
for an edge set $E$ and weights $w_{ij} \geq 0$. We then naturally replace the
penalty $\TV(f)$ in \eqref{eq:tv_problem} by $\DTV(f)$. More details on both
types of approaches will be given in the related work subsection.

The approach we take in the current paper sits in the middle, between the two
types. Like the first, we maintain a bona fide smoothness assumption on
$\TV(f_0)$, rather than a discrete version of TV. Like the second, we work in
the sampling model \eqref{eq:model}, and define an estimator by solving a
discrete version of \eqref{eq:tv_problem} which is always well posed, for any
dimension $d \geq 2$. In fact, the connections run deeper: the discrete problem
that we solve is not constructed arbitrarily, but comes from restricting the
domain in \eqref{eq:tv_problem} to a special finite-dimensional class of
functions, over which the penalty $\TV(f)$ in \eqref{eq:tv_problem} takes on an
equivalent discrete form.

\subsection{The Voronoigram}

This brings us to the central object in the current paper: an estimator defined 
by restricting the domain in the infinite-dimensional problem
\eqref{eq:tv_problem} to a finite-dimensional subspace, whose structure is 
governed by the Voronoi diagram of the design points $x_1,\dots,x_n \in 
\XDom$. In detail, let \smash{$\Part_i = \{ x \in \XDom: \|x_i - x\|_2 < \| x_j
  - x\| \}$} be the Voronoi cell\footnote{As we have defined it, each Voronoi
  cell is open, and thus a given function \smash{$f \in \cF^\Vor_n$} is not
  actually defined on the boundaries of the Voronoi cells. But this is a set of
  Lebesgue measure zero, and on this set it can be defined arbitrarily---any
  particular definition will not affect results henceforth.}   
associated with $x_i$, for $i=1,\dots,n$, and define
\[
\cF^\Vor_n = \spa\big\{ 1_{\Part_1}, \dots, 1_{\Part_n} \big\},
\]
where recall \smash{$1_{\Part_i}$} is the indicator function of $\Part_i$. In
words, \smash{$\cF^\Vor_n$} is a space of functions from $\XDom$ to $\R$ that 
are piecewise constant on the Voronoi diagram of $x_1,\dots,x_n$. (We remark
that this is most certainly a subspace of $\BV(\XDom)$, as each Voronoi cell has
locally finite perimeter; in fact, as we will see soon, the TV of each \smash{$f
  \in \cF^\Vor_n$} takes a simple and explicit closed form.) Now consider the  
finite-dimensional problem   
\begin{equation}
\label{eq:tv_voronoi}
\minimize_{f \in \cF^\Vor_n} \; \half \sum_{i=1}^{n} (y_i - f(x_i))^2 + 
\lambda \TV(f). 
\end{equation}
We call the solution to \eqref{eq:tv_voronoi} the \emph{Voronoigram} and 
denote it by \smash{$\hf^\Vor$}. This idea---to fit a piecewise constant
function to the Voronoi tessellation of the input domain $\XDom$---dates back to
(at least) \citet{koenker2005quantile}, where it was briefly proposed in Chapter
7 of this book (further discussion of related work will be given in Section 
\ref{sub:related_work}). It does not appear to have been implemented or studied
beyond this. Its name is inspired by Tukey's classic \emph{regressogram}
\citep{tukey1961curves}.     

Of course, there a many choices for a finite-dimensional subset of $\BV(\XDom)$
that we could have used for a domain restriction in \eqref{eq:tv_voronoi}. Why
\smash{$\cF^\Vor_n$}, defined by piecewise constant functions on the Voronoi 
diagram, as in \eqref{eq:tv_voronoi}? A remarkable feature of this choice is
that it yields an equivalent optimization problem
\begin{equation}
\label{eq:tv_voronoi_graph}
\minimize_{\theta \in \R^n} \; \half \sum_{i=1}^{n} (y_i - \theta_i)^2 +  
\lambda \hspace{-5pt} \sum_{\{i,j\} \in E^\Vor} w^\Vor_{ij} \cdot |\theta_i - 
\theta_j|, 
\end{equation}
for an edge set $E^\Vor$ defined by neighbors in the Voronoi graph, and weights
\smash{$w^\Vor_{ij}$} that measure the ``length'' of the shared boundary between
cells $\Part_i$ and $\Part_j$, to be defined precisely later (in Section
\ref{sub:voronoigram}). The equivalence between problems~\eqref{eq:tv_voronoi}
and~\eqref{eq:tv_voronoi_graph} sets $\theta_i = f(x_i)$, $i=1,\dots,n$, and is
driven by the following special fact: for such a pairing, whenever \smash{$f \in
\cF^\Vor_n$}, it holds (proved in Section \ref{sub:voronoigram}) that
\begin{equation}
\label{eq:tv_representation}
\TV(f) = \sum_{\{i,j\} \in E^\Vor} w^\Vor_{ij} \cdot |\theta_i - \theta_j|. 
\end{equation}
In this way we can view the Voronoigram as marriage between a
purely variational approach, which maintains the use of a continuum TV penalty
on a function $f$, and a purely discrete approach, which instead models
smoothness using a discrete TV penalty on a vector $\theta$ defined over a
graph. In short, the Voronoigram does both. 

\subsubsection{A first look at the Voronoigram}

From its equivalent discrete problem form \eqref{eq:tv_voronoi_graph}, we can
see that the penalty term drives the Voronoigram to have equal (or ``fused'')
evaluations at points $x_i$ and $x_j$ corresponding to neighboring cells in the
Voronoi tessellation. Generally, the larger the value of $\lambda \geq 0$, the
more neighboring evaluations will be fused together. Due to fact that each 
\smash{$f \in \cF^\Vor_n$} is constant over an entire Voronoi cell, this means
that the Voronoigram fitted function \smash{$\hf^\Vor$} is constant over 
adaptively chosen unions of Voronoi cells. Furthermore, based on what is known
about solutions of generalized lasso problems (details given in Section
\ref{sub:generalized_lasso}), we can express the fitted function here as  
\begin{equation}
\label{eq:voronoigram_sol}
\hf^\Vor = \sum_{k = 1}^{\hat{K}} (\bar{y}_k - \hat{s}_k) \cdot 1_{\hat{R}_k},    
\end{equation}
where \smash{$\hat{K}$} is the number of connected components that appear in the
solution \smash{$\htheta^\Vor$} over the Voronoi graph, \smash{$\hat{R}_k$} 
denotes a union of Voronoi cells associated with the $k\th$ connected component, 
\smash{$\bar{y}_k$} denotes the average of response points $y_i$ such that
\smash{$x_i \in \hat{R}_k$}; and \smash{$\hat{s}_k$} is a data-driven shrinkage
factor. To be clear, each of \smash{$\hat{K}$}, \smash{$\hat{R}_k$},
\smash{$\bar{y}_k$}, and \smash{$\hat{s}_k$} here are data-dependent
quantities---they fall out of the structure of the solution in problem
\eqref{eq:tv_voronoi_graph}.   

Thus, like the regressogram, the Voronoigram estimates the regression function
by fitting (shrunken) averages over local regions; but unlike the regressogram,
where the regions are fixed ahead of time, the Voronoigram is able to choose its
regions \emph{adaptively}, based on the geometry of the design points $x_i$   
(owing to the use of the Voronoi diagram) and on how much local variation is
present in the response points $y_i$ (a consideration inherent to the
minimization in \eqref{eq:tv_voronoi_graph}, which trades off between the loss 
and penalty summands).      

Figure \ref{fig:voronoigram_intro} gives a simple example of the Voronoigram and
its adaptive structure in action. 

\begin{figure}[htb]
\centering
\includegraphics[width=0.9\linewidth]{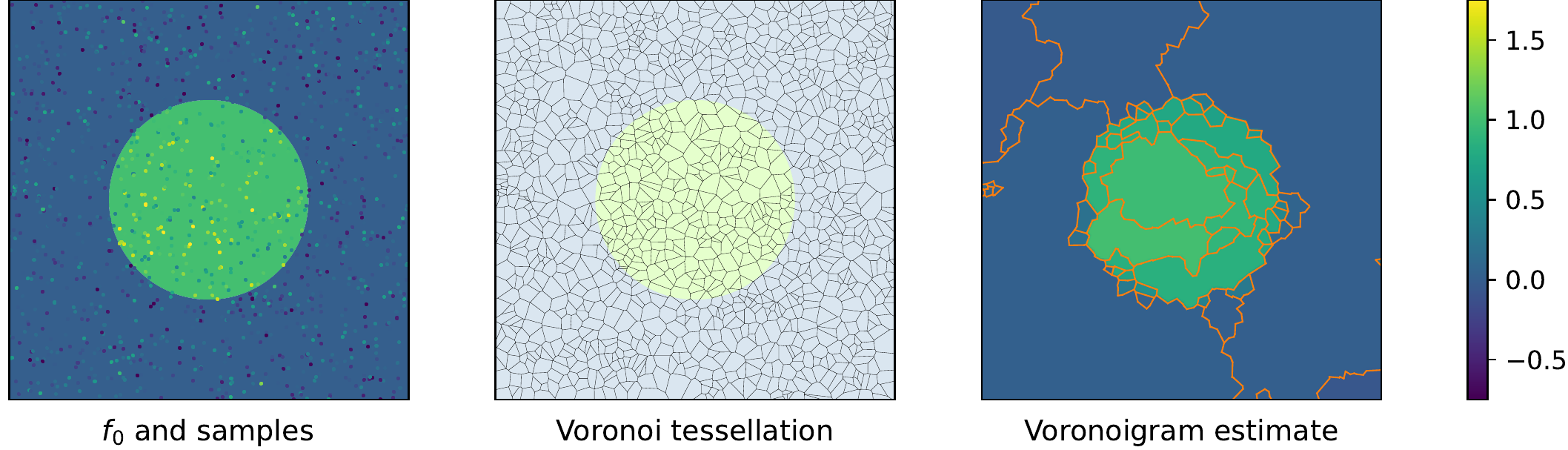}
\caption{\it
  A simple example of using the Voronoigram to estimate a function $f_0$, from  
  noisy observations. Left: $f_0$ and noisy observations made at $n=1274$ random
  points in $d=2$ dimensions. Center: the Voronoi tessellation, whose cells
  constitute the piecewise constant basis for the Voronoigram. Right: the
  Voronoigram estimate (at a certain choice of $\lambda$), with the resulting
  adaptively~chosen constant pieces---over which it performs
  averaging---outlined in orange.     
}
\label{fig:voronoigram_intro}
\end{figure}

\subsection{Summary of contributions}

Our primary practical and methodological contribution is to motivate and study
the Voronoigram as a nonparametric regression estimator for BV functions in a
multivariate scattered data (random design) setting, including comparing and
contrasting it to two related approaches: discrete TV regularization using
$\varepsilon$-neighborhood or $k$-nearest neighbor graphs. A summary is as
follows (a more detailed summary is given in Section
\ref{sub:graph_comparisons}). 

\begin{itemize}
\item The graph used by Voronoigram---namely, the Voronoi adjacency graph---is
  \emph{tuning-free}. This stands in contrast to $\varepsilon$-neighborhood or
  $k$-nearest neighbor graphs, which require a choice of a local radius 
  $\varepsilon$ or number of neighbors $k$, respectively.    

\item The Voronoigram penalty becomes \emph{density-free} in large samples,
  which is term we use to describe the fact that it converges to ``pure'' total  
  variation, independent of the density $p$ of the (random) design points   
  $x_1,\dots,x_n$. This follows from one of our main theoretical results
  (reiterated below), and it stands in contrast to the TV penalties based on
  $\varepsilon$-neighborhood and $k$-nearest neighbor graphs, which are known to
  asymptotically approach particular $p$-weighted versions of total variation.       

\item The Voronoigram estimator yields a natural passage from a discrete set of 
  fitted values \smash{$\hf^\Vor(x_i)$}, $i=1,\dots,n$ to a fitted function
  \smash{$\hf^\Vor$} defined over the entire input domain $\XDom$: this is
  simply given by local constant extrapolation of each fitted value
  \smash{$\hf^\Vor(x_i)$} to its containing Voronoi cell
  $\Part_i$. (Equivalently, \smash{$\hf^\Vor(x)$} is given by the
  1-nearest neighbor prediction rule based on \smash{$(x_i, \hf^\Vor(x_i))$},
  $i=1,\dots,n$.) Further, thanks to \eqref{eq:tv_representation}, we know that
  such an extrapolation method is \emph{complexity-preserving}: the discrete TV
  of \smash{$\htheta^\Vor_i$}, $i=1,\ldots,n$ is precisely the same as the
  continuum TV of the extrapolant \smash{$\hf^\Vor$}. Other graph-based TV
  regularization methods do not come with this property.
\end{itemize}

\noindent
On the theoretical side, our primary theoretical contributions are twofold,
summarized below.

\begin{itemize}
\item We prove that the Voronoi penalty functional, applied to evaluations of
  $f$ at i.i.d.\ design points $x_1,\ldots,x_n$ from a density $p$, converges to
  $\TV(f)$, as $n \to \infty$ (see Section \ref{sec:asymptotic_limits} for
  details). The fact that its asymptotic limit here is independent of $p$ is
  both important and somewhat remarkable.     

\item We carry out a comprehensive minimax analysis for $L^2$ estimation over 
  $\BV(\XDom)$. The highlights (Section \ref{sec:estimation_theory} gives
  details): for any fixed $d \geq 2$ and regression function $f_0$ with
  $\TV(f_0) \leq L$ and $\|f_0\|_{L^\infty} \leq M$ (where $L,M>0$ are
  constants), a modification of the Voronoigram estimator \smash{$\hf^\Vor$} in
  \eqref{eq:tv_voronoi}---defined by simply clipping small weights
  \smash{$w^\Vor_{ij}$} in the penalty term---converges to $f_0$ at the 
  squared $L^2$ rate $n^{-1/d}$ (ignoring log terms). We prove that this matches
  the minimax rate (up to log terms) for estimating a regression function $f_0$  
  that is bounded in TV and $L^\infty$. Lastly, we prove that an even simpler
  \emph{unweighted} Voronoigram estimator---defined by setting all edge weights
  in \eqref{eq:tv_voronoi} to unity---also obtains the optimal rate (up to log 
  terms), as do more standard estimators based on discrete TV regularization
  over $\varepsilon$-neighborhood and $k$-nearest neighbor graphs.      
\end{itemize}

\subsection{Related work}
\label{sub:related_work}

The work of \citet{mammen1997locally} marks an important early contribution
promoting and studying the use of TV as a regularization functional, in
univariate nonparametric regression. These authors considered a variational
problem similar to \eqref{eq:tv_problem} in dimension $d=1$, with a generalized
penalty $\TV(D^k f)$, the TV of the $k\th$ weak derivative $D^k f$ of $f$. They
proved that the solution is always a spline of degree $k$ (whose knots may lie
outside the design points if $k \geq 2$) and named the solution the
\emph{locally adaptive regression spline} estimator. A related, more recent idea 
is \emph{trend filtering}, proposed by \citet{steidl2006splines, kim2009trend}
and extended by \citet{tibshirani2014adaptive} to the case of arbitrary design
points. Trend filtering solves a discrete analog of the locally adaptive
regression spline problem, in which the penalty $\TV(D^k f)$ is replaced by the
discrete TV of the $k\th$ discrete derivative of $f$---based entirely on 
evaluations of $f$ at the design points. 

\citet{tibshirani2014adaptive} showed that trend filtering, like the
Voronoigram, admits a special duality between discrete and continuum
representations: the trend filtering optimization problem is in fact the
restriction of the variational problem for locally adaptive regression splines
to a particular finite-dimensional space of $k\th$ degree piecewise
polynomials. The key fact underlying this equivalence is that for any function
$f$ in this special piecewise polynomial space, its continuum penalty $\TV(D^k
f)$ equals its discrete penalty (discrete TV applied to its $k\th$ discrete
derivatives), a result analogous to the property \eqref{eq:tv_representation} of
functions \smash{$f \in \cF^\Vor_n$}. Thus we can view the Voronoigram a
generalization of this core idea, at the heart of trend filtering, to multiple
dimensions---albeit restricted the case $k=0$.

We note that similar ideas to locally adaptive regression splines and trend
filtering were around much earlier; see, e.g., \citet{schuette1978linear,
  koenker1994quantile}. \citet{tibshirani2022divided} provides an account of 
the history of these and related ideas in nonparametric smoothing, and also
makes connections to numerical analysis---the study of discrete splines in 
particular. It is worth highlighting that when $k=0$, the locally adaptive
regression spline and trend filtering estimators coincide, and reduce to a
method known as \emph{TV denoising}, which has even earlier roots in applied
mathematics (to be covered shortly).  

Beyond the univariate setting, there is still a lot of related work to cover
across different areas of the literature, and we break up our exposition into
parts accordingly.    

\paragraph{Continuous-space TV methods.}

The seminal work of \citet{rudin1992nonlinear} introduced TV regularization in
the context of signal and image denoising, and has spawned to a large body of 
follow-up work, mostly in the applied mathematics community, where this is
called the \emph{Rudin-Osher-Fatemi} (ROF) model for TV denoising. See, e.g.,
\citet{rudin1994total, vogel1996iterative, chambolle1997image,
  chan2000highorder}, among many others. In this literature, the observation   
model is traditionally continuous-time (univariate), or continuous-space
(multivariate)---this means that, rather than having observations at a finite
set of design points, we have an entire observation process (deterministic or
random), itself a function over a bounded and connected subset of $\R^d$. TV
regularization is then used in a variational optimization context, and
discretization usually occurs (if at all) as part of numerical optimization
schemes for solving such variational problems.

Statistical analysis in continuous-space observation models traditionally
assumes a white noise regression model, which has a history of study for
adaptive kernel methods (via Lepski's method) or wavelet methods in particular,
see, e.g., \citet{lepski1997optimal1, lepski1997optimal2, neumann2000multi, 
  kerkyacharian2001nonlinear, kerkyacharian2008nonlinear}. In this general area
of the literature, the recent paper of \citet{delalamo2021frameconstrained} is 
most related to our paper: these authors consider a multiresolution
TV-regularized estimator in a multivariate white noise model, and derive minimax
rates for $L^p$ estimation of TV and $L^\infty$ bounded functions. When $p=2$,  
they establish a minimax rate (ignoring log factors) of $n^{-1/d}$ on the
squared $L^2$ error scale, for arbitrary dimension $d \geq 2$, which agrees with
our results in Section \ref{sec:estimation_theory}.  

\paragraph{Discrete, lattice-based TV methods.}

Next we discuss purely discrete TV-regularization approaches, in which both the 
observation model and the penalty are discrete, and are based on function values
at a discrete sequence of points. Such approaches can be further delineated into
two subsets: models and methods based on discrete TV over lattices
(multi-dimensional grid graphs), and those based on discrete TV over geometric
graphs (such as $\varepsilon$-neighborhood or $k$-nearest neighbor graphs 
constructed from the design points). We cover the former first, and the latter
second.    

For lattice-based TV approaches, \citet{tibshirani2005sparsity} marks an
early influential paper studying discrete TV regularization over univariate and 
bivariate lattices, under the name \emph{fused lasso}.\footnote{The original
  work here proposed discrete TV regularization on the coefficients of regressor 
  variables that obey an inherent lattice structure. If we denote the matrix of 
  regressors by $X$, then a special case of this is simply $X=I$ (the identity
  matrix), which reduces to the TV denoising problem. In some papers, the
  resulting estimator is sometimes referred to as the fused lasso \emph{signal 
    approximator}.} This generated much follow-up work in statistics, e.g.,
\citet{friedman2007pathwise, hoefling2010path, tibshirani2011solution,   
  arnold2016efficient}, among many others. In terms of theory, we highlight
\citet{hutter2016optimal}, who established sharp upper bounds for the
estimation error of TV denoising over lattices, as well as
\citet{sadhanala2016total}, who certified optimality (up to log factors) by
giving minimax lower bounds. The rate here (ignoring log factors) for
estimating signals with bounded discrete TV, in mean squared error across the
lattice points, is $n^{-1/d}$. This holds for an arbitrary dimension $d \geq 2$,
and agrees with our results in Section
\ref{sec:estimation_theory}. Interestingly, \citet{sadhanala2016total} also
prove that the minimax linear rate over the discrete TV is class is
constant---which means that the best estimator that is linear in the response
vector $y \in \R^n$, of the form \smash{$\hf(x) = w(x)^\T y$}, is
\emph{inconsistent} in terms of its max risk (over signals with bounded discrete
TV). We do not pursue minimax linear analysis in the current paper but expect a
similar phenomenon to hold in our setting.

Lastly, we highlight \citet{sadhanala2017higher, sadhanala2021multivariate},
who proposed and studied an extension of trend filtering on lattices. Just like
univariate trend filtering, the multivariate version allows for an arbitrary
smoothness order $k \geq 0$, and reduces to TV denoising (or the fused lasso) on
a lattice for $k=0$. In the lattice setting, the theoretical picture is fairly
complete: for general $k,d$, denoting by $s = (k+1)/d$ the effective degree of
smoothness, the minimax rate for estimating signals with bounded $k\th$ order
discrete TV is $n^{-s}$ for $s \leq 1/2$, and \smash{$n^{-2s/(2s+1)}$} for $s >
1/2$. The minimax linear rates display a phase transition as well: constant for
$s \leq 1/2$, and \smash{$n^{-(2s-1)/(2s)}$} for $s > 1/2$. In our setting, we
do not currently have error analysis, let alone an estimator, for higher-order
notions of TV smoothness (general $k \geq 1$). With continuum TV and scattered 
data (random design), this is more challenging to formulate. However, the
lattice-based world continues to provides goalposts for what we would hope to
find in future work. 

\paragraph{Discrete, graph-based TV methods.}

Turning to graph-based TV regularization methods, as explained above, much of
the work in statistics stemmed from \citet{tibshirani2005sparsity}, and the
algorithmic and methodological contributions cited above already considers
general graph structures (beyond lattices). While one can view our proposal as a
special instance of TV regularization over a graph---the Voronoi adjacency
graph---it is important to recognize the independent and visionary work of
\citet{koenker2004penalized}, which served as motivation for us and intimately
relates to our work. These authors began with a triangulation of scattered
points in $d=2$ dimensions (say, the Delaunay triangulation) and defined a
nonparametric regression estimator called the \emph{triogram} by minimizing, 
over functions $f$ that are continuous and piecewise linear over the
triangulation, the squared loss of $f$ plus a penalty on the TV of the
\emph{gradient} of $f$. This is in some ways completely analogous to the
problem we study, except with one higher degree of smoothness. As we mentioned
in the introduction above, in \citet{koenker2005quantile} the author actually
proposes the Voronoigram as a lower-order analog of the triogram, but the method
has not, to our knowledge, been studied beyond this brief proposal.  

Outside of \citet{koenker2004penalized}, existing work involving TV
regularization on graphs relies on geometric graphs like
$\varepsilon$-neighborhood or $k$-nearest neighbor graphs. In terms of
theoretical analysis, the most relevant paper to discuss is the recent work of
\citet{padilla2020adaptive}: they study TV denoising on precisely these two
types of geometric graphs ($\varepsilon$-neighborhood and $k$-nearest neighbor
graphs), and prove that it achieves an estimation rate in squared $L^2$ error of
$n^{-1/d}$, but require that $f_0$ is more than TV bounded---they require it to
satisfy a certain piecewise Lipschitz assumption. Although we primarily study TV
regularization over the Voronoi adjacency graph, we build on some core analysis
ideas in \citet{padilla2020adaptive}. In doing so, we are able to prove that the
Voronoigram achieves the squared $L^2$ error rate $n^{-1/d}$, and we only
require that $\TV(f_0)$ and \smash{$\|f_0\|_{L^\infty}$} are bounded (with the
latter condition actually necessary for nontrivial estimation rates over BV
spaces when $d \geq 2$, as we explain in Section
\ref{sub:impossibility}). Furthermore, we are able to generalize the results of
\citet{padilla2020adaptive}, and we prove that the TV-regularized estimator over
$\varepsilon$-neighborhood and $k$-nearest neighbor graphs achieves the same
rate under the same assumptions, removing the need for the piecewise Lipschitz
condition. See Remark \ref{rem:padilla} for a more detailed discussion. We also
mention that earlier ideas from \citet{wang2016trend, padilla2018dfs} are
critical analysis tools in \citet{padilla2020adaptive} and critical for our
analysis as well. 

Finally, we would like to mention the recent and parallel work of
\citet{green2021minimax1, green2021minimax2}, which studies  
regularized estimators by discretizing Sobolev (rather than TV) functionals over 
neighborhood graphs, and establishes results on estimation error and minimaxity
entirely complementary to ours, but with respect to Sobolev smoothness classes.

\section{Methods and basic properties}
\label{sec:methods_properties}
In this section, we discuss some basic properties of our primary object of
study, the Voronoigram, and compare these properties to those of related  
methods. 

\subsection{The Voronoigram and TV representation}
\label{sub:voronoigram}

We start with a discussion of the property behind
\eqref{eq:tv_representation}---we call this a \emph{TV representation} property
of functions in \smash{$\cF^\Vor_n$}, as their total variation over $\XDom$ can
be represented exactly in terms of their evaluations over $x_1,\dots,x_n$.  

\begin{proposition}
\label{prop:tv_representation}
For any $x_1,\dots,x_n$, with Voronoi tessellation $\Part_1,\dots,\Part_n$, and
any \smash{$f \in \cF^\Vor_n =  \spa\{ 1_{\Part_1}, \dots, 1_{\Part_n} \}$} of
the form  
\[
  f = \sum_{i=1}^n \theta_i \cdot 1_{\Part_i},
\]
it holds that
\begin{equation}
\label{eq:tv_representation_explicit}
\TV(f) = \sum_{i,j=1}^n \cH^{d-1}( \partial \Part_i \cap \partial \Part_j )
\cdot |\theta_i - \theta_j|,  
\end{equation}
where $\cH^{d-1}$ denotes Hausdorff measure of dimension $d-1$, and
$\partial \Part_i$ denotes the boundary of $\Part_i$. 
\end{proposition}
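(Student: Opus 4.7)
My strategy is to reduce the continuum $\TV(f)$ to a sum over Voronoi faces by combining the coarea formula \eqref{eq:total_variation_coarea} with the indicator-function identity \eqref{eq:total_variation_indicator}. The first step is to observe that, because the Voronoi cells partition $\XDom$ (up to a Lebesgue-null boundary set), every upper level set of $f$ is a finite union of cells:
\[
\{x \in \XDom : f(x) > t\} = \bigcup_{i : \theta_i > t} \Part_i.
\]
Each $\Part_i$ is open, convex, and has locally finite perimeter, so this union has locally finite perimeter as well, and \eqref{eq:total_variation_indicator} applies.

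The core geometric step is to verify the face-counting identity
\[
\per\big(\{f > t\}\big) \;=\; \sum_{i : \theta_i > t}\; \sum_{j : \theta_j \leq t} \cH^{d-1}\!\big(\partial \Part_i \cap \partial \Part_j\big).
\]
Intuitively, an internal interface $\partial \Part_i \cap \partial \Part_j$ contributes to the boundary of the union precisely when $i$ and $j$ sit on opposite sides of the threshold $t$; otherwise the face is ``interior'' to the union (both in) or ``exterior'' (both out) and does not appear in $\partial\{f>t\}$. I would justify this either by an induction/inclusion-exclusion argument on the finite collection of cells included in the level set, or by invoking the standard BV identity $\per(E \cup F) = \per(E) + \per(F) - 2\cH^{d-1}(\partial^* E \cap \partial^* F)$ for sets of finite perimeter sharing a common reduced boundary, noting that for convex polyhedral Voronoi cells the reduced boundary coincides $\cH^{d-1}$-almost everywhere with the topological boundary on each shared face.

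Once the face identity is in hand, substituting into the coarea formula and swapping the (finite) sum with the integral gives
\[
\TV(f) \;=\; \sum_{i,j=1}^n \cH^{d-1}\!\big(\partial \Part_i \cap \partial \Part_j\big) \int_{-\infty}^\infty \mathbf{1}\{\theta_j \leq t < \theta_i\}\,dt \;=\; \sum_{i,j=1}^n \cH^{d-1}\!\big(\partial \Part_i \cap \partial \Part_j\big)\,(\theta_i - \theta_j)_+,
\]
and symmetrizing over the ordered pairs $(i,j)$ and $(j,i)$---which is legitimate because the face measure is symmetric---repackages this as the claimed sum involving $|\theta_i - \theta_j|$.

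The main obstacle I anticipate is the rigorous justification of the face-counting step: while the cancellation of internal faces is geometrically transparent for convex polyhedral cells, turning it into a clean $\cH^{d-1}$ computation requires some care with reduced boundaries and with faces that lie on $\partial \XDom$ (which should not contribute to $\TV(f;\XDom)$ and are naturally excluded because $\partial \Part_i \cap \partial \Part_j$ involves two interior cells). A convenient shortcut, if available, is a direct Gauss--Green calculation: for test fields $\phi \in C_c^1(\XDom;\R^d)$, integration by parts on each $\Part_i$ yields boundary terms on each face, and summing them pairs each internal face with the jump $\theta_i - \theta_j$; taking the supremum over $\|\phi\|_\infty \leq 1$ then recovers \eqref{eq:tv_representation_explicit} without invoking the coarea formula.
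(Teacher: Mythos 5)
Your coarea-formula approach is correct and takes a genuinely different route from the paper. The paper's own proof (Proposition S.1 in Appendix A.2) works directly from the measure-theoretic definition \eqref{eq:total_variation}: it decomposes $\int_\XDom f\,\diver\phi\,dx$ over the cells, applies the Gauss--Green theorem on each cell, regroups the resulting boundary integrals into pairwise face terms using the fact that outer normals on a shared face are opposing (and vanish $\cH^{d-1}$-a.e.\ on intersections of three or more cell boundaries, citing Lemma~A.2 of \citet{mikkelsen2018degrees}), and then establishes the value of the supremum by a H\"older upper bound matched by a lower bound built from a duality map and a mollification sequence. It does this in the generality of an arbitrary norm $\|\cdot\|$ constraining $\phi$, giving the anisotropic formula with weights $\|n_i\|_*$. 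Your route bypasses the duality/mollification machinery: the lower bound is automatic from the level-set decomposition plus the indicator identity \eqref{eq:total_variation_indicator}, which is arguably more elementary and exploits the viewpoint the paper itself develops in the introduction. The trade-off is that the coarea formula \eqref{eq:total_variation_coarea} is tied to isotropic TV, so your argument does not directly give the anisotropic Proposition S.1; and your Gauss--Green ``shortcut'' mentioned at the end is in fact exactly what the paper does.

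Two points to tighten. First, bookkeeping: you write $\sum_{i,j}\cH^{d-1}(\partial\Part_i\cap\partial\Part_j)(\theta_i-\theta_j)_+$ over ordered pairs and say symmetrizing ``repackages this as the claimed sum.'' Over ordered pairs that expression equals $\tfrac12\sum_{i,j}\cH^{d-1}(\partial\Part_i\cap\partial\Part_j)|\theta_i-\theta_j|$, i.e.\ the sum over \emph{unordered} pairs of $|\theta_i-\theta_j|$. This does match the intended statement --- the paper's $\sum_{i,j=1}^n$ in \eqref{eq:tv_representation_explicit} must be read as running over unordered pairs, consistent with the edge-set form \eqref{eq:tv_representation} --- but you should make the halving explicit rather than folding it into the word ``repackage.'' Second, the face-counting identity is where the genuine measure-theoretic content lives: you need that the reduced boundary of $\bigcup_{\theta_i>t}\Part_i$ inside $\XDom$ agrees $\cH^{d-1}$-a.e.\ with the union of the interfaces $\partial\Part_i\cap\partial\Part_j$ with $\theta_i>t\ge\theta_j$, which requires (a) the identity $\per(E\cup F)=\per(E)+\per(F)-2\cH^{d-1}(\partial^*E\cap\partial^*F)$ for disjoint sets of finite perimeter, or an induction on cells, (b) that the reduced and topological boundaries of convex polyhedral cells coincide up to an $\cH^{d-1}$-null set, and (c) that the $(d-2)$-dimensional skeleton of the Voronoi diagram is $\cH^{d-1}$-null. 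You rightly flag this as the main obstacle; the rigor burden is comparable to the paper's appeal to Mikkelsen--Hansen's Lemma~A.2, just relocated from the Gauss--Green regrouping to the perimeter computation.
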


The proof of this proposition follows from the measure-theoretic definition
\eqref{eq:total_variation} of total variation, and we defer it to Appendix
\ref{app:tv_representation}. In a sense, the above result is a natural extension
of the property that the TV of an indicator function is the perimeter of the
underlying set, recall \eqref{eq:total_variation_indicator}.     

Note that \eqref{eq:tv_representation_explicit} in Proposition
\ref{prop:tv_representation} reduces to the property
\eqref{eq:tv_representation} claimed in the introduction, once we define weights    
\begin{equation}
\label{eq:voronoi_weights}
w^\Vor_{ij} = \cH^{d-1}( \partial \Part_i \cap \partial \Part_j ), 
\quad i,j = 1,\dots,n,
\end{equation}
and define the edge set \smash{$E^\Vor$} to contain all $\{i,j\}$ such that
\smash{$w^\Vor_{ij} \not= 0$}. In words, each \smash{$w^\Vor_{ij}$} is the
surface measure (length, in dimension $d=2$) of the shared boundary between
$\Part_i$ and $\Part_j$. We say that $i,j$ are adjacent with respect to the
Voronoi diagram provided that \smash{$w^\Vor_{ij} \not= 0$}. Using this
nomenclature, we can think of \smash{$E^\Vor$} as the set of all adjacent pairs
$i,j$. This defines a weighted undirected graph on $\{1,\dots,n\}$, which we
call the \emph{Voronoi adjacency graph} (the Voronoi graph for short). We denote
this by \smash{$G^\Vor = ([n], E^\Vor, w^\Vor)$}, where here and throughout we
write $[n] = \{1,\dots,n\}$.     

Backing up a little further, we remark that \eqref{eq:voronoi_weights} also
provides the remaining details needed to completely describe the Voronoigram
estimator in \eqref{eq:tv_voronoi}. By the TV representation property 
\eqref{eq:tv_representation}, we see that we can equivalently express the
penalty in \eqref{eq:tv_voronoi} as that in \eqref{eq:tv_voronoi_graph}, which
certifies the equivalence between the two problems.  Of course, since
\eqref{eq:tv_representation} is true of all functions in \smash{$\cF^\Vor_n$},
it is also true of the Voronoigram solution \smash{$\hf^\Vor$}. Hence, to 
summarize the relationship between the discrete \eqref{eq:tv_voronoi_graph} and
continuum \eqref{eq:tv_voronoi} problems, once we solve for the Voronoigram
fitted values \smash{$\htheta^\Vor_i = \hf^\Vor(x_i)$}, $i=1,\dots,n$ at the
design points, we extrapolate via
\begin{equation}
\label{eq:voronoi_extrapolate}
\hf^\Vor = \sum_{i=1}^n \htheta^\Vor_i \cdot 1_{\Part_i}, 
\quad \text{which satisfies} \quad 
\TV(\hf^\Vor) = \sum_{\{i,j\} \in E^\Vor} w^\Vor_{ij} \cdot |\htheta^\Vor_i -
\htheta^\Vor_j|. 
\end{equation}
In other words, the continuum TV of the extrapolant \smash{$\hf^\Vor$} is
exactly the same as the discrete TV of the vector of fitted values
\smash{$\htheta^\Vor$}. This is perhaps best appreciated when discussed relative  
to alternative approaches based on discrete TV regularization on graphs, which
do not generally share the same property. We revisit this in Section
\ref{sub:graph_comparisons}. 

\subsection{Insights from generalized lasso theory}
\label{sub:generalized_lasso}

Consider a generalized lasso problem of the form:
\begin{equation}
\label{eq:generalized_lasso}
\minimize_{\theta \in \R^n} \; \half \|y - \theta\|_2^2 + \lambda \|D
\theta\|_1,
\end{equation}
where $y = (y_1,\dots,y_n) \in \R^n$ is a response vector and $D \in \R^{m 
  \times n}$ is a penalty operator (as problem \eqref{eq:generalized_lasso} has 
identity design matrix, it is hence sometimes also called a generalized lasso
signal approximator problem). The Voronoigram is a special case of a generalized
lasso problem: that is, problem \eqref{eq:tv_voronoi_graph} can be equivalently
expressed in the more compact form \eqref{eq:generalized_lasso}, once we take
\smash{$D = D^\Vor$}, the edge incidence operator of the Voronoi adjacency 
graph. In general, given an weighted undirected graph $G = ([n], E, w)$, we
denote its edge incidence operator $D(G) \in \R^{m  \times n}$; recall that this 
is a matrix whose number of rows equals the number of edges, $m = |E|$, and if
edge $\ell$ connects nodes $i$ and $j$, then 
\begin{equation}
\label{eq:edge_indicidence_op}
\big[ D(G) \big]_{\ell k} =  
\begin{cases}
+w_{ij} & k = i \\
-w_{ij} & k = j \\
0 & \text{otherwise}.
\end{cases}
\end{equation}
Thus, to reiterate the equivalence using the notation just introduced, the
penalty operator in the generalized lasso form \eqref{eq:generalized_lasso} of
the Voronoigram \eqref{eq:tv_voronoi_graph} is \smash{$D^\Vor = D(G^\Vor)$}, the
edge incidence operator of the Voronoi graph \smash{$G^\Vor$}. And, as is clear
from the discussion, the Voronoigram is not just an instance of an arbitrary
generalized lasso problem, it is an instance of TV denoising on a graph. 
Alternative choices of graphs for TV denoising will be discussed in Section
\ref{sub:other_graphs}.              

What does casting the Voronoigram in generalized lasso form do for us? It
enables us to use existing theory on the generalized lasso to read off results
about the structure and complexity of Voronoigram estimates.
\citet{tibshirani2011solution, tibshirani2012degrees} show the following about
the solution \smash{$\htheta$} in problem \eqref{eq:generalized_lasso}: if we
denote by \smash{$A = \{ i \in [n] : (D \htheta)_i \not= 0\}$} the active set 
corresponding to \smash{$D \htheta$}, and \smash{$s = \sign( (D\htheta)_A )$}
the active signs, then we can write
\begin{equation}
\label{eq:generalized_lasso_sol}
\htheta = P_{\nul(D_{-A})} (y - \lambda D_A^\T s).
\end{equation}
where $D_A$ is the submatrix of $D$ with rows that correspond to $A$,
\smash{$D_{-A}$} is the submatrix with the complementary set of rows, and 
\smash{$P_{\nul(D_{-A})}$} is the projection matrix onto \smash{$\nul(D_{-A})$},
the null space of \smash{$D_{-A}$}. When we take $D = D(G)$, the edge incidence
operator on a graph $G$, the null space \smash{$D_{-A}$} has a simple analytic
form that is spanned by indicator vectors on the connected components of the
subgraph of $G$ that is induced by removing the edges in $A$. This allows us to
rewrite \eqref{eq:generalized_lasso_sol}, for a generic TV denoising estimator
\smash{$\htheta = \htheta(G)$}, as 
\begin{equation}
\label{eq:tv_denoising_sol}
\big[ \htheta(G) \big]_i = \sum_{k = 1}^{\hat{K}} (\bar{y}_k - \hat{s}_k) \cdot
1 \big\{ i \in \hat{C}_k \big\},  \quad i = 1,\dots,n
\end{equation}
where \smash{$\hat{K}$} is the number of connected components of the subgraph of
$G$ induced by removing edges in $A$, \smash{$\hat{C}_k$} denotes the $k\th$ such
connected component, \smash{$\bar{y}_k$} denotes the average of points $y_i$
such that \smash{$i \in \hat{C}_k$}, and \smash{$\hat{s}_k$} denotes the average
of the values \smash{$\lambda (D_A^\T s)_i$} over \smash{$i \in \hat{C}_k$}.  

What is special about the Voronoigram is that \eqref{eq:tv_denoising_sol},
combined with the structure of \smash{$\cF^\Vor_n$} (piecewise constant
functions on the Voronoi diagram), leads to an analogous piecewise constant
representation on the \emph{original input domain} $\XDom$, as written and 
discussed in \eqref{eq:voronoigram_sol} in the introduction. Here each 
\smash{$\hat{R}_k = \{ \Part_i : i \in \hat{C}_k \}$}, the union of Voronoi
cells of points in connected component \smash{$\hat{C}_k$}.

Beyond local structure, we can learn about the complexity of the Voronoigram 
estimator---vis-a-vis its \emph{degrees of freedom}---from generalized lasso 
theory. In general, the (effective) degrees of freedom of an estimator
\smash{$\htheta$} is defined as \citep{efron1986biased, hastie1990generalized}:
\[
\df(\htheta) = \frac{1}{\sigma^2} \sum_{i=1}^n \Cov( \htheta_i, y_i ),
\]
where $\sigma^2 = \Var(z_i)$ denotes the noise variance in the data model
\eqref{eq:model}. \citet{tibshirani2011solution, tibshirani2012degrees} prove
using Stein's lemma \citep{stein1981estimation} that when each $z_i \sim N(0,
\sigma^2)$ (i.i.d.\ for $i=1,\dots,n$), it holds that  
\begin{equation}
\label{eq:generalized_lasso_df}
\df(\htheta) = \E[ \nuli(D_{-A}) ],
\end{equation}
where \smash{$\nuli(D_{-A})$} is the nullity (dimension of the null space) of
\smash{$D_{-A}$}, and recall $A$ is the active set corresponding to \smash{$D
  \htheta$}. For $D = D(G)$ and \smash{$\htheta = \htheta(G)$}, the TV denoising
estimator over a graph $G$, the result in \eqref{eq:generalized_lasso_df}
reduces to  
\begin{equation}
\label{eq:tv_denoising_df}
\df\big( \htheta(G) \big) = \E\big[ \text{\# of connected components in 
  \smash{$\htheta(G)$}} \big].  
\end{equation}
As a short interlude, we note that this somewhat remarkable because the
connected components are adaptively chosen in the graph TV denoising estimator,
and yet it does not appear that we ``pay extra'' for this data-driven selection
in \eqref{eq:tv_denoising_df}. This is due to the $\ell_1$ penalty that appears
in the TV denoising criterion, which induces a ``counterbalancing'' shrinkage  
effect---recall we fit shrunkage averages, rather than averages, in
\eqref{eq:tv_denoising_sol}. For more discussion, see
\citet{tibshirani2015degrees}.       

The result \eqref{eq:tv_denoising_df} is true of any TV denoising estimator,
including the Voronoigram. However, what is special about the Voronoigram is
that we are able to write this purely in terms of the fitted function
\smash{$\hf^\Vor$}:   
\begin{equation}
\label{eq:voronoigram_df}
\df(\htheta^\Vor) = \E\big[ \text{\# of locally constant regions in 
  \smash{$\hf^\Vor$}} \big],   
\end{equation}
because by construction the number of locally constant regions in
\smash{$\hf^\Vor$} is equal to the number of connected components in 
\smash{$\htheta^\Vor$}.\footnote{For this to be true, strictly speaking, we
  require that for each $i$ and $j$ in different connected components with
  respect to the subgraph defined by the active set $A$ of
  \smash{$\htheta^\Vor$}, we have \smash{$\htheta_i \not= \htheta_j$}. However,
  for any fixed $\lambda$, this occurs with probability  one if the response
  vector $y$ is drawn from a continuous probability distribution; see
  \citet{tibshirani2012degrees, tibshirani2013lasso}.}   


\subsection{Alternatives: \texorpdfstring{$\varepsilon$}{eps}-neighborhood and
  kNN graphs}      
\label{sub:other_graphs}

We now review two more standard graph-based alternatives to the Voronoigram: TV
denoising over $\varepsilon$-neighborhood and $k$-nearest neighbor (kNN)
graphs. Discrete TV over such graphs has been studied by many, including
\citet{wang2016trend} (experimentally), and \citet{garciatrillos2016continuum,
garciatrillos2019variational, padilla2020adaptive} (formally).  The general
recipe is to run TV denoising over a graph $G = ([n], E, w)$ formed using the
design points $x_1,\dots,x_n$. We note that it suffices to specify the weight
function here, since the edge set is simply defined by all pairs of nodes that
are assigned nonzero weights. For the \emph{$\varepsilon$-neighborhood graph},
we take
\begin{equation}
\label{eq:eps_weights}
w^\Eps_{ij} = 
\begin{cases}
1 & \|x_i - x_j\|_2 \leq \varepsilon \\
0 & \text{otherwise},
\end{cases}
\quad i,j = 1,\dots,n,
\end{equation}
where $\varepsilon > 0$ is a user-defined tuning parameter. For the
(symmetrized) \emph{$k$-nearest neighbor graph}, we take 
\begin{equation}
\label{eq:knn_weights}
w^\kNN_{ij} = 
\begin{cases}
1 & \|x_i - x_j\|_2 \leq \max\big\{ 
\|x_i-x_{(k)}(x_i)\|_2, \,
\|x_j-x_{(k)}(x_j)\|_2 \big\} \\ 
0 & \text{otherwise},
\end{cases}
\quad i,j = 1,\dots,n,
\end{equation}
where \smash{$x_{(k)}(x_i)$} denotes the element of
$\{x_1,\dots,x_{i-1},x_{i+1},\dots,x_n\}$ that is
$k\th$ closest in $\ell_2$ distance to $x_i$ (and we break ties arbitrarily, if
needed), and $k \in [n]$ is a user-defined tuning parameter. 

We denote the resulting graphs by \smash{$G^\Eps$} and \smash{$G^\kNN$},
respectively, and the resulting graph-based TV denoising estimators by
\smash{$\htheta^\Eps = \htheta(G^\Eps)$} and \smash{$\htheta^\kNN =
  \htheta(G^\kNN)$}, respectively. To be explicit, these solve
\eqref{eq:generalized_lasso} when the penalty operators are taken to be the 
relevant edge incidence operators \smash{$D = D(G^\Eps)$} and \smash{$D =
  D(G^\kNN)$}, respectively.  

It is perhaps worth noting that the $\varepsilon$-neighborhood graph is a
special case of a \emph{kernel graph} whose weight function is of the form
$w_{ij} = K(\|x_i-x_j\|_2)$ for a kernel function $K$. Though we choose to
analyze the $\varepsilon$-neighborhood graph for simplicity, much of our
theoretical development for TV denoising on this graph carries over to more
general kernel graphs, with suitable conditions on $K$. We remark that the kNN
and Voronoi graphs do not fit neatly in kernel form, as the weight they assign
to $i,j$ depends not only $x_i,x_j$ but also on $x_1,\dots,x_n$. That said, in 
either case the graph weights are well-approximated by kernels asymptotically;
see Appendix \ref{app:asymptotic_limits} for the effective kernel for the
Voronoi graph.   

\subsection{Discussion and comparison of properties}
\label{sub:graph_comparisons}

We begin with some similarities, starting by recapitulating the properties
discussed in the second-to-last subsection: all three of \smash{$\htheta^\Eps$}, 
\smash{$\htheta^\kNN$}, and \smash{$\htheta^\Vor$}---the TV denoising estimators
on the $\varepsilon$-neighborhood, kNN, and Voronoi graphs, respectively---have
adaptively~chosen piecewise constant structure, as per
\eqref{eq:tv_denoising_sol} (though to be clear, they will have generically
different connected components for the same response vector $y$ and tuning
parameter $\lambda$). All three estimators also have a simple unbiased estimate
for their degrees of freedom, as per \eqref{eq:tv_denoising_df}. And lastly, all
three are given by solving a highly structured convex optimization problems for
which a number of efficient algorithms exist; see, e.g.,
\citet{osher2005iterative, chambolle2009total, goldstein2010geometric,
  hoefling2010path, chambolle2011first, tibshirani2011solution, landrieu2016cut,
  wang2016trend}.      

A further notable property that all three estimators share, which has not yet been
discussed, is \emph{rotational invariance}. This means that, for any orthogonal
$U \in \R^{d \times d}$, if we were to replace each design point $x_i$ by
\smash{$\tilde{x}_i = U x_i$} and recompute the TV denoising estimate using the
$\varepsilon$-neighborhood, kNN, or Voronoi graphs (and with the same response
vector $y$ and tuning parameter $\lambda$) then it will remain unchanged. This
is true because the weights underlying these three graphs---as we can see from
\eqref{eq:voronoi_weights}, \eqref{eq:eps_weights}, and
\eqref{eq:knn_weights}---depend on the design points only through the pairwise
$\ell_2$ distances $\|x_i-x_j\|_2$, which an orthogonal transformation
preserves.     

We now turn the a discussion of the differences between these graphs and their
use in denoising. 

\paragraph{Auxiliary tuning parameters.}

TV denoising over the $\varepsilon$-neighborhood and $k$-nearest neighbor graphs
each have an ``extra'' tuning parameter when compared the Voronoigram: a tuning
parameter associated with learning the graph itself ($\varepsilon$ and $k$,
respectively). This auxiliary tuning parameter must be chosen carefully in order 
for the discrete TV penalty to be properly behaved; as usual, we can turn to
theory (e.g., \citealp{garciatrillos2016continuum,
  garciatrillos2019variational}) to prescribe the proper asymptotic scaling for
such choices, but in practice these are really just guidelines. Indeed, as we
vary $\varepsilon$ and $k$ we can typically find an observable practical impact
on the performance of TV denoising estimators using their corresponding graphs,
especially for the  $\varepsilon$-neighborhood graph (for which $\varepsilon$
impacts connectedness). One may see this by comparing the results of 
Section \ref{sec:experiments} to those of Appendix
\ref{app:sensitivity_analysis}.  All in all, the need to appropriately choose
auxiliary tuning parameters when using these graphs for TV denoising is a
complicating factor for the practitioner.  

\paragraph{Connectedness.}

A related practical consideration: only the Voronoi adjacency graph is
guaranteed to be connected (cf.\ Lemma
\ref{lem:topological-connectedness-equiv-graph-connectedness} in the appendix),
while the kNN and $\varepsilon$-neighborhood graphs have varying degrees of
connectedness depending on their auxiliary parameter.  In particular, the
$\varepsilon$-neighborhood graph is susceptible to isolated points.  This can be
problematic in practice: having many connected components and in particular
having isolated points prevents the estimator from properly denoising, leading
to degraded performance. This phenomenon is studied in Section
\ref{sub:function_estimation}, where the $\varepsilon$-neighborhood graph, grown
to have roughly the same average degree as the Voronoi adjacency and kNN graphs,
sees worse performance when used in TV denoising.  A workaround is to grow the
$\varepsilon$-neighborhood graph to be denser; but of course this increases the
computational burden in learning the estimator and storing the graph.

\paragraph{Computation.}

On computation of the graphs themselves, the Voronoi diagram of $n$ points in
$d$ dimensions has worst-case complexity of \smash{$O(n\log n + n^{\lceil
    d/2\rceil})$} \citep{aurenhammer2000voronoi}.\footnote{Note that the Voronoi
  adjacency graph as considered in Section \ref{sub:voronoigram} intersects the
  Voronoi diagram with the domain $\XDom$ on which the $n$ points are sampled,
  which incurs the additional step of checking whether each vertex of the
  Voronoi diagram belongs in $\XDom$.  For simple domains (say, the unit cube), 
  this can be done in constant time for each edge as they are enumerated during
  graph construction.}   
In applications, this worst-case complexity may be pessimistic; for example,  
\citet{dwyer1991higher} finds that the Voronoi diagram of $n$ points
sampled uniformly at random from the $d$-dimensional unit ball may be
computed in linear expected time. 

On the other hand, the \smash{$O(n\log n + n^{\lceil d/2\rceil})$} runtime does
not include calculation of the weights \eqref{eq:voronoi_weights} on the edges
of the Voronoi adjacency graph, which significantly increases the computational
burden, especially in higher dimensions (it is essentially intractable for $d
\geq 4$). One alternative is to simply use the unweighted Voronoi adjcacency
graph for denoising---dropping the weights \smash{$w^\Vor_{ij}$} in the summands
in  \eqref{eq:tv_voronoi_graph} but keeping the same edge structure---which we
will see, in what follows, has generally favorable practical and theoretical
(minimax) performance.          

Construction of the $\varepsilon$-neighborhood and kNN graphs, in a brute-force 
manner, has complexity $O(dn^2)$ in each case. The complexity of building the
$k$-nearest neighbor graph can be improved to $O(dn\log n)$ by using $k$-d trees
\citep{friedman1977algorithm}. This is dominated by initial cost of building the 
$k$-d tree itself, so a practitioner seeking to tune over the number of nearest
neighbors is able to build kNN graphs at different levels of density
relatively efficiently. As far as we know, there is no analogous general-purpose 
algorithmic speedup for the $\varepsilon$-neighborhood graph, but practical 
speedups may be possible by resorting to approximation techniques (for example,
using random projections or hashing). 



\paragraph{Extrapolation.}

A central distinction between the Voronoigram and TV denoising methods based on
$\varepsilon$-neighborhood and kNN graphs is that the latter methods are
purely discrete, which means that---as defined---they really only produce fitted
values (estimates of the underlying regression function values) at the design
points, and not an entire fitted function (an estimate of the underlying
function). Meanwhile, the Voronoigram produces a fitted function \emph{via} the
fitted values at the design points. Recall the equivalence between problems
\eqref{eq:tv_voronoi} and \eqref{eq:tv_voronoi_graph}, and the 
central property between the discrete and continuum estimates highlighted in
\eqref{eq:voronoi_extrapolate}---to rephrase once again, this says that
\smash{$\hf^\Vor$} is just as complex in continuous-space (as measured by
continuum TV) as \smash{$\htheta^\Vor$} is in discrete-space (as measured by
discrete TV).      

We note that it would also be entirely natural to extend the fitted values
\smash{$\htheta_i = \hf(x_i)$}, $i=1,\dots,n$ from TV denoising using the
$\varepsilon$-neighborhood or kNN graph as a piecewise constant function over 
the Voronoi cells $\Part_1,\dots,\Part_n$,  
\[
\hf = \sum_{i=1}^n \htheta_i \cdot 1_{\Part_i}.
\]
To see this, observe that this is nothing more than the ubiquitous 1-nearest 
neighbor (1NN) prediction rule performed on the fitted values,  
\[
\hf(x) = \hf(x_i), \quad \text{where $\|x - x_i\|_2 = \min_{j=1,\dots,n} \|x -
  x_j\|_2$}.   
\]
However, this extension \smash{$\hf$} does not generally satisfy the property
that its continuum TV is equal to the graph-based TV of \smash{$\htheta$} (with
respect to the original geometric graph, be it $\varepsilon$-neighborhood or
kNN). The complexity-preserving property in \eqref{eq:voronoi_extrapolate} of
the Voronoigram is truly special.\footnote{In fact, this occurs for not one but
  two natural notions of complexity: TV, as in \eqref{eq:voronoi_extrapolate}, 
  and degrees of freedom, as in \eqref{eq:voronoigram_df}. The latter says that
  \smash{$\hf^\Vor$} has just as many locally constant regions (connected
  subsets of $\XDom$) as \smash{$\htheta^\Vor$} has connected components (with
  respect to the Voronoi adjacency graph). This is not true in general for the
  1NN extensions fit to TV denoising estimates on $\varepsilon$-neighborhood or
  kNN graphs; see Section \ref{sub:extrapolation} and Figure
  \ref{fig:03_FunctionEstimationPredictions} in particular.}        




\medskip
\noindent
We finish by summarizing two more points of comparison for discrete TV on the
Voronoi graph versus $\varepsilon$-neighborhood and kNN graphs. These will come
to light in the theory developed later, but are worth highlighting now. First,
discrete TV on the Voronoi adjacency graph, the $\varepsilon$-neighborhood
graph, and the kNN graph can be said to each track different population-level 
quantities---the most salient difference being that discrete TV on a Voronoi
graph in the large-sample limit does not depend on the distribution of the
design points, unlike the other two graphs (compare
\eqref{eq:asymptotic_limit_voronoi} to \eqref{eq:asymptotic_limit_eps} and
\eqref{eq:asymptotic_limit_knn}). Second, while TV denoising on all three
graphs obtains the minimax error rate for functions that are bounded in TV and
$L^\infty$, on the $\varepsilon$-neighborhood and the kNN graphs TV denoising is
furthermore manifold adaptive, and it is not clear the same is true of the
Voronoigram (see Remark \ref{rem:padilla} following Theorem
\ref{thm:voronoi_ub}).

\section{Asymptotics for graph TV functionals}
\label{sec:asymptotic_limits}
Having introduced, discussed, and compared graph-based formulations of total
variation---with respect to the Voronoi, $k$-nearest neighbor, and
$\varepsilon$-neighborhood graphs---a natural question remains: as we grow the
number of design points $n$ used to construct the graphs, do these discrete
notions of TV approach particular continuum notions of TV?  Answers to these
questions, aside from being of fundamental interest, will help us better
understand the effects of using these different graph-based TV regularizers in 
the context of nonparametric regression. 

The asymptotic limits for the TV functional defined over the
$\varepsilon$-neighborhood and $k$-nearest neighbor graphs have in fact already
been derived by \citet{garciatrillos2016continuum} and
\citet{garciatrillos2019variational}, respectively. These results are reviewed
in Remark \ref{rem:trillos}, following the presentation of our main result in this
section, Theorem \ref{thm:asymptotic_limit_voronoi}, on the asymptotic limit for
TV over the Voronoi graph. First, we introduce some helpful notation. Given $G = 
([n], E, w)$, a weighted undirected graph, we denote its corresponding discrete
TV functional by  
\begin{equation}
\label{eq:discrete_total_variation}
\DTV(\theta ; w) = \sum_{\{i,j\} \in E} w_{ij} |\theta_i - \theta_j|.
\end{equation}
Given $x_1,\dots,x_n \in \XDom$, and $f : \XDom \to \R$, we also use the
shorthand $f(x_{1:n}) = (f(x_1),\dots,f(x_n)) \in \R^n$.  

Next we introduce an assumption that we require on the sampling distribution of
the random design points.  

\begin{assumption}{A}{1}
\label{assump:density_bounded}
The design distribution has density $p$ (with respect to Lebesgue measure),
which is bounded away from 0 and $\infty$ uniformly on $\XDom = (0,1)^d$; that 
is, there exist constants \smash{$p_{\mn}, p_{\mx}$} such that  
\[
0 < p_{\mn} \leq p(x) \leq p_{\mx} < \infty, \quad \text{for all $x \in
  \XDom$}. 
\]
\end{assumption}

We are now ready to present our main result in this section.

\begin{theorem}
\label{thm:asymptotic_limit_voronoi}
Assume that $x_1,\dots,x_n$ are i.i.d.\ from a distribution satisfying
Assusmption \ref{assump:density_bounded}, and additionally assume its density
$p$ is Lipschitz: $|p(y) - p(x)| \leq L\|y - x\|_2$ for all $x,y \in \XDom$ and
some constant $L>0$. Consider the Voronoi graph whose edge weights are 
defined in \eqref{eq:voronoi_weights}. For any fixed $d \geq 2$ and $f \in 
C^2(\XDom)$, as $n \to \infty$, it holds that     
\begin{equation}
\label{eq:asymptotic_limit_voronoi}
\DTV\Big(f(x_{1:n}); \, w^\Vor\Big) \to c_d \int_{\XDom} \|\nabla f(x)\|_2 \,dx,
\end{equation}
in probability, where $c_d$ is the constant 
\[
c_d = \frac{\eta_{d - 2}^2}{d - 1}\int_{0}^{\infty} 
\int_{0}^{\infty} t^d s^{d - 2} \exp\biggl( -\Leb_d\Bigl\{\frac{t^2}{4} +
s^2\Bigr\}^{d/2} \biggr) \,ds \,dt,
\]
and $\eta_{d-2}$ denotes the Hausdorff measure of the $(d-2)$-dimensional unit
sphere, and $\Leb_d$ the Lebesgue measure of the $d$-dimensional unit ball.  
\end{theorem}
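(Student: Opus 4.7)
The plan is to localize the sum as $\DTV(f(x_{1:n}); w^\Vor) = \frac{1}{2} \sum_{i=1}^n S_i$, where
\[
S_i = \sum_{j : j \sim i} \cH^{d-1}(\partial \Part_i \cap \partial \Part_j) \cdot |f(x_j) - f(x_i)|,
\]
linearize $f$ within each cell using $f \in C^2$, and identify the limit via a Poisson-process coupling around each $x_i$. The density $p$ will cancel between the typical cell size $(np)^{-1}$ and the empirical average over $x_1,\dots,x_n$, which is precisely the density-free phenomenon claimed.

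First I would use Assumption \ref{assump:density_bounded} and a standard covering argument to show that, with high probability, every pair of Voronoi neighbours satisfies $\|x_i - x_j\|_2 \lesssim (\log n / n)^{1/d}$ and the Voronoi graph has uniformly $O(1)$ expected degree. A two-term Taylor expansion then gives
\[
|f(x_j) - f(x_i)| = |\nabla f(x_i) \cdot (x_j - x_i)| + O(\|x_j - x_i\|_2^2),
\]
reducing the analysis to the linearized quantity $\tilde S_i = \sum_{j \sim i} \cH^{d-1}(\partial \Part_i \cap \partial \Part_j) |\nabla f(x_i) \cdot (x_j - x_i)|$. The aggregate remainder contributes $O(n \cdot n^{-(d-1)/d} \cdot n^{-2/d} (\log n)^{2/d}) = O(n^{-1/d}(\log n)^{2/d}) = o(1)$ since each face has area $O((np)^{-(d-1)/d})$ and each edge squared length is $O((\log n/n)^{2/d})$.

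Next, I would analyze $\tilde S_i$ via a local Poisson coupling. Rescaling $y_k \mapsto (np(x_i))^{1/d}(x_k - x_i)$ freezes the density at $p(x_i)$ by Lipschitzness and, on compact windows, sends the point configuration to a unit-intensity Poisson process augmented by the origin; the face areas pick up a factor $(np(x_i))^{-(d-1)/d}$ and the displacements a factor $(np(x_i))^{-1/d}$, so
\[
\E[\tilde S_i \mid x_i] \to \frac{c \cdot \|\nabla f(x_i)\|_2}{n p(x_i)},
\]
where $c = \E[\sum_{y \sim 0} \cH^{d-1}(F_{0,y}) \, |e_1 \cdot y|]$ under the Poisson law, with rotational invariance and positive 1-homogeneity in the gradient extracting $\|\nabla f(x_i)\|_2$. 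The explicit constant emerges from the Mecke--Slivnyak formula combined with Fubini: parametrize the bisector of $\{0,y\}$ with $y = r\omega$ by $z = (r/2)\omega + \rho u$, $u \in S^{d-2}$, so $\|z\|_2^2 = r^2/4 + \rho^2$; the probability that $z$ sits on the face is the empty-ball probability $\exp(-\Leb_d(r^2/4 + \rho^2)^{d/2})$. Integrating $|y_1|$ against the radial element $r^{d-1}\,dr\,d\omega$ and $\eta_{d-2}\rho^{d-2}\,d\rho$ on the bisector, the angular piece reduces using $\int_{S^{d-1}} |\omega_1|\,d\omega = 2\eta_{d-2}/(d-1)$; combined with the $\tfrac12$ from double counting, the remaining integral is exactly the claimed $c_d = \frac{\eta_{d-2}^2}{d-1} \int_0^\infty \int_0^\infty t^d s^{d-2} \exp(-\Leb_d(t^2/4 + s^2)^{d/2}) \,ds\,dt$.

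Aggregating and applying the law of large numbers to $x_1,\dots,x_n \sim p$ i.i.d.\ yields
\[
\DTV(f(x_{1:n}); w^\Vor) \approx \frac{c_d}{n} \sum_{i=1}^n \frac{\|\nabla f(x_i)\|_2}{p(x_i)} \longrightarrow c_d \int_\XDom \frac{\|\nabla f(x)\|_2}{p(x)} \, p(x)\,dx = c_d \int_\XDom \|\nabla f(x)\|_2 \, dx,
\]
in probability, which is the claim. The main obstacle is making the Poisson coupling \emph{quantitative} enough that the per-cell mean convergence upgrades to a probability statement for the sum: one must bound $\mathrm{Var}(\tilde S_i)$ uniformly and control the dependence between nearby cells (a stabilization-of-functionals argument in the spirit of Penrose--Yukich), and handle boundary cells (whose shape can be atypical and distort the infinite-Poisson prediction) by truncating cells of atypical diameter or restricting to an interior subdomain of $\XDom$ and separately bounding the boundary contribution. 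The empirical-to-continuous transport estimates of \citet{garciatrillos2016continuum, garciatrillos2019variational} furnish the right coupling framework for this step.
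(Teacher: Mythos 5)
Your outline arrives at the correct constant $c_d$ and the right mechanism for density cancellation, but it organizes the argument quite differently from the paper. You decompose $\DTV = \tfrac12\sum_i S_i$ per vertex, Taylor-expand $f$ inside each cell, rescale the point configuration near $x_i$ by $(np(x_i))^{1/d}$ to couple it to a unit-intensity Poisson process, and extract the per-vertex limiting mean via Mecke--Slivnyak --- a stabilization-of-functionals argument in the Penrose--Yukich tradition. The paper instead reduces $\DTV$ to the order-$2$ U-statistic $U_{n,\mathrm{Vor}}(f) = \tfrac12\sum_{i,j}|f(x_i)-f(x_j)|\,H_{\mathrm{Vor}}(x_i,x_j)$, where $H_{\mathrm{Vor}}(x,y)$ is the \emph{exact} conditional expectation of the shared face area given $(x_i,x_j)=(x,y)$ and has the closed form $\int_{L\cap\Omega}(1-p_x(z))^{n-2}\,dz$ over the bisector $L$. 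Lemma~\ref{lem:voronoi-tv-u-statistic} bounds the second moment of $\DTV-U_{n,\mathrm{Vor}}$, Lemma~\ref{lem:u-statistic-variance} controls $\Var(U_{n,\mathrm{Vor}})$ via Hoeffding's decomposition, and Lemmas~\ref{lem:voronoi-tv-bias-holder}--\ref{lem:nonlocal-tv-bias-holder} evaluate the bias by approximating $(1-p_x(z))^{n-2}\approx\exp(-np(x)\Leb_d\|x-z\|^d)$ and integrating over the bisector --- using the very parametrization $z=\tfrac{r}{2}\omega+\rho u$, $\|z\|^2=r^2/4+\rho^2$ that you describe, so the two proofs share the same core integral and the same empty-ball heuristic. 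The difference is all scaffolding. Your route needs a quantitative binomial-to-Poisson coupling, per-cell variance bounds, and decorrelation of nearby cells --- precisely the obstacles you honestly flag at the end. The paper sidesteps these entirely: by conditioning on the pair and integrating out the remaining $n-2$ points in closed form, no Poisson limit or stabilization machinery is required, the variance falls out of classical U-statistic theory, and boundary cells are handled simply by truncating to $\Omega_h$ in Lemma~\ref{lem:voronoi-boundary}. The trade-off is that your framing would plug more readily into the general theory of stabilizing geometric functionals, whereas the paper's U-statistic reduction is more bespoke but fully self-contained and already quantitative.
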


The proof of Theorem \ref{thm:asymptotic_limit_voronoi} is long and involved and
deferred to Appendix \ref{app:asymptotic_limits}. A key idea in the proof is
show that the weights \eqref{eq:voronoi_weights} have an asymptotically
equivalent kernel form, for a particular (closed-form) kernel that we refer to
as the \emph{Voronoi kernel}. We believe this result is itself significant and
may be of independent interest.   

We now make some remarks.

\begin{remark}
The assumption that $f$ is twice continuously differentiable, $f \in
C^2(\XDom)$, in Theorem \ref{thm:asymptotic_limit_voronoi} is used to simplify
the proof; we believe this can be relaxed, but we do not attempt to do so. It is
worth recalling that under this condition, the right-hand side in 
\eqref{eq:asymptotic_limit_voronoi} is a scaled version of the TV of $f$, since
in this case \smash{$\TV(f) = \int_\XDom \|\nabla f(x)\|_2 \, dx$}.  
\end{remark}

\begin{remark}
\label{rem:trillos}
The fact that the asymptotic limit of the Voronoi TV functional is
\emph{density-free}, meaning the right-hand side in
\eqref{eq:asymptotic_limit_voronoi} is (a scaled version of) ``pure'' total  
variation and does not depend on $p$, is somewhat remarkable. This stands in
contrast to the asymptotic limits of TV functionals defined over
$\varepsilon$-neighborhood and kNN graphs, which turn out to be density-weighted
versions of continuum total variation. We transcribe the results of 
\citet{garciatrillos2016continuum} and \citet{garciatrillos2019variational} to
our setting, to ease the comparison. From \citet{garciatrillos2016continuum},
for the $\varepsilon$-neighborhood weights \eqref{eq:eps_weights} and any
sequence $\varepsilon=\varepsilon_n$ satisfying certain scaling conditions, it
holds as $n \to \infty$ that  
\begin{equation}
\label{eq:asymptotic_limit_eps}
\frac{1}{n^2 \varepsilon_n^{d+1}} \DTV \Big(f(x_{1:n}); \, w^\Eps \Big) \to c'_d  
\int_{\XDom} \|\nabla f(x)\|_2 \, p^2(x) \,dx,
\end{equation}
in a particular notion of convergence, for a constant $c'_d>0$. From
\citet{garciatrillos2019variational}, for the kNN weights \eqref{eq:knn_weights}
and any sequence $k=k_n$ satisfying certain scaling conditions, defining 
\smash{$\bar\varepsilon_n = (k_n/n)^{1/d}$}, it holds as $n \to \infty$ that  
\begin{equation}
\label{eq:asymptotic_limit_knn}
\frac{1}{n^2 \bar\varepsilon_n^{d+1}} \DTV \Big(f(x_{1:n}); \, w^\kNN \Big) \to
c''_d \int_{\XDom} \|\nabla f(x)\|_2 \, p^{1-1/d}(x) \,dx,   
\end{equation}
again in a particular notion of convergence, and for a constant $c_d''>0$.  

These differences have interesting methodological interpretations. First, recall 
that traditional regularizers used in nonparametric regression---which includes 
those in smoothing splines, thin-plate splines, and locally adaptive regression
splines, trend filtering, RKHS estimators, and so on---are not 
design-density dependent. In this way, the Voronoigram adheres closer to the 
statistical mainstream than TV denoising on $\varepsilon$-neighborhood or kNN
graphs, since the regularizer in the Voronoigram tracks ``pure'' TV in large
samples. Furthermore, by comparing \eqref{eq:asymptotic_limit_eps} to
\eqref{eq:asymptotic_limit_voronoi} we see that, relative to the Voronoigram, TV
denoising on the $\varepsilon$-neighborhood graph does not assign as strong a
penalty to functions that are wiggly in low-density regions and smoother in
high-density regions. TV denoising on the $k$-nearest neighbor graph lies in
between the two: the density $p$ appears in \eqref{eq:asymptotic_limit_knn}, but
raised to a smaller power than in \eqref{eq:asymptotic_limit_eps}.  

We may infer from this scenarios in which density-weighted TV denoising would be
favorable to density-free TV denoising and vice versa. In a sampling model where
the underlying regression function exhibits more irregularity in a low-density
region of the input space, we would expect a density-weighted method to perform
better since the density weighting provides a larger effective ``budget'' for
the penalty, leading to greater regularization and variance reduction
overall. Conversely, in a sampling model where the regression function exhibits
greater irregularity in a high-density region, we would expect a density-free
method to have a comparative advantage because the density weighting gives 
rise to a smaller ``budget'', hampering the ability to properly regularize. In
Section \ref{sec:experiments}, we consider sampling models that reflect these
qualities and assess the performance of each method empirically.  
\end{remark}

\begin{remark}
It is worth noting that it should be possible to remove the density dependence
in the asymptotic limits for the TV functionals over the
$\varepsilon$-neighborhood and kNN graphs. Following seminal ideas in
\citet{coifman2006diffusion}, we would first form an estimate \smash{$\hat{p}$}
of the design density $p$, and then we would reweight the
$\varepsilon$-neighborhood and kNN graphs to precisely cancel the dependence on
$p$ in their limiting expressions. Under some conditions (which includes
consistency of \smash{$\hat{p}$}) this should guarantee that the asymptotic
limits are density-free, that is, in our case, the reweighted
$\varepsilon$-neighborhood and kNN discrete TV functionals converge to ``pure''
TV.
\end{remark}

\section{Illustrative empirical examples}
\label{sec:experiments}
In this section, we empirically examine the properties elucidated in the last
section. We first investigate whether the large sample behavior of the three
graph-based TV functionals of interest matches the prediction from
asymptotics. We then examine the use of each as a regularizer in nonparametric
regression. Our experiments are not intended to be comprehensive,  
but are meant to tease out differences that arise from the interplay between the 
density of the design points and regions of wiggliness in the regression
function.

\subsection{Basic experimental setup}

Throughout this section, our experiments center around a single function, in 
dimension $d=2$: the indicator function of a ball of radius \smash{$r_0 =
  \frac{1}{4}$} centered at \smash{$x_0 = (\frac{1}{2}, \frac{1}{2}) \in \R^2$},   
\begin{equation}
\label{eq:indicator_ball}
f_0 = 1\{x \in B(x_0, r_0)\},
\end{equation}
supported on $\XDom=(0, 1)^2$. This is depicted in the upper display of
Figure \ref{fig:00_FunctionWithSamples} using a wireframe plot. 

\begin{figure}[htb]
\centering
\includegraphics[width=0.8\linewidth]{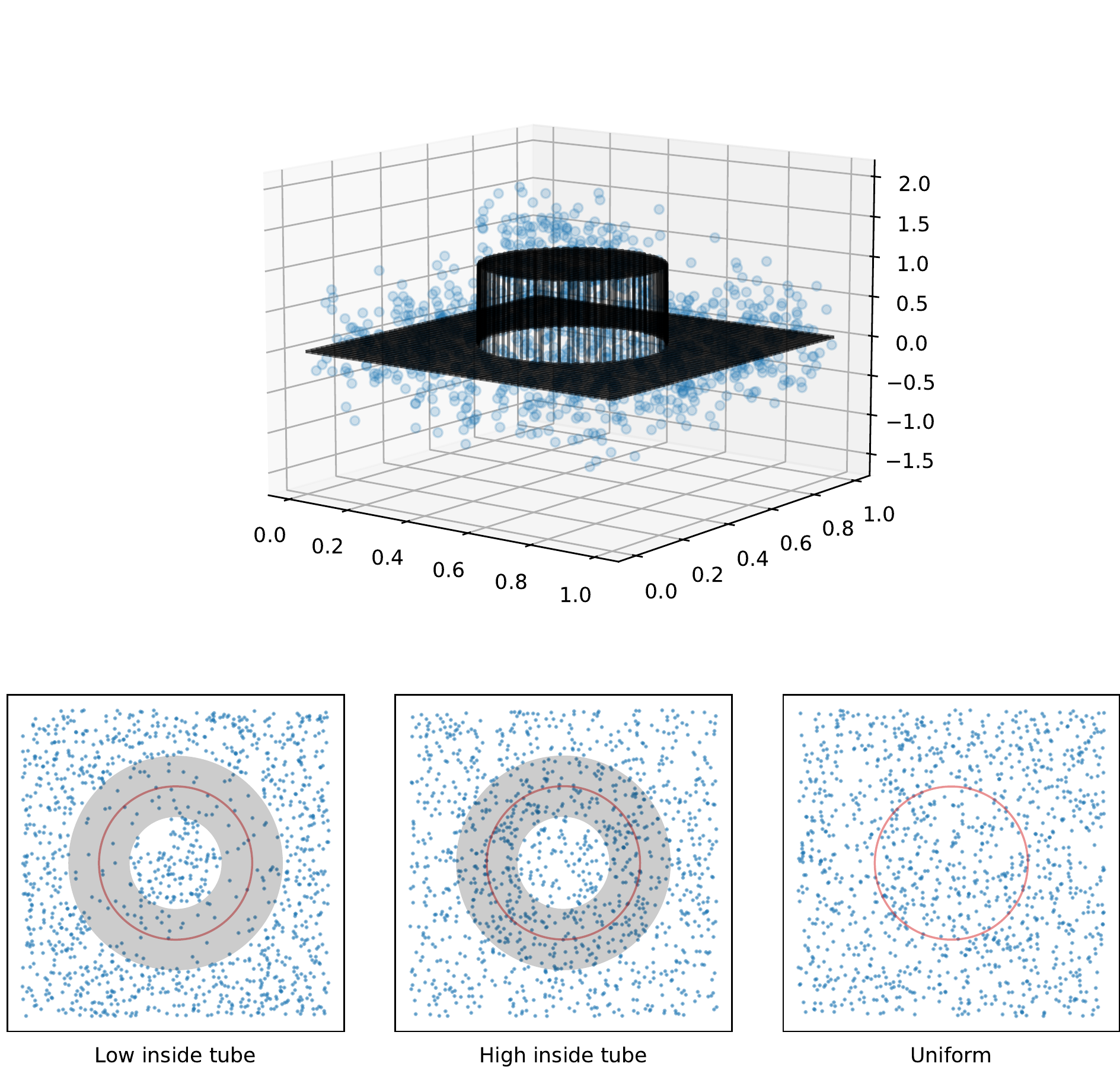}
\caption{\it
  Illustration of the basic experimental setup used in this section. Top: the
  function $f_0$ in \eqref{eq:indicator_ball} depicted using a wireframe plot,
  along with $n=1274$ noisy evaluations of $f_0$ in blue (the noise level is set
  such that the signal-to-noise ratio is 1.) Bottom row: $n=1274$ samples from 
  each of the three design distributions considered. The boundary of the set
  $B(x_0, r_0)$ is denoted in red, and the annulus $A$ is shaded in translucent
  gray.  
}
\label{fig:00_FunctionWithSamples}
\end{figure}

We also consider three choices for the distribution $P$ of the design points
$x_1,\dots,x_n$, supported on $\XDom$.    

\begin{enumerate}
\item ``Low inside tube'': the sampling density $p$ is $0.295$ on an annulus $A$
  centered at $x_0$ that has inner radius $r_0-0.1$ and outer radius
  $r_0+0.1$. (The density on $\XDom\setminus A$ is set to a constant value such
  that $p$ integrates to 1.) 

  \item ``High inside tube'': the sampling density $p$ is $1.2$ on $A$ (with
    again a constant value chosen on $\XDom\setminus A$ such that $p$ integrates
    to 1.)   

  \item ``Uniform'': the sampling distribution is uniform on $\XDom$.
\end{enumerate}

\noindent
We illustrate these sampling distributions empirically by drawing $n=1274$
observations from each and plotting them on the lower set of plots in
Figure \ref{fig:00_FunctionWithSamples}.  We note that the ``high'' density
value of $1.2$ for the ``high inside tube'' sampling distribution yields an
empirical distribution that---by eye---is indistinguishable from the empirical 
distribution formed from uniformly drawn samples. However, as we will soon see,
this departure from uniform is nonetheless large enough that the large sample
behavior of the TV functionals on Voronoi adjacency, $\varepsilon$-neighborhood,
and $k$-nearest neighbor graphs admit discernable differences.

\subsection{Total variation estimation}
\def\graphscale{25}

We examine the of use of the Voronoi adjacency, $k$-nearest neighbor, and 
$\varepsilon$-neighborhood graphs, built from a random sample of design 
points, to estimate the total variation of the function $f_0$ in
\eqref{eq:indicator_ball}. To be clear, here we compute (using the notation
\eqref{eq:discrete_total_variation} introduced in the asymptotic limits
section): 
\[
\DTV \Big( f_0(x_{1:n}); w \Big) = \sum_{\{i,j\} \in E} w_{ij} |f_0(x_i) -
f_0(x_j)|, 
\]
for three choices of edge weights $w$: Voronoi \eqref{eq:voronoi_weights},
$\varepsilon$-neighborhood \eqref{eq:eps_weights}, and kNN
\eqref{eq:knn_weights}.  

We let the number of design points $n$ range from $10^2$ to $10^5$,
logarithmically spaced, with 20 repetitions independently drawn from each design
distribution for each $n$. The $k$-nearest neighbor graph is built using
\smash{$k = \lfloor C_1 \log^{1.1} n\rfloor$}, and the
$\varepsilon$-neighborhood graph is built using \smash{$\varepsilon =
  C_2(\log^{1.1} n / n)^{1/2}$}, where $C_1,C_2$ are constants chosen such that 
the average degree of these graphs is roughly comparable to the average degree
of the Voronoi adjacency graph (which has no tuning parameter). We note that it
is possible to obtain marginally more stable results for the $k$-nearest
neighbor and $\varepsilon$-neighborhood graphs by taking $C_1, C_2$ to be
larger, and thus making the graphs denser. These results are deferred to
Appendix \ref{app:sensitivity_analysis}, though we remark that the need to
separately tune over such auxiliary parameters to obtain more stable results is
a disadvantage of the kNN and $\varepsilon$-neighborhood methods (recall also
the discussion in Section \ref{sub:graph_comparisons}).

Figure \ref{fig:04_TVEstimationResults} shows the results under the three design
distributions outlined previously. For each sample size $n$ and for each graph,
we plot the average discrete TV, and its standard error, with respect to the 20
repetitions. We additionally plot the limiting asymptotic values predicted by
the theory---recall \eqref{eq:asymptotic_limit_voronoi},
\eqref{eq:asymptotic_limit_eps}, \eqref{eq:asymptotic_limit_knn}---as horizontal
lines. Generally, we can see that the discrete TV, as measured by each of the
three graphs, approaches its corresponding asymptotic limit. The standard error
bars for the Voronoi graph tend to be the narrowest, whereas those for the kNN
and $\varepsilon$-neighborhood graphs are generally wider.  In the rightmost
plot, showing the results under uniform sampling, the asymptotic limits of the
discrete TV for the three methods match, since the density weighting is
nullified by the uniform distribution.

\begin{figure}[htb]
\centering
\includegraphics[width=0.98\linewidth]{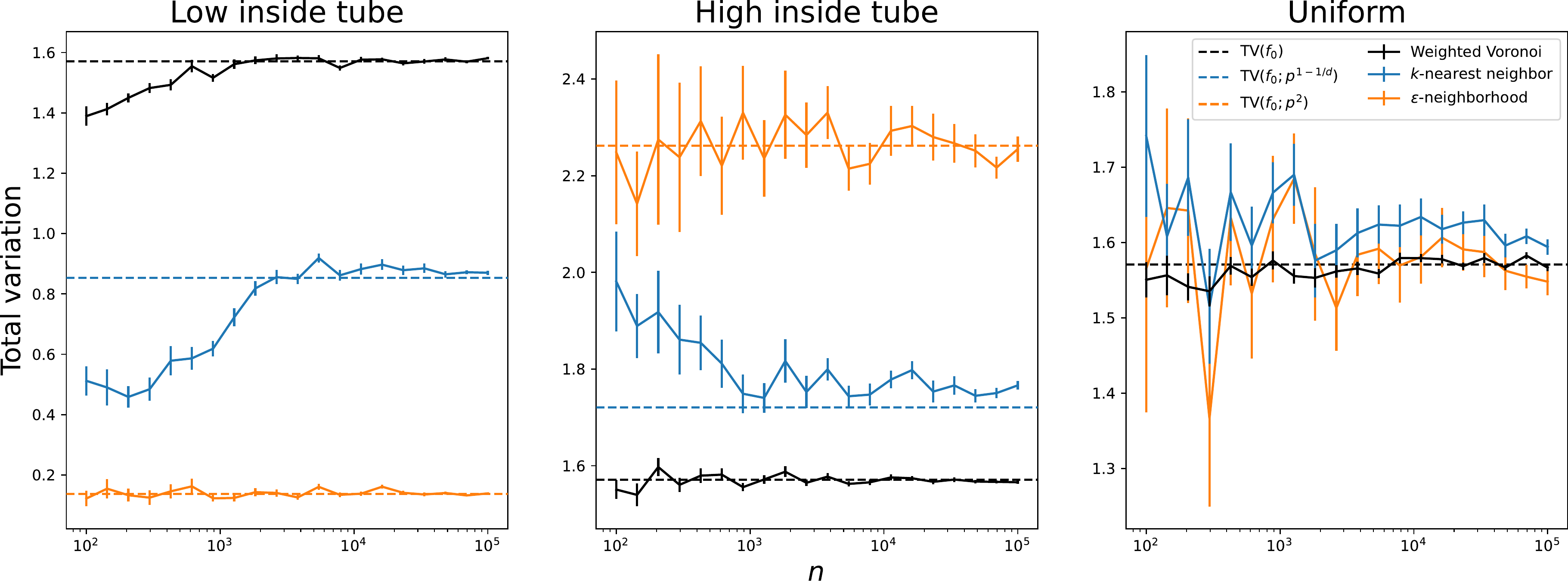}
\caption{\it
  Results from the TV estimation experiment (``weighted Voronoi'' refers to the 
  usual Voronoi adjacency graph, with weights in \eqref{eq:voronoi_weights}, and  
  is used to distinguish it from the Voronoi adjacency graph with unit edge
  weights, which will appear in later experiments). We see that the discrete TV  
  as measured by each graph converges to its asymptotic limit, drawn as a dashed
  horizontal line, as $n$ grows (note that the $x$-axis is on a log scale). 
}
\label{fig:04_TVEstimationResults}
\end{figure}

To give a qualitative sense of their differences, Figure
\ref{fig:05_TVEstimationGraphs} displays the graphs from each of the methods for
a draw of $n=1274$ samples under each sampling distribution.  Note that the
Voronoi adjacency and kNN graphs are connected (this is always the case for the
former), whereas this is not true of the $\varepsilon$-neighborhood graph
(recall Section \ref{sub:graph_comparisons}), with the most noticable contrast
being in the ``low inside tube'' sampling model. This relates to the notion that
the Voronoi and kNN graphs effectively use an adaptive local bandwidth, versus
the fixed bandwidth used by the $\varepsilon$-neighborhood graph. Comparing the
former two (Voronoi and kNN graphs), we also see that there are fewer ``holes''
in the Voronoi graph as it has the quality that it seeks neighbors ``in each
direction'' for each design point.

\begin{figure}[htbp]
\centering
\includegraphics[width=0.95\linewidth]{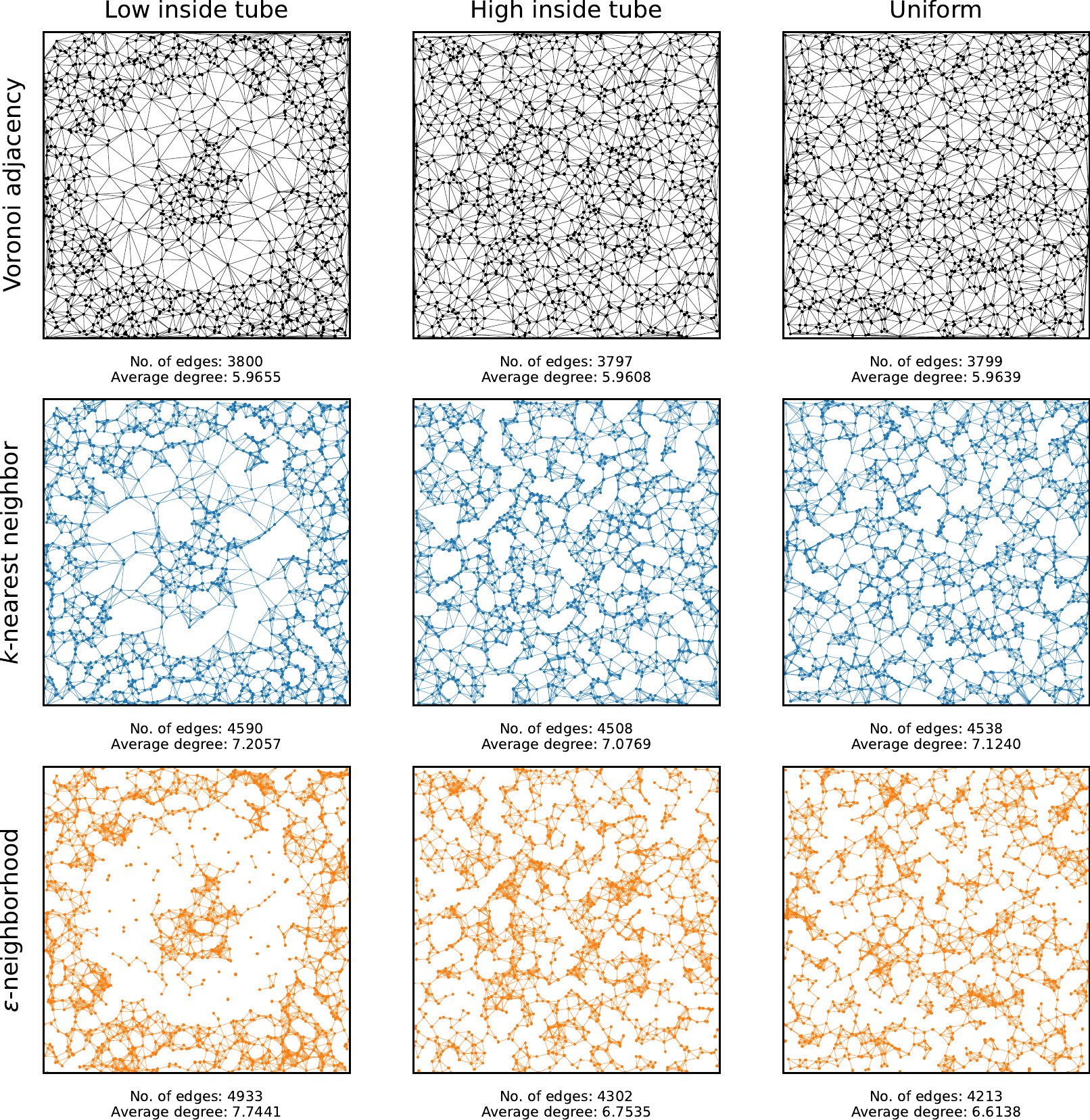}
\caption{\it
  Visualization of the Voronoi, kNN, and $\varepsilon$-neighborhood graphs for a  
  sample of $n=1274$ design points from each of the three sampling distributions
  considered. We see qualitatively very different behaviors in these three graph
  models, and we can also intuit the different asymptotic limits of their
  discrete TV functionals; for example, the strong dependence of the
  $\varepsilon$-neighborhood graph on the sampling density is quite noticeable
  in the ``low inside tube'' setting (bottom left plot).    
}
\label{fig:05_TVEstimationGraphs}
\end{figure}

\subsection{Regression function estimation}
\label{sub:function_estimation}

Next we study the use of discrete TV from the Voronoi, $k$-nearest neighbor, and
$\varepsilon$-neighborhood graphs as a penalty in a nonparametric regression
estimator. In other words, given noisy observations as in \eqref{eq:model} of
the function $f_0$ in \eqref{eq:indicator_ball}, we solve the graph TV denoising
problem \eqref{eq:generalized_lasso} with penalty operator $D$ equal to the edge 
incidence matrix corresponding to the Voronoi \eqref{eq:voronoi_weights},
$\varepsilon$-neighborhood \eqref{eq:eps_weights}, and kNN
\eqref{eq:knn_weights} graphs. 

We fix $n=1274$, and draw each $z_i \sim N(0,\sigma^2)$, where the noise level
$\sigma^2>0$ is chosen so that the signal-to-noise ratio, defined as
\[
\mathrm{SNR} = \frac{\Var(f_0(x_i))}{\sigma^2},
\]
is equal to 1. (Here $\Var(f_0(x_i))$ denotes the variance of $f_0(x_i)$ with
respect to the randomness from drawing $x_i \sim P$.) Each graph TV denoising
estimator is fit over a range of values for the tuning parameter $\lambda$, and
at value of $\lambda$ we record the $L^2(P_n)$ mean squared error
\[
\frac{1}{n} \sum_{i=1}^n \big( \hf(x_i) - f_0(x_i) \big)^2.
\]
Figure \ref{fig:02_FunctionEstimationResults} shows the average of this
$L^2(P_n)$ error, along with its standard error, across the 20 repetitions. The
$x$-axis is parametrized by an estimated degrees of freedom for each $\lambda$
value, to place the methods on common footing---that is, recalling the general 
formula in \eqref{eq:tv_denoising_df} for any TV denoising estimator, we convert
each value of $\lambda$ to the average number of resulting connected components
over the 20 repetitions.    

The results of Figure \ref{fig:02_FunctionEstimationResults} broadly align with
the expectations set forth at the end of Section \ref{sec:asymptotic_limits}:
the density-weighted methods (using kNN and $\varepsilon$-neighborhood graphs)
perform better when the irregularity is concentrated in a low density area
(``low inside tube''), and the density-free method (the Voronoigram) does better
when the irregularity is concentrated in a high density area (``high inside
tube''). We also observe that across all settings, the best performing estimator
tends to be the most parsimonious---the one that consumes the fewest degrees of
freedom when optimally tuned.

\begin{figure}[htb]
\centering
\includegraphics[width=\linewidth]{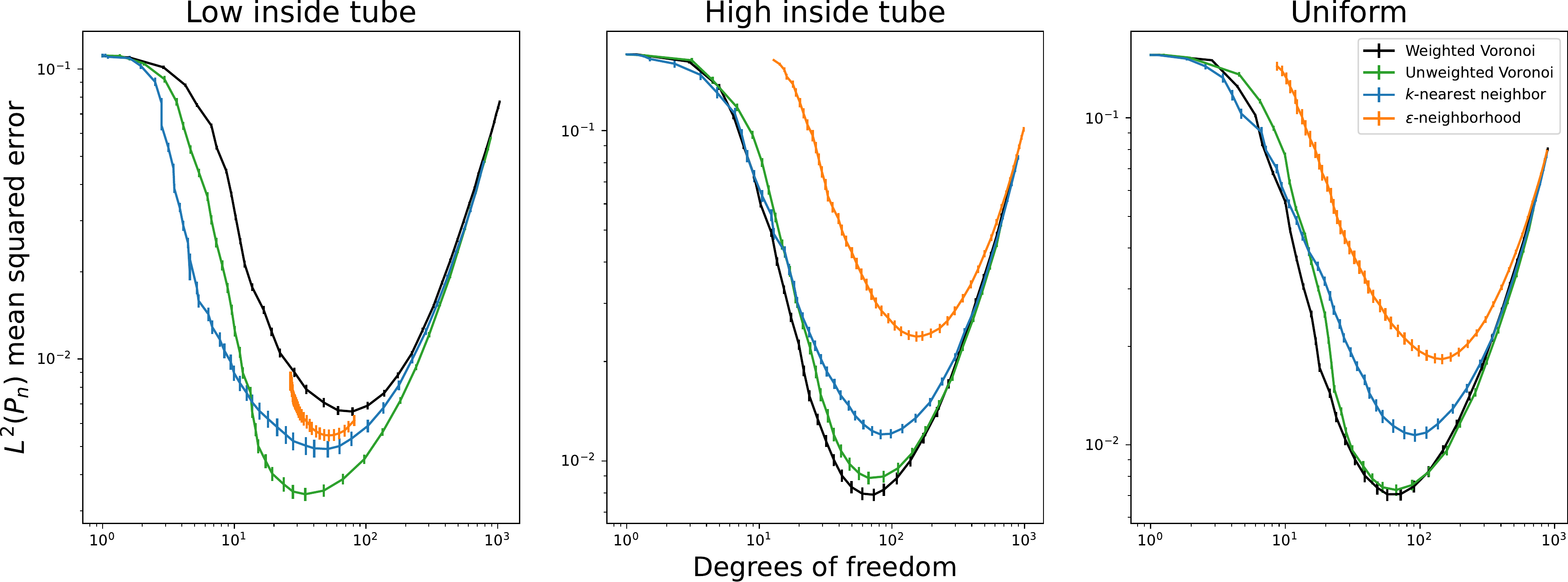}
\caption{\it
  Results from the function estimation experiment (``weighted Voronoi'' refers
  to the usual Voronoi graph and ``unweighted Voronoi'' the graph with the same
  edge structure but unit edge weights). We see that the density-weighted
  methods---TV denoising over the kNN and $\varepsilon$-neighborhood
  graphs---generally do better in the ``low inside tube'' setting, where the
  irregularity in $f_0$ is concentrated in a low density region of the design
  distribution. Conversely, density-free method---TV denoising on the Voronoi
  graph, also known as the Voronoigram---does better in the ``high inside tube''
  scenario, where irregularity is concentrated in a high density region. Lastly,
  TV denoising on the unweighted Voronoi graph does very well in each scenario.     
}
\label{fig:02_FunctionEstimationResults}
\end{figure}

In the ``low inside tube'' setting (leftmost panel of Figure
\ref{fig:02_FunctionEstimationResults}), we see that $\varepsilon$-neighborhood
graph total variation does worse than its kNN counterpart, even though we would
have expected the former to outperform the latter (because it weights the
density more heavily; cf.\ \eqref{eq:asymptotic_limit_eps} and
\eqref{eq:asymptotic_limit_knn}). The poor performance of TV denoising over the 
$\varepsilon$-neighborhood graph may be ascribed to the large number of
disconnected points (see Figures \ref{fig:05_TVEstimationGraphs} and
\ref{fig:03_FunctionEstimationPredictions}), whose fitted values it cannot
regularize.  These disconnected points are also why the minimal degrees of
freedom obtained this estimator (as $\lambda\rightarrow\infty$) is relatively
large compared to kNN TV denoising and the Voronoigram, across all three
settings.  In Appendix \ref{app:sensitivity_analysis}, we conduct sensitivity
analysis where we grow the $\varepsilon$-neighborhood and kNN graphs more
densely while retaining a comparable average degree (to each other). In that
case, the performance of the corresponding estimators becomes comparable (the
$\varepsilon$-neighborhood graph still has some disconnected points), which just 
emphasizes the peril of graph denoising methods which permit isolated points. 

Interestingly, under the uniform sampling distribution (rightmost panel of
Figure \ref{fig:02_FunctionEstimationResults}), where the asymptotic
limits of the discrete TV functionals over the Voronoi, kNN, and
$\varepsilon$-neighborhood graph are the same, we see that the Voronoigram
performs best in mean squared error, which is encouraging empirical evidence in
its favor.

Finally, Figure \ref{fig:02_FunctionEstimationResults} also displays the error
of the \emph{unweighted Voronoigram}---which we use to refer to TV denoising on
the unweighted Voronoi graph, obtained by setting each \smash{$w^\Vor_{ij} = 1$}
in \eqref{eq:tv_voronoi_graph}. This is somewhat of a ``surprise winner'': it
performs close to the best in each of the sampling scenarios, and is
computationally cheaper than the Voronoigram (it avoids the expensive step of
computing the Voronoi edge weights, which requires surface area calculations). We
lack an asymptotic characterization for discrete TV on the unweighted Voronoi
graph, so we cannot provide a strong a priori explanation for the favorable
performance of the unweighted Voronoigram over our experimental
suite. Nonetheless, in view of the example adjacency graphs in Figure
\ref{fig:05_TVEstimationGraphs}, we hypothesize that its favorable performance
is at least in part due to the adaptive local bandwidth that is inherent to the
Voronoi graph, which seeks neighbors ``in each direction'' while avoiding edge
crossings.  Moreover, in Section \ref{sec:estimation_theory} we show that the
unweighted Voronoigram shares the property of minimax rate optimality (for
estimating functions bounded in TV and $L^\infty$), further strengthening its
case.

\subsection{Extrapolation: from fitted values to functions}
\label{sub:extrapolation}

As the last part of our experimental investigations, we consider extrapolating
the graph TV denoising estimators, which represent a sequence of fitted values 
at the design points: \smash{$\hf(x_i)$}, $i=1,\dots,n$, to a entire fitted 
function: \smash{$\hf(x)$}, $x \in \XDom$. As discussed and motivated in Section
\ref{sub:graph_comparisons}, we use the 1NN extrapolation rule for each
estimator. This is equivalently viewed as piecewise constant extrapolation over 
the Voronoi tessellation. 

\begin{figure}[htb]
\centering
\includegraphics[width=0.9\linewidth]{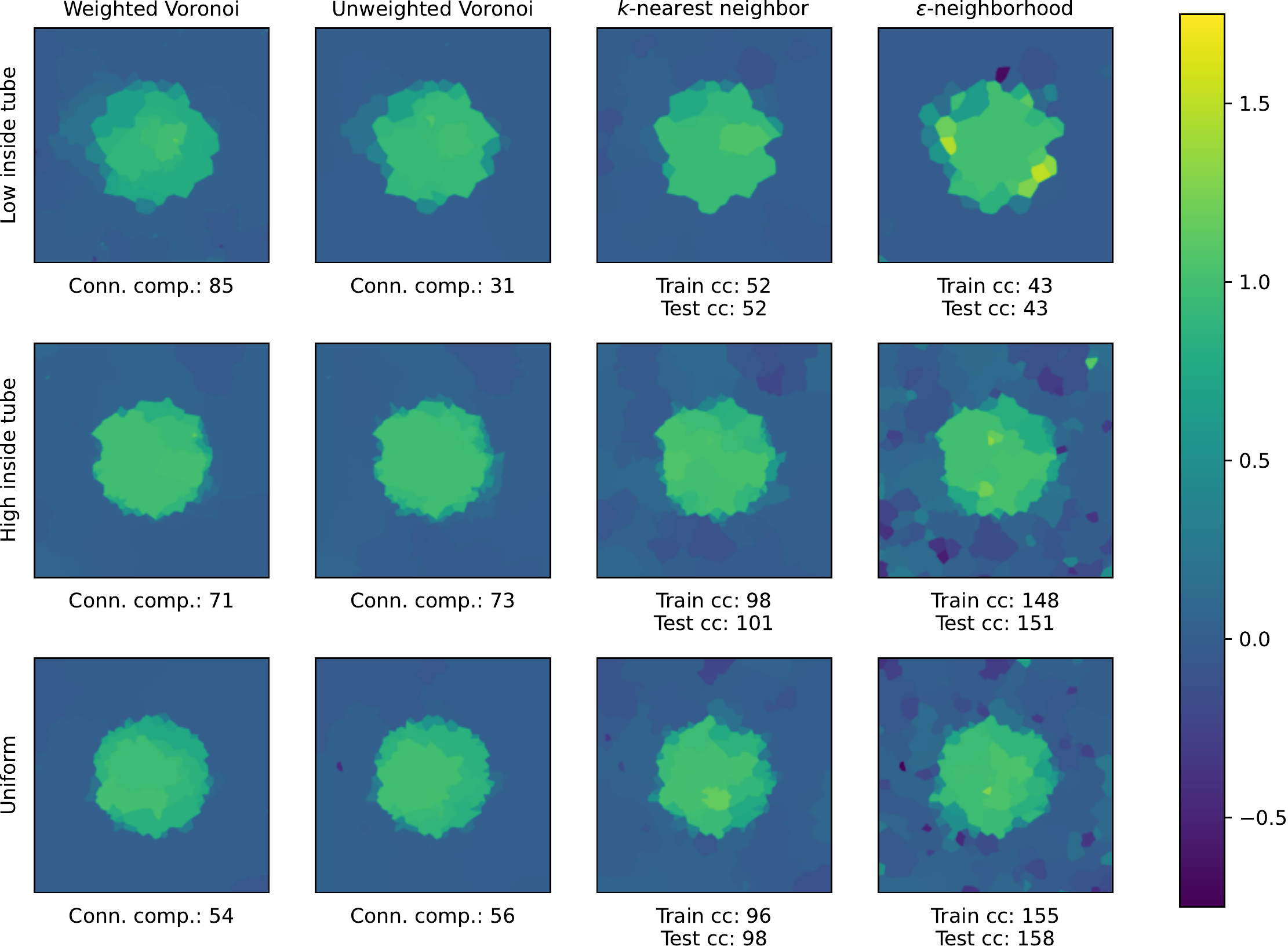}
\caption{\it
  Extrapolants from graph TV denoising estimates, using 1NN extrapolation. We can
  see several qualitative differences, for example, the issues posed by isolated
  points in the $\varepsilon$-neighborhood graph. We also note that the number
  of connected components in the graph used to learn the estimator (which gives
  an unbiased estimate of its degrees of freedom) is guaranteed to match the
  number of connected components in the extrapolant \emph{only} for the Voronoi
  methods.  
}
\label{fig:03_FunctionEstimationPredictions}
\end{figure}

Figure \ref{fig:03_FunctionEstimationPredictions} plots the extrapolants for
each TV denoising estimator, fitted over a particular sample of $n=1274$ points
from each design distribution. In each case, the estimator was tuned to have
optimal mean squared error (cf.\ Figure \ref{fig:02_FunctionEstimationResults}).
From these visualizations, we are able to clearly understand where certain
estimators struggle; for example, we can see the effect of isolated components
in the $\varepsilon$-neighborhood graph in the ``low inside tube'' setting, and
to a lesser extent in the ``high inside tube'' and uniform sampling settings
too. As for the Voronoigram, we previously observed (cf.\ Figure
\ref{fig:02_FunctionEstimationResults} again) that it struggles in the ``low
inside tube'' setting due to the large weights placed on edges crossing the
annulus, and in the upper left plot of Figure
\ref{fig:03_FunctionEstimationPredictions} we see ``patchiness'' around the
annulus, where large jumps are heavily penalized, rather than sharper jumps made
by other estimators (including its unweighted sibling). This is underscored by
the large number of connected components in the Voronoigram versus others in the
``low inside tube'' setting.

Lastly, because the partition induced by the 1NN extrapolation rule is exactly
the Voronoi diagram, we note that the number of connected components on the
training set $\{x_1,\dots,x_n\}$---as measured by connectedness of the fitted
values \smash{$\hf(x_1),\dots,\hf(x_n)$} over the Voronoi graph---always matches
the number of connected components on the test set $\XDom$---as measured by
connectedness of the extrapolant \smash{$\hf$} over the domain $\XDom$. This is
not true of TV denoising over the kNN and $\varepsilon$-neighborhood graphs,
where we can see a mismatch between connectedness pre- and post-extrapolation.

\section{Estimation theory for BV classes}
\label{sec:estimation_theory}
In this section, we analyze error rates for estimating $f_0$ given data as in
\eqref{eq:model}, under the assumption that $f_0$ has bounded total
variation. Thus, of central interest will be a (seminorm) ball in the BV space, 
which we denote by      
\[
\BV(L) = \{ f \in L^1(\XDom): \TV(f) \leq L \}.
\]
For simplicity, here and often throughout this section, we suppress the
dependence on the domain $\XDom$ when referring to various function classes of
interest. We use $P$ for the design distribution, and we will primarily be
interested in error in the $L^2(P)$ norm, defined as 
\[
\| \hf - f_0 \|_{L^2(P)}^2 = \int \big( \hf(x) - f_0(x) \big)^2 \, dP(x).
\]
We also use $P_n$ for the empirical distribution of sample $x_1,\dots,x_n$ of
design points, and we will also be concerned with error in the $L^2(P_n)$ norm, 
defined as  
\[
\| \hf - f_0 \|_{L^2(P_n)}^2 = \frac{1}{n} \sum_{i=1}^n \big( \hf(x) - f_0(x)
\big)^2. 
\]
We will generally use the terms ``error'' and ``risk'' interchangeably. Finally,
we will consider the following assumptions, which we refer to as \emph{the
  standard assumptions}.   
\begin{itemize}
\item The design points $x_i$, $i=1,\dots,n$, are i.i.d.\ from a distribution $P$
  satisfying Assumption \ref{assump:density_bounded}.  
\item The response points $y_i$, $i=1,\dots,n$, follow \eqref{eq:model}, with
  i.i.d.\ errors $z_i \sim N(0,\sigma^2)$, $i=1,\dots,n$.  
\item The dimension satisfies $d \geq 2$ and remains fixed as $n \to \infty$.
\end{itemize}
Note that under Assumption \ref{assump:density_bounded}, asymptotic statements
about $L^2(P)$ and $L^2(\Leb)$ errors are equivalent, with $\Leb$ denoting
Lebesgue measure (the uniform distribution) on $\XDom$, since it holds that
\smash{$p_{\mn} \| g \|_{L^2(\Leb)}^2 \leq \| g \|_{L^2(P)}^2 \leq p_{\mx} \| g
  \|_{L^2(\Leb)}^2$} for any function $g$. 

\subsection{Impossibility result without \texorpdfstring{$L^\infty$}{L infty}
  boundedness} 
\label{sub:impossibility}

A basic issue to explain at the outset is that, when $d \geq 2$, consistent
estimation over the BV class $\BV(L)$ is \emph{impossible} in $L^2(P)$
risk. This is in stark contrast to the univariate setting, $d=1$, in which
TV-penalized least squares \citep{mammen1997locally, sadhanala2019additive},
and various other estimators, offer consistency. 

One way to see this is through the fact that $\BV(\XDom)$ does not compactly
embed into $L^2(\XDom)$ for $d \geq 2$, which implies that $L^2$ estimation over
$\BV(L)$ is impossible (see Section 5.5 of \citet{johnstone2015gaussian} for a
discussion of this phenomenon in the Gaussian sequence model). We now state 
this impossibility result and provide a more constructive proof, which sheds
more light on the nature of the problem.        

\begin{proposition}
\label{prop:impossibility}
Under the standard assumptions, there exists a constant $c>0$ (not depending on
$n$) such that     
\[
\inf_{\hf} \sup_{f_0 \in \BV(1) \cap L^2(\XDom)} \E\| \hf - f_0 \|_{L^2(P)}^2
\geq c > 0,  
\]
where the infimum is taken over all estimators \smash{$\hf$} that are measurable
functions of the data $(x_i,y_i)$, $i=1,\dots,n$. 
\end{proposition}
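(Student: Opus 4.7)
The plan is to establish the lower bound via a Bayes-risk reduction with a two-point prior, exploiting the fact that in $d\ge 2$ one can place a ``spike'' of bounded total variation in a ball so small that the random design is unlikely to contain any of its support. The key building block is the indicator identity \eqref{eq:total_variation_indicator}: for a ball $B(x^\ast,\rho) \subset \XDom$ we have $\TV(1_{B(x^\ast,\rho)}) = c_d \rho^{d-1}$, where $c_d > 0$ depends only on $d$. Setting $h_\rho := 1/(c_d \rho^{d-1})$ and $f_m := h_\rho \cdot 1_{B(x_m^\ast,\rho)}$, each $f_m$ lies in $\BV(1)$; since its squared $L^2(\Leb)$-norm equals $(c_d'/c_d^2)\rho^{2-d}$, where $c_d'$ is the volume of the unit ball in $\R^d$, we also have $f_m \in L^2(\XDom)$.

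Given this construction, I would fix two disjoint balls $B(x_1^\ast,\rho), B(x_2^\ast,\rho) \subset \XDom$ with radius $\rho := (4 n p_{\mx} c_d')^{-1/d}$ (which sit disjointly inside $\XDom$ once $n$ is large enough, since $\rho \to 0$), place a uniform prior $\pi$ on $\{f_1, f_2\}$, and lower bound the minimax risk by the resulting Bayes risk. Let $E$ denote the event that no design point falls in $B(x_1^\ast,\rho) \cup B(x_2^\ast,\rho)$; a union bound gives $\P(E^c) \le 2 n p_{\mx} c_d' \rho^d = 1/2$ by our choice of $\rho$. On $E$, each $y_i$ equals $z_i$ no matter which $f_m$ was drawn, and the design $x_{1:n}$ is itself independent of $m$, so the posterior of $m$ given $(x_{1:n}, y_{1:n})$ coincides with $\pi$. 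Consequently, for any estimator \smash{$\hf$}, its conditional Bayes risk on $E$ is at least the prior variance of $f$ in $L^2(P)$, which by the orthogonality $\langle f_1, f_2\rangle_{L^2(P)} = 0$ (disjoint supports) equals $\tfrac{1}{4}\|f_1 - f_2\|_{L^2(P)}^2 \ge \tfrac{1}{2} p_{\mn}(c_d'/c_d^2)\rho^{2-d}$.

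Multiplying by $\P(E) \ge 1/2$ yields an unconditional Bayes risk of at least $\tfrac{1}{4} p_{\mn}(c_d'/c_d^2)\rho^{2-d}$; with $\rho \asymp n^{-1/d}$ this is a positive constant when $d = 2$ (and diverges like $n^{(d-2)/d}$ when $d > 2$). Since the prior is supported on $\BV(1) \cap L^2(\XDom)$, this Bayes risk bounds the minimax risk from below, completing the argument. The main obstacle is bookkeeping: arranging that the two small balls fit disjointly inside $\XDom$ for all sufficiently large $n$ (immediate since $\rho \to 0$), tracking the dimensional constants, and confirming that the ``no-signal'' event $E$ truly decouples the posterior from $m$ (which uses that the errors $z_i$ are independent of the design and that $f_m$ vanishes off its small ball). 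Beyond these, the argument reduces to a clean Bayesian mean-squared-error calculation made transparent by the disjoint supports of $f_1$ and $f_2$.
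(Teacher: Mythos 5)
Your proposal is correct, and it takes a genuinely different (though closely related) route from the paper's. The paper also exploits the ``invisible spike'' phenomenon, but via a classical Le Cam two-point test: it compares the null $f^\star_0 = 0$ against a single spike $f^\star_1 = \frac{\epsilon^{-d/2}}{2d} \cdot 1_{(0,\epsilon)^d}$, keeps the $L^2$ separation fixed at $1/(2d)$, and then sends $\epsilon \to 0$ so that the probability that the design avoids $(0,\epsilon)^d$ tends to $1$. You instead put a uniform prior on \emph{two} disjointly supported spikes of fixed width $\rho \asymp n^{-1/d}$, observe that on the event $E$ (no design point in either support) the posterior is the prior, and lower-bound the Bayes risk by the prior variance multiplied by $\P(E) \geq 1/2$. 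The mechanisms are dual: the paper drives the ``missing'' probability to $1$ while fixing the separation; you fix the missing probability at a constant and let the separation do the work. Your version incidentally yields a lower bound that diverges like $n^{(d-2)/d}$ for $d > 2$, a sharper statement than needed, whereas the paper's yields a clean constant for all $d$. Both arguments are sound. Two bookkeeping items worth flagging: (i) you need $n$ large enough that two disjoint $\rho$-balls fit inside $\XDom$, which you note; for the finitely many small $n$ one either uses a larger $\rho$ (still giving $\P(E) > 0$) or invokes that the minimax risk is trivially positive, so the final constant $c$ is the minimum over these cases; (ii) the step ``conditional Bayes risk on $E$ is at least the prior variance'' is correct precisely because the optimal predictor on $E$ is the prior mean and any estimator does at least as badly, and because $E$ is measurable in $x_{1:n}$ alone, which is independent of $m$.
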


\begin{proof}
As explained above, under Assumption \ref{assump:density_bounded} we may
equivalently study $L^2(\Leb)$ risk, which we do henceforth in this proof. We
simply denote \smash{$\|\cdot\|_{L^2} = \|\cdot\|_{L^2(\Leb)}$}. Consider the
two-point hypothesis testing problem of distinguishing
\[
H_0: f^\star_0 = 0 \quad \text{versus} \quad
H_1: f^\star_1 = \frac{\epsilon^{-d/2}}{2d} \cdot \1_{(0,\epsilon)^d}, 
\]
where $0 < \epsilon < 1$. By construction, $f \in L^2(\XDom)$ and $\TV(f) \leq
1$ for each of \smash{$f = f^\star_0$} and \smash{$f =
  f^\star_1$}. Additionally, we have \smash{$\|f^\star_0 -   f^\star_1\|_{L^2} = 
  \frac{1}{2d}$}. It follows from a standard reduction that  
\begin{align}
\nonumber
\inf_{\hf} \sup_{f_0 \in \BV(1) \cap L^2(\XDom)} \E\|\hf - f_0\|_{L^2} 
&\geq \inf_{\hf}\sup_{f_0 \in \{f^\star_0, f^\star_1\}} \E\|\hf - f_0\|_{L^2} \\
\label{eq:testing_bound}
&\geq \inf_{\psi}\Big( \P_{H_0}(\psi = 1) + \P_{H_1}(\psi=0) \Big),   
\end{align}
where the infimum in the rightmost expression is over all measurable tests 
$\psi$. Now, conditional on the event
\[
\cE = \{ x_i \not\in (0,\epsilon)^d, \, i=1,\dots,n \}, 
\]
the distributions are the same under null and alternative hypotheses,
\smash{$\P_{H_0}(\cdot | \cE) = \P_{H_1}(\cdot | \cE)$}. Additionally, note that
we have \smash{$\P(\cE) \geq (1 - p_\mx\epsilon^d)^n$} under Assumption  
\ref{assump:density_bounded}. Consequently, for any test $\psi$, 
\begin{align*}
\P_{H_1}(\psi = 1) 
&= \P_{H_1}(\psi = 1 | \cE) \P(\cE) + \P_{H_1}(\psi = 1 | \cE^c) \P(\cE^c) \\
&\leq \P_{H_1}(\psi = 1) + 1 - (1 - p_\mx\epsilon^d)^n \\
&= \P_{H_0}(\psi = 1) + 1 - (1 - p_\mx\epsilon^d).
\end{align*}
In other words, just rearranging the above, we have shown that 
\[
\P_{H_0}(\psi = 1) + \P_{H_1}(\psi = 0) \geq (1 - p_\mx\epsilon^d).
\]
Taking $\epsilon \to 0$, and plugging this back into \eqref{eq:testing_bound},
establishes the desired result.  
\end{proof}

The proof of Proposition \ref{prop:impossibility} reveals one reason why
consistent estimation over $\BV(L)$ is not possible: when $d\geq 2$, functions
of bounded variation can have ``spikes'' of arbitrarily small width but large 
height, which cannot be witnessed by any finite number of samples. (We note that
this has nothing to do with noise in the response, and the proposition still
applies in the noiseless case with $\sigma=0$.) This motivates a solution: in 
the remainder of this section, we will rule out such functions by additionally
assuming that $f_0$ is bounded in $L^\infty$.  

\subsection{Minimax error: upper and lower bounds}

Henceforth we assume that $f_0$ has bounded TV \emph{and} has bounded $L^\infty$
norm, that is, we consider the class 
\[
\BV_\infty(L, M) = \{ f \in L^1(\XDom): \TV(f) \leq L, \, \|f\|_{L^\infty} \leq
M \}.
\]
Here \smash{$\|\cdot\|_{L^\infty} = \|\cdot\|_{L^\infty(\XDom)}$} is the
essential supremum norm on $\XDom$. Perhaps surprisingly, additionally assuming
that $f_0$ is bounded in $L^\infty$ dramatically improves prospects for
estimation. The following theorem shows that two different and simple
modifications of the  Voronoigram, appropriately tuned, each achieve a
$n^{-1/d}$ rate of convergence in its sup risk over $\BV_\infty(L,M)$, modulo
log factors. 

\begin{theorem}
\label{thm:voronoi_ub}
Under the standard assumptions, consider either of the following modified
Voronoigram estimators \smash{$\htheta$}: 
\begin{itemize}
\item the minimizer in the Voronoigram problem \eqref{eq:tv_voronoi_graph}, once
  we replace each weight \smash{$w^\Vor_{ij}$} by a clipped version defined as 
  \smash{$\tilde{w}^\Vor_{ij} = \max\{c_0 n^{-(d-1)/d}, w^\Vor_{ij}\}$}, for any
  constant $c_0>0$.         
\item the minimizer in the Voronoigram problem \eqref{eq:tv_voronoi_graph}, once
  we replace each weight \smash{$w^\Vor_{ij}$} by 1.   
\end{itemize}
Let \smash{$\lambda = c\sigma\tau_n(\log n)^{1/2+\alpha}$} for any $\alpha>1$
and a constant $c>0$, where \smash{$\tau_n=n^{(d-1)/d}$} for the clipped weights
estimator and $\tau_n=1$ for the unit weights estimator. There exists another
constant $C>0$ such that for all sufficiently large $n$ and $f_0 \in
\BV_\infty(L,M)$, the estimated function \smash{$\hf = \sum_{i=1}^n \htheta_i
  \cdot 1_{\Part_i}$} (which is piecewise constant over the Voronoi diagram) 
satisfies 
\begin{equation}
\label{eq:voronoi_ub}   
\E\| \hf - f_0 \|_{L^2(P)}^2 \leq C \bigg(
\frac{\sigma L(\log n)^{5/2 + \alpha + 1/d}}{n^{1/d}} + 
\frac{(\log n)^{1+\alpha}}{n} +
\frac{L M(\log n)^{1 + 1/d}}{n^{1/d}} \bigg).
\end{equation}
\end{theorem}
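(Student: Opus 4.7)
The plan is a standard penalized–least-squares analysis anchored at an oracle $\theta^* \in \R^n$ defined via cell averages of $f_0$: $\theta^*_i = \AvInt_{\Part_i} f_0$, with continuum counterpart $f^* = \sum_i \theta^*_i \cdot 1_{\Part_i} \in \cF^\Vor_n$. Optimality of $\htheta$ for the modified Voronoigram yields the basic inequality
\[
\tfrac12 \|\htheta - \theta^*\|_2^2 \leq z^\T(\htheta - \theta^*) + \lambda \bigl(\|\tilde D \theta^*\|_1 - \|\tilde D \htheta\|_1\bigr),
\]
where $\tilde D$ is the (clipped or unit-weighted) Voronoi incidence operator. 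The final population risk is then split via the triangle inequality as $\|\hat f - f_0\|_{L^2(P)}^2 \lesssim \|\hat f - f^*\|_{L^2(P)}^2 + \|f^* - f_0\|_{L^2(P)}^2$. Since both $\hat f$ and $f^*$ are piecewise constant on the Voronoi diagram, the first term equals $\sum_i P(\Part_i)(\htheta_i - \theta^*_i)^2$, which I control by $\|\htheta - \theta^*\|_2^2/n$ using a standard Voronoi-geometry concentration bound $\max_i \Leb(\Part_i) \lesssim \log(n)/n$ valid under Assumption \ref{assump:density_bounded}.

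For the approximation term, I apply the Poincar\'e--Wirtinger inequality for BV functions cell-by-cell, $\int_{\Part_i} |f_0 - \theta^*_i| \, d\Leb \lesssim \mathrm{diam}(\Part_i) \cdot \TV(f_0; \Part_i)$, sum to get $\|f^* - f_0\|_{L^1(\Leb)} \lesssim L \cdot \max_i \mathrm{diam}(\Part_i)$, and interpolate against $\|f^* - f_0\|_{L^\infty} \leq 2M$. A high-probability bound $\max_i \mathrm{diam}(\Part_i) \lesssim (\log n / n)^{1/d}$ then yields the $LM$-scaled summand of \eqref{eq:voronoi_ub}. For the oracle penalty, Proposition \ref{prop:tv_representation} identifies $\sum_{ij} w^\Vor_{ij} |\theta^*_i - \theta^*_j|$ with $\TV(f^*)$, which is $\lesssim \TV(f_0) \leq L$ by contractivity of cell-averaging under TV. For the clipped estimator, the extra mass from $\tilde w^\Vor_{ij} - w^\Vor_{ij} \leq c_0 n^{-(d-1)/d}$ summed over $|E^\Vor| = O(n)$ edges with $|\theta^*_i - \theta^*_j| \leq 2M$ contributes at most $O(M n^{1/d})$, absorbed at the scale set by $\tau_n = n^{(d-1)/d}$; for unit weights an analogous argument bounds $\|D\theta^*\|_1 \lesssim n^{(d-1)/d} L$ by rescaling against $\min_{ij} w^\Vor_{ij}$. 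In both cases the theorem's choice of $\lambda$ makes $\lambda \|\tilde D \theta^*\|_1$ of order $\sigma L n^{(d-1)/d} (\log n)^{1/2+\alpha}$.

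The main obstacle is the noise term $z^\T(\htheta - \theta^*)$, which I would handle by the DFS reduction of \citet{wang2016trend, padilla2020adaptive}. Because the Voronoi adjacency graph is connected (Lemma \ref{lem:topological-connectedness-equiv-graph-connectedness}), I pick a spanning tree and a depth-first traversal $\pi$; since each edge is crossed at most twice, for any $v \in \R^n$,
\[
\sum_{t} |v_{\pi(t+1)} - v_{\pi(t)}| \leq \frac{2}{\min_{ij} \tilde w^\Vor_{ij}} \|\tilde D v\|_1.
\]
This reduces the problem to 1D chain TV, where sharp oracle inequalities (Dalalyan--Hebiri--Lederer, Guntuboyina--Lieu--Chatterjee--Sen, or Lin--Sharpnack--Rinaldo--Tibshirani) give a bound of the form $z^\T(\htheta - \theta^*) \lesssim \sigma \sqrt{\log n} \cdot (\text{chain-TV of }\htheta - \theta^*) + \sigma^2 (\log n)^{1+\alpha}$, with the tail producing the $(\log n)^{1+\alpha}/n$ summand of \eqref{eq:voronoi_ub}. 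Substituting back into the basic inequality, the chain-TV term is absorbed into $\lambda \|\tilde D \htheta\|_1$ by the chosen scale of $\lambda$, leaving $\|\htheta - \theta^*\|_2^2 \lesssim \lambda \|\tilde D \theta^*\|_1 + \sigma^2 (\log n)^{1+\alpha}$; dividing by $n$ and converting to $L^2(P)$ via the cell-volume bound delivers \eqref{eq:voronoi_ub}. The hard part is keeping the $\log$ powers straight through the DFS reduction, the 1D oracle inequality, and the cell-volume/diameter concentration, and ensuring that the clipped and unit-weight variants admit a unified analysis at the scale dictated by $\tau_n$.
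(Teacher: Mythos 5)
The decisive flaw is the choice of a depth-first-search reduction to a 1D chain. After the DFS traversal, the chain incidence operator $D_{\text{chain}}$ satisfies $\|D_{\text{chain}}v\|_1 \lesssim \|\tilde D v\|_1$, but the spectral properties of the path graph are far too weak to give the rate $n^{-1/d}$. Concretely, the path graph has singular values $\xi_k \asymp k/n$ and incoherent left singular vectors, so the Gaussian width of $\{v : \|D_{\text{chain}} v\|_1 \leq 1\}$ scales like $\sigma\sqrt{n}$, not $\sigma\sqrt{\log n}$ as you assert. Equivalently, the quantity $\sum_k \|u_k\|_\infty^2/\xi_k^2$ appearing in Theorem~\ref{thm:graph-tv-denoising-surrogate-graph} is $\Theta(n)$ for the chain, versus $O(\log n)$ for the $d$-dimensional grid when $d=2$ and $O(1)$ when $d\geq 3$ (see~\eqref{eqn:grid-svd-scaling}). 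Consequently, any bound you extract from 1D TV oracle inequalities with $\|D_{\text{chain}}\theta_0\|_1 \asymp n^{(d-1)/d}L$ yields an $L^2(P_n)$ error of order $(\sigma^2 L/n)^{2/3}(n^{(d-1)/d})^{2/3} \asymp n^{-2/(3d)}$, which is strictly slower than $n^{-1/d}$ for every $d\geq 2$. The paper's argument avoids this precisely by embedding the Voronoi graph into a $d$-dimensional \emph{lattice} rather than a chain: Lemma~\ref{lem:random-graph-embedding} builds a mesh with $\bar N\asymp n/\log^\alpha n$ cells, and Theorem~\ref{thm:graph-tv-denoising-surrogate-graph} splits $v = \SqAvMat v + (I-\SqAvMat)v$ so that the averaged part feels only the grid's spectrum while the residual is controlled by $\|z\|_\infty\|Dv\|_1$. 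Both the local-averaging step and the $d$-dimensional surrogate are essential; a 1D chain surrogate cannot deliver the claimed bound.

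Several secondary points would also need repair. First, with the cell-average oracle $\theta^*_i = \AvInt_{\Part_i}f_0$ the basic inequality has a missing cross-term $(\theta_0-\theta^*)^\T(\htheta - \theta^*)$ where $\theta_{0i}=f_0(x_i)$; this is why the paper anchors at $\theta_0=f_0(x_{1:n})$ (so that $\|\htheta-\theta_0\|_2^2/n=\|\hf-f_0\|^2_{L^2(P_n)}$) and handles the off-sample gap separately via the 1NN extrapolant in Lemma~\ref{lem:voronoi_extrapolate_ub}. Second, the assertion that $\TV(f^*)\lesssim\TV(f_0)$ by ``contractivity of cell-averaging'' is too strong: the piecewise-constant projection on a random Voronoi tessellation generically inflates TV by logarithmic factors, and the paper's Lemma~\ref{lem:voronoi_dtv_ub} is a nontrivial result with a $(\log n)^{1+1/d}$ loss. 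Third, your clipping accounting does not close: a contribution of order $Mn^{1/d}$ to $\|\tilde D\theta^*\|_1$ combined with $\lambda\asymp\sigma n^{(d-1)/d}(\log n)^{1/2+\alpha}$ gives $\lambda\cdot Mn^{1/d}/n\asymp\sigma M(\log n)^{1/2+\alpha}$, which does not vanish; the proof of Lemma~\ref{lem:voronoi_dtv_ub} instead controls the clipped term by passing back through the \emph{unit-weight} DTV bound, which exploits that $|f_0(x_i)-f_0(x_j)|$ is small for most adjacent pairs, rather than the crude $2M\cdot|E^\Vor|$ bound.
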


We now certify that this upper bound is tight, up to log factors, by providing a
complementary lower bound. 

\begin{theorem}
\label{thm:minimax_lb}
Under the standard assumptions, provided that $n,L,M$ satisfy 
\smash{$c_0 (M^2 n)^{-\frac{(d-1)}{d}} \leq L \leq C_0 (M^2 n)^{1/d}$} 
for constants $C_0>c_0>0$, the minimax risk satisfies 
\begin{equation}
\label{eq:minimax_lb}
\inf_{\hf} \sup_{f_0 \in\BV_\infty(L, M)}  
\E\| \hf - f_0 \|_{L^2(P)}^2 \geq CLM (M^2 n)^{-1/d},
  \end{equation}
for another constant $C>0$, where the infimum is taken over all estimators
\smash{$\hf$} that are measurable functions of the data $(x_i,y_i)$,
$i=1,\dots,n$.   
\end{theorem}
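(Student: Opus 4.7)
The plan is to apply the standard Fano method with a hypercube packing of $\BV_\infty(L,M)$. All constants below are allowed to depend on $d$, $p_{\mn}$, $p_{\mx}$, and $\sigma$.

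\emph{Step 1 (packing construction).} Set the cube side length $\epsilon = (c/(nM^2))^{1/d}$ for a small constant $c>0$ to be determined, and let $K = \lfloor c'L/(M\epsilon^{d-1}) \rfloor$ for a small constant $c'>0$. Place $K$ disjoint axis-aligned cubes $B_1,\dots,B_K\subset\XDom$ of side length $\epsilon$; the upper hypothesis $L \le C_0(M^2n)^{1/d}$ ensures the total volume $K\epsilon^d\lesssim 1$ and so the cubes fit, while the lower hypothesis $L \ge c_0 (M^2n)^{-(d-1)/d}$ ensures $K$ is at least a sufficiently large constant. For each $\omega\in\{0,1\}^K$, define
\[
f_\omega \;=\; M\sum_{k=1}^K \omega_k\cdot 1_{B_k}.
\]
Then $\|f_\omega\|_{L^\infty}\le M$ and, using $\TV(1_{B_k}) = \per(B_k) = 2d\epsilon^{d-1}$ from \eqref{eq:total_variation_indicator},
\[
\TV(f_\omega) \;\le\; 2dMK\epsilon^{d-1} \;\le\; L
\]
once $c'$ is small, so $f_\omega\in\BV_\infty(L,M)$ for every $\omega$.

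\emph{Step 2 (separation and information).} By the Varshamov--Gilbert lemma there exists $\Omega\subset\{0,1\}^K$ with $|\Omega|\ge 2^{K/8}$ and pairwise Hamming distance at least $K/8$. For $\omega\ne\omega'$ in $\Omega$, the cubes $B_k$ are disjoint, so
\[
\|f_\omega - f_{\omega'}\|_{L^2(\Leb)}^2 \;=\; M^2\epsilon^d\, d_H(\omega,\omega') \;\ge\; \tfrac{1}{8}M^2\epsilon^d K \;\asymp\; LM(M^2n)^{-1/d},
\]
and by Assumption \ref{assump:density_bounded} the same lower bound holds for $\|\cdot\|_{L^2(P)}^2$ up to the factor $p_{\mn}$. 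On the information side, the KL divergence between the $n$-fold product laws of $(x_i,y_i)$ under $f_\omega$ and $f_{\omega'}$ is bounded by
\[
D_{\mathrm{KL}}\bigl(P_{f_\omega}^n \,\bigl\|\, P_{f_{\omega'}}^n\bigr) \;=\; \frac{n}{2\sigma^2}\|f_\omega-f_{\omega'}\|_{L^2(P)}^2 \;\le\; \frac{np_{\mx}}{2\sigma^2}M^2\epsilon^d K \;=\; \frac{c\,p_{\mx}}{2\sigma^2}\,K.
\]
Choosing $c$ small enough (depending on $\sigma,p_{\mx}$) makes this at most $\frac{1}{16}K\log 2 \le \tfrac{1}{2}\log |\Omega|$.

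\emph{Step 3 (Fano and conclusion).} Applying the standard Fano/Tsybakov reduction (e.g., Theorem 2.5 of Tsybakov) to the packing $\{f_\omega:\omega\in\Omega\}$ gives
\[
\inf_{\hf}\sup_{\omega\in\Omega}\P_{f_\omega}\bigl(\|\hf - f_\omega\|_{L^2(P)}^2 \ge \delta\bigr) \;\ge\; \tfrac{1}{2},
\]
for $\delta \asymp M^2\epsilon^d K \asymp LM(M^2n)^{-1/d}$, and Markov's inequality then yields the claimed lower bound on $\E\|\hf-f_0\|_{L^2(P)}^2$.

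\emph{Main obstacle.} The only delicate part is simultaneously balancing the three constraints: the TV budget $MK\epsilon^{d-1}\lesssim L$, the volume budget $K\epsilon^d\lesssim 1$, and the information budget $nM^2\epsilon^d\lesssim \sigma^2$. The information budget pins $\epsilon^d\asymp 1/(nM^2)$, which substituted into the TV budget fixes $K\asymp LM^{(d-2)/d}n^{(d-1)/d}$ and into $\delta$ yields exactly $LM(M^2n)^{-1/d}$; the hypotheses on $L$ are precisely what is needed so that $K$ is large enough to drive the Fano error to a constant, and that the $K$ cubes fit inside $\XDom$.
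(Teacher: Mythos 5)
Your proof is correct in structure and yields the same rate; the construction (disjoint axis-aligned cubes of side $\epsilon$, amplitude $M$, indexed by a hypercube) is essentially identical to the paper's. The key difference is the two-point reduction: you use a Varshamov--Gilbert packing followed by Fano's inequality, whereas the paper reduces to Hamming-loss estimation of binary sequences and invokes Assouad's lemma (Lemma~2.12 of Tsybakov). Both are standard routes for hypercube constructions and give the same $LM(M^2n)^{-1/d}$ lower bound. Assouad has the small structural advantage that it uses all $2^K$ corners directly (no need for a $2^{K/8}$-packing), which makes the paper's Step 3 a clean constrained optimization over $(a,\epsilon,s)$; your Fano route trades this for the more widely taught reduction. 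One small calibration slip worth flagging: Tsybakov's Theorem~2.5 requires the KL-to-$\log M$ ratio to be below $1/8$, not $1/2$ as you wrote. This is immaterial --- simply choose $c$ so that $\tfrac{c\,p_\mx}{2\sigma^2}K \le \tfrac{1}{128}K\log 2$, which makes the ratio at most $1/16 < 1/8$ --- but the inequality chain as written does not quite close. Everything else, including the translation from $L^2(\Leb)$ to $L^2(P)$ via $p_{\mn},p_{\mx}$, the KL computation for the random-design Gaussian model, and the role of the hypotheses on $L$ in guaranteeing $K\epsilon^d \lesssim 1$ and $K$ large enough for Varshamov--Gilbert, is handled correctly and parallels what the paper's feasibility constraints accomplish in Step~3.
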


Taken together, Theorems \ref{thm:voronoi_ub} and \ref{thm:minimax_lb} establish
that the minimax rate of convergence over $\BV_\infty(1,1)$ is $n^{-1/d}$,
modulo log factors. Further, after subjecting it to minor modifications---either
clipping small edge weights, or setting all edge weights to unity (the latter
being particularly desirable from a computational point of view)---the
Voronoigram is minimax rate optimal, again up to log factors.

The proof of the lower bound \eqref{eq:minimax_lb} is Theorem
\ref{thm:minimax_lb} is fairly standard and can be found in Appendix
\ref{app:estimation_theory}. The proof of the upper bound \eqref{eq:voronoi_ub}
in Theorem \ref{thm:voronoi_ub} is much more involved, and the key steps are
described over Sections \ref{sub:analysis_empirical} and
\ref{sub:analysis_population} (with the details deferred to Appendix
\ref{app:estimation_theory}). Before moving on to key parts of the analysis, we
make several remarks.

\begin{remark}
It is not clear to us whether clipping small weights in the Voronoigram penalty
as we do in Theorem \ref{thm:voronoi_ub} (via \smash{$\tilde{w}^\Vor_{ij} = 
  \max\{c_0 n^{-(d-1)/d}, w^\Vor_{ij}\}$}) is actually needed, or whether the
unmodified estimator \eqref{eq:tv_voronoi_graph} itself attains the same or a    
similar upper bound, as in \eqref{eq:voronoi_ub}. In particular, it may be that
under Assumption \ref{assump:density_bounded}, the surface area of the
boundaries of Voronoi cells (defining the weights) are already lower bounded in 
rate by $n^{-(d-1)/d}$, with high probability; however this is presently unclear
to us.     
\end{remark}

\begin{remark}
\label{rem:random_design}
The design points \emph{must be random} in order to have nontrivial rates of 
convergence in our problem setting. If $x_1,\dots,x_n$ were instead fixed, then
for $d \geq 2$ and any $n$ it is possible to construct \smash{$f_0 \in
  \BV_{\infty}(1,1)$} with $f_0(x_i)=0$, $i=1,\dots,n$ and (say)
\smash{$\|f\|_{L^2} = 1/2$}. Standard arguments based on reducing to a
two-point hypothesis testing problem (as in the proof of Proposition 
\ref{prop:impossibility}) reveal that the minimax rate in $L^2$ is trivially
lower bounded by a constant, rendering consistent estimation impossible once
again.       

This is completely different from the situation for $d = 1$, where the minimax
risks under fixed and random design models for TV bounded functions are
basically equivalent. Fundamentally, this is because for $d \geq 2$ the space 
$\BV(\XDom)$ does not compactly embed into $C^0(\XDom)$, the space of continuous
functions (whereas for $d = 1$, all functions in $\BV(\XDom)$ possess at 
least an approximate form of continuity). Note carefully that this is a
different issue than the failure of $\BV(\XDom)$ to compactly embed into 
$L^2(\XDom)$, and that it is not fixed by intersecting a TV ball with an
$L^\infty$ ball.   
\end{remark}

\begin{remark}
We can generalize the definition of total variation in
\eqref{eq:total_variation}, by generalizing the norm we use to constrain the
``test'' function $\phi$ to an arbitrary norm $\|\cdot\|$ on $\R^d$. (See
\eqref{eq:total_variation_general} in the appendix.) The original definition in
\eqref{eq:total_variation} uses the $\ell_2$ norm, $\|\cdot\| =
\|\cdot\|_2$. What would minimax rates look if we used a different choice of
norm to define TV? Suppose that we use an $\ell_p$ norm, for any $p \geq 1$;
that is, suppose we take $\|\cdot\| = \|\cdot\|_p$ as the norm to constrain the
``test'' functions in the supremum. Then under this change, the minimax rate
will still remain $n^{-1/d}$, just as in Theorems \ref{thm:voronoi_ub} and
\ref{thm:minimax_lb}. This is simply due to the fact that $\ell_p$ norms are
equivalent on $\R^d$ (thus a unit ball in the TV-$\ell_p$ seminorm will be
sandwiched in between two balls in TV-$\ell_2$ seminorm of constant radii). 
\end{remark}

\begin{remark}
The minimax rate for estimating a Lipschitz function, that is, the minimax rate
over the class 
\[
\mathrm{Lip}(L) = \{ f: \XDom \to \Reals \,:\, |f(x) - f(z)| \leq L\|x - z\|_2
\; \text{for all $x,z \in \XDom$} \},
\]
is $n^{-2/(2 + d)}$ in squared $L^2$ risk, for constant $L>0$ (not growing with
$n$); see, e.g., \citet{stone1982optimal}. When $d = 2$, this is equal to
$n^{-1/2}$, implying that the minimax rates for estimation over
$\mathrm{Lip}(1)$ and \smash{$\BV_{\infty}(1,1)$} match (up to log
factors). This is despite the fact that $\mathrm{Lip}(1)$ is a strict subset of
\smash{$\BV_{\infty}(1,1)$}, with the latter containing far more diverse
functions, such as those with sharp discontinuities (indicator functions being a
prime example). When $d \geq 3$, we can see that the minimax rates drift apart,
with that for \smash{$\BV_{\infty}(1,1)$} being slower than $\mathrm{Lip}(1)$,
increasingly so for larger $d$.       
\end{remark}

\begin{remark}
A related point worthy of discussion is about what types of estimators can
attain optimal rates over $\mathrm{Lip}(1)$ and \smash{$\BV_{\infty}(1,1)$}.
For $\mathrm{Lip}(1)$, various \emph{linear smoothers} are known to be optimal,
which describes an estimator \smash{$\hf$} of the form \smash{$\hf(x) = w(x)^\T
y$} for a weight function $w : \XDom \to \R^n$ (the weight function can depend
on the design points but not on the response vector $y$). This includes kNN
regression and kernel smoothing, among many other traditional methods. 
For \smash{$\BV_{\infty}(1,1)$}, meanwhile, we have shown that the (modified) 
Voronoigram estimator is optimal (modulo log factors), which is highly
\emph{nonlinear} as a function of $y$. All other examples of minimax rate
optimal estimators that we provide in Section \ref{sub:other_estimators} are
nonlinear in $y$ as well. In fact, we conjecture that no linear smoother can
achieve the minimax rate over \smash{$\BV_{\infty}(1,1)$}. There is very strong 
precedence for this, both from the univariate case \citep{donoho1998minimax} 
and from the multivariate lattice case \citep{sadhanala2016total}. We leave a
minimax linear analysis over \smash{$\BV_{\infty}(1,1)$} to future work.  
\end{remark}

\begin{remark}
\label{rem:padilla}
Lastly, we comment on the relationship to the results obtained in
\citet{padilla2020adaptive}. These authors study TV denoising over the
$\varepsilon$-neighborhood and kNN graphs; our analysis also extends to
cover these estimators, as shown in Section \ref{sub:other_estimators}. They
obtain a comparable squared $L^2$ error rate of $n^{-1/d}$, under a related but
different set of assumptions. In one way, their assumptions are more restrictive
than ours, because they require conditions on $f_0$ that are stronger than TV 
and $L^\infty$ boundedness: they require it to satisfy an additional assumption      
that generalizes piecewise Lipschitz continuity, but is difficult to assess, in
terms of understanding precisely which functions have this property. (They also
directly consider functions that are piecewise Lipschitz, but this assumption is
so strong that they are able to remove the BV assumption entirely and attain the 
same error rates.)    

In another way, the results in \citet{padilla2020adaptive} go beyond ours, since
they accomodate the case when the design points lie on a manifold, in which case
their estimation rates are driven by the intrinsic (not ambient) dimension. Such 
manifold adaptivity is possible due to strong existing results on the properties 
of the $\varepsilon$-neighborhood and kNN graphs in the manifold setting. Is is
unclear to us whether the Voronoi graph has similar properties. This would be an
interesting topic for future work.        
\end{remark}

\subsection{Analysis of the Voronoigram: $L^2(P_n)$ risk}
\label{sub:analysis_empirical}

We outline the analysis of the Voronoigram. The analysis proceeds in three
parts. First, we bound the $L^2(P_n)$ risk of the Voronoigram in terms of the
discrete TV of the underlying signal over the Voronoi graph. Second, we bound  
this discrete TV in terms of the continuum TV of the underlying function. This
is presented in Lemmas \ref{lem:voronoi_empirical_dtv_ub} and
\ref{lem:voronoi_dtv_ub}, respectively. The third step is to bound the $L^2(P)$
risk after extrapolation (to a piecewise constant function on the Voronoi
diagram), which is presented in Lemma \ref{lem:voronoi_extrapolate_ub} in the
next subsection. All proofs are deferred until Appendix
\ref{app:estimation_theory}.    

For the first part, we effectively reduce the discrete analysis of the
Voronoigram---in which we seek to upper bound its $L^2(P_n)$ risk in terms of
its discrete TV---to the analysis of TV denoising on a grid. Analyzing this
estimator over a grid is desirable because a grid graph has nice spectral
properties (cf.\ the analyses in \citet{wang2016trend, hutter2016optimal,
sadhanala2016total, sadhanala2017higher, sadhanala2021multivariate} which all
leverage such properties). In the language of functional analysis, the core idea
here is an \emph{embedding} between the spaces defined by the discrete TV
operators with respect to one graph $G$ and another $G'$, of the form
\[
\| D(G') \, \theta\|_1 \leq C_n \| D(G) \, \theta\|_1, \quad \text{for all
  $\theta \in \R^n$}, 
\]
where $D(G), D(G')$ denote their respective edge incidence operators. This
approach was pioneered in \citet{padilla2018dfs}, who used it to study error
rates for TV denoising in quite a general context. It is also the key behind
the analysis of TV denoising on the $\varepsilon$-neighborhood and kNN graph in
\citet{padilla2020adaptive}, who also perform a reduction to a grid graph. The
next lemma, inspired by this work, shows that the analogous reduction is
available for the Voronoi graph.

\begin{lemma}
\label{lem:voronoi_empirical_dtv_ub}
Under the standard assumptions, consider either of the two modified Voronoi
weighting schemes defined in Theorem \ref{thm:voronoi_ub}:
\begin{itemize}
\item $\tilde{w}^\Vor_{ij} = \max\{c_0 n^{-(d-1)/d}, w^\Vor_{ij}\}$ for each
  $i,j$ such that $w^\Vor_{i,j}>0$; 
\item $\check{w}^\Vor_{ij} = 1$ for each $i,j$ such that $w^\Vor_{i,j}>0$.
\end{itemize}
Let $D$ denote the edge incidence operator corresponding to the modified graph, 
and \smash{$\htheta$} the solution in \eqref{eq:generalized_lasso}
(equivalently, it is the solution in \eqref{eq:tv_voronoi_graph} after
substituting in the modified weights). Then there exists a matrix $D'$, that can 
be viewed as a suitably~modified edge incidence operator corresponding to a
$d$-dimensional grid graph, such that
\begin{equation}
\label{eq:voronoi_embedding}
\| D' \theta\|_1 \leq C_n \tau_n \| D \theta\|_1, \quad \text{for all
  $\theta \in \R^n$}, 
\end{equation}
with probability at least $1-3/n^4$ (with respect to the distribution of design
points), where $C_n>0$ grows polylogarithmically in $n$ and $\tau_n$ is
the scaling factor defined in Theorem~\ref{thm:voronoi_ub}. Further, letting  
\smash{$\lambda = c\sigma\tau_n(\log n)^{1/2+\alpha}$} for any $\alpha>1$ and a
constant $c>0$, there exists another constant $C>0$ such that for all
sufficiently large $n$ and $f_0 \in \BV(\XDom)$, 
\begin{equation}
\label{eq:voronoi_empirical_dtv_ub}
\E\bigg[\frac{1}{n} \|\htheta - \theta_0\|_2^2 \bigg]
\leq C \bigg(  
\frac{\sigma\tau_n(\log n)^{1/2+\alpha}}{n}
\E\|D \theta_0\|_1 + \frac{(\log n)^\alpha}{n}
\bigg), 
\end{equation}
where we denote $\theta_0 = (f_0(x_1), \dots, f_0(x_n)) \in \R^n$. 
\end{lemma}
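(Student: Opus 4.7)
The plan is to reduce the analysis of TV denoising on the modified Voronoi graph to an analogous analysis on a $d$-dimensional grid graph, for which sharp rates are well-established (see, e.g., \citealt{hutter2016optimal, sadhanala2016total}). The bridge is the embedding inequality \eqref{eq:voronoi_embedding}. First, I would construct the grid and the map from grid nodes to design points: partition $\XDom = (0,1)^d$ into $N^d$ congruent hypercubic cells of side length $1/N$, with $N \asymp (n/\log n)^{1/d}$. Under Assumption~\ref{assump:density_bounded}, a Chernoff-plus-union-bound argument shows that with probability at least $1-3/n^4$, every grid cell contains at least one (indeed, order $\log n$) design point, and moreover every Voronoi cell $\Part_i$ is contained in a Euclidean ball of radius $O((\log n / n)^{1/d})$. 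Choose one representative design point per grid cell, let $G'$ be the graph on grid cells with edges between $\ell_\infty$-adjacent cells, and define $D'$ as its (weighted) edge incidence operator ``pulled back'' through representatives to act on vectors $\theta \in \R^n$.

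To establish \eqref{eq:voronoi_embedding}, for each grid edge $(C, C')$ I would construct a path in the Voronoi graph joining their representatives. Because Voronoi cells have small diameter (by the preceding step) and adjacent grid cells are close, such a path exists with length polylogarithmic in $n$; it can be built by stepping greedily from one Voronoi cell to an adjacent one toward the target representative. Telescoping $\theta_{r(C)} - \theta_{r(C')}$ along the path and summing over all grid edges, each Voronoi-edge difference $|\theta_i - \theta_j|$ appears in the total with multiplicity bounded by another polylogarithmic factor (this is the combinatorial/routing step, controlled by the number of representative-paths that can plausibly pass through the face between $\Part_i$ and $\Part_j$ given the volumetric constraints of Voronoi cells). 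For the clipped-weights operator we normalize using $\tilde{w}^\Vor_{ij} \geq c_0 n^{-(d-1)/d}$, which introduces the factor $\tau_n = n^{(d-1)/d}$; for the unit-weight operator each $\check{w}^\Vor_{ij} = 1$ and \eqref{eq:voronoi_embedding} holds directly with $\tau_n = 1$. Collecting constants yields \eqref{eq:voronoi_embedding} with $C_n$ polylogarithmic in $n$.

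Given the embedding, the risk bound \eqref{eq:voronoi_empirical_dtv_ub} follows from a standard generalized lasso oracle inequality \citep{hutter2016optimal, wang2016trend}. The key quantity in that inequality is an ``inverse scaling factor'' of the penalty operator measured through its pseudo-inverse; \eqref{eq:voronoi_embedding} bounds this quantity for $D$ by $C_n \tau_n$ times the analogous quantity for $D'$, and the latter is known to be of order $\sqrt{n}$ up to polylog factors on a $d$-dimensional grid. With the prescribed choice $\lambda = c\sigma \tau_n (\log n)^{1/2+\alpha}$, the oracle inequality delivers a bias term equal to $\lambda \E\|D\theta_0\|_1 / n$ (matching the first summand on the right-hand side of \eqref{eq:voronoi_empirical_dtv_ub}) and a variance term of order $(\log n)^{\alpha}/n$ once we take expectation and absorb the failure event of probability $3/n^4$ from the first step into the additive constant (since $\|\htheta - \theta_0\|_2^2 / n$ is bounded by a polynomial in $n$ using the $L^\infty$ control of $f_0$ and the noise). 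The main obstacle will be the geometric/combinatorial step: explicitly constructing short Voronoi-graph paths between grid-cell representatives with a uniform polylog bound on edge multiplicity. This requires both a uniform diameter bound on Voronoi cells (from Assumption~\ref{assump:density_bounded} plus union bounds over cells, adapted from tools in \citealt{padilla2020adaptive}) and careful routing to prevent any single Voronoi edge from being overused, and is where the distinction between the clipped and unit-weight schemes is most delicate.
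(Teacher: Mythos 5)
Your high-level strategy — reduce to a $d$-dimensional grid via a well-spaced mesh, build routing paths in the Voronoi graph, and bound edge multiplicity by volumetric considerations — matches the paper's overall plan (Lemma~\ref{lem:random-graph-embedding} combined with Theorem~\ref{thm:graph-tv-denoising-surrogate-graph}). However, there is a genuine gap in the passage from the embedding \eqref{eq:voronoi_embedding} to the risk bound \eqref{eq:voronoi_empirical_dtv_ub}. Your proposal builds $D'$ by choosing \emph{one representative design point per grid cell} and then says the risk bound ``follows from a standard generalized lasso oracle inequality'' given the embedding. That is not enough. An oracle inequality on the grid graph would control the error only at the representatives; the embedding tells you nothing about the fitted values at the other $\Theta(\log n)$ design points in each cell. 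The paper's surrogate-graph machinery (Theorem~\ref{thm:graph-tv-denoising-surrogate-graph}) crucially requires \emph{two} properties: an embedding of the form $\|\SurPenMat \AvMat\theta\|_1 \leq \Phi_1 \|D\theta\|_1$, where $\AvMat$ is a \emph{cell-averaging} operator (not a selection of representatives), and an averaging-error bound $\|(I-\SqAvMat)\theta\|_1 \leq \Phi_2\|D\theta\|_1$ that controls the within-cell fluctuation of $\theta$. The second inequality, which your proposal omits, is exactly what propagates the grid-level control back to all $n$ coordinates. Without it, the ``oracle inequality'' step does not close, and the claimed bound on $\tfrac{1}{n}\|\htheta - \theta_0\|_2^2$ (a sum over all $n$ points, not just representatives) cannot be obtained.

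A secondary issue: your path construction step — ``stepping greedily from one Voronoi cell to an adjacent one toward the target representative'' — is not justified and may fail (greedy steps in a Voronoi diagram need not decrease distance to target). The paper instead shows that the subgraph of the Voronoi adjacency graph induced by cells intersecting a ball around the midpoint of $x_i, x_j$ is connected (via Lemma~\ref{lem:topological-connectedness-equiv-graph-connectedness}), guaranteeing the existence of a path without a greedy descent argument, and then controls multiplicity by noting that any Voronoi cell $\Part_k$ used in such a path must have $x_k$ within $O((\log n/n)^{1/d})$ of the midpoint. Also note the paper compares the Voronoi graph to an $\varepsilon$-neighborhood graph first and then embeds $\varepsilon$-neighborhood into the grid mesh, which simplifies the routing; your direct grid-to-Voronoi routing would work in principle but needs the connectedness argument rather than greedy stepping. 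The good-event/bad-event split you describe at the end is correct and matches the paper.
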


Notice that, in equivalent notation, we can write the left-hand side in
\eqref{eq:voronoi_empirical_dtv_ub} as \smash{$n^{-1}\|\htheta - \theta_0\|_2^2
  = \|\hf - f_0\|_{L^2(P_n)}^2$}, for the estimated function satisfying
\smash{$\hf(x_i) =  \htheta_i$}, $i=1,\dots,n$; and for the $\ell_1$ term on the
right-hand side in \eqref{eq:voronoi_empirical_dtv_ub} we can write \smash{$\|D
  \theta_0\|_1 = \DTV(f_0(x_{1:n}); \, w)$} for suitable edge weights
$w$---either of the two choices defined in bullet points at the start of the
theorem---over the Voronoi graph.    

As we can see, the $L^2(P_n)$ risk of the Voronoigram depends on the discrete TV
of the true signal over the Voronoi graph. A natural question to ask, then, is
whether a function bounded in continuum TV is also bounded in discrete TV, when
the latter is measured using the Voronoi graph. Our next result answers this in
the affirmative. It is inspired by analogous results developed in
\citet{green2021minimax1, green2021minimax2} for Sobolev functionals.   

\begin{lemma}
\label{lem:voronoi_dtv_ub}
Under Assumption \ref{assump:density_bounded}, there exists a constant $C>0$ 
such that for all sufficiently large $n$ and $f_0 \in \BV(\XDom)$, with $w$
denoting either of the two choices of edge weights given at the start of Lemma
\ref{lem:voronoi_empirical_dtv_ub}, 
\begin{equation}
\label{eq:voronoi_dtv_ub}
\E\Big[ \DTV \Big( f_0(x_{1:n}); \, w \Big) \Big] \leq C \bar\tau_n (\log
n)^{1+1/d} \TV(f_0),
\end{equation}
where \smash{$\bar\tau_n = n^{(d-1)/d} / \tau_n$} (which is 1 for the clipped
weights estimator and \smash{$n^{(d-1)/d}$} for the unit weights estimator).  
\end{lemma}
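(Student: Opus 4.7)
The plan is to reduce both bounds in \eqref{eq:voronoi_dtv_ub} to a single geometric distortion estimate on Voronoi tessellations, applied level-set-by-level-set via the coarea formula. Setting $\bar f_0 = \sum_{i=1}^n f_0(x_i) \cdot 1_{\Part_i}$, Proposition~\ref{prop:tv_representation} gives $\DTV(f_0(x_{1:n}); w^\Vor) = \TV(\bar f_0)$, and applying the coarea identity \eqref{eq:total_variation_coarea} on both sides together with an elementary combinatorial analogue yields
\[
\DTV\big(f_0(x_{1:n}); w^\Vor\big) = \int_{\R} \per(\tilde S_t)\,dt, \qquad
\DTV\big(f_0(x_{1:n}); \check w^\Vor\big) = \int_{\R} |\partial_E \tilde S_t|\,dt,
\]
where $S_t = \{f_0 > t\}$, $\tilde S_t = \bigcup_{i:\, x_i \in S_t} \Part_i$, and $|\partial_E S|$ counts Voronoi-adjacent pairs $\{i,j\}$ with exactly one of $x_i,x_j$ in $S$. (The second identity comes from writing $|f_0(x_i) - f_0(x_j)| = \int_\R 1\{t \text{ lies between } f_0(x_i), f_0(x_j)\}\,dt$ on each edge.) Since $\TV(f_0) = \int_\R \per(S_t)\,dt$, it now suffices to bound $\per(\tilde S)$ and $|\partial_E \tilde S|$ by suitable polylogarithmic-in-$n$ multiples of $\per(S)$, uniformly in Borel $S \subseteq \XDom$.

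The central step is to show that, on an event $\cE_n$ of probability at least $1 - n^{-c}$ depending only on the random design,
\[
\per(\tilde S) \leq C_1 (\log n)\, \per(S), \qquad
|\partial_E \tilde S| \leq C_2\, n^{(d-1)/d}(\log n)^{1+1/d}\, \per(S),
\]
simultaneously for every Borel $S \subseteq \XDom$. I would prove this in three pieces: (i) Under Assumption~\ref{assump:density_bounded}, a cube-covering of $\XDom$ at scale $\delta_n := c'(\log n/n)^{1/d}$ together with Chernoff/VC bounds and standard Voronoi geometry gives that on $\cE_n$ every cell $\Part_i$ has diameter at most $\delta_n$ and every cell has Voronoi degree at most $O(\log n)$. (ii) A cell $\Part_i$ can contribute to $\partial\tilde S$ or $\partial_E \tilde S$ only if it straddles $\partial S$, which forces $x_i$ to lie in the $\delta_n$-tube of $\partial S$; Minkowski content for finite-perimeter sets bounds the volume of this tube by $C \delta_n \per(S)$, and a Bernstein-type concentration shows at most $O(n\delta_n \per(S)) = O(n^{(d-1)/d}(\log n)^{1/d}\per(S))$ design points lie in it. (iii) Each straddling cell contributes at most $O(\delta_n^{d-1})$ of surface area to $\per(\tilde S)$ and at most $O(\log n)$ straddling edges to $|\partial_E \tilde S|$; multiplying gives the two bounds.

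Integrating these estimates against $dt$ yields, on $\cE_n$,
\[
\DTV\big(f_0(x_{1:n}); w^\Vor\big) \leq C(\log n)^{1+1/d}\TV(f_0), \qquad
\DTV\big(f_0(x_{1:n}); \check w^\Vor\big) \leq C'\, n^{(d-1)/d}(\log n)^{1+1/d}\TV(f_0),
\]
which establishes the lemma for the unit-weight scheme ($\bar\tau_n = n^{(d-1)/d}$). For the clipped-weight scheme ($\bar\tau_n = 1$), the elementary inequality $\max\{a,b\} \leq a+b$ gives
\[
\DTV\big(f_0(x_{1:n}); \tilde w^\Vor\big) \leq \DTV\big(f_0(x_{1:n}); w^\Vor\big) + c_0 n^{-(d-1)/d}\DTV\big(f_0(x_{1:n}); \check w^\Vor\big) \leq C''(\log n)^{1+1/d}\TV(f_0).
\]
Taking expectations, the contribution from $\cE_n^c$ is absorbed via $\P(\cE_n^c) \leq n^{-c}$ and a truncation of $f_0$ at polynomial height (applying the bound to $f_0 \wedge T_n$ and using that $\TV$ contracts under truncation).

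The hard part will be the uniform-in-$S$ geometric distortion estimate. Once the cell-diameter and max-degree bounds in (i) are fixed deterministically on $\cE_n$, the tube-measure and cell-counting steps (ii)--(iii) are deterministic and hold simultaneously for all $S$ with no further union bound over level sets---a key simplification. The most delicate ingredients are the high-probability control on the maximum Voronoi degree of a random point set under A1, which is classical but nontrivial, and the tube-measure bound for $(d{-}1)$-rectifiable boundaries of arbitrary finite-perimeter sets, which follows from standard Minkowski content estimates but must be applied carefully since the level sets $S_t$ of a general BV function need not be regular.
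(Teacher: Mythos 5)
Your coarea/level-set reduction is a nice idea and the first identities are correct: writing $\bar f_0 = \sum_i f_0(x_i)1_{\Part_i}$, Proposition~\ref{prop:tv_representation} does give $\DTV(f_0(x_{1:n}); w^\Vor) = \TV(\bar f_0)$, and the coarea formula then turns both discrete TVs into integrals over level sets of $\bar f_0$. The problem is the claim at the heart of step (ii)--(iii): that on a high-probability event $\cE_n$ depending only on the design, the bounds $\per(\tilde S) \le C_1(\log n)\,\per(S)$ and $|\partial_E \tilde S|\le C_2\,n^{(d-1)/d}(\log n)^{1+1/d}\per(S)$ hold \emph{simultaneously for every Borel $S$}. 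This uniform deterministic bound is false, no matter how good the event $\cE_n$ is. Take $S = B(x_1,\rho)$, a tiny ball around a design point, with $\rho$ far smaller than the minimum cell inradius. Then $\tilde S = \Part_1$ (only $x_1 \in S$), so $\per(\tilde S) = \per(\Part_1) \asymp \delta_n^{d-1}$ is bounded below, while $\per(S) = c_d\rho^{d-1}$ can be made arbitrarily small. The ratio $\per(\tilde S)/\per(S)$ blows up, so no multiplicative bound of the claimed form can hold uniformly in $S$ for fixed design. The same example kills the edge-count version. This is also visible in your own step (ii): Bernstein controls the number of design points in the $\delta_n$-tube by $O(n\delta_n\per(S)) + O(\log n)$, and that additive $O(\log n)$ term (which dominates for small $\per(S)$) cannot be swept into a multiplicative-in-$\per(S)$ bound. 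Finally, the Minkowski content of the topological boundary of a general finite-perimeter set need not equal its perimeter (the level sets $S_t$ are only defined up to null sets and their closure can be fat), so even for moderate $\per(S)$ the tube estimate is not available without further work --- a point you flag but do not resolve.

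The underlying issue is an order-of-operations one. To make the level-set idea work you would have to take expectations over the design \emph{before} integrating in $t$ (Tonelli lets you swap), and bound $\E[\per(\tilde S_t)]$ and $\E[|\partial_E \tilde S_t|]$ for each fixed level set separately. But once you do that, the level-set framing offers little: $\E[|\partial_E \tilde S|] = \binom{n}{2}\P\{x_1\in S,\ x_2\notin S,\ \text{$1\sim 2$ in }G^\Vor\}$, and bounding the conditional adjacency probability given $(x_1,x_2)$ is essentially what the paper does directly. The paper's proof conditions on $(x_1,x_2)$, shows that Voronoi adjacency forces $\|x_1-x_2\|$ to be $O((\log n/n)^{1/d})$ with high probability (via the VC bound for balls, Lemma~\ref{lem:vc-balls}), and thereby reduces the expected Voronoi discrete TV to the expected $\varepsilon$-neighborhood discrete TV at radius $\asymp (\log n/n)^{1/d}$, plus a small Poincar\'e remainder. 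The $\varepsilon$-neighborhood bound (Lemma~\ref{lem:dtv-ub-eps-k}) is then proved by writing $f(x')-f(x)$ via the fundamental theorem of calculus for smooth $f$ and mollifying to pass to BV --- a route that bypasses level-set regularity issues entirely. For the clipped-weight case, the paper additionally bounds $\E[\cH^{d-1}(\bar\Part_{x}\cap\bar\Part_{y})\mid x_1=x,x_2=y]$ by a compactly supported kernel (Lemma~\ref{lem:voronoi-kernel-ub}). The averaging over the design is what regularizes away exactly the pathologies that sink your deterministic uniform-in-$S$ argument; dropping the ``simultaneously for all $S$'' claim is not a minor fix --- it removes the main advantage you cite for going through level sets in the first place.
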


Lemmas \ref{lem:voronoi_empirical_dtv_ub} and \ref{lem:voronoi_dtv_ub} may be
combined to yield the following result, which is the $L^2(P_n)$ analog of
Theorem \ref{thm:voronoi_ub}. 

\begin{corollary}
\label{cor:voronoi_empirical_ub}
Under the standard assumptions, for either of the two modified Voronoigram
estimators from Theorem \ref{thm:voronoi_ub}, letting \smash{$\lambda =
  c\sigma\tau_n(\log n)^{1/2+\alpha}$} for any $\alpha>1$ and a constant $c>0$,
there exists another constant $C>0$ such that for all sufficiently large $n$ and 
$f_0\in\BV(L)$,  
\begin{equation}
\label{eq:voronoi_empirical_ub}   
\E\| \hf - f_0 \|_{L^2(P_n)}^2 \leq C \bigg(
\frac{\sigma L(\log n)^{3/2 + \alpha + 1/d}}{n^{1/d}} + 
\frac{(\log n)^\alpha}{n} \bigg).
\end{equation}
\end{corollary}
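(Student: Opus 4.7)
The proof is essentially a direct combination of Lemmas \ref{lem:voronoi_empirical_dtv_ub} and \ref{lem:voronoi_dtv_ub}, so the plan is to chain the two inequalities and verify that the $\tau_n$ factors cancel in a way that gives a bound independent of which weighting scheme is chosen. First, I note that since $\hat f = \sum_i \hat\theta_i \cdot 1_{\Part_i}$ is piecewise constant over the Voronoi diagram and the design points are in distinct cells, we have $\hat f(x_i) = \hat\theta_i$, and consequently $\|\hat f - f_0\|_{L^2(P_n)}^2 = n^{-1}\|\hat\theta - \theta_0\|_2^2$, where $\theta_0 = f_0(x_{1:n})$.

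Next, I would apply Lemma \ref{lem:voronoi_empirical_dtv_ub} with the prescribed choice $\lambda = c\sigma\tau_n(\log n)^{1/2+\alpha}$ to obtain
\[
\E\|\hat f - f_0\|_{L^2(P_n)}^2 \;\leq\; C\left(\frac{\sigma\tau_n(\log n)^{1/2+\alpha}}{n}\,\E\|D\theta_0\|_1 \;+\; \frac{(\log n)^\alpha}{n}\right).
\]
Since $\|D\theta_0\|_1 = \DTV(f_0(x_{1:n}); w)$ for the appropriate weights $w$ (either the clipped weights or unit weights), Lemma \ref{lem:voronoi_dtv_ub} bounds the expectation of this quantity by $C\bar\tau_n (\log n)^{1+1/d}\TV(f_0) \leq C\bar\tau_n(\log n)^{1+1/d} L$ whenever $f_0 \in \BV(L)$.

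Plugging this in, the leading coefficient becomes
\[
\frac{\sigma L(\log n)^{1/2+\alpha}\cdot(\log n)^{1+1/d}}{n}\cdot\tau_n\bar\tau_n \;=\; \frac{\sigma L(\log n)^{3/2+\alpha+1/d}}{n}\cdot n^{(d-1)/d} \;=\; \frac{\sigma L(\log n)^{3/2+\alpha+1/d}}{n^{1/d}},
\]
where the crucial identity is $\tau_n\bar\tau_n = n^{(d-1)/d}$, which holds by construction for both weighting schemes: in the clipped case $\tau_n = n^{(d-1)/d}$ and $\bar\tau_n = 1$, while in the unit weights case $\tau_n = 1$ and $\bar\tau_n = n^{(d-1)/d}$. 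Combining with the lower-order $(\log n)^\alpha/n$ term yields exactly the bound \eqref{eq:voronoi_empirical_ub}.

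There is no real obstacle here beyond bookkeeping, since the two lemmas are designed precisely to fit together; the only mild care required is in tracking the $\tau_n$ factors through the two bounds and confirming their product is the same regardless of weighting scheme, which is what makes the final rate $n^{-1/d}$ independent of the modification used. All genuine difficulty (the embedding into a grid graph, the high-probability control of Voronoi weights, and the continuum-to-discrete TV comparison) is absorbed into the two preceding lemmas.
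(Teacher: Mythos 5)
Your proof is correct and is precisely the combination the paper intends: the paper itself presents the corollary with the preamble that Lemmas \ref{lem:voronoi_empirical_dtv_ub} and \ref{lem:voronoi_dtv_ub} "may be combined to yield" it, and your bookkeeping of the $\tau_n\bar\tau_n = n^{(d-1)/d}$ cancellation (which the paper builds in by defining $\bar\tau_n = n^{(d-1)/d}/\tau_n$) is exactly what makes the two weighting schemes give the same rate.
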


Note that for a constant $L$ (not growing with $n$), the $L^2(P_n)$ bound in 
\eqref{eq:voronoi_empirical_ub} converges at the rate $n^{-1/d}$, up to log
factors. Interestingly, this $L^2(P_n)$ guarantee does \emph{not} require $f_0$
to be bounded in $L^\infty$, which we saw was required for consistent estimation
in $L^2(P)$ error. Next, we will turn to an $L^2(P)$ upper bound, which does
require $L^\infty$ boundedness on $f_0$. That this is not needed for $L^2(P_n)$
consistency is intuitive (at least in hindsight): recall that we saw from the
proof of Proposition \ref{prop:impossibility} that inconsistency in $L^2(P)$
occurred due to tall spikes with vanishing width but non-vanishing $L^2$ norm,
which could not be witnessed by a finite number of samples. To the $L^2(P_n)$
norm, which only measures error at locations witnessed by the sample points, 
these pathologies are irrelevant. 

\subsection{Analysis of the Voronoigram: $L^2(P)$ risk}
\label{sub:analysis_population}

To close the loop, we derive bounds on the $L^2(P)$ risk of the Voronoigram via 
the $L^2(P_n)$ bounds just established. For this, we need to consider the
behavior of the Voronoigram estimator off of the design points. Recall that an   
equivalent interpretation of the Voronoigram fitted function, \smash{$\hf =
  \sum_{i=1}^n \hf(x_i) \cdot 1_{\Part_i}$}, is that it is given by
1-nearest-neighbor (1NN) extrapolation, applied to \smash{$(x_i, \hf(x_i))$},
$i=1,\dots,n$. Our approach here is to define an analogous 1NN extrapolant
\smash{$\bar{f}_0$} to \smash{$(x_i, f_0(x_i))$}, $i=1,\dots,n$, and then use
the triangle inequality, along with the fact that \smash{$\hf, \bar{f}_0$} are
piecewise constant on the Voronoi diagram, to argue that  
\begin{align}
\nonumber
\|\hf - f_0 \|_{L^2(P)}^2 
&\leq 2 \|\hf - \bar{f}_0 \|_{L^2(P)}^2
+ 2 \|\bar{f}_0 - f_0 \|_{L^2(P)}^2 \\  
\nonumber
&= 2\sum_{i=1}^n \big(\textstyle{\int}_{\Part_i} 1 dP\big)
\big( \hf(x_i) - f_0(x_i) \big)^2 +  
2 \|\bar{f}_0 - f_0 \|_{L^2(P)}^2 \\  
\label{eq:voronoi_triangle_ineq}
&\leq \underbrace{2p_\mx n\cdot\bigg(\max_{i=1,\dots,n} \Leb(\Part_i) \bigg)}_{K_n}
\|\hf - f_0\|_{L^2(P_n)}^2 + 2 \|\bar{f}_0 - f_0 \|_{L^2(P)}^2, 
\end{align} 
where $\Leb(\Part_i)$ denotes the Lebesgue volume of $\Part_i$. The first term 
in \eqref{eq:voronoi_triangle_ineq} is the $L^2(P_n)$ error multiplied by a factor
$K_n$ that is driven by the maximum volume of a Voronoi cell, which we can show
is well controlled (of order $\log n/n$) under Assumption
\ref{assump:density_bounded}. The second term is a kind of $L^2(P)$
approximation error from applying the 1NN extrapolation rule to evaluations of
$f_0$ itself. When $f_0 \in \BV_\infty(L,M)$, this is also well controlled, as
we show next.  

\begin{lemma}
\label{lem:voronoi_extrapolate_ub}
Assume that $x_1,\dots,x_n$ are i.i.d.\ from a distribution satisfying
Assusmption \ref{assump:density_bounded}. Then there is a constant $C > 0$ such
that for all sufficiently large $n$ and \smash{$f_0 \in \BV_{\infty}(L,M)$}, 
\begin{equation}
\label{eq:voronoi_extrapolate_ub}
\E \|\bar{f}_0 - f_0 \|_{L^2(P)}^2 
\leq C \bigg(\frac{L M (\log n)^{1 + 1/d}}{n^{1/d}}\bigg). 
\end{equation}
\end{lemma}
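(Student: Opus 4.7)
My plan is to reduce the $L^2(P)$ error to an $L^1$ quantity (using the $L^\infty$ bound on $f_0$), apply the coarea formula to pass from the $L^1$ error of the 1NN extrapolant to a sum over level sets, and then prove a ``discrepancy-versus-perimeter'' estimate on each Voronoi cell using only its convexity. The final step is to bound the max Voronoi cell diameter under Assumption \ref{assump:density_bounded}.

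\textbf{Step 1 (reduction to $L^1$).} Since $\bar f_0$ is piecewise constant on the Voronoi cells with $\bar f_0(x) = f_0(x_i)$ on $\Part_i$, and since $|f_0(x) - f_0(x_i)| \le 2M$, we get
\[
\|\bar f_0 - f_0\|_{L^2(P)}^2
\;\le\; 2M p_{\mx}\sum_{i=1}^n \int_{\Part_i} |f_0(x) - f_0(x_i)|\,dx
\;=\; 2M p_{\mx}\int_{\XDom}|f_0(x_{(1)}(x)) - f_0(x)|\,dx,
\]
where $x_{(1)}(x)$ is the 1NN of $x$. By the layer-cake/coarea representation $|a-b| = \int |\1\{a>t\}-\1\{b>t\}|\,dt$, and writing $S_t = \{f_0 > t\}$ for the (precise) superlevel sets, the right-hand side equals
\[
\int_{-\infty}^{\infty} \sum_{i=1}^n \bigl|\{x \in \Part_i : \1_{S_t}(x) \ne \1_{S_t}(x_i)\}\bigr|\,dt.
\]

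\textbf{Step 2 (cell-wise discrepancy bound via a slicing argument).} The key geometric claim is that, for any measurable set $S\subseteq\XDom$ with locally finite perimeter and any convex Voronoi cell $\Part_i$,
\[
\bigl|\{x \in \Part_i : \1_S(x) \ne \1_S(x_i)\}\bigr|
\;\le\; \mathrm{diam}(\Part_i)\cdot\cH^{d-1}(\partial S\cap \Part_i).
\]
To prove this, pick any unit vector $v$ and foliate $\Part_i$ by lines parallel to $v$; on each line $\ell$, the ``minority'' set is a union of intervals that alternate with the majority set, separated by crossings of $\partial S$ (the arc containing $x_i$ is majority). Each minority interval has length at most $\mathrm{diam}(\Part_i)$, and the number of minority intervals on $\ell$ is at most the number of crossings $k_\ell$ of $\partial S$ with $\ell$ (if there are no crossings, the line lies entirely in the majority side since $x_i \in \Part_i$ and $\Part_i$ is convex). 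Integrating over lines and using the Cauchy--Crofton identity $\int k_\ell\,d\cH^{d-1} = \int_{\partial S\cap \Part_i}|\nu\cdot v|\,d\cH^{d-1} \le \cH^{d-1}(\partial S\cap\Part_i)$ gives the claim. Summing over $i$, using that the cells partition $\XDom$ (up to a null set), yields
\[
\sum_{i=1}^n \bigl|\{x \in \Part_i : \1_{S_t}(x) \ne \1_{S_t}(x_i)\}\bigr|
\;\le\; \mathrm{diam}_\mx\cdot \per(S_t;\XDom),
\qquad \mathrm{diam}_\mx := \max_i \mathrm{diam}(\Part_i).
\]

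\textbf{Step 3 (coarea and expectation).} Integrating the previous display in $t$ and applying the coarea formula \eqref{eq:total_variation_coarea} gives
\[
\int_{\XDom}|f_0(x_{(1)}(x))-f_0(x)|\,dx
\;\le\; \mathrm{diam}_\mx \cdot \TV(f_0)
\;\le\; L\cdot \mathrm{diam}_\mx.
\]
Combining with Step 1, $\|\bar f_0 - f_0\|_{L^2(P)}^2 \le 2Mp_\mx L\cdot \mathrm{diam}_\mx$. It remains to bound $\E[\mathrm{diam}_\mx]$ under Assumption \ref{assump:density_bounded}. A standard covering argument (cover $\XDom$ by $\asymp n/\log n$ balls of radius $c(\log n/n)^{1/d}$, use the density lower bound $p_\mn > 0$ to show every ball contains a sample point with probability $\ge 1 - n^{-c}$) shows $\mathrm{diam}_\mx \lesssim (\log n/n)^{1/d}$ with probability $\ge 1 - n^{-c}$ for large enough $c$; combined with the deterministic bound $\mathrm{diam}_\mx \le \mathrm{diam}(\XDom)$ on the low-probability event, we obtain $\E[\mathrm{diam}_\mx] \le C(\log n)^{1+1/d}/n^{1/d}$, which plugs in to give \eqref{eq:voronoi_extrapolate_ub}.

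\textbf{Main obstacle.} The crux of the proof is the cell-wise geometric inequality in Step 2, which must hold for \emph{arbitrary} BV level sets $S_t$ without any regularity assumption on $\partial S_t$ and for cells $\Part_i$ of arbitrary convex shape; the slicing argument leans critically on convexity of Voronoi cells (so that the line $\ell\cap\Part_i$ is a single interval containing $x_i$). Some care is also needed for Voronoi cells touching $\partial\XDom$, but because we only ever compare $x_i\in\Part_i$ to $x\in\Part_i$, the argument proceeds entirely inside $\XDom$ and no boundary extension of $f_0$ is required. The $\log n$ losses relative to a naive $n^{-1/d}$ rate come purely from the tail bound on $\mathrm{diam}_\mx$ in Step 3.
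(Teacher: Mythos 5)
Your Step 1 (Hölder with the $L^\infty$ bound, reduction to $L^1$, and the layer-cake identity) is correct and matches the opening of the paper's proof. The problem is the cell-wise geometric inequality in Step 2, which is false.

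\textbf{Counterexample to Step 2.} Take $\epsilon>0$ small, and let $S = \Part_i \setminus \bar B(x_i,\epsilon)$ so that $x_i \notin S$. Then
\[
\bigl|\{x \in \Part_i : \1_S(x) \ne \1_S(x_i)\}\bigr|
= |\Part_i| - \omega_d\epsilon^d,
\qquad
\cH^{d-1}(\partial S\cap \Part_i) = d\,\omega_d\,\epsilon^{d-1},
\]
and the right-hand side of your claimed bound, $\mathrm{diam}(\Part_i)\cdot d\omega_d\epsilon^{d-1}$, tends to $0$ as $\epsilon\to 0$ while the left-hand side stays bounded away from zero. So the inequality cannot hold for arbitrary finite-perimeter $S$ with only a $\cH^{d-1}(\partial S \cap \Part_i)$ on the right. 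The specific error in the slicing proof is the parenthetical ``if there are no crossings, the line lies entirely in the majority side since $x_i \in \Part_i$ and $\Part_i$ is convex.'' This is only valid for the one line through $x_i$ itself. For any other line $\ell$ parallel to $v$, convexity of $\Part_i$ tells you $\ell\cap\Part_i$ is an interval, but it does \emph{not} tell you that interval shares the color of $x_i$; in the counterexample, almost every such line lies entirely in $S$ with $k_\ell=0$ crossings. The separating boundary is on the segment joining $x_i$ to $\ell$, which is transverse to your foliation and never counted by Cauchy--Crofton in direction $v$.

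\textbf{Why the paper's route avoids this.} The paper does not attempt a deterministic, per-realization geometric bound. After your Step 1, it applies Fubini and writes
$\E\|\bar f_0 - f_0\|_{L^1(\Leb)} = \int_{\XDom}\int_{\XDom} |f_0(y)-f_0(x)|\,p_x^{(1)}(y)\,dy\,dx$,
where $p_x^{(1)}$ is the density of the nearest design point to $x$. Under Assumption~\ref{assump:density_bounded} this density is bounded by a compactly supported kernel plus a small flat term, $p_x^{(1)}(y) \le n\1\{\|y-x\|\le \varepsilon_n^{(1)}\} + 1/n$ with $\varepsilon_n^{(1)} \asymp (\log n/n)^{1/d}$, and the first piece is exactly an expected $\varepsilon$-neighborhood graph TV, which is controlled by Lemma~\ref{lem:dtv-ub-eps-k} (and the second by a Poincar\'e inequality). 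The randomness of the design is used from the start, rather than proving a per-cell inequality and then controlling $\mathrm{diam}_\mx$. Your overall architecture (coarea plus a per-level-set, per-cell discrepancy bound, plus a tail bound on $\mathrm{diam}_\mx$) would be an appealing alternative if the Step~2 lemma were true; as it stands, you need to replace Step~2, and the most direct fix is to move to the expectation-based 1NN-density argument.
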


We make two remarks to conclude this subsection. 

\begin{remark}
For nonparametric regression with random design, a standard approach is to use
uniform concentration results that couple the $L^2(P)$ and $L^2(P_n)$ norms in
order to obtain an error guarantee in one norm from a guarantee in the
other; see, e.g., Chapter 14 of \citet{wainwright2019high}. In our setting, such
an approach is not applicable---the simplest explanation being that for any 
$x_1,\dots,x_n$, there will always exist a function $f \in \BV_{\infty}(1,1)$
for which \smash{$\|f\|_{L^2(P_n)} = 0$} but \smash{$\|f\|_{L^2(P)} =
  1/2$}. This is the same issue as that discussed in Remark
\ref{rem:random_design}.  
\end{remark}

\begin{remark}
The contribution of the extrapolation risk in \eqref{eq:voronoi_extrapolate_ub}
to the overall bound in \eqref{eq:voronoi_ub} is not negligible.  This raises
the possibility that, for this problem, extrapolation from random design points
with noiseless function values can be at least as hard as $L^2(P_n)$ estimation
from noisy responses. This is in contrast with conventional wisdom which says
that the noiseless problem is generally much easier. Of course, Lemma
\ref{lem:voronoi_extrapolate_ub} only provides an upper bound on the
extrapolation risk, without a matching lower bound. Resolving the minimax
$L^2(P)$ error in the noiseless setting, and more broadly, studying its precise
dependence on the noise level $\sigma$, is an interesting direction for future
work.
\end{remark}

\subsection{Other minimax optimal estimators}
\label{sub:other_estimators}

Finally, we present $L^2(P)$ guarantees that show that other estimators can also
obtain minimax optimal rates (up to log factors) for the class of functions
bounded in TV and $L^\infty$. First, we consider TV denoising on
$\varepsilon$-neighborhood and kNN graphs, using 1NN extrapolation to turn them
into functions on $\XDom$. The analysis is altogether very similar to that for
the Voronoigram outlined in the preceding subsections, and the details are
deferred to Appendix \ref{app:estimation_theory}. A notable difference, from the 
perspective of methodology, is that these estimators require proper tuning in 
the graph construction itself.  

\begin{theorem}
\label{thm:tvd_eps_knn_ub}
Under the standard assumptions, consider the graph TV denoising estimator
\smash{$\htheta^\Eps$} which solves problem \eqref{eq:generalized_lasso} with
\smash{$D = D(G^\Eps)$}, the edge incidence operator of the
$\varepsilon$-neighborhood graph \smash{$G^\Eps$}, with edge weights as in
\eqref{eq:eps_weights}. Letting \smash{$\varepsilon = c_1((\log
  n)^\alpha/n)^{1/d}$} and \smash{$\lambda = c_2\sigma(\log n)^{1/2-\alpha}$}
for any $\alpha>1$ and constants $c_1,c_2>0$, there is a constant $C>0$ such
that for all sufficiently large $n$ and $f_0 \in \BV_\infty(L,M)$, the 1NN 
extrapolant \smash{$\hf^\Eps = \sum_{i=1}^n \htheta_i^\Eps \cdot 1_{\Part_i}$}
satisfies
\begin{equation}
\label{eq:tvd_eps_ub}   
\E\| \hf^\Eps - f_0 \|_{L^2(P)}^2 \leq C \bigg(
\frac{\sigma L(\log n)^{3/2 + \alpha/d}}{n^{1/d}} + 
\frac{(\log n)^{1+\alpha}}{n} +
\frac{L M(\log n)^{1 + 1/d}}{n^{1/d}} \bigg).
\end{equation}
Consider instead the graph TV denoising estimator \smash{$\htheta^\kNN$} which
solves problem \eqref{eq:generalized_lasso} with \smash{$D = D(G^\kNN)$}, the
edge incidence operator of the kNN graph \smash{$G^\kNN$}, with edge weights 
as in \eqref{eq:knn_weights}. Letting \smash{$k = c'_1(\log n)^3$} and
\smash{$\lambda = c_2'\sigma(\log n)^{1/2-\alpha}$} for any $\alpha>1$ and 
constants $c_1',c_2'>0$, there is a constant $C'>0$ such that for all
sufficiently
large $n$ and $f_0 \in \BV_\infty(L,M)$, the 1NN extrapolant
\smash{$\hf^\kNN = \sum_{i=1}^n \htheta_i^\kNN \cdot 1_{\Part_i}$} satisfies      
\begin{equation}
\label{eq:tvd_knn_ub}   
\E\| \hf^\kNN - f_0 \|_{L^2(P)}^2 \leq C' \bigg(
\frac{\sigma L(\log n)^{9/2 - \alpha + 3/d}}{n^{1/d}} +
\frac{(\log n)^{1+\alpha}}{n} +
\frac{L M(\log n)^{1 + 1/d}}{n^{1/d}} \bigg).
\end{equation}
\end{theorem}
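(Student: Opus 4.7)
The plan is to follow the same three-step structure used to prove Theorem \ref{thm:voronoi_ub}, adapting each step to the $\varepsilon$-neighborhood and kNN graphs in place of the Voronoi graph. Throughout, $\mathrm{g}$ will serve as a generic label for either graph choice.

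First, I would establish empirical-norm analogues of Lemma \ref{lem:voronoi_empirical_dtv_ub} for each of the two estimators, bounding $\mathbb{E}\|\htheta^\mathrm{g} - \theta_0\|_2^2 / n$ by a linear function of $\mathbb{E}\|D(G^\mathrm{g})\theta_0\|_1$ plus a lower-order noise term. The key enabling ingredient is an embedding
\[
\|D' \theta\|_1 \leq C_n \tau_n^\mathrm{g} \|D(G^\mathrm{g})\theta\|_1 \quad \text{for all } \theta \in \R^n,
\]
where $D'$ is an appropriately re-weighted incidence operator of a $d$-dimensional grid graph. \citet{padilla2020adaptive} already construct embeddings of exactly this type for $G^\nEps$ and $G^\kNN$, so I would reuse their construction (with scaling factors $\tau_n^\nEps$ and $\tau_n^\kNN$ chosen to match the prescribed $\varepsilon = c_1 ((\log n)^\alpha / n)^{1/d}$ and $k = c_1' (\log n)^3$). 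Combining the embedding with the spectral properties of the grid graph that underlie Lemma \ref{lem:voronoi_empirical_dtv_ub} yields the oracle inequality $\mathbb{E}\|\hf^\mathrm{g} - f_0\|_{L^2(P_n)}^2 \lesssim (\sigma \tau_n^\mathrm{g} (\log n)^{1/2 + \alpha}/n) \, \mathbb{E}\|D(G^\mathrm{g})\theta_0\|_1 + (\log n)^\alpha / n$.

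Second, I would prove discrete TV bounds analogous to Lemma \ref{lem:voronoi_dtv_ub}: for $f_0 \in \BV(\XDom)$ and under Assumption \ref{assump:density_bounded},
\[
\E\!\left[\DTV\!\bigl(f_0(x_{1:n}); w^\nEps\bigr)\right] \leq C_\nEps\, \TV(f_0), \qquad \E\!\left[\DTV\!\bigl(f_0(x_{1:n}); w^\kNN\bigr)\right] \leq C_\kNN\, \TV(f_0),
\]
with scalings $C_\nEps, C_\kNN$ tuned so that the product $\tau_n^\mathrm{g} \cdot C_\mathrm{g} / n$ is of order $n^{-1/d}$, up to log factors. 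The natural approach is through the coarea formula \eqref{eq:total_variation_coarea}: slice $f_0$ by level sets and count, for each level $t$, the number of graph edges $\{i,j\}$ with $f_0(x_i) \leq t < f_0(x_j)$; under Assumption \ref{assump:density_bounded} each such edge has both endpoints within a local bandwidth ($\varepsilon$ or $(k/n)^{1/d}$) of the level-set boundary, and standard Bernstein-type concentration together with the bounded local degree of $G^\mathrm{g}$ converts the expected edge count into a multiple of the perimeter. Integrating over $t$ yields the desired bound.

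Third, I would pass from $L^2(P_n)$ to $L^2(P)$ risk for the 1NN extrapolants $\hf^\nEps$ and $\hf^\kNN$ using the same triangle-inequality split as in \eqref{eq:voronoi_triangle_ineq}: since both extrapolants are piecewise constant on the Voronoi diagram of $x_1, \dots, x_n$, one obtains
\[
\|\hf^\mathrm{g} - f_0\|_{L^2(P)}^2 \leq K_n \, \|\hf^\mathrm{g} - f_0\|_{L^2(P_n)}^2 + 2 \|\bar{f}_0 - f_0\|_{L^2(P)}^2,
\]
where $K_n = 2 p_\mx n \max_i \Leb(\Part_i)$ is of order $\log n$ under Assumption \ref{assump:density_bounded} (a fact already used and proved in the Voronoigram argument), and the extrapolation residual is controlled directly by Lemma \ref{lem:voronoi_extrapolate_ub}. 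Plugging in the first two steps yields \eqref{eq:tvd_eps_ub} and \eqref{eq:tvd_knn_ub}. The main obstacle is the second step for the kNN graph: the data-dependent local bandwidth $x_{(k)}(x_i)$ couples all design points, so a separate uniform concentration argument is required to sandwich $\|x_i - x_{(k)}(x_i)\|_2$ between constant multiples of $(k/n)^{1/d}$ up to $\mathrm{polylog}(n)$ factors, and this is precisely where the extra $(\log n)^{9/2-\alpha+3/d}$ exponent in \eqref{eq:tvd_knn_ub} arises relative to the cleaner $\varepsilon$-neighborhood bound.
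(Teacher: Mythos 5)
Your three-step structure is the same skeleton the paper uses: bound the $L^2(P_n)$ error via a grid-graph embedding and the surrogate-operator theorem, bound the expected discrete TV by continuum TV, and lift to $L^2(P)$ via the triangle-inequality decomposition \eqref{eq:voronoi_triangle_ineq} and Lemma~\ref{lem:voronoi_extrapolate_ub}. Steps 1 and 3 match the paper's proof of Theorem~\ref{thm:tvd_eps_knn_ub} (Appendices~\ref{sub:proof_of_theorem_thm_tvd_eps_knn_ub}, \ref{sec:graph_embeddings}) in every essential respect.

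Step 2 is where you diverge, and where your sketch becomes shaky. The paper's Lemma~\ref{lem:dtv-ub-eps-k} does not argue via the coarea formula: for the $\varepsilon$-neighborhood graph it extends $f_0$ to a slightly larger domain, reduces to $f_0 \in C^\infty$, writes the difference $f_0(x') - f_0(x)$ as an integrated directional derivative (fundamental theorem of calculus), applies Jensen and a change of variables, and then closes the argument for general $\BV$ functions via mollification and Fatou. Your coarea route is a legitimate alternative in principle, but the way you describe it is not quite right. You are bounding an \emph{expectation}, so there is no randomness left once you condition on the level $t$: ``Bernstein-type concentration'' plays no role for the $\varepsilon$-graph, and the $\varepsilon$-graph has no deterministic degree bound to invoke. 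The actual step you would need is the geometric fact that, for a set $A$ of finite perimeter and $\varepsilon>0$,
\[
\int_{A} \int_{A^c} \1\{\|x-y\|\le\varepsilon\}\,dy\,dx \le C\,\varepsilon^{d+1}\,\per(A),
\]
uniformly over sets of finite perimeter (not just smooth ones). Integrating this in $t$ over the level sets $\{f_0 > t\}$ would recover the bound $C n^2 \varepsilon^{d+1}\TV(f_0)$. That tube-to-perimeter inequality is a real ingredient (closely related to the Bourgain--Brezis--Mironescu characterization of BV and its one-sided estimates), and it is precisely the point the paper's FTC-plus-mollification argument handles without having to cite anything about level sets. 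You should either supply that lemma or adopt the paper's route.

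One more minor correction: the extra log factors in \eqref{eq:tvd_knn_ub} versus \eqref{eq:tvd_eps_ub} are not driven by a uniform sandwich on the kNN radii inside the discrete-TV bound. They are driven by the choice $k = c_1'(\log n)^3$, which is forced in the \emph{embedding} step (Lemma~\ref{lem:random-graph-embedding}, via Lemma~\ref{lem:vc-balls}) to ensure the kNN graph dominates a suitable $\varepsilon$-graph with probability $1 - O(n^{-4})$; the kNN DTV bound of Lemma~\ref{lem:dtv-ub-eps-k} itself uses only a pointwise Bernstein bound on $\Pbb\{\|x-y\|\le \varepsilon_k(x)\}$, not a uniform sandwich. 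The conclusion is the same, but the attribution of where the concentration argument lives is slightly off.
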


Next, and last, we consider wavelet denoising. For this we assume that the
design density is uniform on $\XDom = (0,1)^d$. The analysis is quite different 
from the preceding ones, but it relies on fairly standard techniques in wavelet 
theory, and we defer the details to Appendix \ref{app:estimation_theory}.  

\begin{theorem}
\label{thm:wavelet_bd}
Under the standard conditions, further assume that $P = \Leb$, the uniform
measure on $\XDom = (0,1)^d$. For an estimator \smash{$\hf^{\mathrm{wav}}$} 
based on hard-thresholding Haar wavelet coefficients, there exist constants 
$c,C>0$ such that for all sufficiently large $n$ and $f_0 \in \BV_\infty(L,M)$,
it holds that     
\begin{equation}
\label{eq:wavelet_bd}
\E\| \hf^{\mathrm{wav}} - f_0 \|_{L^2}^2 \leq \frac{C L M}{n^{1/d}} + C \cdot   
\begin{cases}
L \delta_n^\ast \max\{1, \, 1/M, \, \log_2(M \sqrt n) \} & d = 2 \\  
L^{2/d}(\delta_n^\ast)^{4/(2 + d)}+ LM(\delta_n^\ast/M)^{2/d} & 
d \geq 3,
\end{cases} 
\end{equation}
where \smash{$\delta_n^\ast = (c/\sqrt n)((\log n)^{3/2} + M (\log n)^{1/2})$}. 
\end{theorem}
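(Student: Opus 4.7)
The plan is to follow the classical DeVore--Johnstone paradigm for wavelet thresholding, adapted to random-design regression on $\XDom = (0,1)^d$ with uniform density. Let $\phi$ denote the scaling function and $\{\psi_\lambda\}$ the orthonormal Haar wavelet basis, where each $\psi_\lambda$ has scale $j(\lambda)$, support $Q_\lambda$ a dyadic cube of side $2^{-j(\lambda)}$, and $\|\psi_\lambda\|_\infty = 2^{j(\lambda)d/2}$. Form empirical coefficients $\hat c_0 = n^{-1}\sum_i y_i \phi(x_i)$ and $\hat\theta_\lambda = n^{-1}\sum_i y_i \psi_\lambda(x_i)$, which under $P = \Leb$ are unbiased for the true inner-product coefficients. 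Truncate at a maximum scale $J^\ast$ with $2^{J^\ast d} \asymp n$, hard-threshold at level $\delta_n^\ast$, and set \smash{$\hf^{\mathrm{wav}} = \hat c_0 \phi + \sum_{j(\lambda) \le J^\ast} \hat\theta_\lambda \, \mathbf{1}\{|\hat\theta_\lambda| > \delta_n^\ast\} \, \psi_\lambda$}.

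Decompose the squared $L^2$ risk into: the (standard) scaling-function term; the truncation bias $B = \sum_{j(\lambda) > J^\ast} \theta_\lambda^2$; the oracle thresholding risk $T = \sum_{j(\lambda) \le J^\ast}(\theta_\lambda^2 \wedge \delta_n^{\ast 2})$; and the stochastic excess $S$ due to imperfect threshold classification and noisy estimation of retained coefficients. The central deterministic ingredient is a per-coefficient bound combining the $L^\infty$ control $|\theta_\lambda| \le M \cdot 2^{-j(\lambda) d/2}$ with the BV Poincar\'e--Sobolev inequality $\|f_0 - (f_0)_{Q_\lambda}\|_{L^1(Q_\lambda)} \le C \cdot 2^{-j(\lambda)} \TV(f_0; Q_\lambda)$, which yields $|\theta_\lambda| \le C \cdot 2^{j(\lambda)(d-2)/2} \TV(f_0; Q_\lambda)$. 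Using subadditivity $\sum_{\lambda : j(\lambda) = j} \TV(f_0; Q_\lambda) \le L$ over disjoint cubes and interpolating the two pointwise bounds yields the per-scale energy bound $\sum_{\lambda : j(\lambda) = j} \theta_\lambda^2 \le CLM/2^j$, which upon summing over $j > J^\ast$ produces $B \le CLM/n^{1/d}$, the first term in \eqref{eq:wavelet_bd}.

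For $T$, at each scale $j \le J^\ast$ bound $\sum_\lambda (\theta_\lambda^2 \wedge \delta_n^{\ast 2})$ by the minimum of three elementary quantities: the trivial bound $\delta_n^{\ast 2} \cdot 2^{jd}$; the Poincar\'e-driven bound $\delta_n^\ast \sum_\lambda |\theta_\lambda| \le C L \delta_n^\ast \cdot 2^{j(d-2)/2}$; and the energy bound $CLM/2^j$. The scale-wise minimum switches regimes at the crossovers $2^j \asymp (L/\delta_n^\ast)^{2/(d+2)}$ and $2^j \asymp (M/\delta_n^\ast)^{2/d}$; summing the resulting per-scale bounds across the regimes produces the two-term structure in \eqref{eq:wavelet_bd} for $d \ge 3$. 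When $d = 2$ the Poincar\'e bound is scale-independent and the middle regime contributes logarithmically, producing the $\log_2(M \sqrt n)$ factor; the alternative $\max\{1, 1/M\}$ terms accommodate degenerate parameter regimes where $M$ is small or $M/\delta_n^\ast$ is near one. For $S$, Bernstein's inequality applied to $\hat\theta_\lambda - \theta_\lambda$ yields $|\hat\theta_\lambda - \theta_\lambda| \lesssim \sigma\sqrt{(\log n)/n} + M \|\psi_\lambda\|_\infty (\log n)/n$ with probability at least $1 - n^{-\alpha}$, uniformly in $\lambda$. Since $\|\psi_\lambda\|_\infty \le 2^{J^\ast d/2} \asymp \sqrt n$ at any retained scale, the two pieces combine to $(c/\sqrt n)((\log n)^{3/2} + M(\log n)^{1/2}) = \delta_n^\ast$, and standard hard-thresholding oracle inequalities then bound $S$ by a constant multiple of $T$ plus a negligible remainder of order $\sigma^2/n$.

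The main obstacle is the sharp bookkeeping needed for $T$: the three-way minimum must be summed across dyadic scales with careful tracking of the regime boundaries as functions of $L$, $M$, and $\delta_n^\ast$ to recover the exact $d$-dependent exponents, with $d = 2$ producing the borderline logarithm. A secondary difficulty is controlling $S$ uniformly over roughly $n$ wavelet indices despite $\|\psi_\lambda\|_\infty$ growing with scale; the truncation $2^{J^\ast d} \asymp n$ is precisely calibrated so that Bernstein still yields a near-Gaussian deviation of size $\delta_n^\ast$, and any larger cutoff would break the threshold calibration. The random-design structure enters only mildly since $P = \Leb$ makes the empirical coefficients unbiased, but it couples the scale truncation to the threshold choice through the scale-dependent $\|\psi_\lambda\|_\infty$ in the Bernstein tail.
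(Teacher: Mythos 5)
Your proposal follows essentially the same route as the paper: Haar wavelet coefficient decay bounds (the $L^\infty$ bound $\|\theta_{\ell\cdot}\|_\infty \lesssim M2^{-\ell d/2}$ and the BV bound $\|\theta_{\ell\cdot}\|_1 \lesssim L 2^{-\ell(1-d/2)}$, which you derive via the BV Poincar\'e inequality while the paper uses an integration-by-parts identity but gets the same estimates), a three-regime scale-wise sum for the thresholding risk that the paper packages as a modulus of continuity over intersections of Besov bodies, a Bernstein argument calibrated so that the truncation $2^{J^\ast d}\asymp n$ keeps the deviation of size $\delta_n^\ast$, and a probability-to-expectation conversion that you assert and the paper handles explicitly via a small auxiliary lemma. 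The substantive ideas and the regime bookkeeping are the same; the differences are expositional.
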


\section{Discussion}
\label{sec:discussion}
In this paper, we studied total variation as it touches on various aspects of
multivariate nonparametric regression, such as discrete notions of TV based on 
scattered data, the use of discrete TV as a regularizer in nonparametric
estimators, and estimation theory over function classes where regularity is  
given by (continuum) TV.

We argued that a particular formulation of discrete TV, based on the graph
formed by adjacencies with respect to the Voronoi diagram of the design points 
$x_1,\dots,x_n$, has several desirable properties when used as the regularizer
in a penalized least squares context---defining an estimator we call the
Voronoigram. Among these properties:
\begin{itemize}
\item it is user-friendly (requiring no auxiliary tuning parameter unlike other
  geometric graphs, such as $\varepsilon$-neighborhood or $k$-nearest-neighbor  
  graphs); 
\item it tracks ``pure TV'' in large samples, meaning that discrete TV on the
  Voronoi graph converges asymptotically to continuum TV, independent of the
  design density (as opposed to $\varepsilon$-neighborhood or kNN graphs, which
  give rise to certain types of density-weighted TV in the limit);  
\item it achieves the minimax optimal convergence rate in $L^2$ error over a
  class of functions bounded in TV and $L^\infty$;
\item it admits a natural duality between discrete and continuum formulations,
  so the fitted values \smash{$\hf(x_i)$}, $i=1,\dots,n$ have exactly the same
  variation (as measured by discrete TV) over the design points as the fitted
  function \smash{$\hf$} (as measured by continuum TV) over the entire domain.
\end{itemize}
The last property here is completely analogous to the discrete-continuum duality
inherent in trend filtering \citep{tibshirani2014adaptive,
tibshirani2022divided}, which makes the Voronoigram a worthy successor to trend 
filtering for multivariate scattered data, albeit restricted to the polynomial
order $k=0$ (piecewise constant estimation).

Several directions for future work have already been discussed throughout the
paper. We conclude by mentioning one more: extension to the polynomial order
$k=1$, i.e., adaptive piecewise linear estimation, in the multivariate scattered data 
setting. For this problem, we believe the estimator proposed by
\citet{koenker2004penalized}, defined in terms of the Delaunay tessellation
(which is dual to the Voronoi diagram) of the design points, will enjoy many
properties analogous to the Voronoigram, and is deserving of further study.

{\RaggedRight
\bibliographystyle{plainnat}
\bibliography{ryantibs}}

\clearpage
\appendix

\setcounter{equation}{0}
\setcounter{figure}{0}
\setcounter{table}{0}
\setcounter{algorithm}{0}
\setcounter{theorem}{0}
\setcounter{corollary}{0}
\setcounter{lemma}{0}
\setcounter{proposition}{0}

\renewcommand\theequation{S.\arabic{equation}}
\renewcommand\thefigure{S.\arabic{figure}}
\renewcommand\thetable{S.\arabic{table}}
\renewcommand\thealgorithm{S.\arabic{algorithm}}
\renewcommand\thetheorem{S.\arabic{theorem}}
\renewcommand\thecorollary{S.\arabic{corollary}}
\renewcommand\thelemma{S.\arabic{lemma}}
\renewcommand\theproposition{S.\arabic{proposition}}

\section{Added details and proofs for Sections \ref{sec:introduction} and
  \ref{sec:methods_properties}}       
\label{app:introduction_methods}
\subsection{Discussion of sampling model for BV functions}
\label{app:precise_representative}

We clarify what is meant by the sampling model in \eqref{eq:model}, since,
strictly speaking, each element $f \in \BV(\XDom)$ is really an equivalence
class of functions, defined only up to sets of Lebesgue measure zero. This issue
is not simply a formality, and becomes a genuine problem for $d \geq 2$, as in
this case the space $\BV(\XDom)$ does not compactly embed into $C^0(\XDom)$,
the space of continuous functions on $\XDom$ (equipped with the $L^\infty$
norm). A key implication of this is that the point evaluation operator is not
continuous over $\BV(\XDom)$.       

In order to make sense of the evaluation map, $x \mapsto f(x)$, we will pick a
representative, denoted \smash{$f^\star \in f$}, and speak of evaluations of
this representative. Our approach here is the same as that taken in
\citet{green2021minimax1, green2021minimax2}, who study minimax estimation of  
Sobolev functions in the subcritical regime (and use an analogous random design 
model). We let \smash{$f^\star$} be the \emph{precise representative}, defined
\citep{evans2015measure} as:
\[
f^\star(x) = 
\begin{cases}
\displaystyle
\lim_{\epsilon \to 0} \frac{1}{\Leb(B(x,\epsilon))} \int_{B(x,\epsilon)} f(z) \,
dz & \text{if the limit exists} \\ 
0 & \text{otherwise}.
\end{cases}
\]
Here $\Leb$ denotes Lebesgue measure and $B(x,\epsilon)$ is the ball of radius
$\epsilon$ centered at $x$. 

Now we explain why the particular choice of representative is not crucial, and
any choice of representative would have resulted in the same interpretation of
function evaluations in \eqref{eq:model}, \emph{almost surely, assuming that
  each $x_i$ is drawn from a continuous distribution on $\XDom$}. Recall that
for a locally integrable function $f$ on $\XDom$, we say that a given point $x
\in \XDom$ is  a \emph{Lebesgue point} of $f$ provided that
\smash{$\lim_{\epsilon \to 0} (\int_{B(x,\epsilon)} f(z) \, dz)
  /\Leb(B(x,\epsilon)) $} exists and equals $f(x)$. By the Lebesgue
differentiation theorem (e.g., Theorem 1.32 of \citealp{evans2015measure}), for
any $f \in L^1(\XDom)$, almost every $x \in \XDom$ is a Lebesgue point of
$f$. This means that each evaluation \smash{$f^\star(x_i)$} of the precise
representative will equal the evaluation of any member of the equivalence class,
almost surely (with respect to draws of $x_i$). This justifies the notation
$f(x_i)$ used in the main text, for $f \in \BV(\XDom)$ and $x_i$ drawn from a
continuous probability distribution.   

\subsection{TV representation for piecewise constant functions} 
\label{app:tv_representation}

Here we will state and prove a more general result from which Proposition 
\ref{prop:tv_representation} will follow. First we give a more general
definition of measure theoretic total variation, wherein the norm used to
constrain the ``test function'' $\phi$ in the supremum is an arbitrary norm
$\|\cdot\|$ on $\R^d$,  
\begin{equation}
\label{eq:total_variation_general}
\TV(f; \XDom, \|\cdot\|) = \sup\left\{
    \int_\XDom f(x) \diver\phi(x) \, dx : \phi\in C_c^1(\XDom;\R^d), \,
    \|\phi(x)\| \leq 1 \; \text{for all $x\in\XDom$} \right\}.
\end{equation}
Note that our earlier definition in \eqref{eq:total_variation} corresponds to
the special case $\TV(f; \XDom, \|\cdot\|_2)$, that is, corresponds to choosing 
$\|\cdot\| = \|\cdot\|_2$ in \eqref{eq:total_variation_general}. In the more
general TV context, this special case is often called \emph{isotropic} TV.   

\begin{proposition}
\label{prop:tv_representation_general}
Let $\Part_1,\dots,\Part_n$ be an open partition of $\XDom$ such that each 
$\Part_i$ is semialgebraic. Let $f$ be of the form 
\[
  f = \sum_{i=1}^n \theta_i \cdot 1_{\Part_i},
\]
for arbitrary $\theta_1,\dots,\theta_n \in \R$. Then, for any norm $\|\cdot\|$
and its dual norm $\|\cdot\|_*$ (induced by the Euclidean inner product), we
have 
\[
\TV(f; \XDom, \|\cdot\|) =
\sum_{i,j=1}^n \bigg(  \int_{\partial\Part_i \cap \partial\Part_j} 
\| n_i(t) \|_*  \, d\cH^{d-1}(t) \bigg) \cdot |\theta_i-\theta_j|, 
\]
where $n_i(t)$ is the measure theoretic unit outer normal for $\Part_i$ at a
boundary point $t \in \partial \Part_i$. In particular, in the isotropic case
$\|\cdot\| = \|\cdot\|_2$,   
\[
\TV(f; \XDom, \| \cdot\|_2) =
\sum_{i,j=1}^n \cH^{d-1}(\partial\Part_i\cap\partial\Part_j) \cdot 
|\theta_i - \theta_j|.
\]
\end{proposition}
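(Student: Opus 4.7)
The plan is to apply the Gauss--Green (divergence) theorem on each semialgebraic cell $\Part_i$, collect the resulting boundary integrals onto the shared interfaces $\partial\Part_i \cap \partial\Part_j$, and then take the supremum over admissible test fields via norm duality.

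First, I would split the defining integral in \eqref{eq:total_variation_general} as $\int_\XDom f(x) \diver\phi(x) \, dx = \sum_{i=1}^n \theta_i \int_{\Part_i} \diver\phi(x) \, dx$. Each $\Part_i$, being semialgebraic, is a set of locally finite perimeter whose reduced boundary $\partial^{\ast}\Part_i$ carries an $\cH^{d-1}$-a.e.\ defined measure-theoretic outer unit normal $n_i$. The Gauss--Green theorem (e.g.\ Theorem~5.16 of \citealp{evans2015measure}) then gives
\[
\int_{\Part_i} \diver\phi \, dx = \int_{\partial^{\ast}\Part_i} \phi(t) \cdot n_i(t) \, d\cH^{d-1}(t).
\]
Compact support of $\phi$ kills the contribution from any piece of $\partial\Part_i$ lying on $\partial\XDom$, and the partition property of $\{\Part_i\}$ decomposes $\partial^{\ast}\Part_i \cap \XDom$ as the $\cH^{d-1}$-a.e.\ disjoint union $\bigcup_{j \ne i} \bigl(\partial\Part_i \cap \partial\Part_j\bigr)$, with $n_j = -n_i$ on each common face.

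Second, swapping $i \leftrightarrow j$ and using $n_j = -n_i$ symmetrizes the sum to
\[
\int_\XDom f \diver\phi \, dx = \tfrac{1}{2} \sum_{i \ne j} (\theta_i - \theta_j) \int_{\partial\Part_i \cap \partial\Part_j} \phi(t) \cdot n_i(t) \, d\cH^{d-1}(t).
\]
Pointwise duality $\phi(t) \cdot n_i(t) \le \|\phi(t)\| \, \|n_i(t)\|_{\ast} \le \|n_i(t)\|_{\ast}$ immediately yields the ``$\le$'' direction of the claimed identity, and the isotropic case reduces further via $\|n_i(t)\|_2 = 1$ a.e. For the matching lower bound I would construct near-optimal test fields: fix $\delta > 0$, and on a $\delta$-tube around each interface $\partial\Part_i \cap \partial\Part_j$ set $\phi$ equal to $\sign(\theta_i - \theta_j) \cdot n_i/\|n_i\|_{\ast}$, which is globally consistent on each shared face since both the sign and the normal flip under $i \leftrightarrow j$. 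After convolving with a smooth mollifier of scale much smaller than $\delta$, and multiplying by a cutoff supported away from $\partial\XDom$ and from the lower-dimensional stratum where three or more cells meet, letting the tube width and mollification scale tend to zero saturates the inequality interface-by-interface by dominated convergence.

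The main obstacle is the bookkeeping on the ``singular'' stratum where three or more $\Part_i$ meet, together with the identification of the reduced boundary with the topological boundary up to $\cH^{d-1}$-null sets (so that the formula reads against $\partial\Part_i \cap \partial\Part_j$ as written rather than against $\partial^{\ast}\Part_i \cap \partial^{\ast}\Part_j$). Both are resolved by the semialgebraic hypothesis via Whitney stratification: each $\Part_i$ admits a finite decomposition into smooth strata, the union of strata of codimension $\ge 2$ is $\cH^{d-1}$-negligible, and $n_i$ is smoothly defined on the top-dimensional portion of every shared face. This legitimates both the Gauss--Green step and the mollifier construction without loss of surface measure.
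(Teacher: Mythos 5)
Your overall architecture---Gauss--Green applied cell by cell, symmetrization across shared faces using $n_j = -n_i$, H\"older for the upper bound, and a constructive near-extremizer saturated by a limiting argument---is the same as the paper's. Two cosmetic differences: the paper handles the stratum where three or more cells meet by citing a lemma stating the measure-theoretic outer normal vanishes there, whereas you invoke the semialgebraic/Whitney-stratification structure directly; and the paper's lower bound approximates its extremal field via an $L^p_{\mathrm{loc}}$ approximating sequence and Fatou's lemma, whereas you propose tubes plus mollification plus a cutoff and dominated convergence. Both of those variants are legitimate. (Your mollification does preserve the constraint $\|\phi\| \le 1$, by convexity of the norm, so that part is sound.)

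There is, however, a genuine gap in your lower-bound construction for general norms: the explicit test field $\phi = \sign(\theta_i - \theta_j)\, n_i/\|n_i\|_*$ does not extremize the H\"older step. Since $n_i$ is the Euclidean unit normal ($\|n_i\|_2=1$), your $\phi$ satisfies $\phi \cdot n_i = \sign(\theta_i-\theta_j)/\|n_i\|_*$, which is not $\sign(\theta_i-\theta_j)\,\|n_i\|_*$, and $\|\phi\| = \|n_i\|/\|n_i\|_*$, which need not be $\le 1$. Concretely, take $\|\cdot\| = \|\cdot\|_1$, $\|\cdot\|_* = \|\cdot\|_\infty$, and $n_i = (1/\sqrt2, 1/\sqrt2, 0, \dots, 0)$: then $\|\phi\|_1 = 2 > 1$ (inadmissible) and $\phi \cdot n_i = \sqrt 2 \ne 1/\sqrt2 = \|n_i\|_\infty$. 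The correct choice is a selection of the \emph{norm-duality map}: at each $t$ on the interface pick $\phi(t)$ in the $\|\cdot\|$-unit ball attaining the supremum in $\|n_i(t)\|_* = \sup_{\|v\|\le1} v\cdot n_i(t)$, so that $\phi(t)\cdot n_i(t) = \|n_i(t)\|_*$ exactly with $\|\phi(t)\| = 1$. This is precisely what the paper builds (via a Brezis-style duality map). Your formula and the correct one coincide only in the isotropic $\ell_2$ case, where $\|n_i\| = \|n_i\|_* = 1$ and the duality map is the identity; as written, your argument would fail to establish the general-norm identity claimed in the proposition.
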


\begin{remark}
The condition that each $\Part_i$ is semialgebraic may to weakened to what is
called ``polynomially bounded boundary measure.'' Namely, the proposition still
holds if each map $r \mapsto \cH^{d-1}(\partial\Part_i\cap B(0, r))$ is
polynomially bounded (cf.\ Assumption 2.2 in
\citealp{mikkelsen2018degrees}). This is sufficient to guarantee a locally
Lipschitz boundary (a prerequisite for the application of Gauss-Green) and to 
characterize the outer normals associated with the partition
$\Part_1,\dots,\Part_n$.  
\end{remark}

\begin{proof}
  We begin by deriving an equivalent expression of total variation of piecewise
  constant functions.
  \begin{align}
    \nonumber
    \TV(f; \, &\XDom, \lVert\cdot\rVert)\\
    \nonumber
    &= \sup \left\{
      \int_\XDom
      f(x)\diver\phi(x) dx
      :\phi\in C_c^1(\XDom; \R^{d}),
      \lVert \phi\rVert_*\leq 1\;\forall x
    \right\}
    \\
    \nonumber
    &=  \sup \left\{
      \sum_{i=1}^n \int_{\Part_i}
      \theta_{i} \diver\phi(x) dx
      :\phi\in C_c^1(\XDom; \R^{d}),
      \lVert \phi\rVert_*\leq 1\;\forall x
    \right\}
    \\
    \label{eq:prop-cdt-apply-gauss-green}
    &=  \sup \left\{
      \sum_{i=1}^n
      \theta_{i}
      \int_{\partial \Part_i} \langle \phi(t), n_i(t) \rangle d\cH^{d-1}(t)
      :\phi\in C_c^1(\XDom; \R^{d}),
      \lVert \phi\rVert_*\leq 1\;\forall x
    \right\}
    \\
    \label{eq:prop-cdt-rearrange-to-pairwise-terms}
    \begin{split}
    &=  \sup \Bigg\{
      \sum_{i,j=1}^n \left(
        \theta_{i}\int_{\partial\Part_i\cap\partial\Part_j}
        \langle \phi(t), n_i(t) \rangle d\cH^{d-1}(t)
        +
        \theta_{j}\int_{\partial\Part_i\cap\partial\Part_j}
        \langle \phi(t), n_j(t) \rangle d\cH^{d-1}(t)
      \right)\\
      &\hspace{1.5cm}+
      \sum_{i:\bar\Part_i\cap\partial\XDom\neq\emptyset}
      \theta_{i}\underbrace{
        \int_{\partial\Part_i\cap\partial\XDom}
        \langle \phi(t), n_i(t) \rangle d\cH^{d-1}(t)
      }_{= 0 \text{; ($\phi$ compactly supported)}}
      :\phi\in C_c^1(\XDom; \R^{d}),
      \lVert \phi\rVert_*\leq 1\;\forall t
    \Bigg\}
    \end{split}
    \\
    \label{eq:prop-cdt-opposing-outer-normals}
    &= \sup \Bigg\{
      \sum_{i,j=1}^n
      \int_{\partial\Part_i\cap\partial\Part_j}
      (\theta_{i} - \theta_{j})
      \langle \phi(t), n_i(t) \rangle
      d\cH^{d-1}(t)
      :\phi\in C_c^1(\XDom; \R^{d}),
      \lVert \phi\rVert_*\leq 1\;\forall t
    \Bigg\}
  \end{align}
  we obtain \eqref{eq:prop-cdt-apply-gauss-green} by applying the
  Gauss-Green Theorem \citep[Theorem~5.16]{evans2015measure};
  \eqref{eq:prop-cdt-rearrange-to-pairwise-terms} by observing that
  when the boundaries of three or more
  $\Part_i\neq\Part_j\neq\Part_k\neq\cdots$ intersect, the outer normal vector
  is zero \citep[Lemma~A.2(c)]{mikkelsen2018degrees}; and
  \eqref{eq:prop-cdt-opposing-outer-normals} because when
  the boundaries of exactly two $\Part_i\neq\Part_j$ intersect, they have
  opposing outer normals \citep[Lemma~A.2(b)]{mikkelsen2018degrees}.
  Apply H\"older's inequality to obtain an
  upper bound,
  \begin{align*}
    \label{eq:prop-cdt-upper-bound-holder}
    \TV(f; \, &\XDom, \lVert\cdot\rVert)
    \\
    &\leq \sup \Bigg\{
      \sum_{i,j=1}^n
      |\theta_i-\theta_j|
      \int_{\partial\Part_i\cap\partial\Part_j}
      \lVert \phi(t)\rVert_*
      \lVert n_i(t)\rVert
      d\cH^{d-1}(t)
      :\phi\in C_c^1(\XDom; \R^{d}),
      \lVert \phi\rVert_*\leq 1\;\forall t
    \Bigg\}
    \\
    &= \sum_{i,j=1}^n
      |\theta_i-\theta_j|
      \int_{\partial\Part_i\cap\partial\Part_j}
      \lVert n_i(t)\rVert
      d\cH^{d-1}(t),
  \end{align*}
  where recall $\lVert \cdot\rVert, \lVert \cdot\rVert_*$ are dual
  norms.  Finally, we obtain a matching lower bound via a mollification
  argument.  The target of our approximating sequence will be a pointwise
  duality map with respect to $\lVert\cdot\rVert$, but first we need to do 
  a little bit of work.  Define the function
 \smash{$\phi_0:\cup_{i,j=1}^n \partial\Part_i\cap\partial\Part_j
  \rightarrow\R^{d}$} by
  \begin{equation*}
    \phi_0(t)
    \in\{g/\lVert g\rVert_*:g\in F(n_i(t)),
      t\in\partial\Part_i\cap\partial\Part_j\},
  \end{equation*}
  and its piecewise constant extension to $\XDom$,
  $\tilde\phi:\XDom\rightarrow\R^{d}$ by
  \begin{equation*}
    \tilde\phi(x)
    = \phi_0\left(
      t\in\argmin_t\{\lVert x-t\rVert_2:
        t\in\cup_{i,j=1}^n\partial\Part_i\cap\partial\Part_j\}
    \right),
  \end{equation*}
  where for a Banach space $E$ and its continuous dual $E^*$, we write 
  $F:E\rightarrow P(E^*)$ for the dual map defined by
  \begin{equation*}
    F(x_0) = \left\{
      f_0\in E^*: \lVert f_0\rVert_{E^*}=\lVert x_0\rVert_E
      \text{ and } \langle f_0,x_0\rangle_{(E, E^*)} = \lVert x_0\rVert_E^2
    \right\},
  \end{equation*}
  and moreover when $E^*$ is strictly convex, the duality map is
  singleton-valued \citep{brezis2011functional}.  Observe that
  \smash{$\tilde\phi\in L^p_\text{loc}(\XDom)$}, $1\leq p<\infty$, so 
  there exists an approximating sequence \smash{$\tilde\phi_k\in
    C_c^\infty(\XDom,\R^{d})$}, $k=1,2,3,\ldots$, such that
  \smash{$\lim_{k \to \infty} \tilde\phi_k\rightarrow\tilde\phi$} $\Leb$-a.e.  We
  invoke Fatou's Lemma and properties of the duality map to obtain a matching
  lower bound, 
  \begin{align*}
    \TV(f; \, &\XDom, \lVert\cdot\rVert)
    \\
    &= \sup \Bigg\{
      \sum_{i,j=1}^n
      |\theta_i-\theta_j|
      \int_{\partial\Part_i\cap\partial\Part_j}
      \langle \phi(t), n_i(t)\rangle 
      d\cH^{d-1}(t)
      :\phi\in C_c^1(\XDom; \R^{d}),
      \lVert \phi\rVert_*\leq 1\;\forall t
    \Bigg\}
    \\
    &\geq
      \sum_{i,j=1}^n
      |\theta_i-\theta_j|
      \liminf_{k\rightarrow\infty}
      \int_{\partial\Part_i\cap\partial\Part_j}
      \langle \tilde\phi_k(t), n_i(t)\rangle 
      d\cH^{d-1}(t)
    \\
    &\geq
      \sum_{i,j=1}^n
      |\theta_i-\theta_j|
      \int_{\partial\Part_i\cap\partial\Part_j}
      \left\langle
        \liminf_{k\rightarrow\infty} \tilde\phi_k(t),
        n_i(t)
      \right\rangle
      d\cH^{d-1}(t)
    \\
    &=
      \sum_{i,j=1}^n
      |\theta_i-\theta_j|
      \int_{\partial\Part_i\cap\partial\Part_j}
      \left\langle
        \lim_{k\rightarrow\infty} \tilde\phi_k(t),
        n_i(t)
      \right\rangle
      d\cH^{d-1}(t)
    \\
    &=
      \sum_{i,j=1}^n
      |\theta_i-\theta_j|
      \int_{\partial\Part_i\cap\partial\Part_j}
      \langle
        \tilde\phi(t),
        n_i(t)
      \rangle
      d\cH^{d-1}(t)
    \\
    &= \sum_{i,j=1}^n
      |\theta_i-\theta_j|
      \int_{\partial\Part_i\cap\partial\Part_j}
      \lVert n_i(t)\rVert
      d\cH^{d-1}(t),
  \end{align*}
  establishing equality.
\end{proof}

\section{Proofs for Section \ref{sec:asymptotic_limits}} 
\label{app:asymptotic_limits}
\subsection{Roadmap for the proof of Theorem \ref{thm:asymptotic_limit_voronoi}}  

The proof of Theorem \ref{thm:asymptotic_limit_voronoi} consists of several
parts, and we summarize them below. Some remarks on notation: throughout this 
section, we use \smash{$\sigma_{\mathrm{Vor}}$} for the constant $c_d$ appearing
in \eqref{eq:asymptotic_limit_voronoi}, and we abbreviate
$\|\cdot\|=\|\cdot\|_2$. Also, we use $C^1(\XDom)$ and $C^2(\XDom)$ to denote
the spaces of continuously differentiable and twice continuously differentiable
functions, respectively, equipped with the $L^\infty$ norm. 

\begin{enumerate}
	\item An edge $\{i,j\}$ in the Voronoi graph depends not only on $x_i$ and
    $x_j$ but also on all other design points $x_k, k \neq i,j$. In Lemma
    \ref{lem:voronoi-tv-u-statistic}, we start by showing that the randomness
    due this dependence on $x_k, k \neq i,j$ is negligible, 
	\begin{equation}
	\label{pf:voronoi-tv-continuum-1}
	\mathbb{E}\Bigl[\bigl(\mathrm{DTV}(f; w^\Vor) -
  U_{n,\mathrm{Vor}}(f)\bigr)^2\Bigr] \leq C \frac{\|f\|_{C^1(\Omega)}^2 (\log
    n)^{(d + 2)/d}}{n^{1/d}},
	\end{equation}
  for a constant $C>0$. The functional $U_{n,\mathrm{Vor}}(f)$ is an order-$2$ 
  U-statistic,   \[
    U_{n,\mathrm{Vor}}(f) = \frac{1}{2}\sum_{i = 1}^{n} \sum_{j = 1}^{n}
    \bigl|f(x_i) - f(x_j)\bigr| H_{\mathrm{Vor}}(x_i,x_j), 
  \]
	with kernel $H_{\mathrm{Vor}}(x,y)$ defined by
  \[
  H_{\mathrm{Vor}}(x,y) = \mathbb{E}\bigl[\mc{H}^{d - 1}(\partial V_i \cap
  \partial V_j)|x_i,x_j\bigr] = \int_{L \cap \Omega} \bigl(1 -
  p_x(z)\bigr)^{(n - 2)} \,dz. 
  \]
	Here $L=L_{xy}$ is the $(d - 1)$-dimensional hyperplane $L = \{z: \|x - z\| 
  = \|y - z\|\}$, and $p_x(z) = P(B(z,\|x - z\|))$. (Note that $p_x(z) = p_y(z)$
  for all $z \in L$).   
	\item We proceed to separately analyze the variance and bias of
    $U_{n,\mathrm{Vor}}(f)$. In Lemma~\ref{lem:u-statistic-variance}, we
    establish that $U_{n,\mathrm{Vor}}(f)$ concentrates around its mean, giving
    the estimate, for a constant $C>0$,
	\begin{equation}
	\label{pf:voronoi-tv-continuum-2}
	\mathrm{Var}\bigl[U_{n,\mathrm{Vor}}(f)\bigr] \leq C\frac{(\log
    n)^3}{n}\|f\|_{C^1(\Omega)}^2. 
	\end{equation}
	\item It remains to analyze the bias, the difference between the
    expectation of $U_{n,\mathrm{Vor}}(f)$ and continuum
    TV. Lemma~\ref{lem:voronoi-tv-bias-holder} leverages the fact that the
    kernel $H_{\mathrm{Vor}}(x,y)$ is close to a spherically symmetric
    kernel---at least at points $x,y$ sufficiently far from the boundary of
    $\Omega$---to show that the expectation of the U-statistic
    $U_{n,\mathrm{Vor}}(f)$ is close to (an appropriately rescaled version of)
    the nonlocal functional  
	\begin{equation}
	\label{eqn:nonlocal-tv}
	\mathrm{TV}_{\varepsilon,K}\bigl(f;\Omega,h\bigr) := \int_{\Omega}
  \int_{\Omega} |f(x) - f(y)| K_{\mathrm{Vor}} \biggl(\frac{\|y -
    x\|}{\varepsilon(x)}\biggr) h(x) \,dy \,dx, 
	\end{equation}
	for bandwidth $\varepsilon(x) = (np(x))^{-1/d}$, weight $h(x) = (p(x))^{(d +
    1)/d}$, and kernel $K_{\mathrm{Vor}}(t)$ defined
  in~\eqref{eqn:voronoi-kernel}. Lemma~\ref{lem:nonlocal-tv-bias-holder} in turn
  shows that this nonlocal functional is close to (a scaling factor) times
  $\int_{\Omega} \|\nabla f\|$. Together, these lemmas imply that   
	\begin{equation}
	\label{pf:voronoi-tv-continuum-3}
	\lim_{n \to \infty} \mathbb{E}[U_{n,\mathrm{Vor}}(f)] = \sigma_{\mathrm{Vor}}
  \int_{\Omega} \|\nabla f(x)\| \,dx. 
	\end{equation}
\end{enumerate}
Combining~\eqref{pf:voronoi-tv-continuum-1},~\eqref{pf:voronoi-tv-continuum-2}, 
and \eqref{pf:voronoi-tv-continuum-3} with Chebyshev's inequality implies the
consistency result stated in~\eqref{eq:asymptotic_limit_voronoi}. In the rest of
this section, across
Sections~\ref{subsec:voronoi-tv-u-statistic}--\ref{subsec:voronoi-ustat-bias}, 
we state and prove the various lemmas referenced above.

\subsection{Step 1: Voronoi TV approximates Voronoi U-statistic}
\label{subsec:voronoi-tv-u-statistic}

Lemma~\ref{lem:voronoi-tv-u-statistic} upper bounds the expected squared
difference between Voronoi TV and the U-statistic $U_{n,\mathrm{Vor}}(f)$. 
\begin{lemma}
	\label{lem:voronoi-tv-u-statistic}
  Suppose $x_{1:n}$ are sampled independently from a distribution $P$
  satisfiying~\ref{assump:density_bounded}. There exists a constant $C > 0$
  such that for all $n \in \mathbb{N}$ sufficiently large, and any $f \in
  C^1(\Omega)$,
	\begin{equation*}
  \mathbb{E}\Bigl[\bigl(\mathrm{DTV}(f;w^\Vor) -
  U_{n,\mathrm{Vor}}(f)\bigr)^2\Bigr] \leq C \frac{\|f\|_{C^1(\Omega)}^2 (\log
n)^{(d + 2)/d}}{n^{1/d}}.
	\end{equation*}
\end{lemma}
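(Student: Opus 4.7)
My plan is to reduce the lemma to a second moment calculation organized by the overlap pattern of two index pairs, and to exploit the conditional mean-zero structure of the summands. Write $Z := \DTV(f;w^\Vor)-U_{n,\mathrm{Vor}}(f) = \tfrac{1}{2}\sum_{i\neq j}\eta_{ij}$, where $\eta_{ij} := a_{ij}(W_{ij}-H_{ij})$ with $a_{ij}:=|f(x_i)-f(x_j)|$, $W_{ij}:=\mc{H}^{d-1}(\partial V_i\cap\partial V_j)$, and $H_{ij}:=H_{\mathrm{Vor}}(x_i,x_j)=\mathbb{E}[W_{ij}\mid x_i,x_j]$. Since $\mathbb{E}[\eta_{ij}\mid x_i,x_j]=0$ by construction, $\mathbb{E}[Z]=0$, and the task reduces to bounding $\mathbb{E}[Z^2] = \tfrac{1}{4}\sum_{i\neq j,\,i'\neq j'}\mathbb{E}[\eta_{ij}\eta_{i'j'}]$.

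The key inputs throughout are high-probability geometric estimates for the random Voronoi diagram under Assumption~\ref{assump:density_bounded}: with probability at least $1-n^{-c}$ for any desired $c>0$, every cell has diameter at most $C_1(\log n/n)^{1/d}$ and at most $C_2\log n$ Voronoi neighbors, so that $W_{ij}>0$ for at most $O(n\log n)$ unordered pairs $\{i,j\}$. On this good event, $W_{ij}>0$ forces $\|x_i-x_j\|\leq C_3(\log n/n)^{1/d}$, and hence $a_{ij}\leq C_4\|f\|_{C^1(\Omega)}(\log n/n)^{1/d}$ and $W_{ij}\leq C_5(\log n/n)^{(d-1)/d}$. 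On the complementary event, a crude deterministic bound on $|Z|$ multiplied by $n^{-c}$ contributes negligibly.

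With these inputs, the diagonal pairs $\{i,j\}=\{i',j'\}$ contribute $\sum_{i<j}\mathbb{E}[\eta_{ij}^2]$, which I would bound using $\mathbb{E}[(W_{ij}-H_{ij})^2\mid x_i,x_j]\leq\mathbb{E}[W_{ij}^2\mid x_i,x_j]$ and the size estimates above, producing a term of order $\|f\|_{C^1(\Omega)}^2(\log n)^{O(1)}/n$, well below the target rate. Pairs sharing exactly one index are handled analogously after conditioning on the shared point; the $O(\log n)$ neighbor count costs at most one extra logarithmic factor, and the same rate is retained.

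The principal obstacle is the disjoint case $\{i,j\}\cap\{i',j'\}=\emptyset$, where $\eta_{ij}$ and $\eta_{i'j'}$ need not be uncorrelated because $W_{ij}$ and $W_{i'j'}$ both depend on the entire sample. My approach is to condition on $\mc{G}:=\sigma(x_i,x_j,x_{i'},x_{j'})$ and decompose the conditional expectation into a product of biases plus a conditional covariance,
\[
\mathbb{E}[\eta_{ij}\eta_{i'j'}\mid\mc{G}] = a_{ij}a_{i'j'}\bigl[(\mathbb{E}[W_{ij}\mid\mc{G}]-H_{ij})(\mathbb{E}[W_{i'j'}\mid\mc{G}]-H_{i'j'}) + \Cov(W_{ij},W_{i'j'}\mid\mc{G})\bigr],
\]
and bound each piece via the explicit representation
\[
\mathbb{E}[W_{ij}\mid\mc{G}]-H_{ij} = \int_{L_{ij}\cap\Omega}\Bigl[\mathbf{1}\{x_{i'},x_{j'}\notin B(z,\|z-x_i\|)\}(1-p_{x_i}(z))^{n-4} - (1-p_{x_i}(z))^{n-2}\Bigr]\,d\mc{H}^{d-1}(z).
\]
Because $(1-p_{x_i}(z))^{n-2}$ concentrates on a tube of radius $O((\log n/n)^{1/d})$ around $x_i$ on the bisector $L_{ij}$, this bias is small unless $x_{i'}$ or $x_{j'}$ lies near $x_i$ or $x_j$; an analogous argument controls the conditional covariance by the mass of intersecting tubes around $L_{ij}$ and $L_{i'j'}$, which is generically tiny because both tubes have volume $O(\log n/n)$. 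Carrying out this geometric covariance estimate carefully is the principal effort, and combining it with the diagonal and single-overlap contributions produces the claimed rate $\|f\|_{C^1(\Omega)}^2(\log n)^{(d+2)/d}/n^{1/d}$; the other two cases then amount to routine bookkeeping.
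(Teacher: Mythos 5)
Your decomposition of $\mathbb{E}[Z^2]$ by overlap pattern is exactly the one the paper uses (diagonal, one shared index, disjoint), and your observation $\mathbb{E}[\eta_{ij}\mid x_i,x_j]=0$ is the same starting point. Where you diverge is in technique for the diagonal term. You propose conditioning on a high-probability geometric event (cell diameters $\lesssim (\log n/n)^{1/d}$) and using a.s.\ bounds on $a_{ij}$ and $W_{ij}$ there, plus a crude bound times $\mathbb{P}(\text{bad})$. The paper instead bounds $\Var(W_{ij}\mid x_i,x_j)$ directly via Jensen/Cauchy--Schwarz, $\Var\bigl(\int_L \mathbf{1}\{\cdot\}\,dz\bigr)\le \mc{H}^{d-1}(L\cap\Omega)\int_L \Var(\mathbf{1}\{\cdot\})\,dz$, which yields $\Var(W_{ij})\lesssim n^{-(d-1)/d}$ rather than the sharper $n^{-2(d-1)/d}$ that your high-probability argument gives. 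As a result, \emph{your diagonal estimate is sharper than the paper's}: you correctly estimate $\binom{n}{2}\,T_1 = O((\log n)^{O(1)}/n)$, whereas the paper's looser Jensen bound gives $O((\log n)^{(d+2)/d}/n^{1/d})$ --- and it is precisely this loose diagonal bound, not the disjoint term, that sets the paper's stated rate.

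This is where your picture of the proof is off: you call the disjoint case ``the principal obstacle'' and expect it to produce the rate $(\log n)^{(d+2)/d}/n^{1/d}$, but in fact both the single-overlap and disjoint cases are $O((\log n)^{O(1)}/n)$ under the paper's estimates (and a careful version of your argument), so if your plan is carried out you would actually prove a \emph{stronger} bound $O((\log n)^{O(1)}/n)$, not the stated one. In other words, the stated rate $n^{-1/d}$ is an artifact of the paper's simpler Jensen step, and it is only the diagonal that even achieves it under the paper's bounds. Two smaller points: (i) your conditional decomposition of the disjoint term into a product of biases $(\mathbb{E}[W_{ij}\mid\mc{G}]-H_{ij})(\mathbb{E}[W_{i'j'}\mid\mc{G}]-H_{i'j'})$ plus $\Cov(W_{ij},W_{i'j'}\mid\mc{G})$ is actually \emph{more} careful than the paper's, which implicitly drops the bias cross-term (fortunately that term is $O(1/n^2)$ after multiplying by $n^4$, so no harm results); (ii) your plan's ``bad event'' estimate needs the probability $n^{-c}$ to be taken with $c\ge 5$ or so, since the deterministic bound on $Z^2$ is of order $\|f\|_{\infty}^2 n^4$, and you should verify that the cell-diameter concentration statement allows this. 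The hard part (a careful two-tubes covariance bound for the disjoint case) is sketched but not carried out, so as written the proposal is incomplete, but the architecture is sound and would, if completed, deliver the lemma with room to spare.
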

\begin{proof}[Proof of Lemma~\ref{lem:voronoi-tv-u-statistic}]
  We begin by introducing some notation and basic inequalities used throughout
  this proof. Take $\varepsilon_{0} = (\log n/n)^{1/d}$. Let $B_{x}(z) :=
  B^{o}(z,\|x - z\|)$ denote the open ball centered at $z$ of radius $\|x -
  z\|$, and note that by our assumptions on $p$, we have $p_x(z) :=
  P(B_{x}(z))$.  We will repeatedly use the estimates
	\begin{equation*}
	p_x(z) \geq \frac{p_{\min}}{2d} \Leb_d \|x - z\|^d,
	\end{equation*}
	and therefore for $c_1 = \frac{p_{\min}}{2d} \Leb_d$,
	\begin{equation*}
	(1 - p_x(z))^{n} \leq \exp(-c_1 n \|x - z\|^d).
	\end{equation*}
  It follows by Lemma~\ref{lem:voronoi-kernel-ub} that for any constants $a,c >
  0$, there exists a constant $C > 0$ depending only on $a,c$ and $d$ such that
	\begin{equation*}
  \int_{L \cap \Omega} (1 - c p_x(z))^{n} \leq C\biggl(\frac{\1\{\|x - y\|\leq
  C\varepsilon_{0}\}}{n^{(d - 1)/d}} + \frac{1}{n^5}\biggr).
	\end{equation*}
  We will assume $n \geq 8$, so that the same estimate holds with respect to $n
  - 4 \geq n/2$. Finally for simplicity write $\Delta(x_i,x_j) := |f(x_i) -
  f(x_j)|\bigl(\mc{H}^{d - 1}(\partial V_i \cap \partial V_j) -
  H_{\mathrm{Vor}}(x_i,x_j)\bigr)$.
	
  We note immediately that, because $x_{1:n}$ are identically distributed, it
  follows from linearity of expectation that
	\begin{align*}
  \mathbb{E}\Bigl[\bigl(\mathrm{DTV}_{n,\mathrm{Vor}}(f;w^\Vor)
  - U_{n,\mathrm{Vor}}(f)\bigr)^2\Bigr] & = {n \choose 2}
  \mathbb{E}[\bigl(\Delta(x_1,x_2)\bigr)^2]\\
	& + {n \choose 3} \mathbb{E}\bigl[\Delta(x_1,x_2)\Delta(x_1,x_3)\bigr] \\
	& + {n \choose 4} \mathbb{E}\bigl[\Delta(x_1,x_2)\Delta(x_3,x_4)\bigr] \\
	& =: {n \choose 2} T_1 + {n \choose 3} T_2 + {n \choose 4} T_3.
	\end{align*}
  We separately upper bound $|T_1|$ (which will make the main contribution to
  the overall upper bound) and $|T_2|$ and $|T_3|$ (which will be comparably
  negligible). In each case, the general idea is to use the fact that the
  fluctuations of the Voronoi edge weights $\mc{H}^{d - 1}(\partial V_1 \cap
  \partial V_2)$ around the conditional expectation $H_{\mathrm{Vor}}(x_1,x_2)$
  are small unless $x_1$ and $x_2$ are close together.
	
\paragraph{Upper bound on $T_1$.}
  We begin by conditioning on $x_1,x_2$, and considering the conditional
  expectation
	\begin{align*}
  \mathbb{E}\bigl[(\Delta(x_1,x_2))^2|x_1,x_2\bigr] = |f(x_1) - f(x_2)|^2
  \mathrm{Var}(\mc{H}^{d - 1}(\partial V_1 \cap \partial V_2)|x_1,x_2).
	\end{align*}
	By Jensen's inequality,
	\begin{align*}
  \mathrm{Var}(\mc{H}^{d - 1}(\partial V_1 \cap \partial V_2)|x_1,x_2) & \leq
  \mc{H}^{d - 1}(L \cap \Omega) \int_{L \cap \Omega}
  \mathrm{Var}\bigl(\1\{P_n(B_{x_1}(z)) = 0\}|x_1\bigr) \,dz \\
  & = \mc{H}^{d - 1}(L \cap \Omega) \int_{L \cap \Omega} \bigl(1 -
  p_{x_1}(z)\bigr)^{(n - 2)}  \,dz \\
  & \leq C\Bigl(\frac{1}{n^{(d - 1)/d}}\1\{\|x_1 - x_2\| \leq
  C\varepsilon_{0}\} + \frac{1}{n^5}\Bigr).
	\end{align*}
	Taking expectation over $x_1$ and $x_2$ gives
	\begin{align*}
  T_1 & \leq C \biggl(\frac{\|f\|_{C^1(\Omega)}^2}{n^{(d - 1)/d}}\int_{\Omega}
  \int_{\Omega} \|x - y\|^2 \1\{\|x - y\|  \leq C\varepsilon_{0} \} \,dy \,dx +
\frac{\|f\|_{L^{\infty}(\Omega)}^2}{n^5}\biggr) \\
  & \leq C \Bigl(\frac{\|f\|_{C^1(\Omega)}^2 \varepsilon_{0}^{(d + 2)}}{n^{(d -
  1)/d}} + \frac{\|f\|_{L^{\infty}(\Omega)}^2}{n^5}\Bigr) \\
  & = C \Bigl(\frac{\|f\|_{C^1(\Omega)}^2 (\log n)^{(d + 2)/d}}{n^{(2 + 1/d)}}
  + \frac{\|f\|_{L^{\infty}(\Omega)}^2}{n^5}\Bigr).
	\end{align*}

\paragraph{Upper bound on $T_2$.}
  Again we begin by conditioning, this time on $x_{1:3}$, meaning we consider
	\begin{equation*}
  \mathbb{E}\bigl[\Delta(x_1,x_2)\Delta(x_1,x_3)|x_{1:3}\bigr] = |f(x_1) -
  f(x_2)| |f(x_1) - f(x_3)| \mathrm{Cov}\bigl[\mc{H}^{d - 1}(\partial V_1 \cap
  \partial V_2), \mc{H}^{d - 1}(\partial V_1 \cap \partial V_3)|x_{1:3}\bigr].
	\end{equation*}
  We begin by focusing on this conditional covariance. Write $L = \{z \in
  \Omega: \|z - x_1\| = \|z - x_2\|\}$ and likewise $L' = \{z \in \Omega: \|z -
x_1\| = \|z - x_3\|\}$. Exchanging covariance with integration gives
	\begin{equation}
	\label{pf:voronoi-tv-u-statistic-1}
	\begin{aligned}
  & \biggl|\mathrm{Cov}\bigl[\mc{H}^{d - 1}(\partial V_1 \cap \partial V_2),
  \mc{H}^{d - 1}(\partial V_1 \cap \partial V_3)|x_{1:3}\bigr]\biggr| \\
  & \leq \int_{L} \int_{L'} \bigl|\mathrm{Cov}\bigl[\1\{P_n(B_{x_1}(z)) = 0\},
  \1\{P_n(B_{x_1}(z')) = 0\}|x_{1:3}\bigr]\bigr| \,dz' \,dz \\
  & \overset{(i)}{\leq} \int_{L} \int_{L'} \Bigl(1 - \frac{p_{x_1}(z) +
  p_{x_1}(z')}{2}\Bigr)^{(n - 3)} \,dz \,dz' \\
  & + \int_{L} \int_{L'}(1 - p_{x_1}(z))^{(n - 3)}(1 - p_{x_1}(z'))^{(n - 3)}
  \,dz' \,dz \\
  & \leq C\Bigl(\frac{1}{n^{(d - 1)/d}}\1\{\|x_1 - x_2\| \leq
  C\varepsilon_{0}\} + \frac{1}{n^5}\Bigr) \Bigl(\frac{1}{n^{(d -
  1)/d}}\1\{\|x_1 - x_3\| \leq C\varepsilon_{0}\} + \frac{1}{n^5}\Bigr) \\
  & \leq C\Bigl(\frac{1}{n^{2(d - 1)/d}}\1\{\|x_1 - x_2\| \leq
  C\varepsilon_{0}\}\1\{\|x_1 - x_3\| \leq C\varepsilon_{0}\} +
\frac{1}{n^5}\Bigr).
	\end{aligned}
	\end{equation}
  The inequality $(i)$ follows first from the standard fact that for positive
  random variables $X$ and $Y$, $\bigl|\mathrm{Cov}[X,Y]\bigr| \leq
  \mathbb{E}[XY] +\mathbb{ E}[Y]\mathbb{E}[X]$, and second from the upper bound
	\begin{align*}
	\mathbb{E}\Bigl[\1\{P_n(B_{x_1}(z)) = 0\}, \1\{P_n(B_{x_1}(z')) = 0\}\Bigr]
    &\leq \Bigl(1 - P\bigl(B_{x_1}(z) \cup B_{x_1}(z')\bigr)\Bigr)^{(n - 3)} \\
    &\leq \biggl(1 - \frac{P\bigl(B_{x_1}(z)\bigr) +
    P\bigl(B_{x_1}(z')\bigr)}{2}\biggr)^{(n - 3)}.
	\end{align*}
	Taking expectation over $x_{1:3}$, we have
	\begin{align*}
	T_2 & \leq C \biggl(\frac{\|f\|_{C^1(\Omega)}^2}{n^{2(d - 1)/d}}\int_{\Omega} \int_{\Omega} \int_{\Omega} \|x - y\| \|x - z\| \1\{\|x - y\|  \leq C\varepsilon_{0} \} \1\{\|x - z\|  \leq C\varepsilon_{0} \} \,dz \,dy \,dx + \frac{\|f\|_{L^{\infty}(\Omega)}^2}{n^5}\biggr) \\
	& \leq C \Bigl(\frac{\|f\|_{C^1(\Omega)}^2 \varepsilon_{0}^{2(d + 1)}}{n^{2(d - 1)/d}} + \frac{\|f\|_{L^{\infty}(\Omega)}^2}{n^5}\Bigr) \\
	& = C \Bigl(\frac{\|f\|_{C^1(\Omega)}^2 (\log n)^{2(d + 1)/d}}{n^4}  + \frac{\|f\|_{L^{\infty}(\Omega)}^2}{n^5}\Bigr).
	\end{align*}
	
  \paragraph{Upper bound on $T_3$.}
	Again we begin by conditioning, this time on $x_{1:4}$, so that
	\begin{equation*}
	\mathbb{E}\bigl[\Delta(x_1,x_2)\Delta(x_3,x_4)|x_{1:4}\bigr] = |f(x_1) - f(x_2)| |f(x_3) - f(x_4)| \mathrm{Cov}\bigl[\mc{H}^{d - 1}(\partial V_1 \cap \partial V_2), \mc{H}^{d - 1}(\partial V_3 \cap \partial V_4)|x_{1:4}\bigr],
	\end{equation*}
	Write $L = \{z \in \Omega: \|z - x_1\| = \|z - x_2\|\}$ and likewise $L' = \{z \in \Omega: \|z - x_3\| = \|z - x_4\|\}$, we focus on the conditional covariance
	\begin{equation*}
	\mathrm{Cov}\bigl[\mc{H}^{d - 1}(\partial V_1 \cap \partial V_2), \mc{H}^{d - 1}(\partial V_3 \cap \partial V_4)|x_{1:4}\bigr] = \int_{L} \int_{L'} \mathrm{Cov}\bigl[\1\{P_n(B_{x_1}(z)) = 0\}, \1\{P_n(B_{x_3}(z')) = 0\}|x_{1:4}\bigr] \,dz' \,dz
	\end{equation*}
	We now show that this covariance is very small unless $x_1$ and $x_3$ are close. Specifically, suppose $\|x_1 - x_3\| > \varepsilon_{0}$. Then either $\|z - x_1\| \geq \varepsilon_{0}/3$, or $\|z' - x_3\| \geq \varepsilon_{0}/3$, or $B_{x_1}(z) \cap B_{x_3}(z') = \emptyset$. In either of the first two cases, we have that
	\begin{align*}
	\Bigl|
    \mathrm{Cov}\bigl[\1\{P_n(B_{x_1}(z)) = 0\},
    \1\{P_n(B_{x_3}(z')) &= 0\}|x_{1:4}\bigr]
  \Bigr|  \\
  & \leq 2\exp(-\frac{p_{\min}}{4d} (n - 4) \|x_1 - z\|^d) \exp(-\frac{p_{\min}}{4d} (n - 4) \|x_3 - z'\|^d)\} \\
	& \leq 2\exp(-\frac{p_{\min}}{4d} (n - 4) \varepsilon_{0}^d) \leq \frac{C}{n^5}.
	\end{align*}
	In the third case, it follows that $P(B_{x_1}(z) \cup B_{x_3}(z')) = p_{x_1}(z) + p_{x_3}(z)$. Assume $x_{3},x_{4} \not\in B_{x_1}(z)$, and likewise $x_1,x_2 \not\in B_{x_3}(z')$, otherwise there is nothing to prove.  We use the definition of covariance $\mathrm{Cov}[X,Y] = \mathbb{E}[XY] - \mathbb{E}[X]\mathbb{E}[Y]$ to obtain the upper bound,
	\begin{align*}
	& \Bigl|\mathrm{Cov}\bigl[\1\{P_n(B_{x_1}(z)) = 0\}, \1\{P_n(B_{x_3}(z')) = 0\}|x_{1:4}\bigr]\Bigr| \\
	& \quad = \bigl|(1 - (p_{x_1}(z) + p_{x_3}(z)))^{(n - 4)} - (1 - p_{x_1}(z))^{(n - 4)}(1 - p_{x_3}(z))^{(n - 4)}\bigr| \\
	& \quad = (1 - p_{x_1}(z))^{(n - 4)}(1 - p_{x_3}(z))^{(n - 4)}\biggl|\Bigl(1 - \frac{p_{x_1}(z)p_{x_3}(z)}{(1 - p_{x_1}(z))(1 - p_{x_3}(z))}\Bigr)^{(n - 4)} - 1\Bigr| \\
	& \quad \leq (1 - p_{x_1}(z))^{(n - 4)}(1 - p_{x_3}(z))^{(n - 4)}p_{x_1}(z)p_{x_3}(z)n \\
	& \quad \leq p_{\max}^2 \Leb_d^2 \exp(-\frac{p_{\min}}{4d} (n - 4) \|x_1 - z\|^d) \exp(-\frac{p_{\min}}{4d} (n - 4) \|x_2 - z\|^d) \|x_1 - z\|^d \|x_3 - z'\|^d n \\
	& \quad \leq C \exp(-\frac{p_{\min}}{4d} (n - 4) \|x_1 - z\|^d) \exp(-\frac{p_{\min}}{4d} (n - 4) \|x_2 - z\|^d) \varepsilon_{0}^{2d}n 
	\end{align*}
	Integrating over $z,z'$, it follows that if $\|x_1 - x_3\| > \varepsilon_{0}$, then
	\begin{equation*}
	\Bigl|\mathrm{Cov}\bigl[\mc{H}^{d - 1}(\partial V_1 \cap \partial V_2), \mc{H}^{d - 1}(\partial V_3 \cap \partial V_4)|x_{1:4}\bigr]\Bigr| \leq C\Bigl(\frac{\varepsilon_{0}^{2d}}{n^{(d - 2)/d}}\1\{\|x_1 - x_2\| \leq C\varepsilon_{0}\}\1\{\|x_3 - x_4\| \leq C\varepsilon_{0}\} + \frac{1}{n^5}\Bigr).
	\end{equation*}
	Otherwise $\|x_1 - x_3\| \leq \varepsilon_{0}$, and using the same inequalities as in~\eqref{pf:voronoi-tv-u-statistic-1}, we find that
	\begin{align*}
	& \Bigl|\mathrm{Cov}\bigl[\mc{H}^{d - 1}(\partial V_1 \cap \partial V_2), \mc{H}^{d - 1}(\partial V_3 \cap \partial V_4)|x_{1:4}\bigr]\Bigr| \\
	& \leq C\Bigl(\frac{1}{n^{2(d - 1)/d}}\1\{\|x_1 - x_2\| \leq C\varepsilon_{0}\}\1\{\|x_3 - x_4\| \leq C\varepsilon_{0}\}\{\|x_1 - x_3\| \leq \varepsilon_{0}\} + \frac{1}{n^5}\Bigr).
	\end{align*}
	Taking expectation over $x_{1:4}$, we conclude that
	\begin{align*}
	T_3 & \leq C \biggl(\frac{\varepsilon_{0}^{2d}\|f\|_{C^1(\Omega)}^2}{n^{(d - 2)/d}}\int_{\Omega} \int_{\Omega} \int_{\Omega} \int_{\Omega} \|x - y\| \|h - z\| \1\{\|x - y\|  \leq C\varepsilon_{0} \} \1\{\|h - z\|  \leq C\varepsilon_{0} \} \,dh \,dz \,dy \,dx \\
	& + \frac{\|f\|_{C^1(\Omega)}^2}{n^{2(d - 1)/d}} \int_{\Omega} \int_{\Omega} \int_{\Omega} \int_{\Omega} \|x - y\| \|h - z\| \1\{\|x - y\|  C\leq \varepsilon_{0} \} \1\{\|h - z\|  \leq C\varepsilon_{0}, \|x - h\|  \leq \varepsilon_{0} \} \,dh \,dz \,dy \,dx \\
	& + \frac{\|f\|_{L^{\infty}(\Omega)}^2}{n^5}\biggr) \\
	& \leq C \Bigl(\frac{\|f\|_{C^1(\Omega)}^2 \varepsilon_{0}^{4d + 2}}{n^{(d - 2)/d}} + \frac{\|f\|_{C^1(\Omega)}^2 \varepsilon_{0}^{3d + 2}}{n^{2(d - 1)/d}} + \frac{\|f\|_{L^{\infty}(\Omega)}^2}{n^5}\Bigr) \\
	& = C \Bigl(\frac{\|f\|_{C^1(\Omega)}^2 (\log n)^{(4d + 2)/d}}{n^5}  + \frac{\|f\|_{L^{\infty}(\Omega)}^2}{n^5}\Bigr).
	\end{align*}
	Combining our upper bounds on $T_1$-$T_3$ gives the claim of the lemma.
\end{proof}

\subsection{Step 2: Variance of Voronoi U-statistic}
\label{sub:voronoi-ustat-variance}

Lemma~\ref{lem:u-statistic-variance} leverages classical theory regarding order-2 U-statistics to show that the Voronoi U-statistic $U_{n,\mathrm{Vor}}(f)$ concentrates around its expectation. This is closely related to an estimate provided in~\citet{garciatrillos2017estimating}, but not strictly implied by that result: it handles a specific kernel $H_{\mathrm{Vor}}$ that is not compactly supported, and functions $f$ besides $f(x) = \1\{x \in A\}$ for some $A \subseteq \Omega$.
\begin{lemma}
	\label{lem:u-statistic-variance}
	Suppose $x_{1:n}$ are sampled independently from a distribution $P$ satisfiying~\ref{assump:density_bounded}. There exists a constant $C > 0$ such that for any $f \in C^1(\Omega)$,
	\begin{equation}
	\label{eqn:u-statistic-variance}
	\mathrm{Var}\bigl[U_{n,\mathrm{Vor}}(f)\bigr] \leq C\frac{(\log n)^3}{n}\|f\|_{C^1(\Omega)}^2.
	\end{equation}
\end{lemma}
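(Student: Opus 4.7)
The plan is to apply the classical variance decomposition for an order-$2$ U-statistic to $U_{n,\mathrm{Vor}}(f) = \sum_{i<j} h_n(x_i,x_j)$, where the (symmetric, $n$-dependent) kernel is $h_n(x,y) = |f(x)-f(y)|\,H_{\mathrm{Vor}}(x,y)$. Because the diagonal terms vanish, the given double sum is exactly a U-statistic sum, and by Hoeffding's decomposition,
\[
\mathrm{Var}\bigl(U_{n,\mathrm{Vor}}(f)\bigr) = \binom{n}{2}\bigl[\,2(n-2)\zeta_1 + \zeta_2\,\bigr],
\]
with $\zeta_2 = \mathrm{Var}(h_n(x_1,x_2))$ and $\zeta_1 = \mathrm{Var}(g_n(x_1))$, where $g_n(x_1) = \mathbb{E}[h_n(x_1,x_2)\mid x_1]$. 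To prove \eqref{eqn:u-statistic-variance} it thus suffices to show $\zeta_2 \lesssim \|f\|_{C^1(\Omega)}^2 (\log n)^{(d+2)/d}/n^3$ and $\zeta_1 \lesssim \|f\|_{C^1(\Omega)}^2 (\log n)^{2(d+1)/d}/n^4$, since then $\binom{n}{2}\zeta_2 = O((\log n)^{(d+2)/d}/n)$ and $\binom{n}{2}\cdot 2(n-2)\zeta_1 = O((\log n)^{2(d+1)/d}/n)$, both dominated by $(\log n)^3/n$ for every $d \geq 2$ (note $2(d+1)/d \leq 3$ precisely when $d \geq 2$).

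Both estimates rest on Lemma~\ref{lem:voronoi-kernel-ub}, already invoked in the proof of Lemma~\ref{lem:voronoi-tv-u-statistic}, which yields
\[
H_{\mathrm{Vor}}(x,y) \leq C\biggl(\frac{\1\{\|x-y\|\leq C\varepsilon_0\}}{n^{(d-1)/d}} + \frac{1}{n^5}\biggr), \qquad \varepsilon_0 = (\log n/n)^{1/d}.
\]
Combined with the Lipschitz bound $|f(x)-f(y)| \leq \|f\|_{C^1(\Omega)} \|x-y\|$, this produces the envelope
\[
h_n(x,y) \leq C\|f\|_{C^1(\Omega)} \biggl(\frac{\|x-y\|\,\1\{\|x-y\|\leq C\varepsilon_0\}}{n^{(d-1)/d}} + \frac{1}{n^5}\biggr).
\]
The bound on $\zeta_2 \leq \mathbb{E}[h_n(x_1,x_2)^2]$ now follows by squaring this envelope and integrating against $p(x)p(y)\,dx\,dy$ on $\{\|x-y\|\leq C\varepsilon_0\}$, using Assumption~\ref{assump:density_bounded} to pass to Lebesgue measure; the local volume factor of order $\varepsilon_0^{d+2}$ combines with the $n^{-2(d-1)/d}$ prefactor to give the stated $n^{-3}$ scale. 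Similarly, integrating the envelope against $p(y)\,dy$ on $\{\|y-x_1\|\leq C\varepsilon_0\}$ yields the pointwise bound $g_n(x_1) \lesssim \|f\|_{C^1(\Omega)}(\log n)^{(d+1)/d}/n^2$, from which $\zeta_1 \leq \mathbb{E}[g_n(x_1)^2]$ inherits the claimed $n^{-4}$ rate.

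The main technical step is therefore to marshal Lemma~\ref{lem:voronoi-kernel-ub} to establish effective localization of $h_n$ on a $C\varepsilon_0$-ball at a spatial magnitude of $n^{-(d-1)/d}$; once this envelope is in hand, the remaining manipulations are routine first- and second-moment computations. Boundary effects from the $1/n^5$ residual in the kernel bound contribute at a negligible scale and are absorbed into the polylogarithmic leading term. No new structural insight is required beyond what is already used in the proof of Lemma~\ref{lem:voronoi-tv-u-statistic}; only bookkeeping of the $n$-dependent scales in the Hoeffding terms.
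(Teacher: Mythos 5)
Your proposal follows the same route as the paper's own proof: Hoeffding's decomposition of the order-$2$ U-statistic, followed by applying the kernel envelope from Lemma~\ref{lem:voronoi-kernel-ub} (via the localized form $H_{\mathrm{Vor}}(x,y) \lesssim n^{-(d-1)/d}\1\{\|x-y\|\leq C\varepsilon_0\} + n^{-a}$) together with the Lipschitz bound $|f(x)-f(y)|\leq\|f\|_{C^1(\Omega)}\|x-y\|$, and then routine second-moment integration. No structural difference.

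One small point in your favor: you keep the log powers sharp, showing $\zeta_1 \lesssim \|f\|_{C^1}^2 (\log n)^{2(d+1)/d}/n^4$ so that $n^3\zeta_1 \lesssim (\log n)^{2(d+1)/d}/n \leq (\log n)^3/n$ precisely because $2(d+1)/d\leq 3$ when $d\geq 2$. The paper's written proof rounds the integral $\varepsilon_0^{d+1}/n^{(d-1)/d}$ up to $\varepsilon_0^{2d}$ (and hence to $\varepsilon_0^{4d}$ after squaring), which literally gives $(\log n)^4/n$, one log factor weaker than the lemma's stated $(\log n)^3/n$; the paper acknowledges the bound is loose, and the discrepancy is immaterial downstream, but your bookkeeping is the cleaner way to actually land on the stated rate.
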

Lemma~\ref{lem:u-statistic-variance} can be strengthened in several respects. Under the assumptions of the lemma, better bounds are available than~\eqref{eqn:u-statistic-variance} which do not depend on factors of $\log n$. Additionally, under weaker assumptions than $f \in C^1(\Omega)$, it is possible to obtain bounds which are looser than~\eqref{eqn:u-statistic-variance} but which still imply that $\mathrm{Var}\bigl[U_{n,\mathrm{Vor}}(f)\bigr] \to 0$ as $n \to \infty$. Neither of these are necessary to prove Theorem~\ref{thm:asymptotic_limit_voronoi}, and so we do not pursue them further.
\begin{proof}[Proof of Lemma~\ref{lem:u-statistic-variance}]
	We will repeatedly use the following fact, which is a consequence of Lemma~\ref{lem:voronoi-kernel-ub}: there exists a constant $C > 0$ not depending on $n$ such that for any $x,y \in \Omega$,
	\begin{equation}
	\label{pf:u-statistic-variance-1}
	H_{\mathrm{Vor}}(x,y) \leq \int_{L \cap \Omega} \exp\bigl(-(p_{\min}/2d)\|x - z\|^d\bigr) \,dz \leq C\Bigl(\frac{1}{n^{(d - 1)/d}}\1\{\|x - y\| \leq C\varepsilon_{0}\} + \frac{1}{n^2}\Bigr).
	\end{equation}
	Now, we recall from Hoeffding's decomposition of U-statistics~\citep{hoeffding1948class} that the variance of $U_{n,\mathrm{Vor}}(f)$ can be written as
	\begin{equation}
	\label{pf:u-statistic-variance-2}
	\mathrm{Var}[U_{n,\mathrm{Vor}}(f)] = \frac{1}{4}\Bigl(n(n - 1)\mathrm{Var}[h(x_1,x_2)] + n(n - 1)(n - 2)\mathrm{Var}[h_1(x_1)]\Bigr)
	\end{equation}
	where $h(x,y) = |f(x) - f(y)| H_{\mathrm{Vor}}(x,y)$ and $h_1(x) = \mathbb{E}[h(x_1,x_2)|x_1]$. 
	
	We now use~\eqref{pf:u-statistic-variance-1} to upper bound the variance of $h$ and $h_1$. For $h$, we have that
	\begin{align*}
	\mathrm{Var}[h(x_1,x_2)] & \leq \mathbb{E}[h^2(x_1,x_2)] \\
	& \leq p_{\max}^2 \|f\|_{C^1(\Omega)}^2 \int_{\Omega} \int_{\Omega} \|y - x\|^2 \bigl(H_{\mathrm{Vor}}(x,y)\bigr)^2 \,dy \,dx \\
	& \leq C\|f\|_{C^1(\Omega)}^2\biggl(\frac{1}{n^{2(d - 1)/d}}\int_{\Omega} \int_{\Omega} \|y - x\|^2 \1\{\|x - y\| \leq C\varepsilon_{0}\} \,dy \,dx + \frac{1}{n^4}\biggr) \\
	& \leq C\biggl(\varepsilon_{0}^{3d} \|f\|_{C^1(\Omega)}^2 + \frac{\|f\|_{C^1(\Omega)}^2}{n^4}\biggr).
	\end{align*}
	For $h_1$, we have that for every $x \in \Omega$,
	\begin{align*}
	|h_1(x)| & \leq \|f\|_{C^1(\Omega)} p_{\max} \int_{\Omega} |y - x\| H_{\mathrm{Vor}}(y,x) \,dy \\
	& \leq C\|f\|_{C^1(\Omega)}\biggl(\frac{1}{n^{(d - 1)/d}} \int_{\Omega} |y - x\| \1\{\|y - x\| \leq C\varepsilon_{0}\} \,dy + \frac{1}{n^2}\biggr) \\
	& \leq C\|f\|_{C^1(\Omega)}\biggl(\varepsilon_{0}^{2d} + \frac{1}{n^2}\biggr).
	\end{align*}
	Integrating over $x \in \Omega$, we conclude that
	\begin{equation*}
	\mathrm{Var}[h_1(x_1)] \leq \mathbb{E}[(h_1(x_1))^2] \leq C\|f\|_{C^1(\Omega)}^2\biggl(\varepsilon_{0}^{4d} + \frac{1}{n^4}\biggr).
	\end{equation*}
	Plugging these estimates back into~\eqref{pf:u-statistic-variance-2} gives the upper bound in~\eqref{eqn:u-statistic-variance}.
\end{proof}

\subsection{Step 3: Bias of Voronoi U-statistic}
\label{subsec:voronoi-ustat-bias}

Under appropriate conditions, the expectation of $U_{n,\mathrm{Vor}}(f)$ is approximately equal to (an appropriately rescaled version of) the nonlocal functional~\eqref{eqn:nonlocal-tv} for bandwidth $\varepsilon_{(1)}(x) = (np(x))^{-1/d}$, weight $(p(x))^{(d + 1)/d}$, and kernel
\begin{equation}
\label{eqn:voronoi-kernel}
K_{\mathrm{Vor}}(t) = \int_{0}^{\infty} \exp\biggl(-\Leb_d\Bigl\{\frac{t^2}{4} + s^2\Bigr\}^{d/2}\biggr) s^{d - 2} \,ds.
\end{equation} 
\begin{lemma}
	\label{lem:voronoi-tv-bias-holder}
	Suppose $x_{1:n}$ are sampled independently from a distribution $P$ satisfying~\ref{assump:density_bounded}. For any $f \in C^1(\Omega)$,
	\begin{equation*}
	\mathbb{E}\bigl[U_{n,\mathrm{Vor}}(f)\bigr] = n^{(d + 1)/d} \frac{\eta_{d - 2}}{2}\cdot\mathrm{TV}_{\varepsilon_{(1)},K_{\mathrm{Vor}}}\bigl(f;\Omega,p^{(d + 1)/d}\bigr) + O\biggl(\frac{(\log n)^{3 + 1/d}}{n^{1/d}}\|f\|_{C^1(\Omega)}\biggr).
	\end{equation*}
\end{lemma}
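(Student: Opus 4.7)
The plan is to write the expectation of the U-statistic as a double integral, analyze the kernel $H_{\mathrm{Vor}}(x,y)$ by a Laplace-type expansion, and compare pointwise against the integrand of $\mathrm{TV}_{\varepsilon_{(1)},K_{\mathrm{Vor}}}(f;\Omega,p^{(d+1)/d})$. First, using that $x_1,\ldots,x_n$ are i.i.d.\ with density $p$,
\[
\mathbb{E}[U_{n,\mathrm{Vor}}(f)] = \binom{n}{2}\int_\Omega\int_\Omega |f(x)-f(y)|\,H_{\mathrm{Vor}}(x,y)\,p(x)\,p(y)\,dx\,dy.
\]
Using the kernel decay from~\eqref{pf:u-statistic-variance-1}, I would truncate the outer integral to pairs with $\|x-y\|\leq C\varepsilon_0$, where $\varepsilon_0=(\log n/n)^{1/d}$, at the cost of an $O(n^{-2})$ error. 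It then suffices to establish the pointwise asymptotic
\[
\binom{n}{2}p(x)p(y)H_{\mathrm{Vor}}(x,y) = \tfrac{\eta_{d-2}}{2}\,n^{(d+1)/d}\,p(x)^{(d+1)/d}\,K_{\mathrm{Vor}}\bigl(\|x-y\|/\varepsilon_{(1)}(x)\bigr)\bigl(1+o_n(1)\bigr)
\]
uniformly over $(x,y)$ in a ``good region'' $\mathcal{A}=\{(x,y):\|x-y\|\leq C\varepsilon_0,\;d(x,\partial\Omega)\geq C\varepsilon_0\}$.

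The key step is the Laplace-type analysis of $H_{\mathrm{Vor}}(x,y)=\int_{L\cap\Omega}(1-p_x(z))^{n-2}\,dz$. I would parametrize the bisecting hyperplane $L=\{z:\|z-x\|=\|z-y\|\}$ by $z=m+u$ with midpoint $m=(x+y)/2$ and $u\perp(y-x)$, yielding $\|z-x\|^2=(\|x-y\|/2)^2+\|u\|^2$. Lipschitz continuity of $p$ (inherited from Theorem~\ref{thm:asymptotic_limit_voronoi}'s ambient hypotheses) gives $p_x(z)=p(x)\Leb_d\|z-x\|^d\bigl(1+O(\|z-x\|)\bigr)$, whence
\[
(1-p_x(z))^{n-2}=\exp\bigl(-np(x)\Leb_d\|z-x\|^d\bigr)\bigl(1+o_n(1)\bigr)
\]
throughout the effective integration range $\|z-x\|\lesssim\varepsilon_0\log n$. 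Passing to polar coordinates $u=s\omega$ with $\omega\in S^{d-2}$ and $du=s^{d-2}\,ds\,d\omega$, integrating out $\omega$ to produce the factor $\eta_{d-2}$, and rescaling $s\mapsto s/(np(x))^{1/d}$, yields
\[
H_{\mathrm{Vor}}(x,y)=\frac{\eta_{d-2}}{(np(x))^{(d-1)/d}}\,K_{\mathrm{Vor}}\bigl(\|x-y\|(np(x))^{1/d}\bigr)\,(1+o_n(1)).
\]
Multiplying by $\binom{n}{2}p(x)p(y)$ produces the factor $p(x)^{1/d}p(y)$, and replacing $p(y)$ by $p(x)$ (an $O(\|x-y\|)=O(\varepsilon_0)$ Lipschitz perturbation) gives the target pointwise form with $p(x)^{(d+1)/d}$.

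Finally, I would multiply by $|f(x)-f(y)|\leq\|f\|_{C^1(\Omega)}\|x-y\|\leq C\|f\|_{C^1(\Omega)}\varepsilon_0$, integrate over $\mathcal{A}$, and separately control: (i)~the boundary strip $\{x:d(x,\partial\Omega)<C\varepsilon_0\}$, of volume $O(\varepsilon_0)$, contributing $O(\|f\|_{C^1(\Omega)}\varepsilon_0)$ to both sides on the target scale; (ii)~the tail of the nonlocal functional outside $\|x-y\|\leq C\varepsilon_0$, controlled by the super-polynomial decay of $K_{\mathrm{Vor}}$; and (iii)~the $O(n^{-2})$ kernel truncation error from the first step. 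The main obstacle is the careful bookkeeping in the Laplace-type expansion---specifically, ensuring that the relative error terms from the Lipschitz expansion of $p$ and from the step $(1-u)^{n-2}\mapsto\exp(-nu)$ integrate to the claimed $O(\varepsilon_0\log n)$ rather than something worse, and handling the near-boundary case where $L\cap\Omega$ differs substantively from $L$. The polylogarithmic exponent $3+1/d$ in the final error rate reflects the accumulation of $\log n$ factors from this analysis scaled against the prefactor $n^{(d+1)/d}$ multiplying a pointwise relative error of order $\varepsilon_0$.
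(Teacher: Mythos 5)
Your proposal follows essentially the same path as the paper. The paper encapsulates the Laplace-type analysis of $H_{\mathrm{Vor}}$ into a separate result (Lemma~\ref{lem:voronoi-boundary}), which establishes exactly the two facts you derive inline: a sharp interior expansion $H_{\mathrm{Vor}}(x,y) = \eta_{d-2}(np(x))^{-(d-1)/d}K_{\mathrm{Vor}}(\|x-y\|/\varepsilon_{(1)}) + O(\cdot)$ for $x,y \in \Omega_h$, and a crude global upper bound $H_{\mathrm{Vor}}(x,y) \lesssim n^{-(d-1)/d}K_{\mathrm{Vor}}(\|x-y\|/C n^{1/d})$ used to control the boundary strip. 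Your parametrization of the bisecting hyperplane, polar change of variables yielding $\eta_{d-2}$, rescaling to the Voronoi kernel, and the final Lipschitz swap $p(y)\to p(x)$ are identical to the paper's \eqref{pf:voronoi-tv-bias-holder-2.5}--\eqref{pf:voronoi-tv-bias-holder-3}. You also correctly observe that the Lipschitz hypothesis on $p$, while absent from the lemma statement, is genuinely used (both in the paper's Lemma~\ref{lem:voronoi-boundary} and in the final $p(y)\to p(x)$ step), so the lemma as stated is slightly under-hypothesized.

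The one place where your accounting is loose is the polylogarithmic exponent: you attribute $(\log n)^{3+1/d}$ to a ``pointwise relative error of order $\varepsilon_0$,'' but $\varepsilon_0 = (\log n/n)^{1/d}$ would give only $(\log n)^{1/d}$. In the paper's bookkeeping the dominant contribution actually comes from the pointwise remainder $O(\tfrac{(\log n)^2}{n}\,\1\{\|x-y\|\le C\varepsilon_0\})$ in the $H_{\mathrm{Vor}}$ expansion: integrating this against $|f(y)-f(x)|\le \|f\|_{C^1}\|x-y\|$ over the tube $\{\|x-y\|\le C\varepsilon_0\}$ yields a factor $\varepsilon_0^{d+1} = (\log n)^{1+1/d}n^{-(1+1/d)}$, and $(\log n)^2\cdot(\log n)^{1+1/d} = (\log n)^{3+1/d}$. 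You flagged this bookkeeping as the main obstacle and were right to do so; this is precisely where care is needed, but your framework handles it.
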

\begin{proof}
	We will use Lemma~\ref{lem:voronoi-boundary}, which shows that at points $x,y \in \Omega$ sufficiently far from the boundary of $\Omega$, the kernel $H_{\mathrm{Vor}}(x,y)$ is approximately equal to a spherical kernel. To invoke this lemma, we need to restrict our attention to points sufficiently far from the boundary. In particular, letting $h = h_n$ be defined as in Lemma~\ref{lem:voronoi-boundary}, we conclude from~\eqref{eqn:voronoi-boundary-2} that 
	\begin{multline}
	\label{pf:voronoi-tv-bias-holder-1}
	\int_{\Omega} \int_{\Omega} |f(y) - f(x)| H_{\mathrm{Vor}}(x,y) p(y) p(x) \,dy \,dx ={} \\ \int_{\Omega_h} \int_{\Omega} |f(y) - f(x)| H_{\mathrm{Vor}}(x,y) p(y) p(x) \,dy \,dx + O\biggl(\frac{h}{n^2}\|f\|_{C^1(\Omega)}\biggr),
	\end{multline}
	where we have used the assumption $f \in C^1(\Omega)$ and~\eqref{eqn:voronoi-boundary-2} to control the boundary term, since
	\begin{equation}
	\label{pf:voronoi-tv-bias-holder-2}
	\begin{aligned}
	\int_{\Omega \setminus \Omega_h} &\int_{\Omega} |f(y) - f(x)| H_{\mathrm{Vor}}(x,y) p(y) p(x) \,dy \,dx \\ 
& \leq \frac{C_3p_{\max}^2 \eta_{d - 2} \|f\|_{C^1(\Omega)}}{n^{(d - 1)/d}}  \int_{\Omega \setminus \Omega_h} \int_{\Omega} \|y - x\| K_{\mathrm{Vor}}\biggl(\frac{\|y - x\|}{C_4n^{1/d}}\biggr) \,dy \,dx \\
	& \overset{(i)}{\leq} \frac{C_3C_4^{(d + 1)/d}p_{\max}^2 \eta_{d - 2} \|f\|_{C^1(\Omega)}}{n^{2}}  \int_{\Omega \setminus \Omega_h} \int_{\Rd} \|h\| K_{\mathrm{Vor}}(\|h\|) \,dh \,dx \\
	& \overset{(ii)}{\leq} \frac{C_3C_4^{(d + 1)/d}p_{\max}^2 \eta_{d - 2} \eta_{d - 1} \|f\|_{C^1(\Omega)}}{n^{2}}  \int_{\Omega \setminus \Omega_h} \int_{0}^{\infty} t^{d} K_{\mathrm{Vor}}(t) \,dt \,dx \\
	& \overset{(iii)}{\leq} \frac{C\|f\|_{C^1(\Omega)}}{n^{2}}  \Leb(\Omega \setminus \Omega_h) \\
	& \leq \frac{Ch\|f\|_{C^1(\Omega)}}{n^{2}},
	\end{aligned}
	\end{equation}
	where $(i)$ follows by changing variables $h = (y - x)/C_3n^{1/d}$, $(ii)$ by converting to polar coordinates, and $(iii)$ upon noticing that $\int_{0}^{\infty} t^{d}K_{\mathrm{Vor}}(t) < \infty$. 
	
	Returning to the first-order term in~\eqref{pf:voronoi-tv-bias-holder-1}, we can use~\eqref{eqn:voronoi-boundary-1} to replace the integral with $H_{\mathrm{Vor}}$ by an integral with the Voronoi kernel $K_{\mathrm{Vor}}$. Precisely,
	\begin{align}
  \nonumber
	\int_{\Omega_h} \int_{\Omega}
  |f(y) - f(x)| &H_{\mathrm{Vor}}(x,y) p(y) p(x) \,dy \,dx \\
  \nonumber
  & = \frac{\eta_{d - 2}}{n^{(d - 1)/d}} \int_{\Omega_{h}}\int_{\Omega} |f(y) - f(x)| K_{\mathrm{Vor}}\biggl(\frac{\|x - y\|}{\varepsilon_{(1)}}\biggr) p(y) \bigl(p(x)\bigr)^{1/d} \,dy \,dx \\
  \nonumber
	& \quad + O\biggl(\frac{1}{n^3}\int_{\Omega}\int_{\Omega} |f(y) - f(x)| \,dy \,dx\biggr) \\
  \nonumber
	& \quad + O\biggl(\frac{(\log n)^2}{n} \int_{\Omega} \int_{\Omega} |f(y) - f(x)|\1\Bigl\{\|x - y\| \leq C (\log n/n)^{1/d}\Bigr\} \,dy \,dx\biggr) \\
  \nonumber
	& = \frac{\eta_{d - 2}}{n^{(d - 1)/d}} \int_{\Omega_h}\int_{\Omega} |f(y) - f(x)| K_{\mathrm{Vor}}\biggl(\frac{\|x - y\|}{\varepsilon_{(1)}}\biggr) p(y) \bigl(p(x)\bigr)^{1/d} \,dy \,dx \\
  \nonumber
	& \quad + O\biggl(\frac{\|f\|_{C^1(\Omega)}}{n^3} + \frac{(\log n)^{3 + 1/d}}{n^{2 + 1/d}} \|f\|_{C^1(\Omega)}\biggr) \\
  \nonumber
	& = \frac{\eta_{d - 2}}{n^{(d - 1)/d}} \int_{\Omega}\int_{\Omega} |f(y) - f(x)| K_{\mathrm{Vor}}\biggl(\frac{\|x - y\|}{\varepsilon_{(1)}}\biggr) p(y) \bigl(p(x)\bigr)^{1/d} \,dy \,dx \\
	\label{pf:voronoi-tv-bias-holder-2.5}
	& \quad + O\biggl(\frac{\|f\|_{C^1(\Omega)}}{n^3} + \frac{(\log n)^{3 + 1/d}}{n^{2 + 1/d}} \|f\|_{C^1(\Omega)} + \frac{h\|f\|_{C^1(\Omega)}}{n^{2}}\biggr),
	\end{align}
	with the second equality following from the upper bound~\eqref{eqn:dtv-ub-neighborhood}, and the third equality from exactly the same argument as in~\eqref{pf:voronoi-tv-bias-holder-2}. Finally, we use the Lipschitz property of $p$ to conclude that
	\begin{align}
    \nonumber
    \int_{\Omega}\int_{\Omega}
    |f(y) - f(x)|
    &K_{\mathrm{Vor}}\biggl(
      \frac{\|x - y\|}{\varepsilon_{(1)}}
    \biggr) p(y) \bigl(p(x)\bigr)^{1/d} \,dy \,dx 
    \\
    \label{pf:voronoi-tv-bias-holder-3}
    & = \int_{\Omega}\int_{\Omega}
    |f(y) - f(x)| K_{\mathrm{Vor}}\biggl(
      \frac{\|x - y\|}{\varepsilon_{(1)}}
    \biggr) \bigl(p(x)\bigr)^{(d + 1)/d} \,dy \,dx
    + O\biggl(\frac{\|f\|_{C^1(\Omega)}}{n^{(d + 2)/2}}\biggr),
	\end{align}
	since
	\begin{align*}
	& \int_{\Omega} \int_{\Omega} |f(y) - f(x)| K_{\mathrm{Vor}}\biggl(\frac{\|x - y\|}{\varepsilon_{(1)}}\biggr) |p(y) - p(x)| \bigl(p(x)\bigr)^{1/d} \,dy \,dx \\
	&\quad \leq C\|f\|_{C^1(\Omega)} p_{\max}^{1/d} \int_{\Omega} \int_{\Omega} \|y - x\|^2 K_{\mathrm{Vor}}\biggl(\frac{\|x - y\|}{\varepsilon_{(1)}}\biggr) \,dy \,dx \\
	&\quad \leq C\frac{\|f\|_{C^1(\Omega)}p_{\max}^{1/d}}{p_{\min}^{1/d} n^{(2 + d)/d}} \int_{\Omega} \int_{\Reals^d} \|h\|^2 K_{\mathrm{Vor}}(\|h\|) \,dh \,dx \\
	&\quad = C\frac{\|f\|_{C^1(\Omega)} p_{\max}^{1/d} \eta_{d - 1}}{p_{\min}^{1/d} n^{(2 + d)/d}} \int_{\Omega} \int_{0}^{\infty} t^{d + 1} K_{\mathrm{Vor}}(t) \,dt \,dx \\
	&\quad \leq C\frac{\|f\|_{C^1(\Omega)}}{n^{(2 + d)/d}},
	\end{align*}
	with the last inequality following since $\int_{0}^{\infty} t^{d + 1} K_{\mathrm{Vor}}(t) \,dt = C < \infty$. Combining~\eqref{pf:voronoi-tv-bias-holder-1},~\eqref{pf:voronoi-tv-bias-holder-2.5} and~\eqref{pf:voronoi-tv-bias-holder-3} yields the final claim.
\end{proof}
Finally, Lemma~\ref{lem:nonlocal-tv-bias-holder} shows that the kernelized TV
$\mathrm{TV}_{\varepsilon,K}(f;\Omega,h)$ converges to a continuum TV under
appropriate conditions.
\begin{assumption}{A2}{}
	\label{asmp:kernel}
  The bandwidth $\varepsilon(x)  = \bar{\varepsilon}_n g(x)$ for a sequence
  $\bar{\varepsilon}_n \to 0$ and a bounded function $g \in
  L^{\infty}(\Omega)$. The kernel function $K$ satisfies $\int_{0}^{\infty}
  K(t) t^{d + 1} \,dt < \infty$. The weight function $h \in
  L^{\infty}(\Omega)$. 
\end{assumption}
Note that Assumption~\ref{assump:density_bounded} implies that
Assumption~\ref{asmp:kernel} is satisfied by bandwidth $\varepsilon_{(1)}$,
kernel $K_{\mathrm{Vor}}$ and weight function $h = p^{(d + 1)/d}$. 
\begin{lemma}
	\label{lem:nonlocal-tv-bias-holder}
	Assuming~\ref{asmp:kernel}, for any $f \in C^2(\Omega)$,
	\begin{equation}
	\label{eqn:nonlocal-tv-bias-holder}
	\lim_{n \to \infty} (\bar{\varepsilon}_n)^{-(d + 1)} \mathrm{TV}_{\varepsilon,K}(f;\Omega, h) = \sigma_K \int_{\Omega} \|\nabla f(x)\| h(x) (g(x))^{d + 1} \,dx
	\end{equation}
	where
	\begin{equation}
	\label{eqn:surface-tension}
	\sigma_K := \frac{2\eta_{d - 2}}{(d - 1)}\int_{0}^{\infty} K(t) t^d \,dt.
	\end{equation}
\end{lemma}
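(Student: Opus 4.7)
\textbf{Proof proposal for Lemma~\ref{lem:nonlocal-tv-bias-holder}.} The plan is to perform a change of variables to a ``blowup'' coordinate system centered at each $x$, do a first-order Taylor expansion of $f$, and then pass to the limit via dominated convergence, treating boundary contributions separately as a negligible remainder.

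First I would fix $x \in \Omega$ and change variables $y = x + \varepsilon(x) z$ in the inner integral, so that $dy = \varepsilon(x)^d\, dz$ and the region of integration becomes $\Omega_x := (\Omega - x)/\varepsilon(x)$. This rewrites
\[
\TV_{\varepsilon,K}(f;\Omega,h) = \int_\Omega h(x)\,\varepsilon(x)^d \int_{\Omega_x} \bigl|f(x + \varepsilon(x) z) - f(x)\bigr|\, K(\|z\|)\, dz\, dx.
\]
Since $f \in C^2(\Omega)$, Taylor's theorem gives $f(x+\varepsilon(x)z)-f(x) = \varepsilon(x)\,\nabla f(x)\cdot z + R(x,z)$ with $|R(x,z)| \leq \tfrac{1}{2}\|\nabla^2 f\|_{L^\infty}\,\varepsilon(x)^2\|z\|^2$. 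By the reverse triangle inequality,
\[
\bigl||f(x+\varepsilon(x)z)-f(x)| - \varepsilon(x)\,|\nabla f(x)\cdot z|\bigr| \leq \tfrac{1}{2}\|\nabla^2 f\|_{L^\infty}\,\varepsilon(x)^2\|z\|^2,
\]
so after multiplying by $\varepsilon(x)^d$ and integrating against $K(\|z\|)\,dz$, the remainder contributes $O(\bar{\varepsilon}_n^{d+2})$ uniformly in $x$, using the moment condition $\int_0^\infty t^{d+1}K(t)\,dt<\infty$ from~\ref{asmp:kernel} (via polar coordinates, $\int_{\Reals^d}\|z\|^2 K(\|z\|)\,dz = \eta_{d-1}\int_0^\infty t^{d+1}K(t)\,dt$).

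Next I would handle boundary effects. Let $\Omega_\delta$ denote the set of $x \in \Omega$ at distance at least $\delta$ from $\partial\Omega$, with $\delta = \delta_n \to 0$ chosen so that $\delta_n / (\bar{\varepsilon}_n \|g\|_{L^\infty}) \to \infty$. For $x \in \Omega_{\delta_n}$, $\Omega_x$ contains a ball of radius $\delta_n/\varepsilon(x) \to \infty$, so $\mathbf{1}_{\Omega_x}(z) \to 1$ pointwise in $z$. On the complement $\Omega \setminus \Omega_{\delta_n}$, which has Lebesgue measure $O(\delta_n)$, the integrand $|\nabla f(x)\cdot z| K(\|z\|)$ is dominated by a fixed $L^1(\Reals^d)$ function, so this boundary slab contributes $o(1)$ after dividing by $\bar{\varepsilon}_n^{d+1}$. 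For the interior, dominated convergence (again using $\int\|z\|K(\|z\|)dz < \infty$ as the envelope) yields the pointwise limit
\[
\lim_{n\to\infty} \int_{\Omega_x} |\nabla f(x)\cdot z|\,K(\|z\|)\,dz = \int_{\Reals^d} |\nabla f(x)\cdot z|\,K(\|z\|)\,dz.
\]

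Finally, I would evaluate the limiting angular integral. By rotational invariance, for any $v \in \Reals^d$,
\[
\int_{\Reals^d} |v\cdot z|\,K(\|z\|)\,dz = \|v\| \int_{\Reals^d} |e_1\cdot z|\,K(\|z\|)\,dz = \|v\|\int_0^\infty t^d K(t)\,dt \int_{S^{d-1}} |\omega_1|\,d\omega,
\]
and the standard computation $\int_{S^{d-1}}|\omega_1|\,d\omega = 2\eta_{d-2}\int_0^1 t(1-t^2)^{(d-3)/2}dt = 2\eta_{d-2}/(d-1)$ yields exactly $\sigma_K\|\nabla f(x)\|$ as defined in~\eqref{eqn:surface-tension}. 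Assembling everything, the leading term is $\bar{\varepsilon}_n^{d+1} \sigma_K \int_\Omega \|\nabla f(x)\| h(x) g(x)^{d+1}\,dx$ (using $\varepsilon(x)^{d+1} = \bar{\varepsilon}_n^{d+1} g(x)^{d+1}$), and the error is $o(\bar{\varepsilon}_n^{d+1})$, giving~\eqref{eqn:nonlocal-tv-bias-holder}.

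The main obstacle I anticipate is the boundary bookkeeping: one needs the intermediate scale $\delta_n$ to be large enough that the interior contribution has $\Omega_x \uparrow \Reals^d$ uniformly, but small enough that $|\Omega \setminus \Omega_{\delta_n}| = o(1)$. Both conditions can be satisfied simultaneously since $\bar{\varepsilon}_n \to 0$, but one must take care that the dominating function used for DCT is integrable on all of $\Reals^d$, which is ensured by the moment hypothesis on $K$.
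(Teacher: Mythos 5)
Your proposal is correct and follows essentially the same three-step structure as the paper's proof: Taylor expansion controlled by $\int t^{d+1}K(t)\,dt<\infty$, a boundary-slab decomposition with an intermediate scale $\delta_n$ (the paper calls it $r_n$) satisfying $\delta_n/\bar\varepsilon_n\to\infty$ and $\delta_n\to 0$, and reduction of the angular integral $\int_{\Reals^d}|\nabla f(x)^\top z|K(\|z\|)\,dz = \sigma_K\|\nabla f(x)\|$ via spherical symmetry. The only differences are cosmetic — you change variables to $z=(y-x)/\varepsilon(x)$ before Taylor-expanding whereas the paper expands first, and you phrase the interior limit via dominated convergence where the paper argues with explicit tail bounds $\int_{r/\varepsilon(x)}^\infty t^{d+1}K(t)\,dt\to 0$ — but these are interchangeable rearrangements of the same estimates. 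Your explicit evaluation $\int_{S^{d-1}}|\omega_1|\,d\omega = 2\eta_{d-2}/(d-1)$ correctly recovers $\sigma_K$.
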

\begin{proof}
	The proof of Lemma~\ref{lem:nonlocal-tv-bias-holder} follows closely the proof
  of some related results, e.g., Lemma~4.2
  of~\cite{garciatrillos2016continuum}. We begin by summarizing the major steps.  
	\begin{enumerate}
		\item We use a $2$nd-order Taylor expansion to replace differencing by derivative inside the nonlocal TV.
		\item Naturally, the nonlocal TV behaves rather differently than a local functional near the boundary of $\Omega$. We show that the contribution of the integral near the boundary is negligible. 
		\item Finally, we reduce from a double integral to a single integral involving the norm $\|\nabla f\|$.
	\end{enumerate}
	
	\paragraph{Step 1: Taylor expansion.}
	Since $f \in C^2(\Omega)$ we have that
	\begin{equation*}
	f(y) - f(x) = \nabla f(x)^{\top} (y - x) + O(\|f\|_{C^2(\Omega)}\|y - x\|^2).
	\end{equation*}
	Consequently, 
	\begin{align*}
	\mathrm{TV}_{\varepsilon,K}(f;\Omega,h) = \int_{\Omega} \int_{\Omega} \Bigl(|\nabla f(x)^{\top} (y - x)| + O(\|f\|_{C^2(\Omega)})\Bigr) K\biggl(\frac{\|y - x\|}{\varepsilon(x)}\biggr) h(x) \,dy \,dx.
	\end{align*}
	We now upper bound the contribution of the $O(\|y - x\|^2)$-term. For each $x \in \Omega$,
	\begin{align*}
	\int_{\Omega} \|y - x\| K\biggl(\frac{\|y - x\|^2}{\varepsilon(x)}\biggr) \,dy \leq C |\varepsilon_n(x)|^{d + 2} \int_{\Rd} \|z\|^{2} K(\|z\|) \,dz \leq C |\varepsilon_n(x)|^{d + 2} \leq C |\varepsilon_n(x)|^{d + 2},
	\end{align*}
	with the final inequality following from the assumption $\int_{0}^{\infty} t^{d + 1}K(t)\,dt < \infty$. Integrating over $\Omega$ gives the upper bound
	\begin{equation*}
	\int_{\Omega} \int_{\Omega} O(\|f\|_{C^2(\Omega)} \|y - x\|^2) K\biggl(\frac{\|y - x\|}{\varepsilon(x)}\biggr)  h(x) \,dy \,dx = O(\|f\|_{C^2(\Omega)} \bar{\varepsilon}_n^{d + 2}), 
	\end{equation*}
	recalling that $h(x),g(x) \in L^{\infty}(\Omega)$. 
	
	\paragraph{Step 2: Contribution of boundary to nonlocal TV.}
	Take $r = r_n$ to be any sequence such that $r_n/\bar{\varepsilon}_n \to \infty, r_n \to 0$. Breaking up the integrals in the definition of nonlocal TV gives
	\begin{align*}
	\int_{\Omega} \int_{\Omega} |\nabla f(x)^{\top} (y - x)| K\biggl(\frac{\|y - x\|}{\varepsilon(x)}\biggr) h(x) \,dy \,dx & = \int_{\Omega_r} \int_{\Rd} |\nabla f(x)^{\top} (y - x)| K\biggl(\frac{\|y - x\|}{\varepsilon(x)}\biggr) h(x) \,dy \,dx \\
	& \quad - \int_{\Omega_r} \int_{\Rd \setminus \Omega} |\nabla f(x)^{\top} (y - x)| K\biggl(\frac{\|y - x\|}{\varepsilon(x)}\biggr) h(x) \,dy \,dx \\
	& \quad + \int_{\Omega \setminus \Omega_r} \int_{\Omega} |\nabla f(x)^{\top} (y - x)| K\biggl(\frac{\|y - x\|}{\varepsilon(x)}\biggr) h(x) \,dy \,dx \\
	& =: I_1 + I_2 + I_3.
	\end{align*}
	Now we are going to show that $I_2$ and $I_3$ are negligible. For $I_2$, noting that $r/\varepsilon(x) \to \infty$ for all $x$, we have that for any $x \in \Omega_r$,
	\begin{align*}
	\int_{\Rd \setminus \Omega} |\nabla f(x)^{\top} (y - x)| K\biggl(\frac{\|y - x\|}{\varepsilon(x)}\biggr) h(x) \,dy & \leq \|f\|_{C^1(\Omega)} \int_{\Rd \setminus \Omega} K\biggl(\frac{\|y - x\|}{\varepsilon(x)}\biggr) \|y - x\| \,dy \\
	& \leq \|f\|_{C^1(\Omega)} (\varepsilon(x))^{1} \int_{\Rd \setminus B(0,r/\varepsilon(x))}  \|z\| K(\|z\|) \,dz \\
	& \overset{(i)}{\leq} C \|f\|_{C^1(\Omega)} (\varepsilon(x))^{d + 1} \int_{r/\varepsilon(x))}^{\infty} t^{d + 1} K(t) \,dt \\
	& \overset{(ii)}{=} o(\|f\|_{C^1(\Omega)} (\varepsilon(x))^{d + 1}),
	\end{align*}
	where $(i)$ follows from converting to polar coordinates and $(ii)$ follows by the assumption $\int_{0}^{\infty} t^{d + 1} K(t) \,dt < \infty$. Integrating over $x$ yields $I_2 = o(\|f\|_{C^1(\Omega)} \wb{\varepsilon}_n^{d + 1})$, since $h,g \in L^{\infty}(\Omega)$. \\
	
	On the other hand for $I_3$, similar manipulations show that for every $x \in \Omega$,
	\begin{align*}
	\int_{\Omega} |\nabla f(x)^{\top} (y - x)| K\biggl(\frac{\|y - x\|}{\varepsilon(x)}\biggr) \,dy & \leq C \|f\|_{C^1(\Omega)} (\varepsilon(x))^{d + 1}.
	\end{align*}
	Noting that the tube $\Omega \setminus \Omega_r$ has volume at most $Cr$, we conclude that
	\begin{equation*}
	I_3 \leq C \|f\|_{C^1(\Omega)} (\varepsilon(x))^{d + 1} \Leb(\Omega\setminus\Omega_r) \leq C r \|f\|_{C^1(\Omega)} (\varepsilon(x))^{d + 1} = o(\|f\|_{C^1(\Omega)} (\varepsilon(x))^{d + 1}),
	\end{equation*}
	with the last inequality following since $r = o(1)$.
	
	\paragraph{Step 3: Double integral to single integral.}
	Now we proceed to reduce the double integral in $I_1$ to a single integral. Changing variables to $z = (y - x)/\varepsilon(x)$, converting to polar coordinates, and letting $w(x) = \nabla f(x)/\|\nabla f(x)\|$, we have that
	\begin{align*}
	\int_{\Rd} \|\nabla f(x)^{\top} (y - x)| K\biggl(\frac{\|y - x\|}{\varepsilon(x)}\biggr) \,dy & = (\varepsilon(x))^{d + 1} \int_{\Rd} |\nabla f(x)^{\top} z| K(\|z\|) \,dz \\
	& = (\varepsilon(x))^{d + 1} \biggl(\int_{\mathbb{S}^{d - 1}} |\nabla f(x)^{\top} \phi| \,d\mc{H}^{d - 1}\biggr) \biggl(\int_0^{\infty} t^d K(t) \,dt\biggr) \\
	& = (\varepsilon(x))^{d + 1} \|\nabla f(x)\| \biggl(\int_{\mathbb{S}^{d - 1}} |w(x)^{\top} \phi| \,d\mc{H}^{d - 1}\biggr) \biggl(\int_0^{\infty} t^d K(t) \,dt\biggr) \\
	& = (\varepsilon(x))^{d + 1} \|\nabla f(x)\| \biggl(\int_{\mathbb{S}^{d - 1}} |\phi_1| \,d\mc{H}^{d - 1}\biggr) \biggl(\int_0^{\infty} t^d K(t) \,dt\biggr) \\
	& = \sigma_K (\varepsilon(x))^{d + 1} \|\nabla f(x)\|,
	\end{align*}
	with the second to last equality following from the spherical symmetry of the integral, and the last equality by definition of $\sigma_K$. Integrating over $x \in \Omega_r$ gives
	\begin{align*}
	I_1 & = \sigma_K \bar{\varepsilon}_n^{d + 1} \int_{\Omega_r} \|\nabla f(x)\| h(x) (g(x))^{d + 1} \,dx \\
	& = \sigma_K \bar{\varepsilon}_n^{d + 1} \int_{\Omega} \|\nabla f(x)\| h(x) (g(x))^{d + 1} \,dx + o(\bar{\varepsilon}_n^{d + 1}\|f\|_{C^1(\Omega)}),
	\end{align*}
	with the second equality following from the same reasoning as was used in analyzing the integral $I_3$.
	
	\paragraph{Putting the pieces together.}
	We conclude that
	\begin{align*}
    (\bar{\varepsilon}_n)^{-(d + 1)} &\mathrm{TV}_{\varepsilon,K}(f;\Omega, h)
  \\
  & = (\bar{\varepsilon}_n)^{-(d + 1)} \int_{\Omega} \int_{\Omega} \Bigl(|\nabla f(x)^{\top} (y - x)|)\Bigr) K\biggl(\frac{\|y - x\|}{\varepsilon(x)}\biggr) h(x) \,dy \,dx + O(\bar{\varepsilon}_n \|f\|_{C^2(\Omega)}) \\
	& = (\bar{\varepsilon}_n)^{-(d + 1)} \int_{\Omega_r} \int_{\Rd} \Bigl(|\nabla f(x)^{\top} (y - x)|)\Bigr) K\biggl(\frac{\|y - x\|}{\varepsilon(x)}\biggr) h(x) \,dy \,dx + O(\bar{\varepsilon}_n \|f\|_{C^2(\Omega)}) + o(\|f\|_{C^1(\Omega)}) \\
	& = \sigma_K \int_{\Omega} \int_{\Omega} \|\nabla f(x)\| h(x) (g(x))^{d + 1} \,dx + O(\bar{\varepsilon}_n \|f\|_{C^2(\Omega)}) + o(\|f\|_{C^1(\Omega)}),
	\end{align*}
	completing the proof of Lemma~\ref{lem:nonlocal-tv-bias-holder}.
\end{proof}

\section{Sensitivity analysis for Section \ref{sec:experiments}}
\label{app:sensitivity_analysis}
\def\graphscale{50}

In Section~\ref{sec:experiments}, we chose the scale $k$, $\varepsilon$ in the
$k$-nearest neighbor and $\varepsilon$-neighborhood graphs to be such that their
average degree would roughly match that of the Voronoi adjacency graph, and we
remarked that mildly better results are attainable if one increases the
connectivity of the graphs.  Here, we present an analogous set of results to
those found in Section~\ref{sec:experiments}, where the average degree of the
$k$-nearest neighbor and $\varepsilon$-neighborhood graphs are roughly twice
that of the graphs in Section~\ref{sec:experiments}.  All other details of the
experimental setup remain the same.
\begin{itemize}
  \item In Figure~\ref{fig:04_TVEstimationResults_HighScale}, the estimates of
    TV by the $k$-nearest neighbor and $\varepsilon$-neighborhood graphs
    approach their density-weighted limits more quickly than in
    Section~\ref{sec:experiments}, with slightly narrower variability bands.
  \item In Figure~\ref{fig:02_FunctionEstimationResults_HighScale}, we see that 
    $\varepsilon$-neighborhood TV denoising is now competitive with $k$-nearest
    neighbor TV denoising and the unweighted Voronoigram for the ``low inside
    tube'' setting.  In the ``high inside tube'' and uniform sampling settings,
    the performance of $k$-nearest neighbor TV denoising improves slightly.
\end{itemize}
As previously remarked, the Voronoigram has no such auxiliary tuning parameter,
so the weighted and unweighted Voronoigram results here are the same as in
Section~\ref{sec:experiments}.  We also note that with greater connectivity in
the $k$-nearest neighbor and $\varepsilon$-neighborhood graphs comes greater
computational burden in storing the graphs, as well as performing calculations
with them. Therefore, it is advantageous to the practitioner to use the sparsest
graph capable of achieving favorable performance.

\begin{figure}[H]
  \centering
  \includegraphics[width=0.95\linewidth]{fig/04_TVEstimationResults_scale\graphscale}
  \caption{\it
    Results from the TV estimation experiment, with greater connectivity
    in the kNN and $\varepsilon$-neighborhood graphs.  Compare
    these results to those in Figure~\ref{fig:04_TVEstimationResults}.
  }%
  \label{fig:04_TVEstimationResults_HighScale}

\bigskip
  \includegraphics[width=0.95\linewidth]{fig/02_FunctionEstimationResults_scale\graphscale}
  \caption{\it
    Results from the function estimation experiment, with greater connectivity
    in the kNN and $\varepsilon$-neighborhood graphs.  Compare
    these results to those in Figure~\ref{fig:02_FunctionEstimationResults}.
  }%
  \label{fig:02_FunctionEstimationResults_HighScale}
\end{figure}

\begin{figure}[H]
  \centering
  \includegraphics[width=0.95\linewidth]{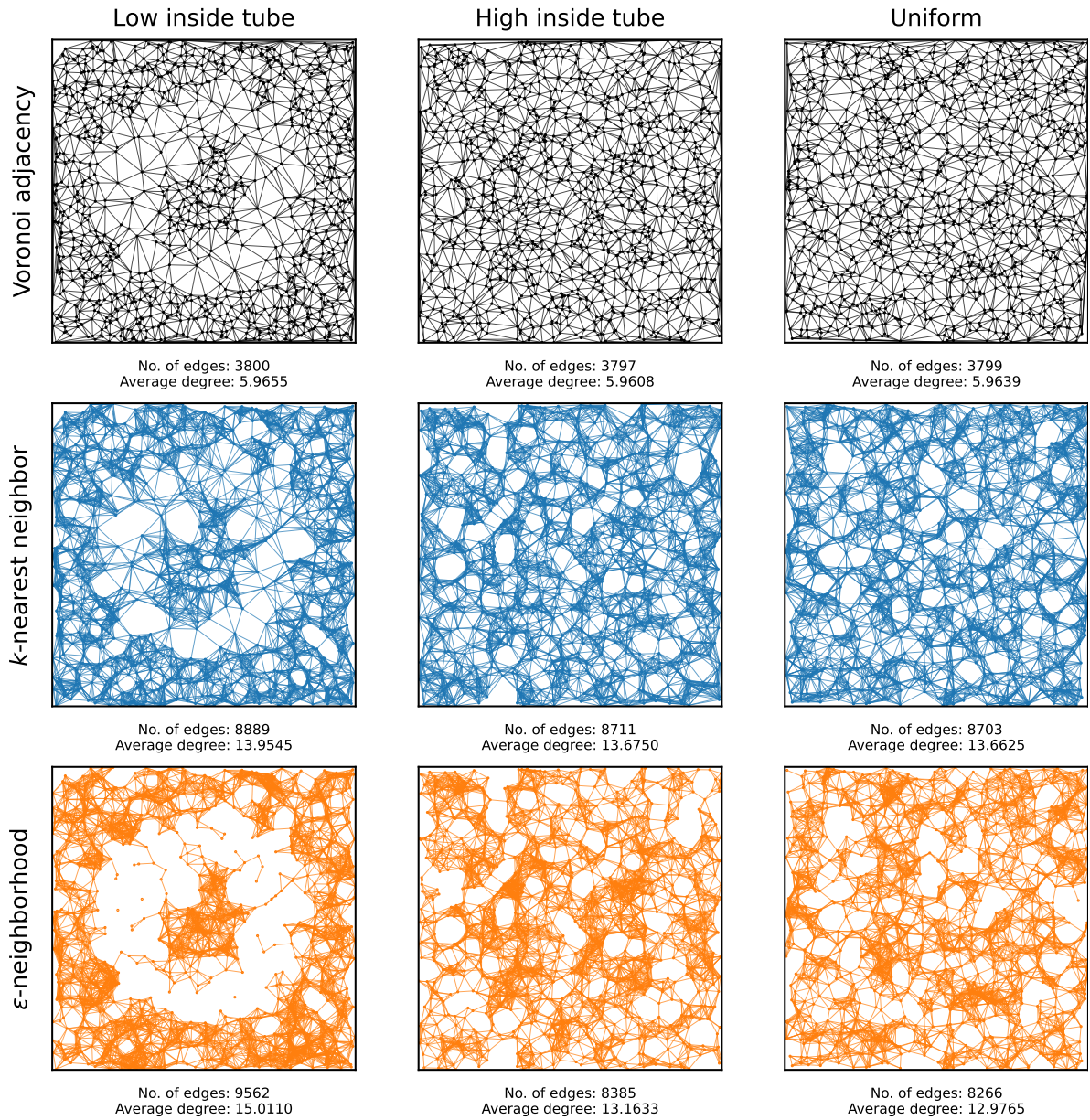}
  \caption{\it
    Visualization of the Voronoi, kNN, and $\varepsilon$-neighborhood graphs,
    with greater connectivity in the latter two graphs. (The Voronoi graph  does
    not have such an auxiliary tuning parameter.)  Compare these graphs to those
    in Figure~\ref{fig:05_TVEstimationGraphs}. 
  }%
  \label{fig:05_TVEstimationGraphs_HighScale}
\end{figure}

\begin{figure}[H]
  \centering
  \includegraphics[width=0.9\linewidth]{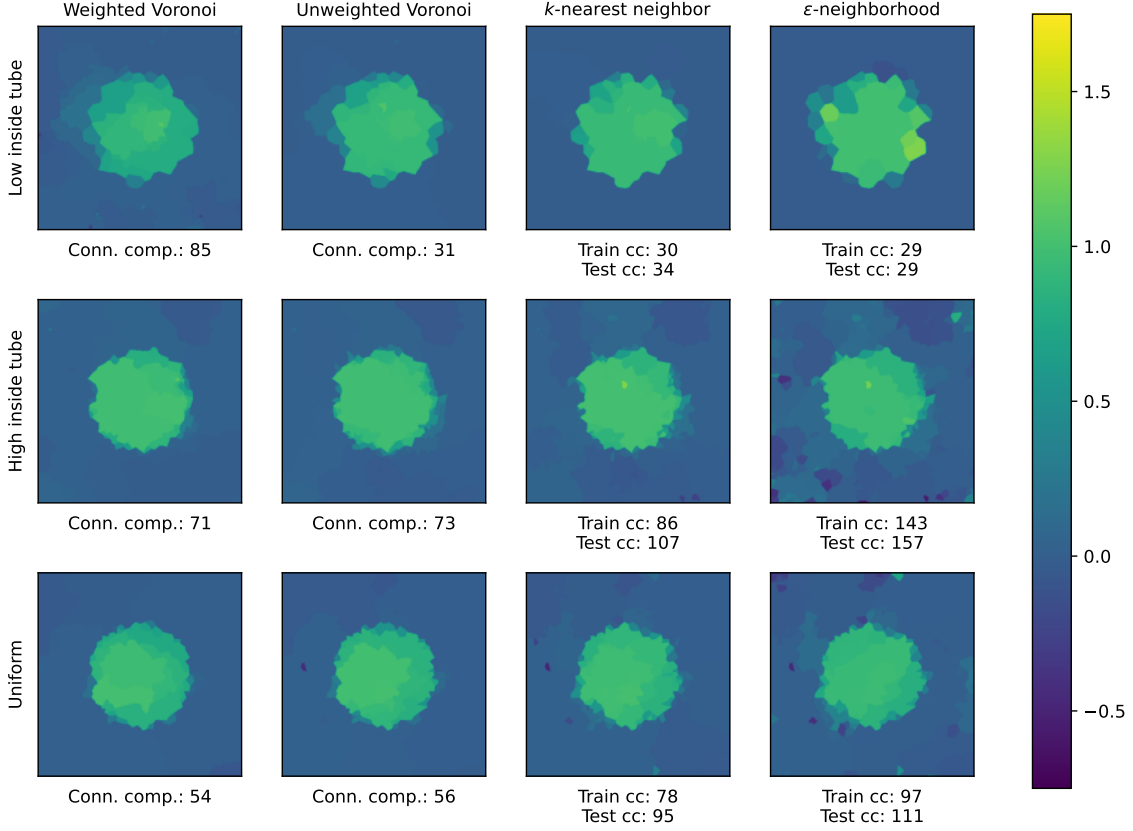}
  \caption{\it
    Extrapolants from graph TV denoising, with greater connectivity in the
    kNN and $\varepsilon$-neighborhood graphs.  Compare these
    results to those in Figure~\ref{fig:03_FunctionEstimationPredictions}.
    }%
  \label{fig:03_FunctionEstimationPredictions_HighScale}
\end{figure}

\section{Proofs for Section \ref{sec:estimation_theory}}
\label{app:estimation_theory}
\subsection{Proof of Theorem \ref{thm:voronoi_ub}}
\label{app:pf-thm-voronoi-ub}

From \eqref{eq:voronoi_triangle_ineq}, in the discussion preceding Lemma
\ref{lem:voronoi_extrapolate_ub}, we have  
\begin{equation}
  \label{pf:voronoi-triangle-ineq}
  \E \|\hf - f_0 \|_{L^2(P)}^2
  \leq \E \Big[ K_n \|\hf - f_0\|_{L^2(P_n)}^2 \Big]
  + 2 \E \|\bar{f}_0 - f_0 \|_{L^2(P)}^2, 
\end{equation}
where
\[
K_n = 2p_\mx n \cdot \bigg(\max_{i=1,\dots,n} \Leb(\Part_i) \bigg).
\]
The second term is bounded by Lemma~\ref{lem:voronoi_extrapolate_ub}.  We now
outline the analysis of the first term.  As in the $L^2(P_n)$ case
we will decompose the error into the case where the design points
are well spaced and the case where they are not.  This is formalized by
the set $\XSet = \XSet_1\cap\XSet_2$, where $\XSet_1,\XSet_2$ are defined
in Appendix~\ref{sec:graph_embeddings}.  $x_{1:n}$ falls within this set
with probability at least $1-3/n^4$, and notably on this set,
\begin{equation*}
  \max_i\Leb(\Part_i) \leq C_1\log n/n,
\end{equation*}
for some $C_1>0$, since $\XSet_2$ is the set upon which the conclusion of
Lemma~\ref{lem:voronoi-cell-upper-bounds} holds.  We proceed by conditioning,
\begin{equation}
  \label{pf:weighted-l2p-error-decompose-cases}
  \begin{aligned}
  \E\left[
    K_n \lVert \hf - f_0\rVert_{L^2(P_n)}^2
  \right]
  &= 2p_\mx \Bigg(
    \E_x\left[
      \E_{z|x}\left[
        \max_i\Leb(\Part_i) \lVert \hat\theta-\theta_0\rVert_2^2
      \right] 
      \one\{x_{1:n}\in\XSet\}
    \right]
    \\
  &\hspace{4cm}+
    \E_x\left[
      \E_{z|x}\left[
        \max_i\Leb(\Part_i) \lVert \hat\theta-\theta_0\rVert_2^2
      \right] 
      \one\{x_{1:n}\not\in\XSet\}
    \right]
  \Bigg)
  \end{aligned}
\end{equation}
Using the fact that $x_{1:n}\in\XSet$, the first term on the RHS
of~\eqref{pf:weighted-l2p-error-decompose-cases} may be bound,
\begin{align}
  \nonumber
  \E_{z|x}\left[
    \max_i\Leb(\Part_i) \lVert \hat\theta-\theta_0\rVert_2^2
  \right] 
  \one\{x_{1:n}\in\XSet\}
  &\leq C_1(\log n) \;
  \E_{z|x} \left[
    \frac{1}{n} \lVert \hat\theta-\theta_0\rVert_2^2
  \right] \cdot \one\{x_{1:n}\in\XSet\}
  \\
  \label{pf:weighted-l2p-error-good-case}
  &\leq C_2(\log n)\left(
    \frac{\lambda \lVert D\theta_0\rVert}{n} 
    + \frac{\log^\alpha n}{n} 
  \right),
\end{align}
where the latter inequality is obtained by following the analysis of
Lemma~\ref{lem:voronoi_empirical_dtv_ub}.  For the second term on the RHS
of~\eqref{pf:weighted-l2p-error-decompose-cases}, we apply the crude
upper bound that $\Leb(\Part_i)\leq\Leb(\XDom)=1$ for all $i=1,\dotsc,n$.
Then apply~\eqref{pf:ub-mse-tv-1} to obtain,
\begin{align}
  \nonumber
  \E_{z|x}\left[
    \max_i\Leb(\Part_i) \lVert \hat\theta-\theta_0\rVert_2^2
  \right] 
  \one\{x_{1:n}\not\in\XSet\}
  &\leq 
  \E_{z|x}\left[
    16 \lVert z_{1:n}\rVert_2^2 + 2\lambda \lVert D\theta_0\rVert_1
  \right] 
  \one\{x_{1:n}\not\in\XSet\}
  \\
  \nonumber
  &=
  \left(
    16n + 2\lambda \lVert D\theta_0\rVert
  \right) \one\{x_{1:n}\not\in\XSet\}.
  \\
  \nonumber
  &\leq 
  \left(
    16n + 4n^2\lambda \lVert \theta_0\rVert_\infty \lVert w\rVert_\infty
  \right) \one\{x_{1:n}\not\in\XSet\}.
  \\
  &\leq 
  \left(
    16n + 4n^2\lambda \lVert \theta_0\rVert_\infty
  \right) \one\{x_{1:n}\not\in\XSet\},
  \label{pf:weighted-l2p-error-bad-case}
\end{align}
where we also use crude upper bounds on the discrete TV.  Substitute
\eqref{pf:weighted-l2p-error-good-case}~and~\eqref{pf:weighted-l2p-error-bad-case}
into~\eqref{pf:weighted-l2p-error-decompose-cases} to obtain,
\begin{align}
  \nonumber
  \E\left[
    K_n \lVert \hf-f_0\rVert_{L^2(P_n)}^2
  \right] 
  &\leq C_3\left(
    \frac{(\log n)\lambda \E\lVert D\theta_0\rVert}{n} 
    + \frac{(\log n)^{1+\alpha}}{n} 
    + \lambda n^2\P\{x_{1:n}\not\in\XSet\}
  \right)
  \\
  \nonumber
  &\leq C_4\left(
    \frac{(\log n)\lambda \E\lVert D\theta_0\rVert}{n} 
    + \frac{(\log n)^{1+\alpha}}{n} 
    + \frac{\lambda}{n^2} 
  \right)
  \\
  \label{pf:weighted-l2p-error-bound-in-edtv}
  &\leq C_5\left(
    \frac{\sigma\tau_n(\log n)^{3/2+\alpha}\E \lVert D\theta_0\rVert}{n} 
    + \frac{(\log n)^{1+\alpha}}{n} 
  \right),
\end{align}
where in the final line we have substituted in the value
of $\lambda = c\sigma\tau_n(\log n)^{1/2+\alpha}$.  Apply
Lemma~\ref{lem:voronoi_dtv_ub} to~\eqref{pf:weighted-l2p-error-bound-in-edtv}
and substitute back into~\eqref{pf:voronoi-triangle-ineq} to obtain the
claim.
\qed

\subsection{Proof of Theorem~\ref{thm:minimax_lb}}
\label{sec:pf-minimax-lower-bound}

To establish the lower bound in~\eqref{eq:minimax_lb}, we follow a classical
approach, similar to that outlined in~\citep{delalamo2021frameconstrained}:
first we reduce the problem to estimating binary sequences, then we apply
Assouad's lemma (Lemma~\ref{lem:assouad}). This results in a constrained
maximization problem, which we analyze to establish the ultimate lower bound.  

\paragraph{Step 1: Reduction to estimating binary sequences.}
We begin by associating functions $f_{\theta}$ with vertices of the hypercube
$\Theta_S = \{0,1\}^S$, where $S \subseteq [m]^d$ for some $m \in \mathbb{N}$.
To construct these functions $f_{\theta}$, we partition $\XDom$ into cubes,
\begin{equation*}
Q_i = \frac{1}{m}(i_1 - 1,i_1) \times \cdots \times \frac{1}{m}(i_d - 1,i_d),
\quad \textrm{for $i \in [m]^d$,}
\end{equation*}
and for each $\theta \in \Theta_S$ take $f_{\theta}$ to be the piecewise
constant function
\begin{equation}
\label{eqn:hard-test-functions}
f_{\theta}(x) := a \cdot \sum_{i \in S} \theta_i \1_{Q_i}(x),
\end{equation}
where $\1_{Q_i}(x) = \1(x \in Q_i)$ is the characteristic function of $Q_i$.
Observe that for all $\theta \in \Theta_S$, letting $\epsilon := 1/m$,
\begin{equation}
\label{eqn:hard-test-functions_tv}
\mathrm{TV}(f_{\theta}) \leq 2d a |S| \epsilon^{d - 1},
\quad\textrm{and}\quad \|f_{\theta}\|_{L^\infty(\XDom)} \leq a.
\end{equation}
So long as the constraints in~\eqref{eqn:hard-test-functions_tv} are satisfied
$\{f_\theta: \theta \in \Theta_S\} \subseteq \mathrm{BV}_{\infty}(L,M)$, and
consequently
\begin{equation}
\label{pf:bounded-variation_minimax-lb_2}
\inf_{\wh{f}} \sup_{f_0 \in \mathrm{BV}_{\infty}(L,M)}
\Ebb_{f_0} \|\wh{f} - f_0\|_{L^2(\XDom)}^2
\geq \inf_{\wh{f}} \max_{\theta \in \Theta_S}
\Ebb_{\theta} \|\wh{f} - f_{\theta}\|_{L^2(\XDom)}^2
\geq \frac{a^2\epsilon^d}{4}\inf_{\wh{\theta}} \max_{\theta \in \Theta}
\Ebb_{\theta} \rho(\wh{\theta},\theta),
\end{equation}
where $\rho(\theta,\theta') = \sum_{i \in S} |\theta_i - \theta_i'|$ is the
Hamming distance between vertices $\theta,\theta' \in \Theta_S$. The second
inequality in~\eqref{pf:bounded-variation_minimax-lb_2} is verified as follows:
for a given $\wh{f}$, letting 
\begin{equation*}
\wh{\theta}_i = 
\begin{dcases*}
1, \quad\textrm{if $\oint_{Q_i} \wh{f}(x) \,dx \geq a/2$}, \\
0, \quad\textrm{otherwise,}
\end{dcases*}
\end{equation*}
it follows that 
\begin{align*}
\|\wh{f} - f_{\theta}\|_{L^2(P)}^2
& = \sum_{i \in [m]^d} \|\wh{f} - f_{\theta}\|_{L^2(Q_i)}^2 \\
& \geq \sum_{i \in S} \|\wh{f} - f_{\theta}\|_{L^2(Q_i)}^2 \\
& \geq \frac{a^2\epsilon^d}{4} \sum_{i \in S} \1\{\wh{\theta}_i \neq \theta_i\}.
\end{align*}

\paragraph{Step 2: application of Assouad's lemma.}
Given a measurable space $(\mc{Z}, \mc{A})$, and a set of probability measures
$\mc{M} = \{\mu_{\theta}: \theta \in \Theta\}$ on $(\mc{Z}, \mc{A})$, Assouad's
lemma lower bounds the minimax risk over $\Theta_S$, when loss is measured by
the Hamming distance
$\rho(\wh{\theta},\theta) := \sum_{i \in S} |\wh{\theta}_i - \theta_i|$. We use
a form of Assouad's lemma given in~\citet{tsybakov2009introduction}.
\begin{lemma}[Lemma 2.12 of~\cite{tsybakov2009introduction}]
	\label{lem:assouad}
  Suppose that for each $\theta,\theta' \in \Theta_S: \rho(\theta,\theta') =
  1$, we have that
  $\mathrm{KL}(\mu_{\theta},\mu_{\theta'}) \leq \alpha < \infty$. It follows
  that
	\begin{equation*}
	\inf_{\wh{\theta}} \sup_{\theta \in \Theta_S} \Ebb_{\theta}
  \rho(\wh{\theta},\theta)
  \geq \frac{|S|}{2}
  \max\biggl(\frac{1}{2}\exp(-\alpha),(1 - \sqrt{\alpha/2})\biggr).
	\end{equation*} 
\end{lemma}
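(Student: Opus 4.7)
The plan is to reduce the problem of estimating an element of $\Theta_S$ under Hamming loss to a collection of $|S|$ pairwise testing problems, one for each coordinate, and then apply a two-point lower bound to each in turn. Since the loss $\rho(\wh\theta,\theta) = \sum_{i\in S} |\wh\theta_i - \theta_i|$ is additive over coordinates, I would first observe that one may assume $\wh\theta \in \{0,1\}^S$ without loss of generality: replacing $\wh\theta_i$ by its nearest element in $\{0,1\}$ can only reduce $|\wh\theta_i - \theta_i|$ when $\theta_i \in \{0,1\}$. Then, replacing the supremum over $\Theta_S$ by the average, swapping sum and expectation, and grouping the $2^{|S|}$ terms into $2^{|S|-1}$ pairs $(\theta,\theta^{(i)})$ where $\theta^{(i)}$ differs from $\theta$ only in coordinate $i$, the maximum risk is bounded below by
\begin{equation*}
\inf_{\wh\theta}\sup_{\theta\in\Theta_S}\Ebb_\theta\rho(\wh\theta,\theta) \;\geq\; \sum_{i\in S}\inf_{\psi_i}\frac{1}{2}\Bigl(\Pbb_\theta(\psi_i\neq\theta_i)+\Pbb_{\theta^{(i)}}(\psi_i\neq\theta_i^{(i)})\Bigr),
\end{equation*}
where each infimum is over binary tests based on the observation.

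Next, I would apply a two-point lower bound to each summand. For each pair $(\theta,\theta^{(i)})$ with $\rho(\theta,\theta^{(i)})=1$, the hypothesis implies $\mathrm{KL}(\mu_\theta,\mu_{\theta^{(i)}})\leq \alpha$. Two classical inequalities convert this KL bound into a lower bound on the sum of type-I and type-II errors of any test. Pinsker's inequality gives $\mathrm{TV}(\mu_\theta,\mu_{\theta^{(i)}}) \leq \sqrt{\alpha/2}$, hence the sum of errors is at least $1 - \sqrt{\alpha/2}$; the Bretagnolle--Huber inequality gives the sum of errors at least $\tfrac{1}{2}\exp(-\alpha)$. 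Taking the better of the two yields a lower bound of $\max\bigl(\tfrac12 e^{-\alpha},\,1-\sqrt{\alpha/2}\bigr)$ for each of the $|S|$ pairs, and summing gives the claimed $\tfrac{|S|}{2}\max(\tfrac12 e^{-\alpha},\,1-\sqrt{\alpha/2})$.

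The main obstacle, such as it is, is the bookkeeping in the pair-averaging step: one must verify that averaging over all of $\Theta_S$ and then grouping pairs actually produces an infimum over genuine binary tests depending on a single coordinate, rather than an infimum over tests that might depend on the other $|S|-1$ coordinates of the ``nuisance'' bits. The standard resolution is to marginalize over the other coordinates first, which reduces each pair to testing between two mixture distributions whose KL divergence is still bounded by $\alpha$ by convexity of KL; alternatively one can argue directly that for each fixed $i$, the pair $(\theta,\theta^{(i)})$ with the remaining bits drawn uniformly defines a valid two-hypothesis subproblem. Both Pinsker's and the Bretagnolle--Huber inequality are classical and would be cited rather than reproved.
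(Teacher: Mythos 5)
Your proof is correct and is essentially the textbook derivation of Assouad's cube lemma. The paper does not supply its own proof of this statement---it is imported verbatim as Lemma 2.12 of Tsybakov (2009) and used as a black box---so there is no in-paper argument to compare against, but the route you take (projection to binary estimators, coordinate-wise additivity of Hamming loss, averaging over the cube and pairing adjacent vertices, then a two-point testing bound via Pinsker for $1-\sqrt{\alpha/2}$ and via the Bretagnolle--Huber / Tsybakov Lemma 2.6 inequality $\int\min(p_0,p_1)\geq\tfrac12 e^{-\mathrm{KL}}$ for $\tfrac12 e^{-\alpha}$) is exactly the argument Tsybakov gives. One small notational point: in your displayed inequality the right-hand side has $\theta$ appearing free; strictly speaking the inner quantity should be averaged over $\theta$ with $\theta_i=0$ (or, as you note afterwards, the two hypotheses should be taken as the uniform mixtures over the nuisance bits, with the KL bound preserved by joint convexity of the divergence). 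You flag and resolve this correctly in the final paragraph, so the argument stands.
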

To apply Assouad's lemma in our context, we take $\mc{Z} = (\XDom \times
\Reals)^{\otimes n}$, and associate each $\theta \in \Theta_S$ with the measure
$\mu_{\theta}^{(n)}$, the $n$-times product of measure $\mu_{\theta} =
\mathrm{Unif}(\XDom) \times N(f_{\theta}(x),1)$. We now lower bound the KL
divergence $\mathrm{KL}(\mu_{\theta}^{(n)},\mu_{\theta'}^{(n)})$ when
$\rho(\theta,\theta') = 1$; letting $i \in S$ be the single index at which
$\theta_i \neq \theta_i'$,
\begin{align*}
\mathrm{KL}(\mu_{\theta},\mu_{\theta'})
& = \int_{\XDom \times \Reals}
\log\biggl(\frac{\phi(y - f_{\theta}(x))}{\phi(y - f_{\theta'}(x))}\biggr)
\phi(y - f_{\theta}(x)) \,dy \,dx \\
& = \int_{Q_i \times \Reals}
\log\biggl(\frac{\phi(y - a\theta_i)}{\phi(y -a\theta_i')}\biggr)
\phi(y - a\theta_i) \,dy \,dx \\
& = \epsilon^d \int_{\Reals}
\log\biggl(\frac{\phi(y - a\theta_i}{\phi(y - a\theta_i')}\biggr)
\phi(y - a\theta_i) \,dy \\
& = \frac{\epsilon^d a^2}{2},
\end{align*}
and it follows that $\mathrm{KL}(\mu_{\theta}^{(n)},\mu_{\theta'}^{(n)}) \leq n
\epsilon^d a^2/2$. Consequently, so long as~\eqref{eqn:hard-test-functions_tv}
is satisfied and 
\begin{equation*}
\frac{n \epsilon^d a^2}{2} \leq 1,
\end{equation*}
we may apply Lemma~\ref{lem:assouad}, and deduce
from~\eqref{pf:bounded-variation_minimax-lb_2} that 
\begin{equation}
\label{pf:bounded-variation_minimax-lb_4}
\inf_{\wh{f}} \sup_{f_0 \in \mathrm{BV}_{\infty}(L,M)}
\Ebb_{f_0} \|\wh{f} - f_0\|_{L^2(\XDom)}^2
\geq \frac{a^2\epsilon^d}{4}\inf_{\wh{\theta}}
\max_{\theta \in \Theta} \Ebb_{\theta} \rho(\wh{\theta},\theta)
\geq \frac{a^2\epsilon^d|S|}{16 \exp(1)}.
\end{equation}

\paragraph{Step 3: Lower bound.}
The upshot of Steps 1 and 2 is that the solution to the following constrained
maximization problem yields a lower bound on the minimax risk: letting
$s = |S|$,
\begin{equation*}
\begin{aligned}
& \mathrm{maximize} \quad && \frac{a^2 \epsilon^d s}{16 \exp(1)},\\
& \mathrm{subject~to} \quad && 1 \leq s \leq \epsilon^{-d}, \\
& && a s \epsilon^{d - 1} \leq \frac{L}{2d}, \\
& && a \leq M, \\
& && \frac{n a^2 \epsilon^d}{2} \leq 1.
\end{aligned}
\end{equation*}
Setting $a = M, \epsilon = (\frac{2}{a^2n})^{1/d}$, and $s = \frac{L}{2da}
\epsilon^{-(d - 1)}$ is feasible for this problem if $
2dM (\frac{M^2n}{2})^{-\frac{(d - 1)}{d}}\leq L \leq
2dM(\frac{M^2n}{2})^{1/d}$, and implies that the optimal value is at least
$\frac{2^{1/d}}{32\exp(1)d}LM(M^2n)^{-1/d}$. This implies the
claim~\eqref{eq:minimax_lb} upon suitable choices of constants. 
\qed

\subsection{Proof of Lemma~\ref{lem:voronoi_empirical_dtv_ub}}
In this proof, write $\theta_0 := (f_0(x_1),\ldots,f_0(x_n))$ and
$\mathbb{E}_{z|x}[\cdot] = \mathbb{E}[\cdot|x_{1:n}]$. 
We will use $\PenMat$ to represent the modified edge incidence operator
with either clipped edge weights or unit weights; the following analysis,
which uses the scaling factor $\tau_n$, applies to both.  Let
\begin{equation}
\label{pf:voronoi-estimation--1}
  \XSet = \XSet_1\cap\XSet_2,
\end{equation} 
with $\XSet_1, \XSet_2$ as in Section~\ref{sec:graph_embeddings}.
By the law of iterated expectation, 
\begin{equation}
\label{pf:voronoi-estimation-0}
\mathbb{E}\biggl[\frac{1}{n} \|\wh{\theta} - \theta_0\|_2^2\biggr]
  = \mathbb{E}_x\biggl[
    \mathbb{E}_{z|x}\Bigl[\frac{1}{n} \|\wh{\theta} - \theta_0\|_2^2\Bigr]
    \cdot \1\{x_{1:n} \in \XSet\}
  \biggr] + \mathbb{E}_x\biggl[
    \mathbb{E}_{z|x}\Bigl[\frac{1}{n} \|\wh{\theta} - \theta_0\|_2^2\Bigr]
    \cdot \1\{x_{1:n} \not \in \XSet\}
  \biggr].
\end{equation}
We now upper bound each term on the right hand side separately. 

For the first term, we will proceed by comparing the penalty operator
$\PenMat$ to the averaging operator~\eqref{eq:mesh-averaging-operator}
and surrogate operator $\SurPenMat$ corresponding to the
graph~\eqref{eq:lattice-graph}.  By construction $x_{1:n} \in \XSet$ implies,
for $(\xi_k,u_k)$ the $k$th singular value/left singular vector of
$\SurPenMat$, that
\begin{align*}
  \lambda
  &\geq C_1 \sigma \tau_n
    (\log n)^{1/2+\alpha}
  \\
  &\geq \max\left\{
    8\max_\ell|\Cell_\ell|^{1/2}
    \Phi_1(\PenMat,\SurPenMat,\SqAvMat)
    \cdot
    \sigma
    \sqrt{\log 2n^4 \cdot \sum_{k = 2}^{n} \frac{\|u_k\|_{\infty}^2}{\xi_k^2}},
    \Phi_2(\PenMat,\SurPenMat,\SqAvMat)
    \cdot
    \sigma\sqrt{2\log n}
  \right\},
\end{align*}
where the latter inequality follows
from combining 
\eqref{eqn:grid-svd-scaling},~\eqref{eq:uniform-upper-bound-cell-count}
with \eqref{eq:surr-diff-av-embedding-cvor},~\eqref{eq:av-error-bound-cvor}
in the clipped weights case, or
\eqref{eq:surr-diff-av-embedding-uvor},~\eqref{eq:av-error-bound-uvor}
in the unit weights case, for an appropriately chosen $C_1$.
We may therefore apply Theorem~\ref{thm:graph-tv-denoising-surrogate-graph}
with $\PenMat$, $\SurPenMat$, and $\SqAvMat$, which gives
\begin{equation}
  \label{pf:voronoi-estimation-1}
  \mathbb{E}_{z|x}\biggl[
    \frac{1}{n}\|\wh{\theta} - \theta_0\|_2^2
  \biggr] \cdot \1\{x_{1:n} \in \XSet\}
  \leq C\biggl(
    \frac{\lambda \|D\theta_0\|_1}{n} + \frac{\log^\alpha n}{n}
  \biggr),
\end{equation}
On the other hand, to upper bound the second term
in~\eqref{pf:voronoi-estimation-0} we use~\eqref{pf:ub-mse-tv-1}, 
\begin{equation}
  \label{pf:voronoi-estimation-2}
  \begin{aligned}
    \mathbb{E}_{z|x}\Bigl[
      \frac{1}{n} \|\wh{\theta} - \theta_0\|_2^2
    \Bigr]\cdot \1\{x_{z|x} \not \in \XSet\}
    & \leq \mathbb{E}_{z|x}\Bigl[
      \frac{16 \|z_{1:n}\|_2^2}{n} + \frac{2 \lambda \|D\theta_0\|_1}{n}
    \Bigr] \1\{x_{1:n} \not \in \XSet\} \\
    & \leq \Bigl(
      16 + \frac{2 \lambda \|D\theta_0\|_1}{n}
    \Bigr)\1\{x_{1:n} \not \in \XSet\}.
  \end{aligned}
\end{equation}
Substituting~\eqref{pf:voronoi-estimation-1}
and~\eqref{pf:voronoi-estimation-2}
into~\eqref{pf:voronoi-estimation-0}, we conclude that
\begin{align}
  \nonumber
  \mathbb{E}\biggl[
    \frac{1}{n} \|\wh{\theta} - \theta_0\|_2^2
  \biggr]
  & \leq C\Bigl(
    \frac{\lambda \mathbb{E}\|D\theta_0\|_1}{n}
    + \frac{\log^\alpha n}{n}
    + \mathbb{P}(x_{1:n} \not\in \XSet)\Bigr) \\
  \nonumber
  & \leq C\Bigl(
    \frac{\lambda \mathbb{E}\|D\theta_0\|_1}{n}
    + \frac{\log^\alpha n}{n}
  \Bigr)
  \\
  \label{eq:gtv-ub-voronoigram-Dtheta}
  &= C\Bigl(
    \frac{
      \sigma\tau_n(\log n)^{1/2+\alpha}
      \E \lVert \PenMat\theta_0\rVert_1
    }{n} + \frac{
    \log^\alpha n}{n} 
  \Bigr),
\end{align}
with the second inequality following from
Lemma~\ref{lem:random-graph-embedding}, and the equality from the choice of
$\lambda=C_1\sigma\tau_n(\log n)^{1/2+\alpha}$.
\qed

\subsection{Proof of Lemma~\ref{lem:voronoi_dtv_ub}}
We prove the claim~\eqref{eq:voronoi_dtv_ub} separately for the unit weights
and clipped weights case (recall that they differ by a scaling factor
$\bar\tau_n$.
We will subsequently abbreviate $f:=f_0$ and use the notation
$\DTV(\;\cdot\;; w^{\Eps\leftarrow r})$ to denote the
$\varepsilon$-neighborhood graph TV, having set $\varepsilon=r$.

\subsubsection{Unit weights}
Our goal is to upper bound
\begin{equation*}
  \E\Big[ \DTV \Big( f(x_{1:n}); \, \check{w}^\Vor\Big)\Big] 
  = n(n - 1) \mathbb{E}\Bigl[
    |f(x_1) - f(x_2)| \1\{\mc{H}^{d - 1}(\bar \Part_1 \cap \bar \Part_2) > 0\}
  \Bigr].
\end{equation*}
By conditioning, we can rewrite the expectation above as
\begin{equation}
  \label{pf:dtv-ub-1}
  p_\mx^2\int_{\XDom}\int_{\XDom}
    |f(y) - f(x)| \mathbb{P}_{x_{3:n}}\{
      \mc{H}^{d - 1}(\bar \Part_{x} \cap \bar \Part_{y}) > 0
    \}
    \,dy\,dx,
\end{equation}
where $V_{x} = \{z:  \|z - x\|_2 < \|z - x_i\|~\forall i = 2,3,\ldots,n\}$,
and likewise for $V_{y}$. Note that $V_x$ and $V_y$ are random subsets of
$\Reals^d$.

We now give an upper bound on the probability that the random cells $\bar \Part_x$
and $\bar \Part_y$ intersect on a set of positive Hausdorff measure, by relating
the problem
to uniform concentration of the empirical mass of balls in $\Rd$. The upper
bound will be crude, in that it may depend on suboptimal multiplicative
constants, but sufficient for our purposes. Define $r(V_x) := \sup\{\|z - x\|:
z \in \Part_x\}$. Observe that if $\|y - x\| > r(V_x) + r(V_y)$, then
$\bar \Part_x \cap \bar \Part_y
= \emptyset$, since for any $z \in \Part_{x}$, by the triangle inequality
\begin{equation*}
  \{
    \|z - y\| \geq \|y - x\| - \|z - x\| > r(V_y)
  \}
  \Longrightarrow
  \{
    z \not\in \Part_y
  \};
\end{equation*}
therefore
\begin{equation*}
  \{
    \cH^{d - 1}(\bar \Part_{x} \cap \bar \Part_{y}) > 0
  \}
  \Longrightarrow
  \{
    \|y - x\| \leq r(V_x) + r(V_y)
  \}.
\end{equation*}
Now, choose $z \in \Part_x$ for which $\|z - x\| = r(V_x)$. Observe that the ball
$B(z,r(V_x)/2)$ must have empirical mass $0$, i.e., $B(z,r(V_x)/2) \cap
\{x_3,\ldots,x_n\} = \emptyset$ (indeed, this same fact must hold for any
$r < r(V_x)$). Therefore,
\begin{equation*}
  \mathbb{P}_{x_{3:n}}\{r(V_x) \geq t\}
  \leq \mathbb{P}_{x_{3:n}}\Bigl\{
    \exists z:B(z,t/2) \cap \{x_3,\ldots,x_n\} = \emptyset
  \Bigr\}.
\end{equation*}
It follows from Lemma~\ref{lem:vc-balls} that if
$t_{n,\delta} = c\bigl(\frac{1}{n}(d \log n + \log(1/\delta)\bigl)^{1/d} <
  t_0$,
where $t_0$ is a constant not depending on $n,\delta$, then
\begin{equation*}
  \mathbb{P}_{x_{3:n}}\{
    \exists z:B(z,t_{n,\delta}/2) \cap \{x_3,\ldots,x_n\} = \emptyset
  \} \leq \delta.
\end{equation*}
Summarizing this reasoning, we have
\begin{align*}
  \mathbb{P}_{x_{3:n}}\{\mc{H}^{d - 1}(\bar \Part_{x} \cap \bar \Part_{y}) > 0\}
  & \leq \mathbb{P}_{x_{3:n}}\Bigl\{\|y - x\| \leq {r}(V_x) + {r}(V_y)\Bigr\} \\
  & \leq \mathbb{P}_{x_{3:n}}\Bigl\{
    \|y - x\| \leq 2{r}(V_x)
  \Bigr\} + \mathbb{P}_{x_{3:n}}\Bigl\{
    \|y - x\| \leq 2 {r}(V_y)
  \Bigr\} \\
  & \leq \mathbb{P}_{x_{3:n}}\Bigl\{
    \exists z:|B(z,\|x - y\|/4) \cap \{x_3,\ldots,x_n\}| = \emptyset
  \Bigr\} \\
  & \qquad\quad+ \mathbb{P}_{x_{3:n}}\Bigl\{
    \exists z:|B(z,\|x - y\|/4) \cap \{x_3,\ldots,x_n\}| = \emptyset
  \Bigr\} \\
  & \leq \begin{dcases}
    2, & \quad \textrm{if $\|x - y\|_2 \leq 2t_{n,\delta}$}, \\
    2\delta, & \quad\textrm{otherwise.}
  \end{dcases}
\end{align*}
Setting $\delta_n = n^{-(d + 1)/d}$ and plugging this back
into~\eqref{pf:dtv-ub-1}, we conclude that if $t_{n,\delta_n} < t_0$, then
\begin{align}
  \nonumber
  \E\Big[ \DTV \Big( f(x_{1:n}); \, \check{w}^\Vor\Big)\Big] 
  & \leq 2n(n - 1) \int_{\XDom} \int_{\XDom}
      |f(y) - f(x)|\Bigl(\1\{\|x - y\| \leq 2t_{n,\delta_n}\}
    + 2\delta_n\Bigr) \,dy \,dx \\
  \label{pf:dtv-ub-2}
  & \leq 2\mathbb{E}[\mathrm{DTV}(f;w^{\Eps\leftarrow t_{n,\delta}})]
    + 2n^{1 - 1/d} \int_{\XDom} \int_{\XDom} |f(y) - f(x)| \,dy \,dx.
\end{align}
Note that since $\lim_{n \to \infty} t_{n,\delta_n} = 0$, the condition
$t_{n,\delta_n} < t_0$ will automatically be satisfied for all $n$ sufficiently
large.

We now conclude the proof by upper bounding each term in~\eqref{pf:dtv-ub-2}.
The first term refers to the expected $\varepsilon$-neighborhood graph total
variation of $f$ when $\varepsilon = t_{n,\delta_n}$, and
by~\eqref{eqn:dtv-ub-neighborhood} satisfies
\begin{equation*}
  \mathbb{E}[\mathrm{DTV}_{n,t_{n,\delta}}(f)]
  \leq C n^2 (t_{n,\delta_n})^{d + 1} \mathrm{TV}(f;\XDom)
  \leq C n^{1 - 1/d}(\log n)^{(d + 1)/d}\mathrm{TV}(f;\XDom).
\end{equation*}
The second term above can be upper bounded using a Poincar\'{e} inequality for
$\mathrm{BV}(\XDom)$ functions, i.e.,
\begin{equation*}
  \int_{\XDom} \int_{\XDom} |f(y) - f(x)| \,dy \,dx
  \leq 2 \int_{\XDom} |f(x) - \wb{f}(x)| \,dx \leq C \mathrm{TV}(f;\XDom).
\end{equation*}
Plugging these upper bounds back into~\eqref{pf:dtv-ub-2} yields the claimed
result~\eqref{eq:voronoi_dtv_ub} in the unit weights case.
\qed

\subsubsection{Clipped weights}%
We now show~\eqref{eq:voronoi_dtv_ub} using clipped weights.  Our goal is to
upper bound
\begin{equation*}
  \E\Big[ \DTV \Big( f(x_{1:n}); \, \tilde{w}^\Vor\Big)\Big] 
  = n(n - 1) \mathbb{E}\Bigl[
    |f(x_1) - f(x_2)|
    \max\{
      c_0n^{-(d-1)/d}\one\{\cH^{d-1}(\bar \Part_1\cap\bar \Part_2) > 0\},
      \cH^{d-1}(\bar \Part_1\cap\bar \Part_2)
    \}
  \Bigr].
\end{equation*}
By conditioning, we may rewrite the expectation above as
\begin{equation}
  \label{pf:dtv-clipped-ub-1}
  p_\mx^2\int_{\XDom}\int_{\XDom}
    |f(y) - f(x)|
    \E_{x_{3:n}}\left[\max\{
      c_0n^{-(d-1)/d}\one\{\cH^{d-1}(\bar \Part_x\cap\bar \Part_y) > 0\},
      \cH^{d-1}(\bar \Part_x\cap\bar \Part_y)
    \}\right] 
    \,dy\,dx,
\end{equation}
where $V_{x} = \{z:  \|z - x\|_2 < \|z - x_i\|~\forall i = 2,3,\ldots,n\}$,
and likewise for $V_{y}$. Note that $V_x$ and $V_y$ are random subsets of
$\R^d$.  We now focus on controlling the inner expectation
of~\eqref{pf:dtv-clipped-ub-1}.  Upper bound the maximum of two positive
functions with their sum to obtain,
\begin{equation}
  \label{pf:dtv-clipped-ub-decompose-max}
  \begin{aligned}
    \E_{3:n}\Big[\max\{
      c_0n^{-(d-1)/d}
      &\one\{\cH^{d-1}(\bar \Part_x\cap\bar \Part_y) > 0\},
      \cH^{d-1}(\bar \Part_x\cap\bar \Part_y)
    \}\Big] 
    \\
    &\leq c_0n^{-(d-1)/d}\P\{\cH^{d-1}(\bar \Part_x\cap\bar \Part_y) > 0\}
    + \E\left[
      \cH^{d-1}(\bar \Part_x\cap\bar \Part_y)
    \right].
  \end{aligned}
\end{equation}
We recognize the first term on the RHS
of~\eqref{pf:dtv-clipped-ub-decompose-max} as having already been analyzed in
the unit weights case; we now focus on the second term.  The latter
``Voronoi kernel'' term may be rewritten,
\begin{equation*}
  \E_{x_{3:n}}\left[
    \cH^{d-1}(\bar \Part_x\cap\bar \Part_y)
  \right] 
  = \int_{L\cap\Omega} (1-p_x(z))^{n-2} dz,
\end{equation*}
where $L = \{z: \lVert x-z\rVert = \lVert y-z\rVert\}$ and
$p_x(z) = P(B(z, \lVert x-z\rVert))$.  Observe by
Assumption~\ref{assump:density_bounded} that
$p_x(z) \geq p_\mn\Leb_d \lVert x-z\rVert^d$, and therefore
\begin{equation*}
  \int_{L\cap\Omega} (1-p_x(z))^{n-2}
  \leq \exp(-cn \lVert x-z\rVert^d),
\end{equation*}
for some $c>0$.  Apply Lemma~\ref{eqn:surface-tension} with $a=2$ to therefore
bound,
\begin{equation}
  \label{pf:dtv-clipped-ub-voronoi-kernel}
  \E_{x_{3:n}}\left[
    \cH^{d-1}(\bar \Part_x\cap\bar \Part_y)
  \right] 
  \leq C_1\left(
    \frac{
      \one\{\lVert x-y\rVert \leq C_2(\log n/n)^{1/d}\}
    }{
      n^{(d-1)/d}
    } + \frac{1}{n^2} 
  \right),
\end{equation}
for constants $C_1, C_2 > 0$.  
Substitute~\eqref{pf:dtv-clipped-ub-voronoi-kernel}
into~\eqref{pf:dtv-clipped-ub-decompose-max}
and~\eqref{pf:dtv-clipped-ub-1} to obtain,
\begin{align*}
  \E\Big[ &\DTV \left( f(x_{1:n}); \, \tilde{w}\right)\Big] 
  \\
  &\leq 
  p_\mx^2 n^2 \int_\XDom\int_\XDom
  |f(y) - f(x)|\Bigg(
    c_0n^{-(d-1)/d}\P_{3:n}\{\cH(\bar \Part_x\cap\bar \Part_y)>0\}
    \\
    &\hspace{8cm}
    + C_1 \frac{
      \one\{\lVert x-y\rVert \leq C_2(\log n/n)^{1/d}\}
    }{
      n^{(d-1)/d}
    }
    + \frac{C_1}{n^2} 
  \Bigg) \;dy\; dx
  \\
  &\leq
  p_\mx^2c_0n^{-(d-1)/d} \E\left[
    \DTV(f(x_{1:n}); \check{w}^\Vor)
  \right]
  + p_\mx^2C_1 n^{-(d-1)/d} \E\left[
    \DTV(f(x_{1:n}); w^{\Eps\leftarrow C_2(\log n/n)^{1/d}})
  \right]
  \\
  &\hspace{9cm} + p_\mx^2C_1 \int_\XDom\int_\XDom |f(y)-f(x)|\;dy\;dx
  \\
  &= T_1 + T_2 + T_3.
\end{align*}
We bound each of the terms above in turn.  The first term appeals
to~\eqref{eq:voronoi_dtv_ub} in the unit weights case, which we have already
proved.
\begin{align*}
  T_1
  &=
  p_\mx^2c_0n^{-(d-1)/d} \E\left[
    \DTV(f(x_{1:n}); \check{w}^\Vor)
  \right]
  \\
  &\leq C_3 n^{-(d-1)/d} n^{(d-1)/d}(\log n)^{1+1/d} \TV(f)
  \\
  &= C_3(\log n)^{1+1/d}\TV(f).
\end{align*}
The second term refers to the expected $\varepsilon$-neighborhood
graph total variation of $f$ when $\varepsilon=C_2(\log n/n)^{1/d}$,
which by~\eqref{eqn:dtv-ub-neighborhood} satisfies,
\begin{align*}
  T_2
  &= p_\mx^2C_1 n^{-(d-1)/d} \E\left[
    \DTV\Big(f(x_{1:n});w^{\Eps\leftarrow C_2(\log n/n)^{1/d}}\Big)
  \right]
  \\
  &\leq C_4 n^{-(d-1)/d} n^2 (\log n/n)^{(d+1)/d} \TV(f)
  \\
  &\leq C_4(\log n)^{1+1/d}\TV(f).
\end{align*}
The third term can be controlled via the Poincar\'{e} inequality,
\begin{align*}
  T_3
  &= p_\mx^2 C_1 \int_\XDom\int_\XDom |f(y) - \bar f + \bar f -f(x)|\;dy\;dx
  \\
  &\leq C_5 \int_\XDom | f(x) - \bar f|\;dx
  \\
  &\leq C_5\TV(f),
\end{align*}
where $\bar f := \AvInt_\XDom f$.
\qed

\subsubsection{\texorpdfstring{$\varepsilon$}{eps}-neighborhood and kNN expected discrete TV}%
\begin{lemma}
  \label{lem:dtv-ub-eps-k}
  Under Assumption~\ref{assump:density_bounded}, there exist constants
  $c,C_1,C_2>0$ such that for all sufficiently large $n$ and
  $f_0\in\BV(\XDom)$,
	\begin{itemize}
    \item The $\varepsilon$-neighborhood graph total variation, for
      any $\varepsilon > 0$, satisfies
      \begin{equation}
        \label{eqn:dtv-ub-neighborhood}
        \E\Big[
          \DTV \Big(
            f_0(x_{1:n}; \; w^\Eps)
          \Big)
        \Big] 
        \leq C_1 n^2 \varepsilon^{d + 1} \TV(f_0).
      \end{equation}
		\item The $k$-nearest neighbors graph total variation,
      for any $k \in \mathbb{N}$, satisfies
      \begin{equation}
        \label{eqn:dtv-ub-knn}
        \E\Big[
          \DTV \Big(
            f_0(x_{1:n}; \; w^\kNN)
          \Big)
        \Big] 
        \leq C_2\left(
          n^{1-1/d} k^{(d+1)/d}
          + n^2\exp(-ck)
        \right) \TV(f_0).
      \end{equation}
	\end{itemize}
\end{lemma}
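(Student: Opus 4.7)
The plan is to prove the two bounds separately, with the $\varepsilon$-neighborhood bound serving as a building block for the kNN bound.

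For the bound \eqref{eqn:dtv-ub-neighborhood} on $\varepsilon$-neighborhood graph TV, I would start by using linearity of expectation together with the i.i.d.\ structure of $x_{1:n}$ to rewrite
\[
\E\bigl[\DTV(f_0(x_{1:n}); w^\Eps)\bigr] = \binom{n}{2} \int_\XDom \int_\XDom |f_0(y)-f_0(x)|\,\1\{\|y-x\|\leq\varepsilon\}\,p(x)p(y)\,dy\,dx.
\]
Using Assumption \ref{assump:density_bounded} to bound $p(x)p(y) \leq p_\mx^2$, the problem reduces to controlling the nonlocal functional against Lebesgue measure, for which I would invoke the classical BV shift inequality $\int_{\XDom\cap(\XDom-h)} |f(x+h)-f(x)|\,dx \leq \|h\|\,\TV(f)$ together with Fubini's theorem, after changing variables to $h=y-x$ and $x$. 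This gives a bound of $\TV(f_0) \int_{B(0,\varepsilon)}\|h\|\,dh = c_d\,\varepsilon^{d+1}\TV(f_0)$, yielding \eqref{eqn:dtv-ub-neighborhood}.

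For the kNN bound \eqref{eqn:dtv-ub-knn}, I would decompose over a high-probability ``good event.'' Let $R_i = \|x_i - x_{(k)}(x_i)\|$ and set $\bar\varepsilon_n = C_0(k/n)^{1/d}$ for a sufficiently large constant $C_0$. Define $\mathcal{E} = \{R_i \leq \bar\varepsilon_n$ for all $i=1,\ldots,n\}$. Since the density is bounded below, for each $i$ the random variable $|\{j\neq i: x_j \in B(x_i, \bar\varepsilon_n)\}|$ is stochastically larger than a Binomial with mean $\gtrsim C_0^d\,k$; a standard Chernoff bound followed by a union bound then gives $\P(\mathcal{E}^c) \leq n\exp(-ck)$ for a constant $c>0$ (the factor $C_0$ is chosen so that this concentration holds). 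On $\mathcal{E}$, the defining condition $\|x_i-x_j\| \leq \max(R_i, R_j)$ for a kNN edge forces $\|x_i-x_j\|\leq \bar\varepsilon_n$, so $w^\kNN_{ij} \leq w^{\Eps\leftarrow \bar\varepsilon_n}_{ij}$ edgewise, and applying \eqref{eqn:dtv-ub-neighborhood} with $\varepsilon = \bar\varepsilon_n$ produces the first term $n^{1-1/d}k^{(d+1)/d}\TV(f_0)$.

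The main obstacle is handling the complementary event $\mathcal{E}^c$, because $f_0 \in \BV(\XDom)$ is not assumed to be $L^\infty$-bounded, so a crude pointwise bound is unavailable. My plan here is to first bound $\DTV(f_0;w^\kNN) \leq \sum_{i,j} |f_0(x_i)-f_0(x_j)|$ (using $w^\kNN_{ij}\in\{0,1\}$), and then to apply H\"older's inequality together with the BV-Sobolev embedding $\BV(\XDom) \hookrightarrow L^{d/(d-1)}(\XDom)$ and a Poincar\'e inequality to obtain $\bigl(\E|f_0(x_1)-f_0(x_2)|^{d/(d-1)}\bigr)^{(d-1)/d} \leq C\,\TV(f_0)$. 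Splitting via H\"older with conjugate exponents $d/(d-1)$ and $d$ yields
\[
\E\bigl[|f_0(x_1)-f_0(x_2)|\,\1_{\mathcal{E}^c}\bigr] \leq C\,\TV(f_0)\,\P(\mathcal{E}^c)^{1/d} \leq C\,\TV(f_0)\,n^{1/d}\exp(-ck/d),
\]
and summing over $O(n^2)$ pairs produces a term of the form $C\,n^{2+1/d}\exp(-ck/d)\TV(f_0)$. Absorbing the polynomial factor $n^{1/d}$ into the exponent by slightly shrinking the rate constant (which is harmless since the bound is only informative when $k \gtrsim \log n$) gives the second term in \eqref{eqn:dtv-ub-knn}. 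The proof of this part is the most delicate; an alternative, if one is willing to strengthen the hypothesis to $f_0 \in \BV_\infty(L,M)$, is to replace the H\"older step by the cruder $L^\infty$ bound $|f_0(x_1)-f_0(x_2)|\leq 2M$, which cleanly yields the stated form $n^2\exp(-ck)\TV(f_0)$ without the extra polynomial factor.
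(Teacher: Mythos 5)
Your argument for the $\varepsilon$-neighborhood bound \eqref{eqn:dtv-ub-neighborhood} is a valid alternative to the paper's: rather than invoking the BV translation estimate $\int_{\XDom\cap(\XDom - h)}|f(x+h)-f(x)|\,dx \leq \|h\|\,\TV(f)$ as a black box, the paper re-derives it from scratch by extending $f$ to a slightly larger domain, mollifying, applying the fundamental theorem of calculus plus Cauchy--Schwarz, and passing to the limit with Fatou. Both routes produce the same $C n^2 \varepsilon^{d+1}\TV(f_0)$ bound; yours is shorter at the price of citing a nontrivial shift inequality whose proof on a bounded domain is essentially what the paper writes out.

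The kNN bound \eqref{eqn:dtv-ub-knn}, however, has a genuine gap. You define a \emph{uniform} good event $\mathcal{E} = \{\max_i R_i \leq \bar\varepsilon_n\}$ and control $\P(\mathcal{E}^c)$ by a union bound, which costs an extra factor of $n$: $\P(\mathcal{E}^c) \leq n\exp(-ck)$. Coupled with the H\"older/Sobolev step, the bad-event contribution becomes $C\,n^{2}\cdot n^{1/d}\exp(-ck/d)\,\TV(f_0)$. The extra $n^{1/d}$ cannot be ``absorbed into the exponent'' uniformly: the claim $n^{1/d}\exp(-ck/d) \leq C\exp(-c'k)$ fails for any fixed $k$ as $n\to\infty$, and the lemma is stated for \emph{all} $k \in \mathbb{N}$. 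So your argument proves a strictly weaker statement (correct only for $k \gtrsim \log n$). The paper avoids the union bound entirely: it never works with a global event over all $n$ points. Instead, for a \emph{single} pair, it conditions on $(x_1,x_2)=(x,y)$, observes that $\{\|x-y\|\leq\varepsilon_k(x)\}$ is the event that $B(x,\|y-x\|)$ contains fewer than $k$ sample points, and bounds this conditional probability pointwise: it is at most $\exp(-ck)$ when $\|x-y\|\geq C(k/n)^{1/d}$, and trivially at most $1$ otherwise. Plugged into the $U$-statistic representation, this produces $n^2\exp(-ck)\int\!\!\int|f(y)-f(x)|\,dy\,dx$, which the $L^1$ Poincar\'e inequality converts directly into $C n^2\exp(-ck)\TV(f_0)$ --- no Sobolev embedding, no H\"older, and no loss of a polynomial factor. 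Your final remark about replacing the H\"older step by an $L^\infty$ bound is also not available here: the lemma assumes only $f_0 \in \BV(\XDom)$, not $\BV_\infty$.
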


\begin{proof}~\vspace{-0.3cm}
  \paragraph{$\varepsilon$-neighborhood expected discrete TV.}
	This follows the proof of Lemma~1 in \cite{green2021minimax1}, with two
	adaptations to move from Sobolev $H^2(\XDom)$ to the $\BV(\XDom)$: we deal in
	absolute differences rather than squared differences, and an approximation
	argument is invoked at the end to account for the existence of non-weakly
	differentiable functions in $\BV(\XDom)$.

	Begin by rewriting,
	\begin{equation}
		\E\left[
		\sum_{i,j=1}^n |f(x_i) - f(x_j)|
		\cdot \one\{\lVert x_i-x_j\rVert\leq \varepsilon\}
		\right]
		= \frac{n(n-1)}{2}\E\left[
				|f(X') - f(X)| K\left(\frac{\lVert X'-X\rVert}{\varepsilon}\right)
			\right],
	\end{equation}
	where $X$ and $X'$ are random variables independently drawn from $P$
	following Assumption~\ref{assump:density_bounded} and
	$K(t) = \one\{t\leq 1\}$.  Now,
	take $\XDom'$ to be an arbitrary bounded open set such that
	$B(x, c_0) \subseteq \XDom'$ for all $x \in \XDom$.

	For the remainder of this proof, we assume that (i) $f \in BV(\XDom')$ and
	(ii) $\lVert f\rVert_{BV(\XDom')} \leq C'\lVert f\rVert_{BV(\XDom)}$ for some
	constant $C'$ independent of $f$.  These conditions are guaranteed by the
	Extension Theorem
	(\citealp{evans2010partial}; Section 5.4 Theorem 1), which promises an
	extension operator $E:W^{1, p}(\XDom) \rightarrow W^{1, p}(\XDom)$ (take
	$p=1$ and the BV case is established through an approximation argument).  We
	also assume that $f\in C^\infty(\XDom)$, which is addressed through via an
	approximation argument at the end.  Since $f\in C^\infty(\XDom)$, we may
	rewrite a difference in terms of an integrated derivative:
	\begin{equation}
		\label{eq:first-order-taylor-appx}
		f(x') - f(x) = \int_0^1 \nabla f(x + t(x'-x))^\top(x'-x)dx.
	\end{equation}
	It follows that
	\begin{equation}
		\E\left[
			|f(X') - f(X)| K\left(
				\frac{\lVert X'-X\rVert}{\varepsilon}
			\right) \leq
			p_\text{max}^2\int_\XDom\int_\XDom
			|f(x') - f(x)| K\left(
				\frac{\lVert x'-x\rVert}{\varepsilon}
			\right)
			dx' dx
		\right] ,
	\end{equation}
	and the final step is to bound the double integral.  We have
	\begin{align*}
		\int_\XDom\int_\XDom
			&|f(x') - f(x)| K\left(
				\frac{\lVert x'-x\rVert}{\varepsilon}
			\right)
			dx' dx
		\\
		&=  \int_\XDom\int_\XDom
		\Big|\int_0^1 \nabla f(x+t(x'-x))^\top(x'-x)dt\Big|K\left(
			\frac{\lVert x'-x\rVert}{\varepsilon} 
		\right) dx' dx
		&& \text{(FTC)}
		\\
		&\leq \int_\XDom\int_\XDom
			\int_0^1 |\nabla f(x+t(x'-x))^\top(x'-x)|K\left(
			\frac{\lVert x'-x\rVert}{\varepsilon} 
		\right)dt dx' dx
		&& \text{(Jensen)}
		\\
		&=  \int_\XDom\int_{B(0, 1)}\int_0^1
			|\nabla f(x+t\varepsilon z)^\top (\varepsilon z)|
			K(\lVert z\rVert) \varepsilon^ddtdzdx
		&& \text{($z = (x'-x)/\varepsilon$)}
		\\
		&=  \varepsilon^{d+1}\int_\XDom\int_{B(0, 1)}\int_0^1
			|\nabla f(x+t\varepsilon z)^\top z| K(\lVert z\rVert) dtdzdx
		\\
		&\leq \varepsilon^{d+1}\int_{\XDom'}\int_{B(0, 1)}\int_0^1
			|\nabla f(\tilde x)^\top z|K(\lVert z\rVert) dtdzd\tilde x
		&&\text{($\tilde x=x+t\varepsilon z$)}.
	\end{align*}
	Next, we apply the Cauchy-Schwarz to $|\nabla f(\tilde x)^\top z|$ to obtain,
	\begin{align*}
		\int_{B(0, 1)}|\nabla f(\tilde x)^\top z|K(\lVert z\rVert)dz
		&\leq \int_{B(0, 1)}
			\lVert \nabla f(\tilde x)\rVert
			\lVert z\rVert K(\lVert z\rVert)dz
		\\
		&= \lVert \nabla f(\tilde x)\rVert
			\int_{(B(0,1)} \lVert z\rVert K(\lVert z\rVert) dz
		\\
		&= C_d \lVert \nabla f(\tilde x)\rVert
	\end{align*}
	Substituting back in to the previous derivation, we obtain
	\begin{align*}
		\int_\XDom\int_\XDom |f(x')-f(x)|K \left(
			\frac{\lVert x'-x\rVert}{\varepsilon}
			\right)
			dx'dx
		&\leq C_d\varepsilon^{d+1}\int_{\XDom'}
			\int_0^1 \lVert \nabla f(\tilde x)\rVert_1 dtd\tilde x
		\\
		&= C_d \varepsilon^{d+1}\lVert Df\rVert(\XDom')
		\\
		&\leq C_dC' \varepsilon^{d+1} \lVert Df\rVert(\XDom)
	\end{align*}
	Hence,
	\begin{align*}
		\E\left[
			\frac{1}{2}\sum_{i,j=1}^n
			|f(x_i) - f(x_j)|
			\cdot \one\{\lVert x_i-x_j\rVert\leq \varepsilon\}
		\right]
		&\leq
		\frac{n(n-1)}{2} p_\text{max}^2
		C_dC' \varepsilon^{d+1} \lVert Df\rVert(\XDom)
		\\
		&\leq C_2n^2 \varepsilon^{d+1} \lVert Df\rVert(\XDom)
	\end{align*}
	Finally, we provide an approximation argument to justify the assumption that
	$f\in C^1(\XDom')$.  For a function $f\in \BV(\XDom')$,
	we may construct a sequence of functions $f_k \in C^\infty(\XDom')$ via
	mollification such that $f_k\rightarrow f$ $\Leb$-a.e. (specifically, at all
	Lebesgue points) and $\lVert Df_k\rVert(\XDom')\rightarrow \lVert
	Df\rVert(\XDom')$ as $k\rightarrow\infty$
	(\citealp{evans2015measure}; Theorems 4.1 \& 5.3).  Via an application of
	Fatou's lemma, we find that
	\begin{align*}
		\E\Bigg[
			\frac{1}{2} \sum_{i,j=1}^n |f(x_i)-f(x_j)|
			&
			\cdot \one\{\lVert x_i-x_j\rVert\leq \varepsilon\}
		\Bigg] 
		\\
		&=  
		\E\left[
			\frac{1}{2} \sum_{i,j=1}^n
			|\lim_{k\rightarrow\infty}f_k(x_i)-f_k(x_j)|
			\cdot \one\{\lVert x_i-x_j\rVert\leq \varepsilon\}
		\right] 
		\\&=  
		\E\left[
			\liminf_{k\rightarrow\infty}
			\frac{1}{2} \sum_{i,j=1}^n
			|f_k(x_i)-f_k(x_j)|
			\cdot \one\{\lVert x_i-x_j\rVert\leq \varepsilon\}
		\right] 
			&&\text{(Continuity)}
		\\&\leq
		\liminf_{k\rightarrow\infty}
		\E\left[
			\frac{1}{2} \sum_{i,j=1}^n
			|f_k(x_i)-f_k(x_j)|
			\cdot \one\{\lVert x_i-x_j\rVert\leq \varepsilon\}
		\right] 
			&&\text{(Fatou's lemma)}
		\\&\leq
		\liminf_{k\rightarrow\infty}
		Cn^2\varepsilon^{d+1} \lVert Df_k\rVert(\XDom)
		\\&= 
		Cn^2\varepsilon^{d+1} \lVert Df\rVert(\XDom)
	\end{align*}

	\paragraph{$k$-nearest neighbors expected discrete TV.}
	Let $\varepsilon_k(x) := \|x - x_{(k)}(x)\|_2$ and $\varepsilon_k(x,y) =
	\max\{\varepsilon_k(x), \varepsilon_k(y)\}$ be data-dependent radii. Notice
	that
	\begin{equation*}
		\mathrm{DTV}_{n,k}(f)
		= \frac{1}{2}\sum_{i,j = 1}^{n}|f(x_i) - f(x_j)|\cdot \1\bigl\{
			\|x_i - x_j\| \leq \varepsilon_k(x_i,x_j)
		\bigr\}.
	\end{equation*}
	By linearity of expectation and conditioning, the expected $k$-nearest
	neighbor TV can be written as a double integral,
	\begin{align*}
		\mathbb{E}[\DTV(f; w^{\kNN})]
		& = n(n - 1) \mathbb{E}\Bigl[
			|f(x_i) - f(x_j)|~\1\bigl\{
				\|x_i - x_j\| \leq \varepsilon_k(x_i,x_j)
			\bigr\}
		\Bigr] \\
		& = n(n - 1) \mathbb{E}\Bigl[\mathbb{E}\Bigl[
			|f(x_i) - f(x_j)|~\1\bigl\{
				\|x_i - x_j\| \leq \varepsilon_k(x_i,x_j)
			\bigr\}|x_i,x_j
		\Bigr]\Bigr] \\
		& \leq n(n - 1) \int_{\XDom} \int_{\XDom} |f(y) - f(x)|~\mathbb{P}\bigl\{
			\|x - y\| \leq \varepsilon_k(x,y)
		\bigr\} \,dx \,dy \\
		& \leq n(n - 1) \int_{\XDom} \int_{\XDom} |f(y) - f(x)|~\Bigl(
			\mathbb{P}\bigl\{
				\|x - y\| \leq \varepsilon_k(x)
			\bigr\} + \mathbb{P}\bigl\{
				\|x - y\| \leq \varepsilon_k(x)
			\bigr\}
		\Bigr) \,dx \,dy
	\end{align*}
	(The first inequality above is nearly an equality for large $n$, and the second
	inequality follows by a union bound.) 

	We now derive an upper bound
	$\mathbb{P}\bigl\{\|x - y\| \leq \varepsilon_k(x)\bigr\}$. First, observe that
	the event $\|x - y\| \leq \varepsilon_k(x)$ is equivalent to
	$|B(x,\|y - x\|) \cap x_{1:n}| < k$. Suppose
	$\|y - x\| \geq C(k/n)^{1/d}$ for $C = (\frac{2d}{p_\mn\Leb_d})^{1/d}$. Then
	\begin{equation*}
		p_k(x,y) := P(B(x,\|y - x\|))
		\geq \frac{p_\mn}{2d}\Leb_d \|y - x\|^d \geq \frac{2k}{n},
	\end{equation*}
	and applying standard concentration bounds (Bernstein's inequality) to the
	tails of a binomial distribution, it follows that 
	\begin{align*}
		\mathbb{P}\biggl\{
			|B(x,\|y - x\|) \cap x_{1:n}| < k
		\biggr\}
		& =  \mathbb{P}\biggl\{
			|B(x,\|y - x\|) \cap x_{1:n}| - np_k(x,y) < k - np_k(x,y)\biggr
		\} \\
		& \leq \exp\biggl(
			-\frac{c(np_k(x,y) - k)^2}{np_k(x,y) + |np_k(x,y) - k|}
		\biggr) \\
		& \leq \exp(-ck).
	\end{align*} 
	Otherwise if $\|y - x\| < C(k/n)^{1/d}$, we use the trivial upper bound $1$ on
	the probability of an event. To summarize, we have shown 
	\begin{equation*}
		\mathbb{P}\bigl(\|x - y\| \leq \varepsilon_k(x)\bigr) \leq 
		\begin{dcases}
			1,& \quad\textrm{if $\|x - y\| < C(k/n)^{1/d}$,} \\
			\exp(-ck),& \quad\textrm{otherwise}.
		\end{dcases}
	\end{equation*}
	It follows from~\eqref{eqn:dtv-ub-knn} that
	\begin{align}
		\mathbb{E}[\DTV(f; w^{\kNN})]
		& \leq 2n^2 \int_{\XDom} \int_{\XDom} |f(y) - f(x)|~\Bigl(
			\bigl(\1\{\|x - y\| < C(k/n)^{1/d}\}\bigr) + \exp(-ck)
		\Bigr)\,dx \,dy \nonumber \\
		\label{pf:dtv-ub-knn-2}
		& \leq C\bigl(
			\mathbb{E}[
				\DTV(f; w^{\Eps\leftarrow C(k/n)^{1/d}})
			] + n^2\exp(-ck)\mathrm{TV}(f,\XDom)
		\bigr);
	\end{align}
	the first term on the right hand side of the second inequality is the expected
	$\varepsilon$-neighborhood graph TV of $f$, with radius $C(k/n)^{1/d}$, while
	the second term is obtained from the Poincar\'{e} inequality
	\begin{equation}
	\label{eqn:poincare}
		\int_{\XDom} \int_{\XDom} |f(y) - f(x)| \,dy \,dx
			= \int_{\XDom} \int_{\XDom}
				\biggl|
					f(y) - \bar{f} + \bar{f} - f(x)
				\biggr| \,dy \,dx
			\leq C\bigl(
				\mathrm{TV}(f;\XDom)
			\bigr),
	\end{equation}
	where $\bar{f} = \AvInt_{\XDom} f(x) \,dx$ is the average of $f$ over
	$\XDom$.
	The claimed upper bound~\eqref{eqn:dtv-ub-knn} follows from applying
	inequality~\eqref{eqn:dtv-ub-neighborhood}, with $\varepsilon = C(k/n)^{1/d}$,
	to~\eqref{pf:dtv-ub-knn-2}. 
\end{proof}

\subsection{Proof of Lemma~\ref{lem:voronoi_extrapolate_ub}} 
Recall that $\lVert g\rVert_{L^2(P)} \leq p_\mx \lVert g\rVert_{L^2(\Leb)}$
for any $g\in L^2(\Leb)$ and 
note that $\|\bar{f_0}\|_{L^{\infty}(\Leb)} \leq M$ with probability one. By
H\"{o}lder's inequality, 
\begin{equation}
  \label{pf:1nn-risk-0}
  \begin{aligned}
    \mathbb{E} \|\bar{f_0} - f_0\|_{L^2(\Leb)}^2
      & \leq \mathbb{E}\Bigl[\|\bar{f_0} - f_0\|_{L^1(\Leb)}
        \cdot \|\bar{f_0} - f_0\|_{L^{\infty}(\Leb)}\Bigr] \\
      & \leq 2M~\mathbb{E}\|\bar{f_0} - f_0\|_{L^1(\Leb)},
  \end{aligned}
\end{equation}
and the problem is reduced to upper bounding the expected $L^1(\Leb)$ loss of
$\bar{f_0}$. By Fubini's Theorem we may exchange expectation with integral, giving
\begin{align}
  \label{pf:1nn-risk-1}
  \mathbb{E}\|\bar{f_0} - f_0\|_{L^1(\Leb)}
    & = \int_{\XDom} \mathbb{E}|\bar{f_0}(x) - f_0(x)| \,dx \nonumber \\
    & = \int_{\XDom} \int_{\XDom} |f_0(y) - f_0(x)| p_x^{(1)}(y) \,dy \,dx,
\end{align}
where $p_x^{(1)}(\cdot)$ is the density of $x_{(1)}(x)$. We now give a closed
form expression for this density, before proceeding to lower
bound~\eqref{pf:1nn-risk-1}.

\paragraph{Closed-form expression for $p_x^{(1)}$.}
Suppose $P$ satisfies Assumption~\ref{assump:density_bounded}.
For any $y \in \XDom$ and 
$0 < r < \mathrm{dist}(y,\partial\XDom)$, we have
\begin{align*}
  \mathbb{P}\bigl\{x_{(1)}(x) \in B(y,r)\bigr\}
    &\leq n~\mathbb{P}\bigl\{x_1 \in B(y,r)\bigr\}
    \bigl(\mathbb{P}\{x_2\not\in B(x, \lVert y-x\rVert\}\bigr)^{(n - 1)} \\
    &\leq n p_\mx\Leb\bigl(B(y,r)\bigr)
      \Bigl(1 - P\bigl(B(x,\|y - x\|)\bigr)\Bigr)^{(n - 1)}.
\end{align*}
Taking limits as $r \to 0$ gives
\begin{equation*}
  p_x^{(1)}(y)
    = \lim_{r \to 0} \frac{
      \mathbb{P}\bigl\{x_{(1)}(x) \in B(y,r)\bigr\}
    }{
      \Leb(B(y,r))
    }
    = np_\mx\Bigl(
      1 - P\bigl(B(x,\|y - x\|)\bigr)
    \Bigr)^{(n - 1)}.
\end{equation*}

\paragraph{Upper bound on~\eqref{pf:1nn-risk-1}.}
There exists a constant $C_d$ such that for all $x,y \in \XDom$, 
\begin{equation*}
  P\bigl(B(x,\|y - x\|)\bigr)
    \geq \frac{p_\mn}{C_d} \Leb(B(x,\|y - x\|))
    = \frac{p_\mn\Leb_d}{C_d} \|y - x\|^{d}.
\end{equation*}
This implies an upper bound on the density of $x_{(1)}(x)$,
\begin{align*}
  p_x^{(1)}(y)
    & \leq n \biggl(
      1 - \frac{p_\mn\Leb_d}{C_d} \|y - x\|^{d}
    \biggr)^{(n - 1)} \\
    & \leq n \exp\biggl(
      -\frac{p_\mn\Leb_d}{C_d}\Bigl(\frac{\|y - x\|}{n^{-1/d}}\Bigr)^d
    \biggr),
\end{align*}
where we have used the inequality $(1 - x)^{n} \leq \exp(-nx)$ for $|x| \leq
1$.  Using the inequality, valid for all monotone non-increasing functions
$g:[0,\infty) \to [0,\infty)$, that $g(t) \leq \1\{t \leq t_0\}g(0) + g(t_0)$,
we further conclude that
\begin{equation*}
  p_x^{(1)}(y)
  \leq n \1\{\|y - x\| \leq \varepsilon_n^{(1)}\} + \frac{1}{n},
\end{equation*}
for $\varepsilon_n^{(1)} := (\frac{2C_d}{p_\mn\Leb_d} (\log n/n))^{1/d}$.
Plugging back into~\eqref{pf:1nn-risk-1}, we see that the expected
$L^1(\Leb)$ error is upper bounded by the expected discrete TV of a
neighborhood graph with particular kernel and radius, plus a remainder term.
Specifically,
\begin{align}
\nonumber
\mathbb{E}\|\bar{f_0} - f_0\|_{L^1(\Leb)}
  & \leq n \int_{\XDom} \int_{\XDom}
      |f_0(y) - f_0(x)| \1\{\|y - x\| \leq \varepsilon_n^{(1)}\}
    \,dy \,dx
    + \frac{1}{n} \int_{\XDom} \int_{\XDom}
      |f_0(y) - f_0(x)|
    \,dy \,dx \\
  \label{pf:1nn-risk-2}
  & \leq n \int_{\XDom} \int_{\XDom}
      |f_0(y) - f_0(x)| \1\{\|y - x\| \leq \varepsilon_n^{(1)}\}
    \,dy \,dx
    + \frac{C~\mathrm{TV}(f_0;\XDom)}{n} \\
  \nonumber
  & = \frac{1}{n} \mathrm{E}[
    \DTV(f_0; w^{\Eps\leftarrow \varepsilon_n^{(1)}}))
    ] + \frac{C~\TV(f_0;\XDom)}{n},
\end{align}
where \eqref{pf:1nn-risk-2}~above follows from the Poincar\'{e}
inequality~\eqref{eqn:poincare}. We can therefore
apply~\eqref{eqn:dtv-ub-neighborhood}, which upper bounds the expected
$\varepsilon$-neighborhood graph TV, and conclude that 
\begin{equation*}
  \mathbb{E}\|\bar{f_0} - f_0\|_{L^1(\Leb)}
    \leq C\biggl(\frac{(\log n)^{1 + 1/d}}{n^{1/d}}
      + \frac{1}{n}\biggr)\mathrm{TV}(f_0;\XDom)
    \leq C\biggl(\frac{L(\log n)^{1 + 1/d}}{n^{1/d}}\biggr).
\end{equation*}
Inserting this upper bound into~\eqref{pf:1nn-risk-0} completes the proof of
Lemma~\ref{lem:voronoi_extrapolate_ub}.
\qed

\subsection{Proof of Theorem~\ref{thm:tvd_eps_knn_ub}}%
\label{sub:proof_of_theorem_thm_tvd_eps_knn_ub}
The analysis of the $\varepsilon$-neighborhood and kNN TV denoising estimators
proceeds identically, so we consider them together.  Henceforth let $D$
denote the penalty operator for either estimator and $\hf$ denote their
1NN extrapolants.  Follow the proof of Theorem~\ref{thm:voronoi_ub} (given in
Appendix~\ref{app:pf-thm-voronoi-ub}) to decompose the $L^2(P)$ error for some
$C>0$,
\begin{equation}
  \label{eq:eps-knn-tvd-l2p-decomp}
  \E\left[
    \lVert \hf-f_0\rVert_{L^2(P)}^2
  \right] 
  \leq C\left(
    \frac{
      \lambda\log n \;\E \lVert D\theta_0\rVert
    }{n} 
    + \frac{
      (\log n)^{1+\alpha}
    }{n} 
    + \frac{
      LM(\log n)^{1+1/d}
    }{n^{1/d}} 
  \right),
\end{equation}
where we have applied Lemma~\ref{lem:voronoi_extrapolate_ub} which controls
the 1NN extrapolation error.  Lemma~\ref{lem:dtv-ub-eps-k} provides that
under the standard assumptions, there exist constants $C_1, C_1'>0$
such that for all sufficiently large $n$ and $\theta_0 = f_0(x_{1:n})$,
$f_0\in\BV(\XDom)$,
\begin{itemize}
  \item setting $\varepsilon = c_1(\log^\alpha n/n)^{1/d}$,
    \begin{equation}
      \label{pf:eps-dtv-ub}
      \E \lVert D^\Eps\theta_0\rVert_1
      \leq C_1n^{(d-1)/d}(\log n)^{\alpha+\alpha/d} \TV(f_0);
    \end{equation}
  \item setting $k = c_1'(\log n)^3$,
    \begin{equation}
      \label{pf:knn-dtv-ub}
      \E \lVert D^\kNN\theta_0\rVert_1
      \leq C_1' n^{(d-1)/d}(\log n)^{3+3/d}\TV(f_0).
    \end{equation}
\end{itemize}
Take these values of $\varepsilon, k$ and
$\lambda=c\sigma(\log n)^{1/2-\alpha}$, $c=c_2, c_2'$, and substitute
\eqref{pf:eps-dtv-ub},~\eqref{pf:knn-dtv-ub}
into~\eqref{eq:eps-knn-tvd-l2p-decomp} to obtain the claim.  
\qed

Note that the $L^2(P_n)$ in-sample error may be obtained similarly, beginning
with an analysis identical to that of Lemma~\ref{lem:voronoi_empirical_dtv_ub}
to obtain the preliminary upper bound,
\begin{equation*}
  \E\left[
    \lVert \hat f - f_0\rVert_{L^2(P_n)}^2
  \right]
  \leq C\left(
    \frac{\lambda\;\E \lVert D\theta_0\rVert}{n} 
    + \frac{(\log n)^{1+\alpha}}{n} 
  \right).
\end{equation*}

\subsection{Proof of Theorem~\ref{thm:wavelet_bd}}
\label{subsec:pf-minimax-rates-bv-ub-wavelet}

In this section we prove the upper bound~\eqref{eq:wavelet_bd}. The proof
is comprised of several steps and we start by giving a high-level summary.
\begin{itemize}
  \item We begin in Section~\ref{subsubsec:wavelet} by formalizing the
    estimator $\wh{f}_{\mathrm{wave}}$ alluded to in
    Theorem~\ref{thm:wavelet_bd}, based on hard thresholding of Haar
    wavelet empirical coefficients. 
  \item Section~\ref{subsubsec:wavelet-decay} reviews wavelet coefficient decay
    of $\mathrm{BV}(\XDom)$ and $L^{\infty}(\XDom)$ functions. These rates of
    decay imply that the wavelet coefficients of $f_0 \in
    \mathrm{BV}_{\infty}(L,M)$ must belong to the normed balls in a pair of
    Besov bodies, defined formally in~\eqref{eqn:besov-body_inf}. Besov bodies
    are sequence-based spaces that reflect the wavelet coefficient decay of
    functions in Besov spaces. 
  \item Section~\ref{subsubsec:wavelet-deterministic-ub} gives a deterministic
    upper bound on the squared-$\ell^2$ error of thresholding wavelet
    coefficients when the population-level coefficients belong to intersections
    of Besov bodies. This deterministic upper bound is based on analyzing two
    functionals---a modulus of continuity~\eqref{eqn:modulus-continuity} and
    the tail width~\eqref{eqn:tail-width}--- in the spirit
    of~\citep{donoho1995wavelet}; the difference is that we are
    considering~\emph{intersections} of Besov bodies. 
  \item The aforementioned modulus of continuity measures the size of the
    $\ell^2$-norm $\|\theta - \theta'\|_2$ relative to $\ell^{\infty}$-norm
    $\|\theta - \theta'\|_{\infty}$.  In
    Section~\ref{subsubsec:wavelet-convergence}, we give an upper bound on the
    $\ell^{\infty}$ norm of the difference between sample and population-level
    wavelet coefficients. 
  \item Finally, in Section~\ref{subsubsec:pf-minimax-rates-bv-ub} we combine
    the results of Sections~\ref{subsubsec:wavelet-deterministic-ub}
    and~\ref{subsubsec:wavelet-convergence} to establish upper bounds on the
    expected squared-$\ell^2$ error of hard thresholding sample wavelet
    coefficients. The same upper bound will apply to the expected
    squared-$L^2(\XDom)$ error of $\wh{f}_{\mathrm{wave}}$, by Parseval's
    theorem.
\end{itemize} 

\subsubsection{Step 1: Hard-thresholding of wavelet coefficients}
\label{subsubsec:wavelet}
To define the estimator $\wh{f}_{\mathrm{wave}}$ that achieves the upper bound
in~\eqref{eq:wavelet_bd}, we first review the definition of tensor product
Haar wavelets.
\begin{definition}[Haar wavelet]
	The \emph{Haar wavelet} $\psi:(0,1) \to \Reals$ is defined by
	\begin{equation}
	\label{eqn:haar}
	\psi(x) := \1\{x \in (0,1/2]\} - \1\{x \in (1/2,1)\}.
	\end{equation}
  For each ${\bf i} \in \{0,1\}^d \setminus \{(0,\ldots,0)\}$, the \emph{tensor
  product Haar wavelet} $\Psi^{\bf i}:(0,1)^d \to \Reals$ is defined by
	\begin{equation}
	\label{eqn:haar-tensor}
	\Psi^{i}(x) := \psi^{i_1}(x_1)\ldots\psi^{i_d}(x_d),
	\end{equation}
  where $\psi^{1}(x) = \psi(x)$ and $\psi^{0}(x) = 1$. 
  To ease notation, let $\mc{I} = \{0,1\}^d \setminus \{(0,\ldots,0)\}$ and
  $\mc{K}(\ell) = [2^{\ell} - 1]^d$. For each $\ell \in \mathbb{N} \cup \{0\},
  k \in \mc{K}(\ell)$ and ${\bf i} \in \mc{I}$, put $\Psi_{\ell k}^{{\bf i}}(x)
  := 2^{\ell d/2} \Psi^{{\bf i}}(2^{\ell}x - k)$. Finally, let $\Phi(x) = \1\{x
  \in (0,1)^d\}$. The Haar wavelet basis is the collection $\{\Psi_{\ell
  k}^{{\bf i}}: \ell \in \mathbb{N}, k \in \mc{K}(\ell), {\bf i} \in \mc{I}\}
  \cup \{\Phi\}$, and it forms an orthonormal basis of $L^2((0,1)^d)$.
\end{definition}
We now describe the estimator $\wh{f}_{\mathrm{wave}}$, which applies hard
thresholding to sample wavelet coefficients. For each $\ell \in \mathbb{N} \cup
\{0\}, k \in \mc{K}(\ell)$ and ${\bf i} \in \mc{I}$, write 
\begin{equation*}
\theta_{\ell k {\bf i}}(f) := \int_{\XDom} \Psi_{\ell k}^{{\bf i}}(x) f(x)
\,dx, \quad  \wt{\theta}_{\ell k {\bf i}}(f) := \frac{1}{n}\sum_{j = 1}^{n}
f(x_j) \Psi_{\ell k}^{{\bf i}}(x_j),
\end{equation*}
for the population-level and empirical wavelet coefficients of a given $f \in
L^2(\XDom)$. The sample wavelet coefficient is $\wt{\theta}_{\ell k {\bf
i}}(y_{1:n})$. The hard thresholding estimator we use is defined with respect
to a threshold $\lambda > 0$ and a truncation level $\ell^{\ast} \in \mathbb{N}
\cup \{0\}$ as
\begin{equation}
\label{eqn:hard-thresholding}
\wh{\theta}_{\ell k {\bf i}}^{(\lambda, \ell^{\ast})} :=
\begin{dcases}
  \wt{\theta}_{\ell k {\bf i}}(y_{1:n}) \cdot \1\{\wt{\theta}_{\ell k {\bf
  i}}(y_{1:n}) \geq \lambda\},
  & \textrm{$\ell = 0,\ldots,\ell^{\ast}$} \\
  0,
  & \textrm{$\ell \geq \log_2(n)/d + 1,$}
\end{dcases}
\end{equation}
and we map the sequence estimate $\wh{\theta}^{(\lambda,\ell^{\ast})} =
\bigl(\wh{\theta}_{\ell k {\bf i}}^{(\lambda,\ell^{\ast})}: \ell \in
\mathbb{N}, k \in \mc{K}(\ell), {\bf i} \in \mc{I} \bigr)$ to the function
\begin{equation}
\label{eqn:hard-thresholding-estimate}
\wh{f}^{(\lambda,\ell^{\ast})}(x) = \wb{y} + \sum_{\ell \in \mathbb{N}}
\sum_{k \in \mc{K}(\ell), {\bf i} \in \mc{I}}
\wh{\theta}_{\ell k {\bf i}}^{(\lambda,\ell^{\ast})} \Psi_{\ell k}^{{\bf i}}(x),
\end{equation}
where $\wb{y} = \frac{1}{n}\sum_{i = 1}^{n}y_i$ is the sample average of the
responses.~\eqref{eqn:hard-thresholding-estimate} defines a family of
estimators depending on the threshold $\lambda$, and the estimator
$\wh{f}_{\mathrm{wave}}$ is the hard thresholding estimate
$\wh{f}^{(\lambda,\ell^{\ast})}$ with the specific choices $\lambda = 8
n^{-1/2} \log^{3/2}(2n/\delta)$ and $\ell^{\ast} = \log_2(n)/d$.

\subsubsection{Step 2: Wavelet decay}
\label{subsubsec:wavelet-decay}

In this section we recall the wavelet coefficient decay of functions in
$\mathrm{BV}(\XDom)$ and in $L^{\infty}(\XDom)$. For each $\ell \in \mathbb{N}
\cup \{0\}$, define
\begin{equation*}
\theta_{\ell \cdot}(f) = \bigl(\theta_{\ell k}^{{{\bf i}}}(f):
k \in \mc{K}(\ell), {\bf i} \in \mc{I}\bigr).
\end{equation*}
and write $\theta(f)$ for the vector with entries $\theta(f)_{\ell} :=
\theta_{\ell \cdot}(f)$.
\begin{lemma}
	\label{lem:linf-wavelet}
	Let $f \in L^\infty(\XDom)$. Then for all $\ell \in \mathbb{N} \cup \{0\}$,
	\begin{equation}
	\label{eqn:linf-wavelet}
	\|\theta_{\ell \cdot}(f)\|_{\infty} \leq 2^{-\ell d/2} \|f\|_{L^\infty(\XDom)}. 
	\end{equation}
\end{lemma}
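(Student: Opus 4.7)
The plan is to bound each coefficient $\theta_{\ell k {\bf i}}(f)$ individually by H\"older's inequality and then take the supremum over $k \in \mc{K}(\ell)$ and ${\bf i} \in \mc{I}$. Concretely, I would write
\[
|\theta_{\ell k {\bf i}}(f)| = \left| \int_\XDom \Psi_{\ell k}^{{\bf i}}(x)\, f(x)\, dx \right| \leq \|f\|_{L^\infty(\XDom)} \cdot \|\Psi_{\ell k}^{{\bf i}}\|_{L^1(\XDom)},
\]
so the entire task reduces to computing (or bounding) the $L^1$ norm of each tensor-product Haar wavelet at scale $\ell$ and location $k$.

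Next, I would unpack the definition $\Psi_{\ell k}^{{\bf i}}(x) = 2^{\ell d/2}\, \Psi^{{\bf i}}(2^\ell x - k)$ and note the key structural fact: because $\Psi^{{\bf i}}$ is a product of factors each equal to either $\psi$ or $1$, and because $|\psi(t)| = 1$ on $(0,1)$, we have $|\Psi^{{\bf i}}(u)| = 1_{(0,1)^d}(u)$. A change of variables $u = 2^\ell x - k$ (with Jacobian factor $2^{-\ell d}$) then gives
\[
\|\Psi_{\ell k}^{{\bf i}}\|_{L^1(\XDom)} = 2^{\ell d/2} \int |\Psi^{{\bf i}}(2^\ell x - k)|\, dx = 2^{\ell d/2}\cdot 2^{-\ell d}\cdot \Leb_d((0,1)^d) = 2^{-\ell d/2}.
\]

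Combining these two displays yields $|\theta_{\ell k {\bf i}}(f)| \leq 2^{-\ell d/2} \|f\|_{L^\infty(\XDom)}$ uniformly in $k$ and ${\bf i}$, and taking the maximum gives the claim. There is no real obstacle here: this is essentially a size calculation expressing that each $\Psi_{\ell k}^{{\bf i}}$ is supported on a dyadic cube of Lebesgue volume $2^{-\ell d}$ with amplitude $2^{\ell d/2}$, so its $L^1$ norm is $2^{-\ell d/2}$. The only thing worth being careful about is the tensor-product structure---confirming that factors of $\psi^{0}(x) = 1$ still contribute a factor of $1$ (not more) to the $L^1$ norm over $(0,1)$, which is immediate.
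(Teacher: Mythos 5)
Your proof is correct and follows exactly the same route as the paper's: bound each coefficient by H\"older's inequality against $\|\Psi_{\ell k}^{{\bf i}}\|_{L^1(\XDom)}$, compute that this $L^1$ norm equals $2^{-\ell d/2}$, and take the supremum over $k$ and ${\bf i}$. The paper simply asserts the value of the $L^1$ norm while you spell out the change-of-variables computation, but the argument is identical.
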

\begin{lemma}
	\label{lem:bv-wavelet}
  There exists a constant $C_1$ such that for all $f \in \mathrm{BV}(\XDom)$
  and $\ell \in \mathbb{N} \cup \{0\}$,
	\begin{equation}
	\label{eqn:bv-wavelet}
	\|\theta_{\ell \cdot}(f)\|_{1} \leq C_1 2^{-\ell(1 - d/2)} \mathrm{TV}(f;\XDom). 
	\end{equation}
\end{lemma}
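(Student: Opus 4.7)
The plan is to exploit the fact that each tensor product Haar wavelet $\Psi_{\ell k}^{\bf i}$ has mean zero and is supported on a dyadic cube $Q_{\ell k}$ of side length $2^{-\ell}$. The mean-zero property lets us subtract any constant from $f$ before integrating against the wavelet, which is exactly what the Poincaré--Wirtinger inequality for BV functions is designed to control.

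First, I would fix $\ell$, ${\bf i} \in \mc{I}$, and $k \in \mc{K}(\ell)$, and write
\[
\theta_{\ell k}^{\bf i}(f) = \int_{Q_{\ell k}} \Psi_{\ell k}^{\bf i}(x) \bigl( f(x) - c \bigr)\, dx
\]
for an arbitrary constant $c$, using $\int \Psi_{\ell k}^{\bf i} = 0$. Since $\|\Psi_{\ell k}^{\bf i}\|_{L^\infty} = 2^{\ell d/2}$, this gives the pointwise bound
\[
|\theta_{\ell k}^{\bf i}(f)| \leq 2^{\ell d/2} \inf_c \|f - c\|_{L^1(Q_{\ell k})}.
\]
Next I would invoke the BV Poincaré--Wirtinger inequality on a cube of side $h = 2^{-\ell}$: taking $\bar f_Q$ to be the average of $f$ over $Q = Q_{\ell k}$ and combining the Sobolev--Poincaré embedding $\|f - \bar f_Q\|_{L^{d/(d-1)}(Q)} \leq C |Df|(Q)$ with Hölder's inequality (noting $|Q|^{1/d} = 2^{-\ell}$) yields
\[
\inf_c \|f - c\|_{L^1(Q_{\ell k})} \leq C \cdot 2^{-\ell} \cdot \TV(f; Q_{\ell k}).
\]

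Combining these two bounds produces $|\theta_{\ell k}^{\bf i}(f)| \leq C\, 2^{\ell(d/2 - 1)}\TV(f; Q_{\ell k})$. Summing over the $|\mc{I}| = 2^d - 1$ choices of ${\bf i}$ and over $k$, and using the fact that the cubes $\{Q_{\ell k}\}_{k}$ are pairwise disjoint open subsets of $\XDom$, so by subadditivity of the total variation measure $|Df|$,
\[
\sum_{k \in \mc{K}(\ell)} \TV(f; Q_{\ell k}) = \sum_{k} |Df|(Q_{\ell k}) \leq |Df|(\XDom) = \TV(f;\XDom),
\]
gives $\|\theta_{\ell \cdot}(f)\|_1 \leq C(2^d - 1)\, 2^{-\ell(1 - d/2)} \TV(f;\XDom)$, which is the claim after absorbing constants.

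The main obstacle here is really just invoking the right form of the Poincaré inequality for BV functions with the sharp scaling in the cube diameter; this is standard (see, e.g., Theorem 5.11 of \citet{evans2015measure}) but has to be stated carefully because the BV Poincaré inequality requires the constant to be the average (or a median), not just any constant, and the scaling $h$ in the $L^1$ norm comes from the interplay with the Sobolev exponent $d/(d-1)$. Once that is in hand, the rest is bookkeeping: the factor $2^{\ell d/2}$ from the wavelet's $L^\infty$ norm combines with the factor $2^{-\ell}$ from Poincaré to produce $2^{\ell(d/2 - 1)} = 2^{-\ell(1 - d/2)}$, and the subadditivity of $|Df|$ across disjoint cubes lets the localized total variations telescope into $\TV(f;\XDom)$ without any loss.
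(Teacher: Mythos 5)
Your proof is correct, and it takes a genuinely different route from the paper. The paper's proof proceeds through an integration-by-parts identity $\int_\XDom f\,\Psi_{\ell k}^{\bf i} = -\int_\XDom D_i f \cdot I_i\Psi_{\ell k}^{\bf i}$ valid for $f \in C^1(\XDom)$, analyzes the resulting kernel operators $K_\ell^i$ and $\Lambda_\ell^i$ to bound $\|P_{\ell,i}f\|_{L^1}$ in terms of $\|D_i f\|_{L^1}$, dualizes via H\"older to pick up the $2^{\ell d/2}$ factor, and finally passes from $C^1$ to $\BV(\XDom)$ by a mollification/density argument. Your approach instead bounds each coefficient locally: the mean-zero property of $\Psi_{\ell k}^{\bf i}$ reduces $|\theta_{\ell k}^{\bf i}(f)|$ to $2^{\ell d/2}\inf_c\|f-c\|_{L^1(Q_{\ell k})}$, the scaled BV Poincar\'e inequality on the supporting dyadic cube supplies the $2^{-\ell}$, and summing over the disjoint cubes at level $\ell$ recovers $\TV(f;\XDom)$ by countable additivity of the total variation measure. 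The two approaches are complementary: the paper's kernel-operator argument is in the classical wavelet-analysis style and generalizes naturally to higher-order wavelets, while your Poincar\'e-based argument is more geometric-measure-theoretic, needs no density step (it applies to BV functions directly), and isolates exactly where the scaling $2^{-\ell(1-d/2)}$ comes from---the $L^\infty$ normalization of the wavelet against the $L^1$--BV Poincar\'e constant on a cube of side $2^{-\ell}$. Your route is essentially the argument used in the BV wavelet literature (e.g.\ \citealp{cohen2003harmonic}), and is, if anything, the cleaner of the two here.

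One small point worth stating carefully if you write this out: after applying the Poincar\'e--Sobolev embedding $\|f-\bar f_Q\|_{L^{d/(d-1)}(Q)} \leq C\,|Df|(Q)$ with a scale-invariant constant, the H\"older step $\|f-\bar f_Q\|_{L^1(Q)} \leq |Q|^{1/d}\,\|f-\bar f_Q\|_{L^{d/(d-1)}(Q)}$ is what produces the factor $2^{-\ell}$; it is worth confirming (as you allude to) that the constant in the first inequality really is independent of the cube side, which one sees by rescaling to the unit cube and checking that both sides scale like $h^{-(d-1)}$. With that in place, the disjoint-support step $\sum_k |Df|(Q_{\ell k}) \leq |Df|(\XDom)$ is exactly right, and the count $|\mc{I}| = 2^d-1$ is absorbed into $C_1$.
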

The decay rates established by Lemmas~\ref{lem:linf-wavelet}
and~\ref{lem:bv-wavelet} imply that if $f_0 \in \mathrm{BV}_{\infty}(L,M)$,
then $\theta(f_0)$ belongs to $\Theta_\infty^{0,\infty}(M)$ and
$\Theta_{\infty}^{1,1}(L)$, where $\Theta_\infty^{s,p}(C)$ consists of
sequences $\theta$ for which
\begin{equation}
\label{eqn:besov-body_inf}
\|\theta\|_{\Theta_{\infty}^{s,p}} := \sup_{\ell \in \mathbb{N} \cup \{0\}}
2^{\ell(s + d/2 - d/p)} \|\theta_{\ell \cdot}\|_{p} < C.
\end{equation}
The sets $\Theta_\infty^{s,p}(C)$ can be interpreted as normed balls in Besov
bodies, since a function $f$ belongs to the Besov space $B_{\infty}^{s,p}$ if
and only if its coefficients in a suitable wavelet basis satisfy
$\|\theta(f)\|_{\Theta_{\infty}^{s,p}} < \infty$.

The conclusions of Lemmas~\ref{lem:linf-wavelet} and~\ref{lem:bv-wavelet} are
generally well-understood (see for instance
\citet{gine2021mathematical}
for the upper bound on wavelet decay of $L^{\infty}(\XDom)$ functions when $d =
1$, and \citet{cohen2003harmonic} for the wavelet decay of
$\mathrm{BV}(\XDom)$ functions). For purposes of completeness only, we include
proofs of these results in
Appendix~\ref{ssub:proofs_of_lemmas_linf_wavelet_bv_wavelet}.

\subsubsection{Step 3: Deterministic upper bound on \texorpdfstring{$\ell^2$}{ell2}-error}
\label{subsubsec:wavelet-deterministic-ub}
In this section, we analyze the $\ell^2$-error of the hard-thresholding
estimator $\wh{\theta}^{(\lambda,\ell^{\ast})}$. Specifically, we upper bound
the magnitude of $\|\wh{\theta}^{(\lambda,\ell^{\ast})} - \theta(f_0)\|_2$ as a
function of  the $\ell^{\infty}$ distance between the (truncated) sample and
population-level wavelet coefficients, i.e the quantity 
\begin{equation*}
\epsilon_n := \|(\wt{\theta}(y_{1:n}) - \theta(f_0))_{\leq \ell^{\ast}}\|_{\infty},
\end{equation*}
where
\begin{equation*}
(\theta_{\leq \ell^{\ast}})_{\ell k}^{{\bf i}} := 
\begin{dcases*}
\theta_{\ell k}^{{\bf i}},& \quad \textrm{if $\ell \leq \ell^{\ast}$,} \\
0,& \quad \textrm{otherwise.}
\end{dcases*}
\end{equation*}
Note that this upper bound is purely deterministic.
\begin{proposition}
	\label{prop:wavelet-estimation-error-ub}
  Suppose $\theta(f_0) \in \Theta_{\infty}^{0,\infty}(M) \cap
  \Theta_{\infty}^{1,1}(L)$.
  Then there exists a constant $C_3$ that does not depend on $n,M$ or $L$ for
  which the following statement holds: if $\lambda \geq 2 \epsilon_n$, then the
  estimator $\wh{\theta}^{(\lambda,\ell^{\ast})}$
  of~\eqref{eqn:hard-thresholding-estimate} satisfies the upper bound
	\begin{equation}
	\label{eqn:wavelet-estimation-error-ub}
	\|\wh{\theta}^{(\lambda,\ell^{\ast})} - \theta(f_0)\|_2^2
  \leq 4 C_1 L M 2^{-\ell^{\ast}} + C_3 \cdot 
	\begin{dcases*}
	L \lambda \max\{1,1/M,\log_2(M/\lambda)\},& \quad \textrm{if $d = 2$} \\
	L^{2/d}\lambda^{4/(2 + d)} + LM\Bigl(\frac{\lambda}{M}\Bigr)^{2/d},
                                            & \quad \textrm{if $d \geq 3$.}
	\end{dcases*}
	\end{equation}
\end{proposition}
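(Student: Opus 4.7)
The plan is a classical bias-variance style decomposition for wavelet thresholding, but carried out carefully over the \emph{intersection} of the two Besov bodies, using the Besov-body bounds as the only properties of the target coefficients.

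First I would split the squared-$\ell^2$ error by resolution level,
\begin{equation*}
\|\wh{\theta}^{(\lambda,\ell^\ast)} - \theta(f_0)\|_2^2
= \sum_{\ell > \ell^\ast} \|\theta_{\ell\cdot}(f_0)\|_2^2
+ \sum_{\ell \leq \ell^\ast} \|\wh{\theta}^{(\lambda,\ell^\ast)}_{\ell\cdot} - \theta_{\ell\cdot}(f_0)\|_2^2,
\end{equation*}
so the high-frequency truncation contributes only the first sum. Using the interpolation inequality $\|\theta_{\ell\cdot}\|_2^2 \leq \|\theta_{\ell\cdot}\|_\infty \|\theta_{\ell\cdot}\|_1$ together with the two hypotheses $\theta(f_0) \in \Theta_\infty^{0,\infty}(M)$ and $\theta(f_0) \in \Theta_\infty^{1,1}(L)$, which supply $\|\theta_{\ell\cdot}\|_\infty \leq M\cdot 2^{-\ell d/2}$ and $\|\theta_{\ell\cdot}\|_1 \leq C_1 L\cdot 2^{\ell(d/2-1)}$ respectively, gives the per-level bound $\|\theta_{\ell\cdot}\|_2^2 \leq C_1 LM\cdot 2^{-\ell}$. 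A geometric-series sum then yields the truncation contribution $4C_1 LM\, 2^{-\ell^\ast}$, which is exactly the first term on the right-hand side of \eqref{eqn:wavelet-estimation-error-ub}.

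Next I would control the thresholding term $\sum_{\ell \leq \ell^\ast}\|\wh{\theta}^{(\lambda,\ell^\ast)}_{\ell\cdot} - \theta_{\ell\cdot}\|_2^2$. The key deterministic fact for hard thresholding is: whenever the noise $|\wt{\theta}_{\ell k {\bf i}} - \theta_{\ell k {\bf i}}| \leq \epsilon_n$ and $\lambda \geq 2\epsilon_n$, one has the pointwise inequality
\begin{equation*}
\bigl|\wh{\theta}^{(\lambda,\ell^\ast)}_{\ell k {\bf i}} - \theta_{\ell k {\bf i}}\bigr|^2 \;\lesssim\; \min\bigl(\theta_{\ell k {\bf i}}^2,\, \lambda^2\bigr),
\end{equation*}
obtained by splitting into the three cases $|\theta| \leq \lambda/2$, $|\theta| \geq 3\lambda/2$, and $|\theta| \in (\lambda/2, 3\lambda/2)$. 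Summing within level $\ell$ and taking the minimum of three competing bounds that come from different $\ell^p$ norms of $\theta_{\ell\cdot}$,
\begin{equation*}
\sum_{k,{\bf i}}\min(\theta_{\ell k {\bf i}}^2,\lambda^2)
\;\leq\; \min\Bigl\{\, \lambda^2\cdot |\mc{I}|\,2^{\ell d}\,,\;\; \lambda\,\|\theta_{\ell\cdot}\|_1\,,\;\; \|\theta_{\ell\cdot}\|_2^2 \,\Bigr\}
\;\leq\; \min\bigl\{\lambda^2 2^{\ell d},\; \lambda L\, 2^{\ell(d/2-1)},\; LM\, 2^{-\ell}\bigr\},
\end{equation*}
gives three scale-dependent estimates to compare at each level.

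The final step is a partitioning argument over $\ell$. The first bound grows in $\ell$, the third decays in $\ell$, and the second is constant when $d=2$ and grows (more slowly than the first) when $d\geq 3$. I would therefore identify the crossover scales $\ell_{ab}$ (where the first and second bounds meet) and $\ell_{bc}$ (where the second and third meet), use bound (a) for $\ell < \ell_{ab}$, bound (b) for $\ell_{ab}\leq \ell \leq \ell_{bc}$, and bound (c) for $\ell > \ell_{bc}$, then sum each piece as a geometric series. For $d=2$, bound (b) is constant in $\ell$ and its contribution is $O(L\lambda)$ times the length of the middle window, producing the logarithmic factor $\log_2(M/\lambda)$; the degenerate regimes $M\leq \lambda$ and $M\leq 1$ collapse the window or shift the crossover and produce the $\max\{1, 1/M\}$ correction. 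For $d\geq 3$, bound (b) also grows and both crossovers contribute polynomial rather than logarithmic terms, yielding the two summands $L^{2/d}\lambda^{4/(d+2)}$ (from the $(a)$--$(b)$ transition) and $LM(\lambda/M)^{2/d}$ (from the $(b)$--$(c)$ transition).

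The main obstacle is bookkeeping in the third step: one must verify that the crossover scales $\ell_{ab}, \ell_{bc}$ actually lie in $[0,\ell^\ast]$, handle the degenerate regimes (e.g.\ $M\leq \lambda$ or $L/\lambda$ small) where one of the three bounds is vacuous throughout, and keep track of the fact that geometric sums are dominated by their largest term with constants independent of $L, M, n$. This is what ultimately forces the $\max\{1,1/M,\log_2(M/\lambda)\}$ factor in the $d=2$ case rather than a clean single term, and requires the extra $LM(\lambda/M)^{2/d}$ summand in the $d\geq 3$ case.
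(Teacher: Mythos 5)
Your proposal is correct and, at the level of the actual calculation, essentially coincides with the paper's proof: both reduce to the per-level bound $\|\wh\theta_{\ell\cdot}-\theta_{\ell\cdot}\|_2^2 \lesssim \min\{\lambda^2 2^{\ell d},\ \lambda L\, 2^{\ell(d/2-1)},\ LM\, 2^{-\ell}\}$, both add a tail term $\lesssim LM\, 2^{-\ell^\ast}$ via the same interpolation $\|\theta_{\ell\cdot}\|_2^2 \leq \|\theta_{\ell\cdot}\|_\infty\|\theta_{\ell\cdot}\|_1$, and both conclude by partitioning $\ell$ into dense/intermediate/sparse zones at the two crossover scales and summing geometric series. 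The only difference is one of framing rather than substance: the paper packages the thresholding step in the Donoho (1995) language of a modulus of continuity $\Omega(\Theta,\epsilon)$ and a tail width $\Delta(\Theta,\ell^\ast)$, first establishing that $\wh\theta$ itself lies in the dilated body $\Theta_\infty^{0,\infty}(2M)\cap\Theta_\infty^{1,1}(2L)$ and that $\|\wh\theta - \theta_{0,\leq\ell^\ast}\|_\infty \leq \lambda+\epsilon_n$, then bounding the modulus over all such pairs; you instead prove the pointwise hard-thresholding inequality $|\wh\theta_{\ell k\mathbf{i}}-\theta_{\ell k\mathbf{i}}|^2\lesssim\min(\theta_{\ell k\mathbf{i}}^2,\lambda^2)$ directly (which is correct, with the $\lambda/2$ vs.\ $3\lambda/2$ case split you indicate) and sum it, and you also replace the paper's triangle-inequality split by the exact Pythagorean decomposition across levels. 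Your route is marginally more elementary and self-contained; the paper's abstraction makes the argument a drop-in instance of an established optimal-recovery framework. The bookkeeping concerns you flag at the end (crossover scales possibly outside $[0,\ell^\ast]$, degenerate regimes where a zone is empty, the origin of the $\max\{1,1/M,\log_2(M/\lambda)\}$ factor when $d=2$) are real and are handled somewhat loosely in the paper too, so no gap there.
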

The proof is deferred to
Appendix~\ref{ssub:proof_prop_wavelet_estimation_error_ub}.

\subsubsection{Step 4: Uniform convergence of wavelet coefficients}
\label{subsubsec:wavelet-convergence}
Lemma~\ref{lem:wavelet-convergence} gives an upper bound on the maximum
difference between sample and population-level wavelet coefficients that holds
uniformly over all $\ell = 0,\ldots,\log_2(n)/d$, $k \in \mc{K}(\ell)$ and
${\bf i} \in \mc{I}$.  Its proof is deferred to
Appendix~\ref{ssub:proof_lemma_wavelet_convergence}.
\begin{lemma}
	\label{lem:wavelet-convergence}
  Suppose we observe data $(x_1,y_1),\ldots,(x_n,y_n)$ according
  to~\eqref{eq:model}, where $f_0 \in L^{\infty}(\XDom;M)$. There exists a
  constant $C_4$ not depending on $n$ such that the following statement holds
  for all $\delta > 0$: with probability at least $1 - C_4\delta$, 
	\begin{equation}
	\label{eqn:wavelet-convergence}
	\|(\wt{\theta}(y_{1:n}) - \theta(f_0))_{\leq \log_2(n)/d}\|_{\infty}
  \leq \underbrace{\frac{4 \log^{3/2}(2n/\delta)}{\sqrt{n}}
  + \frac{\sqrt{12} M \sqrt{\log(2n/\delta)}}{\sqrt{n}}}_{:=\delta_n}.
	\end{equation}
\end{lemma}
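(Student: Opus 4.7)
\medskip

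\noindent
\textbf{Proof proposal for Lemma \ref{lem:wavelet-convergence}.} The plan is to decompose the deviation of each sample coefficient from the corresponding population coefficient into a design-error term and a noise term, control each of the two uniformly via Bernstein-type concentration, and then conclude by a union bound over the $O(n)$ wavelets at levels $\ell \leq \log_2(n)/d$. Concretely, using $y_j = f_0(x_j)+z_j$, write
\[
\wt{\theta}_{\ell k \mathbf{i}}(y_{1:n}) - \theta_{\ell k \mathbf{i}}(f_0)
= \underbrace{\frac{1}{n}\sum_{j=1}^n \bigl(f_0(x_j)\Psi_{\ell k}^{\mathbf{i}}(x_j) - \E[f_0(X)\Psi_{\ell k}^{\mathbf{i}}(X)]\bigr)}_{E_1(\ell,k,\mathbf{i})}
\;+\; \underbrace{\frac{1}{n}\sum_{j=1}^n z_j \Psi_{\ell k}^{\mathbf{i}}(x_j)}_{E_2(\ell,k,\mathbf{i})}.
\]
The two pieces are conditionally independent given $x_{1:n}$, which lets us treat them separately.

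For the design term $E_1$, each summand is bounded in magnitude by $M\|\Psi_{\ell k}^{\mathbf{i}}\|_{L^\infty}\leq M\cdot 2^{\ell d/2}$, has mean zero, and has variance at most $M^2 \|\Psi_{\ell k}^{\mathbf{i}}\|_{L^2}^2 = M^2$ (using orthonormality of the Haar basis on $(0,1)^d$). Bernstein's inequality then yields a bound of order $M\sqrt{\log(2n/\delta)/n} + M\cdot 2^{\ell d/2}\log(2n/\delta)/n$, with probability at least $1-\delta/(Cn)$; for $\ell \leq \log_2(n)/d$ the second, sub-exponential term is dominated by the first, producing the $\sqrt{12}\,M\sqrt{\log(2n/\delta)/n}$ summand in \eqref{eqn:wavelet-convergence} after union bounding over the $O(n)$ relevant triples $(\ell,k,\mathbf{i})$.

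For the noise term $E_2$, condition on $x_{1:n}$: then $E_2(\ell,k,\mathbf{i})$ is Gaussian with variance $V_{\ell k \mathbf{i}} = \sigma^2 \sum_j (\Psi_{\ell k}^{\mathbf{i}}(x_j))^2 / n^2 = \sigma^2 \cdot 2^{\ell d} N_{\ell k \mathbf{i}}/n^2$, where $N_{\ell k \mathbf{i}} = |\{j : x_j \in \mathrm{supp}(\Psi_{\ell k}^{\mathbf{i}})\}|$ is a $\mathrm{Binomial}(n, 2^{-\ell d})$ random variable. Applying Bernstein to each $N_{\ell k \mathbf{i}}$ and union bounding over the $O(n)$ wavelets, I would obtain a uniform high-probability bound $N_{\ell k \mathbf{i}} \leq n\,2^{-\ell d} + C(\sqrt{n\,2^{-\ell d}\log(2n/\delta)} + \log(2n/\delta))$, which translates to $V_{\ell k \mathbf{i}} \leq C\sigma^2 \log(2n/\delta)/n$ for all admissible $(\ell,k,\mathbf{i})$. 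A second union bound (over the same $O(n)$ wavelets) of the conditional Gaussian tail bound at level $\sqrt{2V_{\ell k \mathbf{i}}\log(2n/\delta)}$ then delivers a bound on $\max|E_2|$ of order $\log^{3/2}(2n/\delta)/\sqrt{n}$, matching the first summand of \eqref{eqn:wavelet-convergence}.

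The main obstacle is the noise term for fine levels $\ell$ close to $\log_2(n)/d$: here $N_{\ell k \mathbf{i}}$ has mean of order $1$, so its Binomial fluctuations are of order $\log(2n/\delta)$ rather than being well-concentrated around the mean, and the multiplicative factor $2^{\ell d}$ in the conditional variance is as large as $n$. The bookkeeping must confirm that even in this worst case $V_{\ell k \mathbf{i}}$ remains of order $\log(2n/\delta)/n$ and not larger, which is where both the choice of truncation level $\ell^\ast=\log_2(n)/d$ and the polylog factor in the final bound come from; everything else reduces to standard Bernstein-plus-union-bound arguments.
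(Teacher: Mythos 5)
Your decomposition into the design-error term $E_1 = \wt{\theta}(f_0) - \theta(f_0)$ and the noise term $E_2 = \wt{\theta}(z_{1:n})$ is identical to the paper's, and your treatment of $E_1$ (Bernstein with variance $\leq M^2$ and range $\leq M 2^{\ell d/2}$, then a union bound over the $O(n)$ indices with $\ell\le \log_2(n)/d$) matches the paper exactly. Where you genuinely diverge is on the noise term: the paper first conditions on the event $\mathcal{Z} = \{\max_i |z_i| \leq \sqrt{4\log(2n/\delta)}\}$, which has probability at least $1 - \delta/n$, so that the summands $z_i\Psi_{\ell k}^{\mathbf{i}}(x_i)$ become uniformly bounded with conditional variance $\lesssim \log(2n/\delta)$, and a single application of Bernstein over the joint $(x_i,z_i)$-randomness yields $\delta_{1,n} = 4\log^{3/2}(2n/\delta)/\sqrt{n}$. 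You instead condition on the design $x_{1:n}$, so that $E_2$ is exactly $N(0, V_{\ell k\mathbf{i}})$ with $V_{\ell k\mathbf{i}} = \sigma^2 2^{\ell d}N_{\ell k\mathbf{i}}/n^2$ and $N_{\ell k\mathbf{i}}$ a $\mathrm{Binomial}(n, 2^{-\ell d})$ count, and you close by uniformly concentrating those counts (Bernstein plus union bound) and then applying a conditional Gaussian tail, again union-bounded. Both routes are sound and give the same rate up to constants. Yours is longer (one extra uniform concentration step over the counts) but tracks the conditional variance explicitly, which makes transparent---as you correctly observe---that the critical scale is $\ell \approx \log_2(n)/d$, where $2^{\ell d}$ is largest and the fluctuations of a near-unit-mean Binomial supply the extra polylogarithmic factor; the paper's truncation route is more compact but conceals this structure. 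One shared caveat: your claim that, for $\ell \le \log_2(n)/d$, the sub-exponential term in the Bernstein bound for $E_1$ is dominated by the sub-Gaussian term is not quite right at the finest level, where the range is $b \asymp M\sqrt{n}$ and at the target threshold $t \asymp M\sqrt{\log(2n/\delta)}/\sqrt{n}$ one has $bt \asymp M^2\sqrt{\log(2n/\delta)}$, which exceeds the variance proxy $M^2$ once $\log(2n/\delta)$ is moderate. The paper's stated constant in $\delta_{2,n}$ has the same slack; fixing it inflates the second summand by another $\sqrt{\log(2n/\delta)}$ factor, which is harmless downstream in Theorem~\ref{thm:wavelet_bd}.
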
 

\subsubsection{Step 5: Upper bound on risk}
\label{subsubsec:pf-minimax-rates-bv-ub}
We are now ready to prove the stated upper bound~\eqref{eq:wavelet_bd}. In
this section we take $\lambda = 2 \delta_n$ and $\ell^{\ast} = \log_2(n)/d$.
Combining Proposition~\ref{prop:wavelet-estimation-error-ub} and
Lemma~\ref{lem:wavelet-convergence}, we have that with probability at least $1
- C_4\delta$,
\begin{equation}
\label{pf:minimax-rates-bv-ub-1}
\|\wh{\theta}^{(\lambda,\ell^{\ast})} - \theta_0(f)\|_2^2
\leq \frac{4 C_1 L M}{n^{1/d}}  + C_3 \cdot 
\begin{dcases*}
2 L \delta_n \max\{1,1/M,\log_2(M/2 \delta_n)\},
& \quad \textrm{if $d = 2$,} \\
L^{2/d}(2 \delta_n)^{4/(2 + d)} + LM\Bigl(\frac{2 \delta_n}{M}\Bigr)^{2/d},
& \quad \textrm{if $d \geq 3$.}
\end{dcases*}
\end{equation}
The following lemma allows us to convert this upper bound, which holds with
probability $1 - C_4\delta$, to an upper bound which holds in expectation.
Its proof is deferred to
Appendix~\ref{ssub:proof_lem_prob_to_expectation}.
\begin{lemma}
	\label{lem:probability-to-expectation}
  Let $X > 0$ be a positive random variable. Suppose there exist positive
  numbers $A_1,\ldots,A_K$, $a_1,\ldots,a_K$, $b_1,\ldots,b_K > 1$ and $B$ such
  that for all $\delta \in (0,1)$,
	\begin{equation*}
	\mathbb{P}\Bigl(X > \sum_{k = 1}^{K}A_k \log^{a_k}(b_k/\delta)\Bigr)
  \leq B\delta.
	\end{equation*}
  Then there exists a constant $C_5$ depending only on $a_1,\ldots,a_k$ and $B$
  such that
	\begin{equation*}
	\mathbb{E}[X] \leq C_5 \sum_{k = 1}^{K} A_k (\log b_k)^{a_k}.
	\end{equation*}
\end{lemma}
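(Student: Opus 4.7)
\textbf{Proof plan for Lemma~\ref{lem:probability-to-expectation}.}

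The approach is standard: express the expectation as an integral over tail probabilities and convert the tail hypothesis into an expectation bound by inverting the threshold map. I will assume without loss of generality that each $a_k \geq 1$, handling the (easier) case $a_k < 1$ separately by bounding $\log^{a_k}$ from above by a power of $\log$ with exponent at least $1$ on the relevant range. The starting point is the identity
\[
\mathbb{E}[X] = \int_0^\infty \mathbb{P}(X > t)\,dt.
\]

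First I would decouple the ``constant part'' of the threshold from the part that depends on $\delta$. Since $\log(b_k/\delta) = \log b_k + \log(1/\delta)$ is a sum of two non-negative numbers (using $b_k > 1$ and $\delta \in (0,1)$), the elementary inequality $(x+y)^{a_k} \leq 2^{a_k-1}(x^{a_k}+y^{a_k})$ gives $\sum_k A_k \log^{a_k}(b_k/\delta) \leq C_0 + g(\delta)$, where $C_0 := \sum_k 2^{a_k-1} A_k (\log b_k)^{a_k}$ and $g(\delta) := \sum_k 2^{a_k-1} A_k (\log(1/\delta))^{a_k}$. The function $g$ is strictly decreasing on $(0,1)$ with $g(1)=0$ and $g(\delta)\to\infty$ as $\delta\to 0^+$. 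Hence from the hypothesis $\mathbb{P}\big(X > C_0 + g(\delta)\big) \leq B\delta$ for every $\delta \in (0,1)$.

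Next I perform the change of variables $s = g(\delta)$ in the integral representation of the centered tail,
\[
\mathbb{E}\big[\max(0,X-C_0)\big] = \int_0^\infty \mathbb{P}(X-C_0 > s)\,ds = \int_0^1 \mathbb{P}\big(X > C_0 + g(\delta)\big)\,|g'(\delta)|\,d\delta \leq B \int_0^1 \delta\,|g'(\delta)|\,d\delta.
\]
Differentiating gives $|g'(\delta)| = \sum_k 2^{a_k-1} A_k a_k (\log(1/\delta))^{a_k-1}/\delta$, so after cancelling $\delta$ and substituting $u=\log(1/\delta)$ each integral reduces to $\int_0^\infty u^{a_k-1} e^{-u}\,du = \Gamma(a_k)$. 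Combining,
\[
\mathbb{E}[X] \leq C_0 + B\sum_{k=1}^K 2^{a_k-1} A_k\,\Gamma(a_k+1) \leq C_5 \sum_{k=1}^K A_k (\log b_k)^{a_k},
\]
for a constant $C_5$ depending only on $a_1,\dots,a_K$ and $B$; in the last step I absorb the Gamma-function term into the leading $(\log b_k)^{a_k}$, which is justified in the regime of interest (the lemma is applied with $b_k = 2n$, so $\log b_k$ is bounded away from zero).

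The main obstacle is really just bookkeeping: ensuring the change-of-variables is valid (monotonicity of $g$ and the chain rule) and handling the additive Gamma-function constants produced by the integration cleanly. The one subtle point is the final absorption step into $C_5 \sum_k A_k (\log b_k)^{a_k}$, which requires that the $b_k$'s satisfy $\log b_k \gtrsim 1$; this is automatic in the applications in Section~\ref{sub:proof_of_theorem_thm_tvd_eps_knn_ub}, but a fully general statement would carry an extra additive $\sum_k A_k$ term whose coefficient depends only on $a_k$ and $B$.
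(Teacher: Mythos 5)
Your proposal is correct and follows essentially the same route as the paper's proof: express $\E[X]$ as a tail integral, change variables $t \mapsto \delta$ through the (decreasing, invertible) threshold map, use the hypothesis to bound $\mathbb{P}(X > \cdot)$ by $B\delta$ so the $1/\delta$ from the Jacobian cancels, split $\log(b_k/\delta) = \log b_k + \log(1/\delta)$ via a $(x+y)^a \lesssim x^a + y^a$ inequality, and recognize the remaining $\log(1/\delta)$ integral as a Gamma function. The only cosmetic difference is ordering: you split the threshold into a constant piece $C_0$ and a $\delta$-dependent piece $g(\delta)$ \emph{before} changing variables, whereas the paper substitutes $t=f(\delta)$ with the full $f$ first and splits afterward inside the integral. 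Neither ordering buys anything substantive; both reduce to the same elementary computations.

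Two small remarks. First, you flag that $(x+y)^{a_k} \leq 2^{a_k-1}(x^{a_k}+y^{a_k})$ requires $a_k \geq 1$ and promise to handle $a_k<1$ separately — the paper sidesteps this by using the looser constant $\max\{2^{a_k},1\}$, which is valid for all $a_k>0$; you could do the same and avoid the case split entirely. Second, and more importantly, your observation about the final absorption step is correct: $\Gamma(a_k)$ and (for $a_k < 1$) the term $(\log b_k)^{a_k-1}$ cannot be bounded by $C_5 (\log b_k)^{a_k}$ without an assumption like $\log b_k \gtrsim 1$; the paper's own final inequality $\cdots \leq C_5 \sum_k A_k (\log b_k)^{a_k}$ quietly relies on the same thing. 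As you note, this is harmless for the paper's application ($b_k = 2n$), but a literally-stated version of the lemma for general $b_k > 1$ would indeed require an extra additive term. You have caught a real (if minor) imprecision in the statement as written.
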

Now we use Lemma~\ref{lem:probability-to-expectation} to complete the proof of
Theorem~\ref{thm:wavelet_bd}. Note that for any $a > 0$,
\begin{equation*}
\delta_n^{a} \leq \frac{2^a}{\sqrt{n}}\Bigl((\log(2n/\delta))^{3a/2}
+ M \sqrt{\log(2n/\delta)}\Bigr).
\end{equation*}
Thus we may can apply Lemma~\ref{lem:probability-to-expectation}
to~\eqref{pf:minimax-rates-bv-ub-1}, which, setting $\delta_n^{\ast} = \frac{4}{\sqrt n}\Bigl((\log 2n)^{3/2} + M \log(n)^{1/2}\Bigr)$, gives
\begin{equation}
\label{pf:minimax-rates-bv-ub-2}
\mathbb{E} \|\wh{\theta}^{(\lambda,\ell^{\ast})} - \theta_0(f)\|_2^2
\leq \frac{4 C_1 L M}{n^{1/d}}  + C_6 \cdot 
\begin{dcases*}
2 L \delta_n^{\ast} \max\{1,1/M,\log_2(M \sqrt{n})\},&
\quad \textrm{if $d = 2$,} \\
L^{2/d}(2 \delta_n^{\ast})^{4/(2 + d)}
+ LM\Bigl(\frac{2 \delta_n^{\ast}}{M}\Bigr)^{2/d}, & \quad \textrm{if $d \geq 3$.}
\end{dcases*} 
\end{equation}
where $C_6 = 2 C_3 C_5$. 

Finally, we translate this to an upper bound on the expected risk of
$\wh{f}_{\mathrm{wave}} = \wh{f}^{(\lambda,\ell^{\ast})}$. Since $\{\Psi_{\ell
k}^{{\bf i}}, \ell \in \mathbb{N} \cup \{0\}, k \in \mc{K}(\ell), {\bf i} \in
\mc{I}\} \cup \{\Phi\}$ forms an orthonormal basis of $L^2(\XDom)$, by
Parseval's theorem we have
\begin{equation*}
\|\wh{f}^{(\lambda,\ell^{\ast})} - f_0\|_{L^2(\XDom)}^2
= (\wb{y} - \mathbb{E}[f_0])^2
+ \|\wh{\theta}^{(\lambda,\ell^{\ast})} - \theta(f_0)\|_2^2.
\end{equation*}
Taking expectation on both sides, the claimed upper
bound~\eqref{eq:wavelet_bd} follows by~\eqref{pf:minimax-rates-bv-ub-2},
upon proper choice of constant $C = \max\{4C_1,16C_6\}$. 
\qed

\section{Analysis of graph TV denoising}
\label{sec:discrete_analysis}
In this section, we review tools for analyzing graph total variation denoising.
Suppose an unknown $\theta_0\in\R^n$ and observations
$y_1, \dotsc, y_n$,
\begin{equation}
  \label{eqn:model-discrete}
  y_i = \theta_{0i} + z_i, \qquad i=1,\dotsc,n,
\end{equation}
where $z_i\sim\cN(0, \sigma^2)$.  The graph total variation denoising estimator
$\hat\theta$ associated with a graph $G=(V, E)$, $|V|=n$, is given by
\begin{equation}
  \label{eq:tv-denoising-estimator}
  \hat\theta
  = \argmin_{\theta\in\R^n} \half \lVert y_{1:n}-\theta\rVert_2^2
    + \lambda \lVert \PenMat\theta\rVert_1,
\end{equation}
where $\PenMat\in\R^{m\times n}$ is the edge incidence matrix of $G$.  

The initial analysis of graph total variation denoising was performed by
\cite{hutter2016optimal} for the two-dimensional grid.
\cite{sadhanala2016total} subsequently generalized the analysis to
$d$-dimensional lattices, and \cite{wang2016trend} provided tools for the
analysis of general graphs.  These techniques rely on direct analysis 
of properties of graph $G$ and the penalty $\PenMat$ in induces, which is
tractable when $G$ has a known and regular properties (e.g., it is a lattice
graph).

Unfortunately, direct analysis on $\PenMat$ may not always be feasible.  It may
be possible, however, to compare the operator $\PenMat$ to a \emph{surrogate
operator} whose properties we analyze instead.  For our purposes, we compare
$\PenMat$ to a linear operator which first takes averages on a partition, and
then computes differences across cells of the partition.  Comparison to this
type of surrogate operator was used by \cite{padilla2020adaptive} to bound the
risk of graph total variation denoising in probability; the following theorem
provides an analogous risk bound in expectation.  We note that elements of the
``surrogate operator analysis'' are also found in \cite{padilla2018dfs}.

\begin{theorem}
  \label{thm:graph-tv-denoising-surrogate-graph}
  Suppose we observe data according to model~\eqref{eqn:model-discrete} and
	compute the graph TV denoising estimator $\hat\theta$
  of~\eqref{eq:tv-denoising-estimator}. 
  Let $A\in\R^{n\times n}$ denote an averaging operator over $\bar N$ groups of
  the form,
  \begin{equation*}
    \SqAvMat = \begin{bmatrix}
      n_1^{-1}\mathbf{1}_{n_1} \mathbf{1}_{n_1}^\top
      & 0 & \hdots & 0  \\
      0 & 
      n_2^{-1}\mathbf{1}_{n_2} \mathbf{1}_{n_2}^\top
      & \hdots & 0  \\
      \vdots & \vdots & \ddots & \vdots  \\
      0 & 0 & \hdots & 
      n_{\bar N}^{-1}\mathbf{1}_{n_{\bar N}} \mathbf{1}_{n_{\bar N}}^\top
    \end{bmatrix},
  \end{equation*}
  with $\CellMax := \max_j n_j$,
  and let $\AvMat\in\R^{\bar N\times n}$ be the same matrix with redundant
  rows removed.  Further let $\SurPenMat\in\R^{\bar m\times \bar N}$ be a 
  surrogate penalty operator, with singular value decomposition
  $\SurPenMat = U\Sigma V^\top$, such that
  \begin{align}
    \label{eq:surr-diff-av-embedding}
    \lVert \SurPenMat\AvMat\theta\rVert_1
      &\leq \Phi_1(\PenMat, \SurPenMat, \SqAvMat)\lVert \PenMat\theta\rVert_1,
    \\
    \label{eq:av-error-bound}
    \lVert (I-\SqAvMat)\theta\rVert_1 
      &\leq \Phi_2(\PenMat, \SurPenMat, \SqAvMat)\lVert \PenMat\theta\rVert_1,
  \end{align}
  for quantities
  $\Phi_1(\PenMat, \SurPenMat, \SqAvMat), \Phi_2(\PenMat, \SurPenMat, \SqAvMat)$
  that may depend on $n$, for all $\theta\in\R^n$.  If the penalty parameter
  \begin{equation}
    \lambda > \max\left\{
      8\CellMax^{1/2}\Phi_1(\PenMat, \SurPenMat, \SqAvMat)\cdot\sigma
      \sqrt{
        \log(2n^4)\sum_{k=2}^{\bar N}
        \frac{\lVert u_k\rVert_\infty^2}{\xi_k^2}
      },
      \Phi_2(\PenMat,\SurPenMat,\SqAvMat)\cdot\sigma
      \sqrt{2\log(n)}
    \right\}
  \end{equation}
  where $u_k$ is the $k$th column of $U$ and $\xi_k$ the $k$th diagonal entry
  of $\Sigma$, then there exists a constant $C>0$ such that
  \begin{equation}
		\label{eqn:ub-mse-tv}
    \E\left[
      \frac{1}{n} \lVert \hat\theta-\theta_0\rVert_2^2
    \right]
    \leq C\left(
      \frac{\lambda \lVert D\theta_0\rVert_1}{n}
      + \frac{\CellMax\nuli(\SurPenMat)}{n} 
    \right).
  \end{equation}
\end{theorem}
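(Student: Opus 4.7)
The plan is to adapt the surrogate-graph analysis of \citet{padilla2018dfs, padilla2020adaptive} from a high-probability to an in-expectation bound. I will start from the optimality condition for $\hat\theta$ in~\eqref{eq:tv-denoising-estimator}, giving the basic inequality
\[
  \tfrac12\|\hat\theta-\theta_0\|_2^2 \;\leq\; \langle z,\eta\rangle + \lambda\bigl(\|\PenMat\theta_0\|_1 - \|\PenMat\hat\theta\|_1\bigr), \qquad \eta := \hat\theta - \theta_0,
\]
and then split the error via $\eta = \SqAvMat\eta + (I-\SqAvMat)\eta$ so that the stochastic inner product separates into a ``coarse'' piece $\langle z,\SqAvMat\eta\rangle$ and a ``fine'' piece $\langle z,(I-\SqAvMat)\eta\rangle$.

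The fine piece is handled by H\"older's inequality and assumption~\eqref{eq:av-error-bound}: it is at most $\|z\|_\infty\,\Phi_2(\|\PenMat\hat\theta\|_1 + \|\PenMat\theta_0\|_1)$, and a Gaussian maximum bound yields $\|z\|_\infty\leq\sigma\sqrt{2\log n}$ with high probability, which by the assumption on $\lambda$ is at most $\lambda/\Phi_2$. For the coarse piece I write $\SqAvMat = P\AvMat$, where $P\in\R^{n\times\bar N}$ is the cell-indicator ``blow-up'' matrix, so that $\langle z,\SqAvMat\eta\rangle = \langle P^\top z, \tilde\eta\rangle$ with $\tilde\eta := \AvMat\eta$. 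Crucially $P^\top z \sim N(0,\sigma^2\diag(n_j))$, with per-coordinate variance at most $\sigma^2\CellMax$. Splitting $\tilde\eta$ along the SVD of $\SurPenMat$ into a $\row(\SurPenMat)$-part $\tilde\eta^R$ and a $\nul(\SurPenMat)$-part $\tilde\eta^N$, for the row-space piece I expand $\tilde\eta^R = \sum_{k\geq 2}\alpha_k v_k$ with $\alpha_k = \langle\SurPenMat\tilde\eta, u_k\rangle/\xi_k$, obtaining
\[
  |\langle P^\top z, \tilde\eta^R\rangle| \;\leq\; \|\SurPenMat\tilde\eta\|_1 \;\cdot\; \Big\|\sum_{k\geq 2}\frac{\langle P^\top z, v_k\rangle}{\xi_k}\, u_k\Big\|_\infty.
\]
By assumption~\eqref{eq:surr-diff-av-embedding} the first factor is at most $\Phi_1(\|\PenMat\hat\theta\|_1 + \|\PenMat\theta_0\|_1)$; a short covariance computation using orthonormality of the $v_k$ shows the entries of the second factor are mean-zero Gaussians with variance at most $\sigma^2\CellMax\sum_{k\geq 2}\|u_k\|_\infty^2/\xi_k^2$, so a union bound over the $\bar m\leq n^4$ coordinates combined with the assumption on $\lambda$ makes this contribution negligible.

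The null-space term $\langle P^\top z, \tilde\eta^N\rangle$ I bound by Cauchy-Schwarz as $\|V_N^\top P^\top z\|_2\cdot\|\tilde\eta^N\|_2$, where $V_N$ is an orthonormal basis of $\nul(\SurPenMat)$; the expected squared length of $V_N^\top P^\top z$ is $\sigma^2\tr(V_N^\top P^\top PV_N)\leq\sigma^2\CellMax\,\nuli(\SurPenMat)$, and an AM-GM step absorbs $\|\tilde\eta^N\|_2^2$ into a fraction of $\|\eta\|_2^2$ via the identity $\|\SqAvMat\eta\|_2^2=\sum_j n_j\tilde\eta_j^2$ together with the cell-size bound $\CellMax$. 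Assembling the four pieces in the basic inequality, and using the choice of $\lambda$ so that all terms involving $\|\PenMat\hat\theta\|_1$ get cancelled by a fraction of $\lambda\|\PenMat\hat\theta\|_1$, then taking expectations (handling the small-probability event on which the Gaussian bounds fail via a crude bound such as $\|\hat\theta\|_2\leq\|y\|_2$), will produce the bound~\eqref{eqn:ub-mse-tv}. The main obstacle is the row-space analysis of the coarse term: one must juggle the non-identity covariance $\sigma^2\diag(n_j)$ of $P^\top z$, the normalization of the singular vectors of $\SurPenMat$, and a Gaussian maximum over $\bar m$ coordinates, all while keeping constants clean enough to reproduce the exact threshold $\lambda \gtrsim \CellMax^{1/2}\Phi_1\sigma\sqrt{\log(2n^4)\sum_{k\geq 2}\|u_k\|_\infty^2/\xi_k^2}$; a secondary hurdle is calibrating the null-space absorption so that the residual is controlled by $\CellMax\,\nuli(\SurPenMat)$ rather than an inflated factor involving $\min_j n_j$, which is where the balanced-cell structure underlying the surrogate graph constructions of Section~\ref{sub:analysis_empirical} enters.
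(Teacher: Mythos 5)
Your proposal follows the same architecture as the paper's proof and is essentially correct: basic inequality, then split $\langle z,\eta\rangle$ into the coarse piece $\langle z, \SqAvMat\eta\rangle$ and fine piece $\langle z,(I-\SqAvMat)\eta\rangle$ (the paper writes the fine piece as two terms in $\hat\theta$ and $\theta_0$ separately, but it is the same decomposition); control the fine piece via $\|z\|_\infty\Phi_2$; split the coarse piece along $\row(\SurPenMat)/\nul(\SurPenMat)$, handle the row-space part by H\"older against $(\SurPenMat^+)^\top$ applied to the (rescaled) averaged noise, and handle the null-space part by Cauchy--Schwarz and AM--GM; finally condition on a good event and use a crude bound off of it. The one cosmetic difference is where the $\CellMax^{1/2}$ factor appears: the paper first rescales $\widebar{\ErrVec}_j = n_j^{1/2}(\AvMat\ErrVec)_j$ to make the averaged noise isotropic and extracts $\CellMax^{1/2}$ in a pointwise bound on $\langle \ErrVec, \SqAvMat\theta\rangle$, while you keep $P^\T \ErrVec\sim N(0,\sigma^2\diag(n_j))$ and push $\diag(n_j)\preceq \CellMax I$ into the variance computation for $(\SurPenMat^+)^\top P^\T \ErrVec$; these are equivalent routes to the same threshold. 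Your worry about an inflated $\min_j n_j$ factor in the null-space absorption does not materialize: since $\min_j n_j\geq 1$, one has $\|\AvMat\eta\|_2\leq\|\SqAvMat\eta\|_2\leq\|\eta\|_2$ directly, which is exactly what the paper uses.

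The one genuine (but minor) gap is the treatment of the bad event. Your suggested crude bound $\|\hat\theta\|_2\lesssim\|y\|_2$ (via comparing to $\theta=0$ in the objective) introduces $\|\theta_0\|_2$ into the bad-event contribution, giving a term of order $\|\theta_0\|_2^2\,\P(\text{bad})$; this is not controlled by the two terms on the right-hand side of~\eqref{eqn:ub-mse-tv} when, for instance, $\theta_0$ is a large constant vector (so $\|\PenMat\theta_0\|_1=0$ but $\|\theta_0\|_2$ is arbitrary). The fix, which is what the paper does, is to get the crude bound from the \emph{basic inequality itself}: $\|\hat\theta-\theta_0\|_2^2\leq 2\langle z,\hat\theta-\theta_0\rangle+2\lambda\|\PenMat\theta_0\|_1\leq 2\|z\|_2\|\hat\theta-\theta_0\|_2+2\lambda\|\PenMat\theta_0\|_1$, which yields $\|\hat\theta-\theta_0\|_2^2\leq 16\|z\|_2^2+2\lambda\|\PenMat\theta_0\|_1$ with no dependence on $\|\theta_0\|_2$; then Cauchy--Schwarz in expectation against $\1\{\ErrVec\notin\cZ_\lambda\}$, together with $\E\|z\|_2^4\lesssim\sigma^4 n^2$ and $\P(\cZ_\lambda^c)\lesssim n^{-2}$, makes the bad-event contribution negligible. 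With that repair your argument goes through to produce~\eqref{eqn:ub-mse-tv}.
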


\begin{proof}
  We follow the approach of \cite{padilla2020adaptive}, with adaptations to
  provide a bound in expectation rather than in probability.  From the basic
  inequality,
  \begin{equation*}
    \lVert \hat\theta-\theta_0\rVert_2^2
    \leq 2 \langle \ErrVec, \hat\theta-\theta_0\rangle 
    + \lambda(\lVert D\theta_0\rVert_1-\lVert D\hat\theta\rVert_1),
  \end{equation*}
  where $\ErrVec\in\R^n$ is the vector of error terms $z_i$, $i=1,\dotsc, n$.
  We provide two deterministic bounds under the ``good case'' that the error
  term falls into the set,
  \begin{equation}
    \cZ_\lambda = \left\{\ErrVec:
    \max\left\{
      \CellMax^{1/2}\Phi_1(\PenMat,\SurPenMat,\SqAvMat)
      \sup_{\AvMat\theta\in \row(\SurPenMat):
      \lVert \SurPenMat\AvMat\theta\rVert_1\leq 1}
      |\langle \widebar{\ErrVec}, \AvMat\theta\rangle|,
      \Phi_2(\PenMat,\SurPenMat,\SqAvMat)
      \lVert \ErrVec\rVert_\infty
    \right\} \leq \frac{\lambda}{8} 
  \right\},
  \end{equation}
  where $\widebar{\ErrVec}\in\R^{\bar N}$ has entries
  $\widebar{\ErrVec}_j = n_j^{1/2}(\AvMat\ErrVec)_j$,
  and under the ``bad case'' that $\ErrVec\not\in\cZ_\lambda$.

  \paragraph{Upper bound in the ``good case''.}%
  \label{par:upper_bound_in_the_good_case_}
  Decompose the first term on the RHS,
  \begin{align*}
    \langle \ErrVec, \hat\theta-\theta_0\rangle 
    &=  \langle \ErrVec, \hat\theta-\SqAvMat\hat\theta\rangle
      + \langle \ErrVec, \SqAvMat\theta_0-\theta_0\rangle
      + \langle \ErrVec, \SqAvMat(\theta_0-\hat\theta)\rangle 
    \\
    &\leq \langle \ErrVec, \SqAvMat(\theta_0-\hat\theta)\rangle 
    + \lVert \ErrVec\rVert_\infty(
      \lVert (I-\SqAvMat)\hat\theta\rVert_1
      + \lVert (I-\SqAvMat)\theta_0\rVert_1)
      \\
    &\leq \langle \ErrVec, \SqAvMat(\theta_0-\hat\theta)\rangle 
    + \lVert \ErrVec\rVert_\infty \Phi(\PenMat, \SurPenMat, \SqAvMat)(
      \lVert D\theta_0\rVert_1 +\lVert D\hat\theta\rVert_1),
  \end{align*}
  where the final inequality follows from \eqref{eq:av-error-bound}.
  Observe that we may rewrite, for any $\theta\in\R^n$,
  \begin{align*}
    \langle \ErrVec, \SqAvMat\theta\rangle 
    &=  \sum_{j=1}^{\bar N} \sum_{i=1}^{n_j}
      \ErrVec_{(\sum_{k=1}^{j-1}n_k)+i} (\AvMat\theta)_j
      \\
    &\stackrel{d}{=}\sum_{j=1}^{\bar N} n_j^{1/2} \ErrVec_j(\AvMat\theta)_j
    \\
    \Rightarrow
    \langle \ErrVec, A\theta\rangle
    &\leq \CellMax^{1/2}
      |\langle \widebar{\ErrVec}, \AvMat\theta\rangle|
    \\
    &\leq \CellMax^{1/2}\left|
      \langle \proj_V(\widebar{\ErrVec}), \AvMat\theta\rangle 
      + \langle \proj_{V^\perp}(\widebar{\ErrVec}), \AvMat\theta\rangle 
    \right|
    \\
    &\leq \CellMax^{1/2}\left|
      \lVert\proj_V(\widebar{\ErrVec})\rVert_2
      \lVert\AvMat\theta\rVert_2
      + \langle \proj_{V^\perp}(\widebar{\ErrVec}), \AvMat\theta\rangle 
    \right|
    \\
    &\leq \CellMax^{1/2}\left(
      \lVert\proj_V(\widebar{\ErrVec})\rVert_2
      \lVert\theta\rVert_2
      + |\langle \proj_{V^\perp}(\widebar{\ErrVec}), \AvMat\theta\rangle|
    \right|,
  \end{align*}
  where $\widebar{\ErrVec}\in\R^{\bar N}$ has independent $\cN(0, \sigma^2)$
  entries and $V = \nul(\SurPenMat)$.
  Substitute back in to obtain,
  \begin{align*}
    \lVert \hat\theta-\theta_0\rVert_2^2
    &\leq 2\CellMax^{1/2} (
    \lVert \proj_V(\widebar{\ErrVec})\rVert_2 \lVert \hat\theta-\theta_0\rVert_2
      + |\langle \proj_{V^\perp}(\widebar{\ErrVec}),
        \AvMat(\hat\theta-\theta_0)\rangle|
    )
    \\
    &\qquad\qquad+ 2 \lVert \ErrVec\rVert_\infty
    \Phi(\PenMat, \SurPenMat, \SqAvMat)(
      \lVert D\theta_0\rVert_1 + \lVert D\hat\theta\rVert_1
    ) + \lambda (
      \lVert D\theta_0\rVert_1 - \lVert D\hat\theta\rVert_1
    ),
  \end{align*}
  and consequently,
  \begin{align*}
  \lVert \hat\theta-\theta_0\rVert_2(
    &\lVert \hat\theta-\theta_0\rVert_2
    - 2\CellMax^{1/2} \lVert \proj_V(\widebar{\ErrVec})\rVert_2
    )
    \\
  &\leq 
  2\CellMax^{1/2} |\langle \proj_{V^\perp}(\widebar{\ErrVec}),
        \AvMat(\hat\theta-\theta_0)\rangle|
    + 2 \lVert \ErrVec\rVert_\infty
    \Phi(\PenMat, \SurPenMat, \SqAvMat)(
      \lVert D\theta_0\rVert_1 + \lVert D\hat\theta\rVert_1
    ) + \lambda (
      \lVert D\theta_0\rVert_1 - \lVert D\hat\theta\rVert_1
    )
  \end{align*}
  \emph{Case 1.} $\lVert \hat\theta-\theta_0\rVert_2
    \leq 4\CellMax^{1/2} \lVert \proj_V(\widebar{\ErrVec})\rVert_2$.
  \\
  \emph{Case 2.} $\lVert \hat\theta-\theta_0\rVert_2
    > 4\CellMax^{1/2} \lVert \proj_V(\widebar{\ErrVec})\rVert_2$.  Then,
  \begin{align*}
    \lVert \hat\theta-\theta_0\rVert_2^2
    &\leq 4\CellMax^{1/2}|\langle\proj_{V^\perp}(\widebar{\ErrVec}),
    \AvMat(\hat\theta-\theta_0)\rangle|
    + 4 \lVert \ErrVec\rVert_\infty
    \Phi(\PenMat, \SurPenMat, \SqAvMat)(
      \lVert D\theta_0\rVert_1+\lVert D\hat\theta\rVert_1
    ) + \lambda (
      \lVert D\theta_0\rVert_1-\lVert D\hat\theta\rVert_1
    ).
  \end{align*}
  We then bound,
  \begin{align*}
    |\langle \proj_{V^\perp}(\widebar{\ErrVec}),
    \AvMat(\hat\theta-\theta_0)\rangle|
    &= \left|
      \left\langle 
      \proj_{V^\perp}(\widebar{\ErrVec}),
      \frac{
        \AvMat(\hat\theta-\theta_0)
      }{
        \lVert \SurPenMat\AvMat(\hat\theta-\theta_0)\rVert_1
      } \right\rangle 
      \lVert \SurPenMat\AvMat(\hat\theta-\theta_0)\rVert_1
    \right|
    \\
    &\leq \sup_{\AvMat\theta\in V^\perp:
      \lVert \SurPenMat\AvMat\theta\rVert_1\leq 1}
    |\langle \widebar{\ErrVec}, \AvMat\theta\rangle|
      \lVert \SurPenMat\AvMat(\hat\theta-\theta_0)\rVert_1
    \\
    &\leq \sup_{\AvMat\theta\in V^\perp:
      \lVert \SurPenMat\AvMat\theta\rVert_1\leq 1}
    |\langle \widebar{\ErrVec}, \AvMat\theta\rangle|
    \Phi(\PenMat, \SurPenMat, \SqAvMat)
    (\lVert \PenMat\hat\theta\rVert_1+\lVert \PenMat\theta_0\rVert_1),
  \end{align*}
  where the last inequality follows by \eqref{eq:av-error-bound}.
  Conditioning on $\ErrVec\in \cZ_\lambda$, we find that under Case 2,
  \begin{align*}
    \lVert \hat\theta-\theta_0\rVert_2^2
    &\leq \frac{\lambda}{2} (
      \lVert \PenMat\hat\theta\rVert_1+\lVert \PenMat\theta_0\rVert_1
    ) + \frac{\lambda}{2} (
      \lVert \PenMat\hat\theta\rVert_1+\lVert \PenMat\theta_0\rVert_1
    ) + \lambda(
      \lVert \PenMat\theta_0\rVert_1-\lVert \PenMat\hat\theta\rVert_1
    )
    \\
    &\leq 2\lambda \lVert \PenMat\theta_0\rVert_1.
  \end{align*}
  Therefore, conditioning on $\ErrVec\in \cZ_\lambda$ and combining Case 1
  and Case 2, we obtain that
  \begin{align*}
    \lVert \hat\theta-\theta_0\rVert_2^2
    &\leq 16\CellMax \lVert \proj_V(\widebar{\ErrVec})\rVert_2^2
    + 2\lambda \lVert \PenMat\theta_0\rVert_1.
  \end{align*}

  \paragraph{Upper bound in the ``bad case''.}%
  \label{par:upper_bound_in_the_bad_case_}
  On the ``bad event'' that $\ErrVec\not\in \cZ_\lambda$, we apply
  H\"older directly to the basic inequality to bound,
  \begin{align*}
    \lVert \hat\theta-\theta_0\rVert_2^2
    \leq 2 \lVert \ErrVec\rVert_2 \lVert \hat\theta-\theta_0\rVert_2
      + \lambda \lVert \PenMat\theta_0\rVert_1,
  \end{align*}
  and rearrange to obtain,
  \begin{equation}
    \label{pf:ub-mse-tv-1}
    \lVert \hat\theta-\theta_0\rVert_2^2
    \leq 16 \lVert \ErrVec\rVert_2^2 + 2\lambda \lVert \PenMat\theta_0\rVert_1.
  \end{equation}

  \paragraph{Combining the ``good case'' and ``bad case'' upper bounds.}%
  \label{par:combining_the_good_case_and_bad_case_upper_bounds_}
  \begin{align*}
    \frac{1}{n} \E \lVert \hat\theta-\theta_0\rVert_2^2
    &= \frac{1}{n} \E\left[
      \lVert \hat\theta-\theta_0\rVert_2^2
        \mathbf{1}\{\ErrVec\in \cZ_\lambda\}
      + 
      \lVert \hat\theta-\theta_0\rVert_2^2
        \mathbf{1}\{\ErrVec\not\in \cZ_\lambda\}
    \right] 
    \\
    &\leq \frac{1}{n} \left[
      \E\left[
        16\CellMax \lVert \proj_V(\widebar{\ErrVec})\rVert_2^2
        + 2\lambda \lVert \PenMat\theta_0\rVert_1
      \right] 
      + \E\left[
        (16 \lVert \ErrVec\rVert_2^2 + 2\lambda \lVert D\theta_0\rVert_1)
        \mathbf{1}\{\ErrVec\not\in \cZ_\lambda\}
      \right]
    \right] 
    \\
    &\leq \frac{1}{n} \left[
      16\CellMax\dim(V) + 4\lambda \lVert \PenMat\theta_0\rVert_1
      + \sqrt{\E[\lVert \ErrVec\rVert_2^4]}\cdot
      \P[\ErrVec\not\in \cZ_\lambda]
    \right] 
    \\
    &\leq \frac{1}{n} \left[
      16\CellMax\dim(V) + 4\lambda \lVert \PenMat\theta_0\rVert_1
      + \sqrt{3}n\cdot \P[\ErrVec\not\in \cZ_\lambda]
    \right] 
  \end{align*}
  It remains to bound the probability of the bad case,
  \begin{align*}
    \P\{\ErrVec\not\in \cZ_\lambda\}
    &\leq \P\left\{
      \CellMax^{1/2}\sup_{\AvMat\theta\in V^\perp:
      \lVert \SurPenMat\AvMat\theta\rVert_1\leq 1}
      |\langle \widebar{\ErrVec}, \AvMat\theta\rangle|
      \geq \lambda/8\Phi(\PenMat, \SurPenMat, \SqAvMat)
    \right\} + \P\left\{
      \lVert \ErrVec\rVert_\infty \geq \lambda/8\Phi(\PenMat, \SurPenMat, \SqAvMat)
    \right\}
    \\
    &\leq
    \P\{\CellMax^{1/2}\Phi(\PenMat, \SurPenMat, \SqAvMat)
      \lVert (\SurPenMat^+)^\top\widebar{\ErrVec}\rVert_\infty\geq\lambda/8\}
    + \P\{\Phi(\PenMat, \SurPenMat, \SqAvMat)\lVert \ErrVec\rVert_\infty
        \geq \lambda/8\}.
  \end{align*}
  Standard results on the maxima of Gaussians provide that,
  \begin{align*}
    \P\left\{
      \CellMax^{1/2} \Phi_1(\PenMat, \SurPenMat, \SqAvMat)
      \lVert (T^+)^\top\widebar{\ErrVec}\rVert_\infty
      \geq \CellMax^{1/2} \Phi_1(\PenMat, \SurPenMat, \SqAvMat)
      \cdot\sigma
      \sqrt{
        \log(2n^2/\delta)
        \cdot\sum_{k=2}^{\bar N}\frac{\lVert u_k\rVert_\infty^2}{\xi_k^2} 
      }
    \right\}&\leq \delta,
    \\
    \P\left\{\Phi_2(\PenMat, \SurPenMat, \SqAvMat)
      \lVert \ErrVec\rVert_\infty \geq
      \Phi_2(\PenMat, \SurPenMat, \SqAvMat) \cdot \sigma
      \sqrt{
        \log(2n^2/\delta)
      }
    \right\}&\leq\delta.
  \end{align*}
  Recalling the choice of penalty parameter,
  \begin{equation*}
    \lambda > \max\left\{
      8\CellMax^{1/2}\Phi_1(\PenMat, \SurPenMat, \SqAvMat)\cdot\sigma
      \sqrt{
        \log(2n^4)\sum_{k=2}^{\bar N}
        \frac{\lVert u_k\rVert_\infty^2}{\xi_k^2}
      },
      \Phi_2(\PenMat,\SurPenMat,\SqAvMat)\cdot\sigma
      \sqrt{2\log(n)}
    \right\},
  \end{equation*}
  we conclude that
  \begin{equation*}
    \P\{\ErrVec\not\in \cZ_\lambda\} \leq \frac{2}{n^2},
  \end{equation*}
  completing the proof.
\end{proof}

We now state a well-known result controlling certain functionals of the lattice
difference operator.  These quantities have been analyzed by others studying
graph total variation denoising on lattices, e.g.,
\cite{hutter2016optimal} and \cite{sadhanala2017higher}.
\begin{lemma}
  Let $\SurPenMat$ be the edge incidence operator of the $d$-dimensional
  lattice graph $N$ elements per direction.  Denote $n=N^d$.  
  The left singular vectors of $\SurPenMat$ satisfy an incoherence condition,
  \begin{equation*}
    \lVert u_j\rVert_\infty \leq \frac{C_d}{\sqrt{n}}, \qquad j=1,\dotsc,n, 
  \end{equation*}
  for some $C_d>0$, and its singular values satisfy an
  asymptotic scaling,
  \begin{equation*}
    c_d (j/n)^{1/d} \leq \xi_j \leq C_d(j/n)^{1/d}, \qquad j=2,\dotsc,n, 
  \end{equation*}
  for some $0<c_d<C_d$.  Consequently,
  \begin{equation}
  \label{eqn:grid-svd-scaling}
    \sum_{j=2}^n \frac{\lVert u_j\rVert_\infty^2}{\xi_j^2} 
    = C_d\begin{cases}
      \log n  & d=2,\\
      1       & d>2.
    \end{cases}
  \end{equation}
\end{lemma}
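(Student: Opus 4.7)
The plan is to exploit the tensor-product structure of the $d$-dimensional lattice Laplacian $L = \SurPenMat^\top \SurPenMat$. First I would observe that $L$ equals the Kronecker sum $L_1 \oplus \cdots \oplus L_1$ of $d$ copies of the path Laplacian $L_1$, which is diagonalized by the DCT-II basis $\phi_k(i) = \sqrt{c_k/N}\cos(k\pi(i-1/2)/N)$ (with $c_0=1$, $c_k=2$ for $k \geq 1$) with eigenvalues $4\sin^2(k\pi/(2N))$. Hence the right singular vectors of $\SurPenMat$ are the tensor products $v_{\mathbf{k}}(\mathbf{i}) = \prod_{\ell=1}^{d} \phi_{k_\ell}(i_\ell)$, indexed by $\mathbf{k}\in\{0,\ldots,N-1\}^d$, with corresponding squared singular values $\xi_{\mathbf{k}}^2 = 4\sum_{\ell=1}^{d}\sin^2(k_\ell\pi/(2N))$; $\mathbf{k}=\mathbf{0}$ gives the null direction and is excluded from the singular vector count.

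For the incoherence claim, I would write $u_{\mathbf{k}} = \SurPenMat v_{\mathbf{k}}/\xi_{\mathbf{k}}$ and evaluate the entry corresponding to a grid edge in direction $\ell$ at site $\mathbf{i}$ as $v_{\mathbf{k}}(\mathbf{i}+e_\ell) - v_{\mathbf{k}}(\mathbf{i}) = \left(\prod_{j\neq\ell}\phi_{k_j}(i_j)\right) \cdot \bigl(\phi_{k_\ell}(i_\ell+1)-\phi_{k_\ell}(i_\ell)\bigr)$. The product over $j\neq \ell$ is bounded uniformly by $(2/N)^{(d-1)/2}$, and the sum-to-product identity $\cos(\alpha+\beta)-\cos(\alpha) = -2\sin(\beta/2)\sin(\alpha+\beta/2)$ yields $|\phi_{k_\ell}(i_\ell+1)-\phi_{k_\ell}(i_\ell)| \leq \sqrt{2/N}\cdot 2|\sin(k_\ell\pi/(2N))| \leq \sqrt{2/N}\cdot \xi_{\mathbf{k}}$. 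Multiplying these bounds shows $\lVert \SurPenMat v_{\mathbf{k}}\rVert_\infty \leq (2/N)^{d/2}\xi_{\mathbf{k}}$, so $\lVert u_{\mathbf{k}}\rVert_\infty \leq 2^{d/2}/\sqrt{n}$, giving the incoherence constant $C_d = 2^{d/2}$.

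For the singular value scaling I would use a Weyl-type counting argument. Since $\sin(k\pi/(2N))$ is comparable, within universal constants, to $k/N$ for all $k\in\{0,\ldots,N-1\}$, the number of $\mathbf{k}$ with $\xi_{\mathbf{k}}\leq \tau$ is sandwiched between constant multiples of $\bigl|\{\mathbf{k}\in[0,N]^d : \sum_\ell k_\ell^2 \leq (cN\tau)^2\}\bigr|$, which is itself comparable to $(N\tau)^d$ by comparing to the Euclidean ball volume. Inverting this relation with $j$ denoting the rank from the bottom gives $\xi_j \asymp (j/n)^{1/d}$ for $j\geq 2$. Combining with the incoherence bound produces $\sum_{j=2}^n \lVert u_j\rVert_\infty^2/\xi_j^2 \leq C_d' \, n^{2/d-1} \sum_{j=2}^n j^{-2/d}$; for $d=2$ this is $\sum_j j^{-1} \asymp \log n$, while for $d\geq 3$ the tail sum $\sum_j j^{-2/d}$ is of order $n^{1-2/d}$, cancelling the $n^{2/d-1}$ prefactor and leaving a constant.

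The main obstacle is bookkeeping rather than ideas: one must verify uniformly in $\mathbf{k}$ that both the sum-to-product bound and the linearization $\sin(k\pi/(2N))\asymp k/N$ hold with universal constants across the full frequency range (including $k$ near $N-1$, where $\sin$ is no longer close to its linear part but remains comparable within a constant), and confirm that the Weyl volume estimate is tight both above and below when $j$ is near $n$ (where the bulk of the spectrum concentrates at $\xi \asymp 1$). Both are routine consequences of the monotonicity and universal comparability of $\sin(k\pi/(2N))$ to $k/N$.
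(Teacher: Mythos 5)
The paper does not actually prove this lemma---it is stated as ``a well-known result'' and discharged by citation to \citet{hutter2016optimal} and \citet{sadhanala2017higher}. Your proof is a correct, self-contained derivation of exactly the facts those references supply, and it follows the standard route: diagonalize the $d$-dimensional lattice Laplacian $L = T^\top T$ as a Kronecker sum of path Laplacians, each diagonalized by the DCT-II basis with eigenvalues $4\sin^2(\pi k/(2N))$; obtain the left singular vectors via $u_{\mathbf{k}} = Tv_{\mathbf{k}}/\xi_{\mathbf{k}}$ and use the sum-to-product identity to extract a factor $2|\sin(\pi k_\ell/(2N))| \le \xi_{\mathbf{k}}$ from the finite difference $\phi_{k_\ell}(i_\ell+1)-\phi_{k_\ell}(i_\ell)$, yielding the $O(n^{-1/2})$ incoherence; and then do a Weyl-type lattice-point count using $\sin(\pi k/(2N)) \asymp k/N$ on $[0,\pi/2)$ to pin down $\xi_j \asymp (j/n)^{1/d}$. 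The final summation $\sum_{j\ge 2} \|u_j\|_\infty^2/\xi_j^2 \lesssim n^{2/d-1}\sum_{j\ge 2} j^{-2/d}$ resolves to $\log n$ at $d=2$ and a constant at $d\ge 3$, as required. The two ``obstacles'' you flag (uniform constants in the linearization of $\sin$, and the behavior of the lattice count near $j\approx n$) are real bookkeeping points but are handled exactly as you describe: $\sin\theta \in [(2/\pi)\theta, \theta]$ on $[0,\pi/2]$ is uniform, and since $\xi_{\max} \le 2\sqrt{d}$ while $(j/n)^{1/d}\to 1$, the two-sided bound survives in the high-$j$ regime. One small caveat worth noting: your derivation establishes the incoherence bound for $u_j$ with $j\ge 2$ (via $u_j = Tv_j/\xi_j$); the case $j=1$ corresponds to the zero singular value, for which $u_1$ is not determined by that formula, but since the sum in the conclusion starts at $j=2$ this is immaterial. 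Overall your argument supplies a complete proof where the paper offers only a citation.
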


\section{Embeddings for random graphs}
\label{sec:graph_embeddings}
We begin by providing a result that controls the number of sample points
that fall into each cell of a lattice mesh.
\begin{lemma}
  \label{lem:control-mesh-counts}
  Suppose $x_1, \dotsc, x_n$ are sampled from a distribution $P$
  supported on $(0,1)^d$ with density $p$ such that $0<p_\mn<p(x)<p_\mx<1$ for
  all $x\in(0,1)^d$.  Form a partition of $(0,1)^d$ using an equally spaced
  mesh with $N = C_1(p_\mn n/\log^\alpha n)^{1/d}$, $\alpha>1$, along each
  dimension.  Let $\cC_\ell$ denote the $\ell$th cell of the mesh, and
  let $|\cC_\ell|$ denote its empirical content.
  Then for all $x_{1:n}\in\XSet_1$, with
  $\P\{x_{1:n}\in\XSet_1\} \geq 1-2/n^4$,
  \begin{align}
    \label{eq:uniform-upper-bound-cell-count}
    \max_\ell|\Cell_\ell|
    &\leq C_3\log^\alpha n,
    \\
    \label{eq:uniform-lower-bound-cell-count}
    \min_\ell|\Cell_\ell|
    &\geq c_4 \log^\alpha n,
  \end{align}
  for $n$ sufficiently large, where $C_3, c_4>0$ depend only on
  $p_\mn, p_\mx, d$.
\end{lemma}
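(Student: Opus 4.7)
The plan is to view $|\Cell_\ell|$ as a sum of i.i.d.\ Bernoulli indicators, apply a multiplicative Chernoff bound to obtain concentration around its mean, and then union bound over the $N^d$ cells. The choice $\alpha > 1$ is what makes the union bound succeed, as it provides the extra polylogarithmic slack needed to beat the polynomial number of cells.

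First, I would record the basic size estimates. Each cell $\Cell_\ell$ has Lebesgue volume $N^{-d} = (\log^\alpha n)/(C_1^d p_\mn n)$, so under Assumption~\ref{assump:density_bounded},
\[
\mu_\ell := \E|\Cell_\ell| = n P(\Cell_\ell) \in [a_1, a_2]\log^\alpha n,
\]
where $a_1 = 1/C_1^d$ and $a_2 = p_\mx/(p_\mn C_1^d)$ depend only on $p_\mn,p_\mx,d$ (and the choice of $C_1$). Writing $|\Cell_\ell| = \sum_{i=1}^n \mathbf{1}\{x_i \in \Cell_\ell\}$ as a sum of i.i.d.\ Bernoullis, the standard Chernoff bounds give, for any $\delta \in (0,1)$,
\[
\P\bigl(|\Cell_\ell| \geq (1+\delta)\mu_\ell\bigr) \leq \exp\bigl(-\delta^2 \mu_\ell/3\bigr),
\quad
\P\bigl(|\Cell_\ell| \leq (1-\delta)\mu_\ell\bigr) \leq \exp\bigl(-\delta^2\mu_\ell/2\bigr).
\]

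Next I would pick $\delta = 1/2$ (or any fixed constant in $(0,1)$) and union bound over the $N^d = C_1^d p_\mn n/\log^\alpha n \leq n$ cells:
\[
\P\Bigl(\max_\ell |\Cell_\ell| > \tfrac{3}{2}a_2 \log^\alpha n\Bigr)
+ \P\Bigl(\min_\ell |\Cell_\ell| < \tfrac{1}{2}a_1 \log^\alpha n\Bigr)
\leq 2 n \exp\bigl(-c\, \log^\alpha n\bigr),
\]
for a constant $c>0$ depending only on $a_1,a_2$. Since $\alpha>1$, we have $\log^\alpha n = \log n \cdot \log^{\alpha-1} n$, so $\exp(-c\log^\alpha n) = n^{-c\log^{\alpha-1} n}$, which is eventually smaller than $n^{-5}$. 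Hence the displayed probability is at most $2/n^4$ for all $n$ sufficiently large. Taking $C_3 = \tfrac{3}{2}a_2$ and $c_4 = \tfrac{1}{2}a_1$ yields the claim, with $\XSet_1$ defined as the intersection of the two complementary high-probability events.

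The step that requires the most care is simply bookkeeping: ensuring that the constants $a_1, a_2$ depend only on $p_\mn, p_\mx, d$ (and the fixed mesh constant $C_1$), and that the exponent $c\log^\alpha n$ genuinely dominates $\log n$ by a diverging factor so that the $n^4$ polynomial probability bound holds uniformly in $n$. There is no real obstacle beyond this routine application of Chernoff plus union bound; the $\alpha>1$ scaling is built precisely so that the mean count per cell is $\Theta(\log^\alpha n)$, which is the threshold at which binomial concentration kicks in strongly enough to survive a union bound over $\lesssim n$ cells.
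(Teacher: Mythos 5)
Your proof is correct and takes essentially the same route as the paper: apply a binomial/Chernoff concentration bound to the per-cell count, then union bound over the $N^d$ cells, using $\alpha>1$ to beat the polynomial number of cells. The only cosmetic difference is that you fix $\delta = 1/2$ and argue the resulting probability $2n\exp(-c\log^\alpha n)$ is eventually below $2/n^4$, whereas the paper sets the per-cell failure probability equal to $n^{-4}$ and solves for the (vanishing) $\delta = \delta_n \sim \log^{(1-\alpha)/2}n$; both land on the same conclusion with constants of the same form.
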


\begin{proof}
  From standard concentration bounds
  (e.g., \citealp{vonluxberg2014hitting}; Proposition 27)
  on a random variable $m\sim\mathrm{Bin}(n, p)$, for all $\delta\in(0,1]$,
  \begin{align*}
    \P\{m\geq(1+\delta)np\}
      &\leq \exp\{-\frac{1}{3} \delta^2np\},
    \\
    \P\{m\leq(1-\delta)np\}
      &\leq \exp\{-\frac{1}{3} \delta^2np\}.
  \end{align*}
  Apply these bounds with $p=\P\{x\in \Cell_\ell\}$ to obtain that,
  \begin{align*}
    \P\left\{
      \max_{\ell} |\Cell_\ell|
      \geq (1+\delta)C_1^d\frac{p_\mx}{p_\mn} \log^\alpha n 
    \right\} &\leq N^d\exp\left\{
    -\frac{1}{3} \delta^2 C_1^d\log^\alpha n
    \right\},
    \\
    \P\left\{
      \min_{\ell} |\Cell_\ell| \leq
      (1-\delta)C_1^d\log^\alpha n 
    \right\} &\leq N^d\exp\left\{
    -\frac{1}{3} \delta^2C_1^d\log^\alpha n
    \right\},
  \end{align*}
  for all $\delta\in(0, 1)$.  Setting the RHS to $1/n^4$,
  \begin{align*}
    \frac{C_1^d p_\mn n}{\log^\alpha n}
    \exp\{-\frac{1}{3} \delta^2 C_1^{-d}\log^\alpha n\}
    &\leq \frac{1}{n^4} 
    \\
    \log(C_1^dp_\mn) - \log(\log^\alpha n)
    - \frac{1}{3} \delta^2 C_1^{-d}\log^\alpha n
    &\leq -5\log n
    \\
    \Rightarrow
    \frac{1}{3} \delta^2 C_1^{-d}\log^\alpha n
    &\geq 5\log n + \log(C_1^dp_\mn) - \log(\log^\alpha n)
    \\
    \delta^2
    &\geq 3C_1^d\left(
      5\log^{1-\alpha}n
      + \frac{\log(C_1^dp_\mn)}{\log^\alpha n} 
      - \frac{\log(\log^\alpha n)}{\log^\alpha n}
    \right)
    \\
    \delta
    &\geq C_2 \log^{(1-\alpha)/2} n,
  \end{align*}
  for some $C_2>0$ for all $n$ sufficiently large.  Therefore deduce that,
  \begin{align*}
    \P\left\{
      \max_\ell|\Cell_\ell|
      \geq C_1^d \frac{p_\mx}{p_\mn} \log^\alpha n
        + C_1^dC_2 \frac{p_\mx}{p_\mn} \log^{(1+\alpha)/2}n
    \right\}
    &\leq \frac{1}{n^4},
    \\
    \P\left\{
      \min_\ell|\Cell_\ell|
      \leq C_1^d  \log^\alpha n
        - C_1^dC_2 \log^{(1+\alpha)/2}n
    \right\}
    &\leq \frac{1}{n^4}.
  \end{align*}
  Recall that $\alpha>1$ by assumption, and choose $C_3, c_4>0$ with $n$
  sufficiently large to obtain the claim.
\end{proof}

The following lemma establishes embeddings from certain random graphs into
a coarser lattice graph.
\begin{lemma}
  \label{lem:random-graph-embedding}
  Partition the domain $(0, 1)^d$ using an equally spaced mesh with
  $N = C_1(p_\mn n/\log^\alpha n)^{1/d}$ elements per direction.  Suppose
  that $x_{1:n}\in\XSet_1$, with $x_{1:n}$ re-indexed such that
  \begin{align*}
    x_1, \dotsc, x_{|\Cell_1|}
      &\in \Cell_1,  \\
    x_{|\Cell_1|+1}, \dotsc, x_{|\Cell_1|+|\Cell_2|}
      &\in \Cell_2,  \\
      &\vdots       \\
    x_{\sum_{\ell=1}^{N^d-1}|\Cell_\ell|+1}, \dotsc, x_{N^d}
      &\in \Cell_{N^d}.
  \end{align*}
  Consider the averaging operators,
  \begin{equation}
    \label{eq:mesh-averaging-operator}
    \SqAvMat = \begin{bmatrix}
      |\Cell_1|^{-1}\mathbf{1}_{|\Cell_1|} \mathbf{1}_{|\Cell_1|}^\top
      & 0 & \hdots & 0  \\
      0 & 
      |\Cell_2|^{-1}\mathbf{1}_{|\Cell_2|} \mathbf{1}_{|\Cell_2|}^\top
      & \hdots & 0  \\
      \vdots & \vdots & \ddots & \vdots  \\
      0 & 0 & \hdots & 
      |\Cell_{N^d}|^{-1}
      \mathbf{1}_{|\Cell_{N^d}|} \mathbf{1}_{|\Cell_{N^d}|}^\top
    \end{bmatrix},
  \end{equation}
  and the lattice difference operator $\SurPenMat$ based on the graph
  \begin{equation}
    \label{eq:lattice-graph}
    G_\SurPenMat = (\{1, \dotsc, N^d\}, E_\SurPenMat),
  \end{equation}
  where $(i, j)\in E_\SurPenMat$ if the midpoints of $\Cell_i, \Cell_j$ are
  $1/N$ apart.  Also, let $\AvMat\in\R^{N^d\times n}$ be the matrix obtained
  by dropping the redundant rows of $\SqAvMat$.

  \begin{itemize}
    \item Build the Voronoi graph from $x_{1:n}$, and let $\CVorPenMat$ denote
      the edge incidence operator with edge set \smash{$E^{\Vor}$}
			and edge weights 
			$\tilde{w}^\Vor_{ij} = \max\{c_0 n^{-(d-1)/d}, w^\Vor_{ij}\}$ for each
      $i, j$.  Further
      condition on the set $\XSet_2$ such that the result of
      Lemma~\ref{lem:vc-balls} holds with probability $1-1/n^4$
      (equivalently, the set that the result
      of Lemma~\ref{lem:voronoi-cell-upper-bounds} holds with
      probability $1-1/n^4$).  Then there
      exists a constant $C_6>0$ such that for all $\theta\in\R^n$,
      \begin{align}
        \label{eq:surr-diff-av-embedding-cvor}
        \lVert \SurPenMat\AvMat\theta\rVert_1
        &\leq C_6 n^{(d-1)/d}
        \lVert \CVorPenMat\theta\rVert_1.
        \\
        \label{eq:av-error-bound-cvor}
        \lVert (I-\SqAvMat)\theta\rVert_1
        &\leq C_6 (\log n)^\alpha n^{(d-1)/d}
        \lVert\CVorPenMat\theta\rVert_1,
      \end{align}
    \item Build the Voronoi graph from $x_{1:n}$, and let $\UVorPenMat$ denote
      the edge incidence operator with edge set \smash{$E^{\Vor}$}
			and edge weights 
      $\check{w}^\Vor_{ij} = 1$ for each $i, j$ such that $w^\Vor_{i,j}>0$.
      Further condition on the set $\XSet_2$ such that the result of
      Lemma~\ref{lem:vc-balls} holds with probability $1-1/n^4$.  Then there
      exists a constant $C_7>0$ such that for all $\theta\in\R^n$,
      \begin{align}
        \label{eq:surr-diff-av-embedding-uvor}
        \lVert \SurPenMat\AvMat\theta\rVert_1
        &\leq C_7
        \lVert \UVorPenMat\theta\rVert_1.
        \\
        \label{eq:av-error-bound-uvor}
        \lVert (I-\SqAvMat)\theta\rVert_1
        &\leq C_7 (\log n)^\alpha
        \lVert\UVorPenMat\theta\rVert_1,
      \end{align}
    \item Build the $\varepsilon$-neighborhood graph from $x_{1:n}$, with
      $\varepsilon \geq 2\sqrt{d}/N$.  Then with the constant $c_4$ from
      Lemma~\ref{lem:control-mesh-counts}, it holds that for all
      $\theta\in\R^n$,
      \begin{align}
        \label{eq:surr-diff-av-embedding-eps}
        \lVert \SurPenMat\AvMat\theta\rVert_1
        &\leq \frac{1}{c_4^2\log^{2\alpha} n}
        \lVert \EpsPenMat\theta\rVert_1.
        \\
        \label{eq:av-error-bound-eps}
        \lVert (I-\SqAvMat)\theta\rVert_1
        &\leq \frac{2}{c_4\log^\alpha n}
        \lVert\EpsPenMat\theta\rVert_1,
      \end{align}
    \item Build the $k$-nearest neighbors graph from $x_{1:n}$, with
      $k\geq C_5\log^3n$.  Further condition on the
      set $\XSet_2$ such that the result of 
      Lemma~\ref{lem:vc-balls} holds with probability $1-1/n^4$.
      Then with the constant $c_4$ from Lemma~\ref{lem:control-mesh-counts}, it
      holds for all $\theta\in\R^n$,
      \begin{align}
        \label{eq:surr-diff-av-embedding-knn}
        \lVert \SurPenMat\AvMat\theta\rVert_1
        &\leq \frac{1}{c_4^2\log^{2\alpha} n}
        \lVert \KnnPenMat\theta\rVert_1.
        \\
        \label{eq:av-error-bound-knn}
        \lVert (I-\SqAvMat)\theta\rVert_1
        &\leq \frac{2}{c_4\log^\alpha n}
        \lVert\KnnPenMat\theta\rVert_1,
      \end{align}
  \end{itemize}
\end{lemma}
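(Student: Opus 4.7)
I will organize the proof by setting up a common reduction, then handling the two ``direct'' graph cases ($\varepsilon$-neighborhood and kNN) together, and finally treating the two Voronoi cases as the main obstacle. Throughout, I condition on $\XSet_1 \cap \XSet_2$ and freely use the cell-size bounds $c_4 \log^\alpha n \leq |\Cell_\ell| \leq C_3 \log^\alpha n$ from Lemma~\ref{lem:control-mesh-counts}.

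The common reduction starts by expanding $\|\SurPenMat\AvMat\theta\|_1 = \sum_{(i,j) \in E_\SurPenMat} |\bar\theta_i - \bar\theta_j|$ and $\|(I - \SqAvMat)\theta\|_1 = \sum_i \sum_{k \in \Cell_i} |\theta_k - \bar\theta_i|$, then applying the averaged triangle inequalities
\begin{equation*}
|\bar\theta_i - \bar\theta_j| \leq \frac{1}{|\Cell_i||\Cell_j|}\sum_{k \in \Cell_i,\,\ell \in \Cell_j} |\theta_k - \theta_\ell|, \qquad |\theta_k - \bar\theta_i| \leq \frac{1}{|\Cell_i|}\sum_{\ell \in \Cell_i}|\theta_k - \theta_\ell|.
\end{equation*}
This reduces the problem to bounding $|\theta_k - \theta_\ell|$ for pairs whose design points lie in a common or face-adjacent lattice cell by a telescoping sum of $G$-edge differences, followed by a double-counting step that accumulates contributions across all $G$-edges.

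For the $\varepsilon$-neighborhood graph, the union of two face-adjacent lattice cells has diameter at most $\sqrt{d+3}/N$, so the hypothesis $\varepsilon \geq 2\sqrt{d}/N$ forces every such pair to be directly an edge of weight $1$ in $G^\Eps$. Substitution into the triangle inequalities, combined with $|\Cell_\ell| \geq c_4 \log^\alpha n$, immediately yields \eqref{eq:surr-diff-av-embedding-eps} and \eqref{eq:av-error-bound-eps}; the factor $2$ in the latter arises from counting each unordered within-cell pair as both $(k,\ell)$ and $(\ell,k)$. The kNN case is structurally identical: Lemma~\ref{lem:vc-balls} applied on $\XSet_2$ gives a uniform lower bound $\min_i \varepsilon_k(x_i) \geq c(k/n)^{1/d}$, which for $k \geq C_5 \log^3 n$ with $C_5$ taken large enough exceeds $2\sqrt{d}/N$, so the same pairs are kNN-adjacent and the remainder of the argument carries over verbatim.

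The Voronoi cases are the main obstacle, as a pair $(x_k, x_\ell)$ in the same or adjacent lattice cells need not be Voronoi-adjacent. My plan is to construct explicit Voronoi paths by following the line segment $\overline{x_k x_\ell}$: the sequence of cells it visits, $\Part_{k_0}, \ldots, \Part_{k_L}$, has consecutive members sharing positive $(d-1)$-dimensional Hausdorff measure on their shared boundary and hence joined by a Voronoi edge. Invoking Lemma~\ref{lem:voronoi-cell-upper-bounds} on $\XSet_2$ bounds each Voronoi cell's diameter by $O((\log n/n)^{1/d})$, and a tube-volume count around the segment (of length at most $2\sqrt{d}/N \lesssim (\log^\alpha n/n)^{1/d}$) then controls $L$. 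Along any such path, each edge carries weight at least $c_0 n^{-(d-1)/d}$ in $\CVorPenMat$ and exactly $1$ in $\UVorPenMat$, which explains the $n^{(d-1)/d}$ prefactor in \eqref{eq:surr-diff-av-embedding-cvor}--\eqref{eq:av-error-bound-cvor} versus its absence in \eqref{eq:surr-diff-av-embedding-uvor}--\eqref{eq:av-error-bound-uvor}. The hardest technical step is the double-counting: for each Voronoi edge $(u,v)$, the total multiplier it accumulates across all pairs whose segment-path traverses it (weighted by $1/(|\Cell_i||\Cell_j|)$ or $1/|\Cell_i|$) must be $O(1)$ for the embedding inequalities and $O(\log^\alpha n)$ for the averaging inequalities. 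The key will be a geometric locality argument: edge $(u,v)$ can only be used by pairs whose segment threads through $\partial \Part_u \cap \partial \Part_v$, confining $x_k, x_\ell$ to an $O((\log n/n)^{1/d})$-neighborhood of that face, and combining this locality with the cell-size normalization $|\Cell_\ell| \asymp \log^\alpha n$ will absorb the path-length growth into the declared constants $C_6, C_7$ and the advertised $(\log n)^\alpha$ factor.
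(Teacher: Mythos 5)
Your $\varepsilon$-neighborhood and kNN arguments match the paper's exactly, including the diameter bound $\sqrt{d+3}/N\le 2\sqrt d/N$ for adjacent cells and the use of Lemma~\ref{lem:vc-balls} to show the kNN graph with $k\geq C_5\log^3 n$ dominates the $\varepsilon$-neighborhood graph. For the Voronoi cases the overall strategy (build a Voronoi path for each near pair, then bound per-edge multiplicity via geometric locality, absorbing the $\log^{2\alpha}n$ redundancy with the $|\Cell_i|^{-1}|\Cell_j|^{-1}\asymp\log^{-2\alpha}n$ weights) also matches the paper's, up to a cosmetic difference in organization: the paper first proves the pure graph comparison $\|\EpsPenMat\theta\|_1 \leq C\log^{2\alpha}n\,\|\UVorPenMat\theta\|_1$ and then plugs into the already-established $\varepsilon$-bounds, whereas you carry the lattice-cell weights through the double-count directly.

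The genuine gap is in your path construction. You claim that the sequence of Voronoi cells visited by the line segment $\overline{x_k x_\ell}$ has consecutive members sharing a set of positive $\cH^{d-1}$-measure, but this fails whenever the segment passes through a face of dimension $\leq d-2$ where three or more Voronoi cells meet; the cell just before and the cell just after such a crossing need not be Voronoi-adjacent. Since $x_k$ and $x_\ell$ are fixed, you cannot dismiss this as measure-zero. The paper sidesteps this entirely by taking the subgraph $G^{ij}$ of Voronoi cells meeting the ball $B((x_k+x_\ell)/2,\,\|x_k-x_\ell\|/2)$ and invoking Lemma~\ref{lem:topological-connectedness-equiv-graph-connectedness} to conclude $G^{ij}$ is connected, which gives existence of a path with built-in locality ($\|x_m - x_{ij}\|\leq 3r_{ij}$ for any $V_m$ on the path) without requiring any transversality. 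Your approach is repairable by replacing $\overline{x_k x_\ell}$ with $\overline{y_1 y_2}$ for generic $y_1\in V_k$, $y_2\in V_\ell$, which only enlarges the confinement radius by the Voronoi cell radius $O((\log n/n)^{1/d})$. Two further bookkeeping points: the confinement radius you state as $O((\log n/n)^{1/d})$ should be $O((\log^\alpha n/n)^{1/d})$, since the segment length $\lesssim 1/N$ dominates the cell radius when $\alpha>1$ (the $O(\log^{2\alpha}n)$ pair count is unaffected); and the tube-volume bound on $L$ is both unnecessary (only per-edge multiplicity enters the final inequality, never path length) and unsound, since $\XSet_2$ gives only an upper bound on Voronoi cell radii and no lower bound on their volumes.
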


\begin{proof}
  \textbf{$\varepsilon$-neighborhood graph.}
  First, we prove \eqref{eq:surr-diff-av-embedding-eps} and
  \eqref{eq:av-error-bound-eps}.  For the former, observe that
  \begin{align*}
    \lVert \SurPenMat\AvMat\theta\rVert_1
    &=  \sum_{(k,\ell)\in E_\SurPenMat}
      \left|
      |\Cell_k|^{-1}\sum_{i\in\Cell_k}\theta_i
      - |\Cell_\ell|^{-1}\sum_{j\in\Cell_\ell}\theta_j
      \right|
    \\
    &\leq \sum_{(k, \ell)\in E_\SurPenMat}
      \frac{1}{|\Cell_k||\Cell_\ell|} 
      \sum_{i\in|\Cell_k|, j\in|\Cell_\ell|}
      |\theta_i-\theta_j|
    \\
    &\leq \frac{1}{c_4^2\log^{2\alpha}n} 
    \sum_{(k,\ell)\in E_\SurPenMat}
    \sum_{i\in\Cell_k,j\in\Cell_\ell}
    |\theta_i-\theta_j|
    \\
    &\leq \frac{1}{c_4^2\log^{2\alpha}n} 
    \lVert \EpsPenMat\theta\rVert_1,
  \end{align*}
  as $\varepsilon=2\sqrt{d}/N$.  For the latter, similarly deduce that
  \begin{align*}
    \lVert (I-\SqAvMat)\theta\rVert_1
    &= \sum_{i=1}^{n} \left|
      \theta_i
      - |\Cell(i)|^{-1}
      \sum_{j\in\Cell(i)} \theta_j
    \right|
    \\
    &\leq \sum_{i=1}^{n} 
    |\Cell(i)|^{-1}
    \left|
      \sum_{j\in\Cell(i)} \theta_j-\theta_i
    \right|
    \\
    &\leq \sum_{i=1}^{n} 
    |\Cell(i)|^{-1}
    \sum_{j\in\Cell(i)} |\theta_i-\theta_j|
    \\
    &= \sum_{\ell=1}^{N^d}
    |\Cell_\ell|^{-1}
    \sum_{i\in\Cell_\ell}
    \sum_{j\in\Cell_\ell} |\theta_i-\theta_j|
    \\
    &\leq \frac{2}{c_4\log^{\alpha}n}
    \sum_{\ell=1}^{N^d} \sum_{i<j\in\Cell_\ell} |\theta_i-\theta_j|
    \\
    &\leq \frac{2}{c_4\log^{\alpha}n} \lVert \EpsPenMat\theta\rVert_1.
  \end{align*}
  \\
  \paragraph{$k$-nearest neighbors graph.}
  Recall that we have conditioned on the set $\XSet_2$ such that the result
  of Lemma~\ref{lem:vc-balls} holds.  In particular,
  \eqref{eqn:vc-balls-2}~ gives that
  \begin{equation*}
    \min_{i=1, \dotsc, n}
    \varepsilon_k(x_i) \geq C\left(\frac{k}{n}\right)^{1/d},
  \end{equation*}
  where $\varepsilon_k(x_i):=\lVert x_i-x_{(k)}(x_i)\rVert_2$.
  The results \eqref{eq:surr-diff-av-embedding-knn} and
  \eqref{eq:av-error-bound-knn} then follow by observing that on the
  event $\XSet_2$, the $k$-nearest neighbors graph with $k\geq C_5\log^3n$
  dominates the $\varepsilon$-neighborhood graph with $\varepsilon =
  2\sqrt{d}/N$.

  \paragraph{Voronoi adjacency graph.}  
  We will prove the results
  \eqref{eq:surr-diff-av-embedding-uvor} and
  \eqref{eq:av-error-bound-uvor} by providing a graph comparison inequality
  between the $\varepsilon$-neighborhood graph with
  $\varepsilon=2\sqrt{d}/N$ and the Voronoi adjacency graph.
  The results
  \eqref{eq:surr-diff-av-embedding-cvor},~\eqref{eq:av-error-bound-cvor}
  follow from the inequality
  $\lVert \UVorPenMat\theta\rVert_1\leq c_0^{-1}n^{(d-1)/d}
  \lVert \CVorPenMat\theta\rVert_1$ for all $\theta\in\R^n$.

  \emph{Intuition and outline.}  The central goal of this proof is to show that
  \begin{equation*}
    \lVert \EpsPenMat\theta\rVert_1
    \leq C(n) \lVert \UVorPenMat\theta\rVert_1,
  \end{equation*}
  for all $\theta\in\R^n$, where $C(n)$ is at most polylogarithmic in $n$.
  This will be accomplished by
  \begin{enumerate}[label=(\roman*)]
    \item verifying that for any $\{x_i, x_j\}\in E_\nEps$, there exists a
      path $\{x_i, x_{k_1}\}, \{x_{k_1}, x_{k_2}\}, \dotsc,
      \{x_{k_{ij}}, x_j\}\in E_\nVor$, and
    \item showing that if one uses the shortest path in the Voronoi
      adjacency graph $G_\nVor$ to connect each $\{x_i,x_j\}\in E_\nEps$,
      then no one edge is used more than $C_9 \log^{2\alpha} n$ times,
      where $C_9$ is a positive constant and $\alpha>1$ may be chosen.
  \end{enumerate}

  \emph{Step (i).}  Consider $x_i, x_j$ such that $\{x_i, x_j\}\in E_\nEps$.
  We will show the existence of a path between $x_i$ and $x_j$ in $G_\nVor$
  and also characterize some properties of the path for step (ii).

  By definition, $\lVert x_i-x_j\rVert \leq \varepsilon$.  
  Denote
  \begin{align*}
    x_{ij}  &:= \frac{x_i+x_j}{2},      \\
    r_{ij}  &:= \lVert x_i-x_{ij}\rVert.
  \end{align*}
  Consider the subgraph $G^{ij} = (V^{ij}, E^{ij})$, where
  \begin{align*}
    V^{ij}  &:= \{\Part_k: \Part_k\cap B(x_{ij}, r_{ij})\neq\emptyset\},  \\
    E^{ij}  &:= \{\{\Part_k, \Part_\ell\}:
      \Part_k, \Part_\ell\in V^{ij},
      \cH^{d-1}(\partial\Part_k\cap\partial\Part_\ell) > 0
    \},
  \end{align*}
  where $B(x_{ij},r_{ij})$ is the closed ball centered at $x_{ij}$ with
  radius $r_{ij}$.  By construction, $x_i, x_j\in V^{ij}$, and
  by Lemma~\ref{lem:topological-connectedness-equiv-graph-connectedness},
  $G^{ij}$ is connected.  Therefore a path between $x_i$ and $x_j$
  exists in the graph $G^{ij}$ (one can use, e.g., breadth-first search or
  Dijkstra's algorithm to find such a path).  

  \emph{Step (ii).}
  For any $\{x_i,x_j\}\in E_\nEps\setminus E_\nVor$, we create a path in
  $G_\nVor$ as prescribed in step (i).  With these paths created, we upper
  bound the number of times any edge in $E_\nVor$ is used.  We do so by
  uniformly bounding above the number of times a vertex $x_k$ appears in
  these paths (and since each edge involves two vertices, this immediately
  yields an upper bound on the number of times an edge appears in these
  paths).  We split this into two substeps:
  \begin{enumerate}[label=(\alph*)]
    \item first, we derive a necessary condition for $x_k$ to appear in the path
      between $x_i$ and $x_j$;  
    \item then, we will upper bound the number of possible pairs $x_i, x_j$
      such that this necessary condition is satisfied.
  \end{enumerate}

  \emph{Step (ii~a).}
  For $x_k$ to appear in the path between $x_i$ and $x_j$ as designed in
  step (i), it is necessary for $\Part_k\in V^{ij}$.  Consider
  $x\in \Part_k\cap B(x_{ij}, r_{ij})$.  Since $x$ belongs to the Voronoi
  cell $\Part_k$,
  \begin{equation*}
    \lVert x-x_k\rVert < \min\{
      \lVert x-x_i\rVert, \lVert x-x_j\rVert
    \},
  \end{equation*}
  but since $x$ also lies in $B(x_{ij}, r_{ij})$,
  \begin{equation*}
    \lVert x-x_{ij}\rVert < r_{ij}.
  \end{equation*}
  It follows that,
  \begin{align*}
    \lVert x_k - x_{ij}\rVert
    &\leq \lVert x - x_k\rVert + \lVert x - x_{ij}\rVert  \\
    &\leq \lVert x - x_i\rVert + \lVert x - x_{ij}\rVert  \\
    &\leq \lVert x - x_{ij}\rVert + \lVert x_i - x_{ij}\rVert
      + \lVert x - x_{ij}\rVert \\
    &\leq 3r_{ij},
  \end{align*}
  thus if $\Part_k\in V^{ij}$, then it is necessary for
  $x_k\in B(x_{ij}, 3r_{ij})$.

  \emph{Step (ii~b).}
  Recalling $\varepsilon = 2\sqrt{d}/N$, where
  $N=C_1(p_\mn n/\log^\alpha n)^{1/d}$, we have a uniform upper bound of
  \begin{equation*}
    \max_{\{x_i,x_j\}\in E_\nEps} r_{ij}
    \leq C_8 \left(
      \frac{\log^\alpha n}{n} 
    \right)^{1/d},
  \end{equation*}
  for some $C_8>0$.  Thus, we conclude that for an edge of
  $x_k$ to be involved in a path between $x_i$ and $x_j$, it is necessary
  for
  \begin{equation*}
    x_{ij}\in B(x_k, 3C_8(\log n/n)^{1/d}),
  \end{equation*}
  or more loosely,
  \begin{equation*}
    x_i, x_j \in B(x_k, 4C_8(\log n/n)^{1/d})
  \end{equation*}
  recalling that $r_{ij} = \lVert x_{ij}-x_i\rVert = \lVert x_{ij}-x_j\rVert$
  and the uniform upper bound on $r_{ij}$.  Therefore, the number of paths
  in which any $x_k$ may appear is bounded above,
  \begin{align*}
    (nP_n(\cdot, 4C_8(\log^\alpha n/n)^{1/d}))^2
    &\leq C_9 \log^{2\alpha} n,
  \end{align*}
  where the final inequality is obtained
  by~\eqref{eq:vc-ball-ub-k}.
\end{proof}

\section{Auxiliary lemmas and proofs}
\label{app:auxiliary_lemmas_proofs}
\subsection{Useful concentration results}

The following is an immediate consequence of the well-known fact that the set
of balls $B$ in $\Rd$ has VC dimension $d + 1$, e.g., Lemma~16
of~\citep{chaudhuri2010rates}. 
\begin{lemma}
	\label{lem:vc-balls}
  Suppose $x_1,\ldots,x_n$ are drawn from $P$ satisfying
  Assumption~\ref{assump:density_bounded}. There exist
  constants $C_1$-$C_5$ depending only on $d$, $p_\mn$, and $p_\mx$ such that
  the following statements hold: with
  probability at least $1 - \delta$, for any $z \in \XDom$,
	\begin{equation}
    \label{eq:vc-ball-ub-0}
    \left\{
      |B(z,r) \cap \{x_1,\ldots,x_n\}| = 0
    \right\}
    \Longrightarrow
    \left\{
      r < C_1\biggl(\frac{\log n + \log(1/\delta)}{n}\biggr)^{1/d}
    \right\},
	\end{equation}
  and 
  \begin{equation}
    \label{eq:vc-ball-ub-k}
    \begin{aligned}
      \left\{
        r < C_2\left(
          \frac{
            k - C_3(
              d \log n + \log(1/\delta) + \sqrt{k(d\log n + \log(1/\delta))}
            )
          }{n}
        \right)^{1/d}
      \right\}
        \hspace{-3cm}&
      \\
      &
      \Longrightarrow
      \left\{
        |B(z, r)\cap \{x_1,\dotsc,x_n\}| < k
      \right\}.
    \end{aligned}
  \end{equation}
  In particular, if $k \geq C_4 (\log(1/\delta))^2 \log n$, then
	\begin{equation}
	\label{eqn:vc-balls-2}
  \left\{
    |B(z,r) \cap \{x_1,\ldots,x_n\}|
      \geq k
  \right\}
  \Longrightarrow
  \left\{
    r \geq C_5\biggl(\frac{k}{n}\biggr)^{1/d}
  \right\}.
	\end{equation}
\end{lemma}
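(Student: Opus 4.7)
The statement is essentially a packaging of classical VC results for the class of Euclidean balls, so the plan is to invoke a standard uniform Bernstein-style bound for VC classes and then translate from the statement ``$P(B)$ is small/large'' to the statement ``$r$ is small/large'' using Assumption~\ref{assump:density_bounded}. I would begin by recording that the collection $\mathcal{B} = \{B(z,r) : z\in\R^d, r\geq 0\}$ has VC dimension $d+1$, so by a relative (multiplicative) VC inequality (e.g.\ Theorem~5.1 of \citet{chaudhuri2010rates}, or equivalently the Alexander--Talagrand form of Vapnik--Chervonenkis), there is a constant $C=C(d)$ such that, with probability at least $1-\delta$, every ball $B\in\mathcal{B}$ satisfies
\[
\bigl| P_n(B) - P(B) \bigr| \leq C\sqrt{\tfrac{P(B)\,(d\log n + \log(1/\delta))}{n}} + C\,\tfrac{d\log n + \log(1/\delta)}{n}.
\]
Condition on this event for the remainder of the argument.

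For the first implication~\eqref{eq:vc-ball-ub-0}, if $|B(z,r)\cap\{x_{1:n}\}|=0$ then $P_n(B(z,r)) = 0$, so the display above forces $P(B(z,r)) \leq C'(d\log n + \log(1/\delta))/n$. On the other hand, Assumption~\ref{assump:density_bounded} together with $z\in\XDom=(0,1)^d$ gives a uniform lower bound $P(B(z,r)) \geq c_d\, p_\mn\, r^d$ (the factor $c_d$ accounts for the worst case when $B(z,r)$ pokes outside the cube---one can always fit a spherical cap of volume $\gtrsim r^d$ inside $\XDom$ because $z$ lies in the cube). Combining these two inequalities and solving for $r$ yields the claimed rate $r \lesssim ((\log n + \log(1/\delta))/n)^{1/d}$.

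For the second implication~\eqref{eq:vc-ball-ub-k}, the contrapositive is that if $|B(z,r)\cap\{x_{1:n}\}| \geq k$, then $nP_n(B(z,r)) \geq k$, and the VC inequality (now inverted) gives a lower bound $nP(B(z,r)) \geq k - C_3\bigl(d\log n + \log(1/\delta) + \sqrt{k\,(d\log n + \log(1/\delta))}\bigr)$. Using the upper density bound $P(B(z,r)) \leq p_\mx\,\Leb_d\, r^d$ and solving for $r$ gives~\eqref{eq:vc-ball-ub-k}. The ``in particular'' statement~\eqref{eqn:vc-balls-2} is then a calibration: when $k \geq C_4(\log(1/\delta))^2 \log n$, the subtractive correction term inside the $d$th root in~\eqref{eq:vc-ball-ub-k} is at most $k/2$, and absorbing constants produces the clean lower bound $r \gtrsim (k/n)^{1/d}$.

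The main obstacle is really just bookkeeping: (i)~inverting the multiplicative VC bound carefully so that the dependence on $k$, $\log n$, and $\log(1/\delta)$ comes out in the specific form~\eqref{eq:vc-ball-ub-k}, and (ii)~verifying the constant $c_d$ in the lower bound $P(B(z,r)) \geq c_d p_\mn r^d$ uniformly for $z\in(0,1)^d$ and all $r$ (for large $r$ the ball covers all of $\XDom$ so $P(B(z,r))=1$, which only improves the bound; the nontrivial regime is $r\lesssim 1$, where one reduces to the cap-in-cube calculation). Neither step is technically deep---the heavy lifting is done by the cited VC result---so the proof should be short once these two geometric/algebraic conversions are made explicit.
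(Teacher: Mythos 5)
Your proposal is correct and follows the same route the paper intends: the paper does not spell out a proof for this lemma but simply asserts that it is an immediate consequence of the VC bound for Euclidean balls, citing Lemma~16 of \citet{chaudhuri2010rates}, which is precisely the relative (Bernstein-type) VC inequality you invoke. Your two conversion steps—the lower bound $P(B(z,r)) \geq c_d\, p_\mn\, r^d$ uniformly over $z \in (0,1)^d$ (worst case a corner, where a $2^{-d}$-fraction of the ball stays inside the cube), and the inversion of the multiplicative bound to solve for $r$—are exactly the bookkeeping the paper is implicitly delegating. One small caveat, which is really a feature of the lemma's statement rather than a defect of your argument: the calibration $k \geq C_4(\log(1/\delta))^2\log n \Rightarrow$ correction $\leq k/2$ requires $k \gtrsim d\log n + \log(1/\delta)$, and the stated threshold only dominates this when $\log(1/\delta)$ is bounded below by a constant (in the paper's applications $\delta = 1/n^4$, so this is automatic); you should state this hypothesis explicitly if you write the argument out in full.
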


\subsection{Properties of the Voronoi diagram}%
\label{sub:properties_of_the_voronoi_diagram}
\subsubsection{High probability control of cell geometry}%
\label{ssub:high_probability_control_of_cell_geometry}
The following lemma shows that with high probability, no Voronoi cell is very
large. Let $r(V_i) := \max\{\|x - x_i\|: x \in V_i\}$ be the radius of the
Voronoi cell $V_i$.
\begin{lemma}
  \label{lem:voronoi-cell-upper-bounds}
  Suppose $x_1,\ldots,x_n$ are drawn from $P$ satisfying
  Assumption~\ref{assump:density_bounded}.
  There exist constants $C_1$ and
  $C_2$ such that the following statement holds: for any $\delta \in (0,1)$,
  with probability at least $1 - \delta$
	\begin{equation}
    \label{eq:voronoi-radius}
    \max_{i = 1,\ldots,n} r(V_i)
    \leq C_1\biggl(\frac{\log n + \log(1/\delta)}{n}\biggr)^{1/d},
	\end{equation}
	and 
	\begin{equation}
    \label{eq:voronoi-mass}
    \max_{i = 1,\ldots,n} \Leb(V_i)
    \leq C_2\biggl(\frac{\log n + \log(1/\delta)}{n}\biggr).
	\end{equation}
\end{lemma}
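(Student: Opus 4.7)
The plan is to reduce both statements to the VC-type bound on empty balls furnished by Lemma~\ref{lem:vc-balls}, specifically the implication~\eqref{eq:vc-ball-ub-0}. The key geometric observation is that a large Voronoi cell forces the existence of a large sample-free ball centered at a point of $\XDom$. Concretely, if $x \in V_i$ with $\|x - x_i\| = R'$, then by the defining property of $V_i$ every other sample $x_j$ satisfies $\|x - x_j\| > R'$. Setting $\tilde x = (x + x_i)/2$ to be the midpoint, I have $\|\tilde x - x_i\| = R'/2$, placing $x_i$ on the boundary (not interior) of $B(\tilde x, R'/2)$, while for $j \neq i$ the triangle inequality gives $\|\tilde x - x_j\| \geq \|x - x_j\| - \|x - \tilde x\| > R'/2$. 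Thus the open ball $B(\tilde x, R'/2)$ contains no sample point. Note that $\tilde x \in \XDom$ since $\XDom = (0,1)^d$ is convex and both $x_i, x \in \bar\XDom$.

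First, I would condition on the event of Lemma~\ref{lem:vc-balls}, which occurs with probability at least $1 - \delta$. On this event, any sample-free ball $B(z, r)$ centered at $z \in \XDom$ must obey $r < C_1 ((\log n + \log(1/\delta))/n)^{1/d}$. Taking $R' \uparrow r(V_i)$ in the midpoint construction (and noting $V_i$ is bounded so the supremum in the definition of $r(V_i)$ is attained on $\bar V_i$) produces a sequence of sample-free balls of radius approaching $r(V_i)/2$, forcing $r(V_i)/2 \leq C_1 ((\log n + \log(1/\delta))/n)^{1/d}$ uniformly in $i$. Redefining the constant as $2C_1$ yields~\eqref{eq:voronoi-radius}.

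The volume bound~\eqref{eq:voronoi-mass} is then essentially immediate: since $V_i \subseteq B(x_i, r(V_i))$, we have $\Leb(V_i) \leq \Leb_d \cdot r(V_i)^d$, and raising~\eqref{eq:voronoi-radius} to the $d$-th power absorbs the factor $\Leb_d (2C_1)^d$ into a new constant $C_2$. The main (and only real) conceptual step is the midpoint argument guaranteeing a sample-free ball of radius $\approx r(V_i)/2$ situated inside $\XDom$; after that both conclusions follow by direct invocation of~\eqref{eq:vc-ball-ub-0} and an elementary volumetric inequality. I do not anticipate any substantive obstacle here---the result should be viewed as a straightforward corollary of the VC bound, recorded separately because it is used repeatedly (e.g., in controlling the $K_n$ factor in~\eqref{eq:voronoi_triangle_ineq} and in the embedding lemmas of Appendix~\ref{sec:graph_embeddings}).
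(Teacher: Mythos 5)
Your proof is correct and takes essentially the same approach as the paper: reduce to Lemma~\ref{lem:vc-balls} via the observation that a Voronoi cell of radius $R'$ forces a sample-free ball of radius $R'/2$, then obtain the volume bound from $V_i \subseteq B(x_i, r(V_i))$. The only cosmetic difference is that you center the empty ball at the midpoint $(x+x_i)/2$ whereas the paper centers it at the extremal point $x \in V_i$ itself (for which $B(x, \tfrac{1}{2}\|x-x_i\|)$ is also empty, since every $x_j$ with $j\neq i$ satisfies $\|x-x_j\|>\|x-x_i\|$); both constructions yield an empty ball of the same radius centered in $\XDom$, so the arguments are interchangeable.
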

\begin{proof}
  If $x \in V_i$, then $|B(x,\frac{1}{2}\|x - x_i\|) \cap \{x_1,\ldots,x_n\}| =
  0$. (Note that the same holds true if $\frac{1}{2}$ is replaced with any $a
  \in [0,1)$). Taking $x$ to be such that $\|x - x_i\| = r(V_i)$, it follows by
  Lemma~\ref{lem:vc-balls} that
	\begin{equation*}
    \frac{1}{2}r(V_i)
      = \frac{1}{2}\|x - x_i\|
      \leq C\biggl(\frac{\log n + \log(1/\delta)}{n}\biggr)^{1/d},
	\end{equation*}
  with probability at least $1 - \delta$. Multiplying both sides by $2$ and
  taking a maximum over $i = 1,\ldots,n$ gives~\eqref{eq:voronoi-radius}. The
  upper bound~\eqref{eq:voronoi-mass} on the maximum Lebesgue measure of $V_i$
  follows immediately, since $V_i \subseteq B(x,r(V_i))$. 
\end{proof}

\subsubsection{Connectedness of the Voronoi adjacency graph}%
\label{ssub:connectedness_of_the_voronoi_adjacency_graph}
The following lemma relates graph theoretic connectedness to a kind of
topological connectedness that excludes connectedness using sets of
$\cH^{d-1}$-measure zero.

\begin{lemma}
  \label{lem:topological-connectedness-equiv-graph-connectedness}
  Let $\XDom\subset\R^d$ be open such that there does \emph{not} exist
  any set $S\subsetneq\XDom$ with $\cH^{d-1}(S)=0$ such that
  $\XDom\setminus S$ is disconnected.  Let $\{\Part_1,\dotsc,\Part_m\}$
  denote an open polyhedral partition of $\XDom$.  Then the graph
  $G=(\{\Part_1,\dotsc,\Part_m\}, E)$, where
  \begin{equation*}
    E   =  \{\{\Part_i,\Part_j\}:
        \cH^{d-1}(\partial\Part_i\cap\partial\Part_j) > 0\},
  \end{equation*}
  is connected.
\end{lemma}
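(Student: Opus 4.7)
The plan is to argue by contradiction: assume $G$ is disconnected, and construct a disconnection of $\XDom$ by a set of $\cH^{d-1}$-measure zero, contradicting the hypothesis on $\XDom$. So suppose the vertex set admits a nontrivial partition into $A$ and $B$ with no edges between them, i.e.
\[
\cH^{d-1}\big(\partial\Part_i \cap \partial\Part_j\big) = 0 \quad \text{for all $i \in A$, $j \in B$}.
\]
Set $U_A = \bigcup_{i\in A}\Part_i$, $U_B = \bigcup_{j\in B}\Part_j$ (both open), and $S = \bigcup_{i \in A, j \in B}(\partial\Part_i \cap \partial\Part_j)$, which is a finite union of $\cH^{d-1}$-null sets and hence itself $\cH^{d-1}$-null.

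The key algebraic fact I will rely on is that for any two disjoint open sets $\Part_i$ and $\Part_j$ in $\R^d$ one automatically has $\Part_i \cap \overline{\Part_j} = \emptyset$ (since $\Part_i$ itself is a neighborhood of each of its points that avoids $\Part_j$). Applied cell by cell, this gives
\[
\overline{\Part_i} \cap \overline{\Part_j} \subseteq \partial\Part_i \cap \partial\Part_j
\quad \text{whenever $i\neq j$},
\]
and then, taking finite unions over $i \in A$ and $j \in B$,
\[
\overline{U_A} \cap \overline{U_B} \;=\; \bigcup_{i\in A,\,j\in B}\overline{\Part_i}\cap\overline{\Part_j} \;\subseteq\; S.
\]
This is the only place where some care is required; the polyhedral hypothesis is not strictly needed here, but it guarantees the standard boundary regularity one might want if one wished to track individual face contributions.

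Next I will define the open sets $U_A' = \XDom\setminus \overline{U_B}$ and $U_B' = \XDom\setminus\overline{U_A}$. Four properties follow more or less directly. First, $U_A'$ and $U_B'$ are open since $\overline{U_A}, \overline{U_B}$ are closed. Second, they are disjoint: a common point would lie in $\XDom \setminus (\overline{U_A}\cup \overline{U_B})$, but the closures of the cells cover $\XDom$, so $\overline{U_A}\cup\overline{U_B}=\XDom$. Third, by the displayed inclusion $\overline{U_A}\cap\overline{U_B}\subseteq S$, so
\[
U_A' \cup U_B' \;=\; \XDom \setminus (\overline{U_A}\cap\overline{U_B}) \;\supseteq\; \XDom \setminus S.
\]
Fourth, each of $U_A', U_B'$ is nonempty: from $\Part_i\cap\overline{\Part_j}=\emptyset$ (for $i\in A, j\in B$) taken over finite unions we get $U_A\cap\overline{U_B}=\emptyset$, so $U_A\subseteq U_A'$, and symmetrically $U_B\subseteq U_B'$.

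To conclude, $\XDom\setminus S = (U_A'\setminus S) \sqcup (U_B'\setminus S)$ is a decomposition into two disjoint, nonempty, relatively open subsets, exhibiting $\XDom\setminus S$ as disconnected. Since $\cH^{d-1}(S)=0$, this contradicts the hypothesis on $\XDom$, so $G$ must be connected. The only mild obstacle is bookkeeping to ensure that the $\cH^{d-1}$-null ``bad set'' is truly just $S$ and not something larger; this is handled entirely by the inclusion $\overline{\Part_i}\cap\overline{\Part_j}\subseteq \partial\Part_i\cap\partial\Part_j$ noted above.
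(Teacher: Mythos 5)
Your proof is correct and follows essentially the same contradiction strategy as the paper: assume the graph is disconnected, split the cells into two groups, and show that $\XDom$ can be disconnected by removing an $\cH^{d-1}$-null set. The paper forms $\XDom_1 = \big(\cup_{i\in A}\overline{\Part_i}\big)^\circ$ and $\XDom_2 = \big(\cup_{j\in B}\overline{\Part_j}\big)^\circ$, whereas you take complements $U_A' = \XDom\setminus\overline{U_B}$, $U_B' = \XDom\setminus\overline{U_A}$; one can check $U_A' = \XDom_1\cap\XDom$ and $U_B' = \XDom_2\cap\XDom$, so the constructions are equivalent, and your chain of inclusions (resting on $\overline{\Part_i}\cap\overline{\Part_j}=\partial\Part_i\cap\partial\Part_j$ for $i\ne j$, which is in fact an equality) is arguably cleaner than the paper's algebraic simplification of $\XDom\setminus(\XDom_1\cup\XDom_2)$. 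Your observation that the polyhedral hypothesis is not actually used is also correct.

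One small gap: in the final step you need $U_A'\setminus S$ and $U_B'\setminus S$ to be nonempty, but you only argued $U_A',U_B'\neq\emptyset$. The fix is one sentence: since $U_A\subseteq U_A'$ and $U_A$ is a nonempty open set (hence has positive Lebesgue measure), while $S$ is $\cH^{d-1}$-null and therefore Lebesgue-null, $U_A'\setminus S\supseteq U_A\setminus S$ is nonempty, and symmetrically for $U_B'$. (The same reasoning shows $S\subsetneq\XDom$, which the hypothesis on $\XDom$ requires.) With that sentence added, the argument is complete.
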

\begin{proof}
  Assume by way of contradiction that $G$ is disconnected.  Therefore there
  exists sets of vertices $\cC_1$, $\cC_2$ such that
  \begin{equation}
    \label{eq:app-proof-disconnected-graph}
    \cH^{d-1}(\bar\Part_i\cap\bar\Part_j) = 0,
  \end{equation}
  for all $\Part_i\in\cC_1$, $\Part_j\in\cC_2$.  Next, define
  \begin{align*}
    \XDom_1 &:= (\cup_{\Part_i\in\cC_1}\bar\Part_i)^\circ,  \\
    \XDom_2 &:= (\cup_{\Part_j\in\cC_2}\bar\Part_j)^\circ,
  \end{align*}
  such that $\{\XDom_1, \XDom_2\}$ constitutes an open partition of $\XDom$.
  Let 
  \begin{align}
    \label{eq:app-proof-disconnected-set}
    S &:= \XDom\setminus(\XDom_1\cup\XDom_2)  \\
      \nonumber
      &=  \XDom \cap(
        (\partial\XDom_1\cap\partial\XDom_2)
        \cup ((\XDom_1^c)^\circ\cap(\XDom_2^c)^\circ)
      ) \\
      \nonumber
      &=  \XDom \cap (
        (\partial\XDom_1\cap\partial\XDom_2)
        \cup (\XDom_2\cap\XDom_1)
      ) \\
      \label{eq:app-proof-disconnected-set-2}
      &=  \XDom\cap\partial\XDom_1\cap\partial\XDom_2.
  \end{align}
  From~\eqref{eq:app-proof-disconnected-graph}
  and~\eqref{eq:app-proof-disconnected-set-2} we see that
  $\cH^{d-1}(S) = 0$.  On the other hand,
  \eqref{eq:app-proof-disconnected-set} yields that
  $\XDom\setminus S = \XDom_1\cup\XDom_2$ is disconnected
  ($\XDom_1,\XDom_2$ are open and disjoint).
\end{proof}

\subsubsection{Analysis of the Voronoi kernel}%
\label{ssub:analysis_of_the_voronoi_kernel}
Recall that in the proof of Theorem~\ref{thm:asymptotic_limit_voronoi}, we
compare Voronoi TV to a U-statistic involving the kernel function
\begin{equation*}
H_{\mathrm{Vor}}(x,y) = \mathbb{E}[\mc{H}^{d - 1}(\partial V_{x_1} \cap
\partial V_{x_2})|x_1 = x,x_2 = y] = \int_{L \cap \Omega} (1 - p_x(z))^{(n -
2)} \,dz.
\end{equation*}
The following lemma shows that this kernel function is close to a spherically
symmetric kernel.
\begin{lemma}
	\label{lem:voronoi-boundary}
  Suppose $x_1,\ldots,x_n$ are sampled from distribution $P$
  satisfying~\ref{assump:density_bounded}. There exist constants $C_1$-$C_4 >
  0$ such that for $h = h_n = C_1(3\log n/n)^{1/d}$, the following statements
  hold. 
	\begin{itemize}
		\item For any $x,y \in \Omega_{h}$,
		\begin{equation}
		\label{eqn:voronoi-boundary-1}
    H_{\mathrm{Vor}}(x,y) = \frac{\eta_{d - 2}}{(np(x))^{\frac{d - 1}{d}}}
    K_{\mathrm{Vor}}\biggl(\frac{\|y - x\|}{\varepsilon_{(1)}}\biggr) +
    O\biggl(\frac{1}{n^3} + \frac{(\log n)^2}{n}\1\bigl\{\|x - y\| \leq
    C_2(\log n/n)^{1/d}\bigr\}\biggr)
		\end{equation} 
		\item For any $x,y \in \Omega$,
		\begin{equation}
		\label{eqn:voronoi-boundary-2}
    H_{\mathrm{Vor}}(x,y) \leq \frac{C_3}{n^{(d - 1)/d}}
    K_{\mathrm{Vor}}\biggl(\frac{\|y - x\|}{C_4n^{1/d}}\biggr).
		\end{equation}
	\end{itemize}
\end{lemma}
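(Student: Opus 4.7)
The plan is to set up coordinates on the bisecting hyperplane, pass from the binomial factor $(1-p_x(z))^{n-2}$ to a Gaussian-type exponential $\exp(-np_x(z))$, replace $p_x(z)$ by its leading-order spherical approximation using the Lipschitz continuity of $p$, extend the integration from $L\cap\Omega$ to the full hyperplane $L$ (this is where the boundary condition $x,y\in\Omega_h$ enters), and finally change variables to recover the integral defining $K_{\mathrm{Vor}}$.

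To set things up, let $m=(x+y)/2$, write $L=m+L_0$ for the $(d-1)$-dimensional subspace $L_0=(y-x)^\perp$, and parametrize $z=m+u$ with $u\in L_0$; then $\|x-z\|^2=\|y-x\|^2/4+\|u\|^2$. The first core estimate is
\[
\bigl|(1-p_x(z))^{n-2}-\exp(-np_x(z))\bigr|\leq C(np_x(z))^2 \exp(-c\,np_x(z)),
\]
which, combined with the lower bound $p_x(z)\geq c\|x-z\|^d$ from Assumption~\ref{assump:density_bounded}, can be integrated over $L$ to yield an $O(n^{-3})$ error (one extra power of $\|x-z\|^d$ in the integrand cancels a factor of $n^{(d-1)/d}$ and leaves room for the rapid exponential decay). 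The second core estimate replaces $p_x(z)$ by the exact spherical mass at $x$: Lipschitz continuity of $p$ and the standard estimate
\[
p_x(z) = p(x)\,\Leb_d\,\|x-z\|^d + O\bigl(\|x-z\|^{d+1}\bigr)
\]
(valid provided $B(z,\|x-z\|)\subseteq\Omega$, which holds for $x,y\in\Omega_h$ when $\|x-z\|\lesssim(\log n/n)^{1/d}$) lets us substitute $np_x(z)\approx np(x)\Leb_d(\|y-x\|^2/4+\|u\|^2)^{d/2}$ inside the exponential; a first-order Taylor expansion of $\exp$ picks up an error of size $\|x-z\|^{d+1}\exp(-c n\|x-z\|^d)$, giving upon integration the term $(\log n)^2/n$ multiplied by the indicator that $\|x-y\|\lesssim(\log n/n)^{1/d}$, which is exactly the form claimed in~\eqref{eqn:voronoi-boundary-1}.

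After these two approximations and extension of the integral from $L\cap\Omega$ to $L_0$ (the tail $L_0\setminus(L\cap\Omega-m)$ lies at distance $\gtrsim h$ from the origin in $L_0$ whenever $x,y\in\Omega_h$, so choosing $h$ a constant multiple of $(\log n/n)^{1/d}$ makes its contribution $O(n^{-3})$ by the exponential decay), we change variables $u=\varepsilon_{(1)}v$ with $\varepsilon_{(1)}=(np(x))^{-1/d}$ and pass to polar coordinates on $L_0\cong\R^{d-1}$, writing $v=s\phi$ with $\phi\in S^{d-2}$. Since $np(x)\varepsilon_{(1)}^d=1$, the exponent collapses to $-\Leb_d(t^2/4+s^2)^{d/2}$ with $t=\|y-x\|/\varepsilon_{(1)}$; integrating $\phi$ out produces the factor $\eta_{d-2}$, and the remaining $s$-integral is exactly $K_{\mathrm{Vor}}(t)$, which gives~\eqref{eqn:voronoi-boundary-1}. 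For the global upper bound~\eqref{eqn:voronoi-boundary-2}, the same change of variables is carried out after discarding rather than tracking the Lipschitz and exponential-vs-binomial corrections; the unconditional lower bound $p_x(z)\geq(p_{\mn}/2d)\Leb_d\|x-z\|^d$ (which holds for all $x\in\Omega$, without any $\Omega_h$ restriction, at the price of losing a dimension-dependent constant inside the exponential) produces a spherically symmetric upper envelope of the stated form, with the constants $C_3,C_4$ absorbing the loss.

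The main obstacle I expect is tracking the approximation error uniformly in $(x,y)$, in particular coupling the two regimes $\|x-y\|\lesssim(\log n/n)^{1/d}$ (where the leading $K_{\mathrm{Vor}}(t)$ term is non-negligible and the Lipschitz error contributes the $(\log n)^2/n$ factor) and $\|x-y\|\gtrsim(\log n/n)^{1/d}$ (where both sides of~\eqref{eqn:voronoi-boundary-1} are essentially zero and the error must be absorbed into the $O(n^{-3})$ term through the exponential decay of $K_{\mathrm{Vor}}$). A related subtlety is that even for $x,y\in\Omega_h$, some points $z\in L$ far from $m$ may leave $\Omega$; however, because $\exp(-c n\|x-z\|^d)$ is negligible beyond the scale $(\log n/n)^{1/d}\ll h$, such $z$ contribute negligibly, so the extension from $L\cap\Omega$ to $L_0$ comes for free at the cost of an $O(n^{-3})$ remainder.
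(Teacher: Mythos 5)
Your proposal is correct and follows essentially the same route as the paper: set up coordinates on the bisecting hyperplane $L$, replace $(1-p_x(z))^{n-2}$ by $\exp(-np_x(z))$ with an integrable $O(n^{-3})$ error, use the Lipschitz bound $p_x(z) = p(x)\Leb_d\|x-z\|^d + O(\|x-z\|^{d+1})$ to pick up the $(\log n)^2/n$ indicator-restricted error, extend from $L\cap\Omega$ to all of $L$ using the exponential tail (which is where $x,y\in\Omega_h$ is used), then change variables and pass to polar coordinates to produce the $\eta_{d-2}\cdot K_{\mathrm{Vor}}$ form; for~\eqref{eqn:voronoi-boundary-2} the same computation with the unconditional lower bound on $p_x(z)$ gives the envelope. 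The only imprecision worth flagging is that $h$ and $(\log n/n)^{1/d}$ are the same order (they differ only by the constant $C_1$), so the claim $(\log n/n)^{1/d}\ll h$ is wrong as stated; the correct reasoning, which the paper makes explicit, is that $C_1$ is chosen large enough (e.g.\ $C_1\geq 2/(\Leb_d p_{\min})^{1/d}$) that $\exp(-c\,n(h/2)^d)\leq n^{-3}$.
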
 
\begin{proof}[Proof of~\eqref{eqn:voronoi-boundary-1}]
	
  We now replace the integral above with one involving an exponential function
  that can be more easily evaluated. Then we evaluate this latter integral.
	
	\paragraph{Step 1: Reduction to easier integral.}
  Let $\Omega_x = \{z \in \Omega: \mathrm{dist}(z,\partial\Omega) > \|z -
  x\|\}$. (Note that $L \cap \Omega_x = L \cap \Omega_y$.) Separate the
  integral into two parts,
	\begin{align*}
  \int_{L \cap \Omega} \Bigl(1 - p_x(z)\Bigr)^{(n - 2)} \,dz & = \int_{L \cap
  \Omega_x} \Bigl(1 - p_x(z)\Bigr)^{(n - 2)} \,dz + \int_{L \cap
(\Omega\setminus\Omega_x)} \Bigl(1 - p_x(z)\Bigr)^{(n - 2)} \,dz.
	\end{align*} 
  We start by showing that the second term above is negligible for $x,y \in
  \Omega_h$.  For any $z \in \Omega \setminus \Omega_x$, it follows by the
  triangle inequality that
	\begin{equation*}
  \mathrm{dist}(x,\partial \Omega) \leq \|x - z\| + \mathrm{dist}(z,\Omega)
  \leq 2 \|x - z\|.
	\end{equation*}  
  Since $x \in \Omega_h$, it follows that $p_x(z) \geq (p_{\min}/2d) \|z -
  x\|^d \geq (p_{\min}/2^{d + 1}d) (\mathrm{dist}(x,\partial\Omega))^d \geq
  (p_{\min}/2^{d + 1}d) h^d$. Integrating over $z \in \Omega \setminus
  \Omega_x$ implies an upper bound on the second term, 
	\begin{align*}
  \int_{L \cap (\Omega\setminus\Omega_x)} \Bigl(1 - p_x(z)\Bigr)^{(n - 2)} \,dz
  & \leq \int_{L \cap (\Omega\setminus\Omega_x)} \exp(-(n - 2)p_x(z)) \,dz \\
	& = O(\exp(-(p_{\min}/2^{d + 2}d)nh^d)) \\
	& = O(\frac{1}{n^3}),
	\end{align*} 
  with the last line following upon choosing $C_1 \geq (p_{\min}/2^{d +
  2}d)^{-1/d}$ in the definition of $h$.
	
  On the other hand, if $z \in \Omega_x$ then $B(z,\|z - x\|) \subset \Omega$.
  Consequently, letting $\wt{p}_x(z) := p(x) \Leb_d \|z - x\|^d$, it follows by
  the Lipschitz property of $p$ that
	\begin{equation*}
  |p_x(z) - \wt{p}_x(z)| \leq \int_{B(z,\|z - x\|)} |p(z) - p(x)| \,dz \leq C
  \Leb_d \|z - x\|^{d + 1},
	\end{equation*}
	and
	\begin{equation*}
  |\exp\bigl(-np_x(z)\bigr) - \exp\bigl(-n\wt{p}_x(z)\bigr)| \leq C \Leb_d \|z
  - x\|^{d + 1} n. 
	\end{equation*}
  Additionally recall that $\exp(-np) \geq (1 - p)^n \geq \exp(-np)(1 - np^2)$
  for any $|p| < 1$. Combining these facts, we conclude that
	\begin{equation}
	\label{pf:voronoi-boundary-1}
	\begin{aligned}
  \int_{L \cap \Omega_x} \Bigl(1 - p_x(z)\Bigr)^{n} \,dz & = \int_{L \cap
  \Omega_x} \exp(-np_x(z)) \bigl(1 + O(np_x(z)^2)\bigr) \,dz \\
  & = \int_{L \cap \Omega_x} \exp(-n\wt{p}_x(z)) \Bigl(1 + O(n\|z - x\|^{2d}) +
  O(n\|z - x\|^{d + 1})\Bigr) \,dz \\
  & \overset{(i)}{=} \int_{L \cap \Omega_x} \exp(-np(x)\Leb_d\|x - z\|^d) \,dz
  + O\Bigl(\frac{1}{n^3} + \frac{1}{n}\1\bigl\{\|x - y\| \leq C_2(\log
  n/n)^{1/d}\bigr\}\Bigr) \\
  & \overset{(ii)}{=} \int_{L} \exp\bigl(-np(x)\Leb_d\|x - z\|^d\bigr) \,dz +
  O\Bigl(\frac{1}{n^3} + \frac{(\log n)^2}{n}\1\bigl\{\|x - y\| \leq C_2(\log
  n/n)^{1/d}\bigr\}\Bigr).
	\end{aligned}
	\end{equation}
  We prove the last two equalities, which control the remainder terms, after
  completing our analysis of the leading order term.  
	
	\paragraph{Step 2: Leading order term.}
  Let $r = \|x - y\|/2$. Due to rotational symmetry, we may as well take $x =
  re_1, y = -re_1$, in which case the integral becomes
	\begin{align*}
  \int_{L} \exp\bigl(-np(x)\Leb_d\|x - z\|^d\bigr) \,dz & = \int_{\{0\} \times
  \Reals^{d - 1}} \exp\bigl(-np(x)\Leb_d\|re_1 - z\|^d\bigr) \,dz \\
	& = \int_{\Reals^{d - 1}} \exp\bigl(-np(x)\Leb_d(r^2 + \|z\|^2)^{d/2}\bigr) \,dz,
	\end{align*}
  with the latter equality following from the Pythagorean theorem. Converting
  to polar coordinates, we see that
	\begin{align*}
  \int_{\Reals^{d - 1}} \exp\bigl(-np(x)\Leb_d(r^2 + \|z\|^2)^{d/2}\bigr) \,dz
  & = \int_{0}^{\infty} \int_{\mathbb{S}^{d - 2}} \exp\bigl(-np(x)\Leb_d(r^2 +
  t^2)^{d/2}\bigr)t^{d - 2} \,d\theta \,dt  \\
  & = \eta_{d - 2}\int_{0}^{\infty} \exp\bigl(-np(x)\Leb_d(r^2 +
  t^2)^{d/2}\bigr)t^{d - 2} \,dt \\
  & = \frac{\eta_{d - 2}}{(np(x))^{\frac{d - 1}{d}}} \int_{0}^{\infty}
  \exp\Bigl(-\Leb_d\bigl\{\bigl(r (n p(x))^{1/d}\bigr)^{2} +
  s^2\bigr\}^{d/2}\Bigr)s^{d - 2}\,ds, \\
  & = \frac{\eta_{d - 2}}{(np(x))^{\frac{d - 1}{d}}}
  K_{\mathrm{Vor}}\biggl(\frac{\|y - x\|}{\varepsilon_{(1)}}\biggr),
	\end{align*}
  with the second to last equality following by substituting $s =
  t/(np(x))^{-1/d}$. 
	
	\paragraph{Controlling remainder terms.}
  We complete the proof of~\eqref{eqn:voronoi-boundary-1} by establishing $(i)$
  and $(ii)$ in \eqref{pf:voronoi-boundary-1}. 
	
	\underline{Proof of $(i)$.}
  Take $\varepsilon_{0} = (4\log n/\Leb_d p_{\min} n)^{1/d}$, and note that if
  $\|z - x\| \geq \varepsilon_{0}$ then $\exp\bigl(-\Leb_d n p(x) \|z -
  x\|^{d}\bigr) \leq \frac{1}{n^4}$. Recalling the definition of $\wt{p}_x(z)$,
  we have 
	\begin{equation}
	\label{pf:voronoi-boundary-2}
	\begin{aligned}
  n\int_{L \cap \Omega_x} \exp\bigl(-n\wt{p}_x(z)\bigr)\|z - x\|^{d + 1} \,dz &
  = n\int_{L \cap \Omega_x} \exp\bigl(-\Leb_d n p(x) \|z - x\|^{d}\bigr)\|z -
  x\|^{d + 1} \,dz \\
  & \leq n\int_{L \cap B(x,\varepsilon_{0})} \exp\bigl(-\Leb_d n p_{\min} \|z -
  x\|^{d}\bigr)\|z - x\|^{d + 1} \,dz + \frac{\mc{H}^{d - 1}(L \cap
\Omega)}{n^3} \\
  & \leq n \varepsilon_0^{d + 1}\int_{L \cap B(x,\varepsilon_{0})}
  \exp\bigl(-\Leb_d n p_{\min} \|z - x\|^{d}\bigr) \,dz + \frac{\mc{H}^{d -
  1}(L \cap \Omega)}{n^3} \\
  & \leq  n \varepsilon_0^{d + 1} \mc{H}^{d - 1}(L \cap B(x,\varepsilon_{0})) +
  \frac{\mc{H}^{d - 1}(L \cap \Omega)}{n^3}.
	\end{aligned}
	\end{equation}
  For any $x,y$ we have $\mc{H}^{d - 1}(L \cap B(x,\varepsilon_0)) \leq \Leb_{d
  - 1}\varepsilon_{0}^{d - 1}$. If additionally $\|x - y\|/2 > \varepsilon_0$
  then $L \cap B(x,\varepsilon_0) = \emptyset$, and so $\mc{H}^{d - 1}(L \cap
  B(x,\varepsilon_0)) = 0$. Compactly, these estimates can be written as
	\begin{equation*}
  \mc{H}^{d - 1}(L \cap B(x,\varepsilon_{0})) \leq \Leb_{d - 1}\1\{\|x - y\|
  \leq 2\varepsilon_0\} \varepsilon_{0}^{d - 1}.
	\end{equation*} 
  Plugging this back into~\eqref{pf:voronoi-boundary-2}, we conclude that
	\begin{align*}
  n\int_{L \cap \Omega_x} \exp\bigl(-n\wt{p}_x(z)\bigr)\|z - x\|^{d + 1} \,dz &
  \leq n\varepsilon_{0}^{2d}\1\{\|x - y\| \leq 2 \varepsilon_{0}\} +
  \frac{\mc{H}^{d - 1}(L \cap \Omega)}{n^3} \\
  & \leq C\Bigl(\frac{(\log n)^2}{n}\1\{\|x - y\| \leq C_2 (\log n/n)^{1/d}\} +
  \frac{1}{n^3}\Bigr),
	\end{align*}
  for $C_2 = 2(4/(p_{\min}\Leb_d))^{1/d}$. This is precisely the claim.
	
	\underline{Proof of $(ii)$.}
  Recall the fact established previously, that if $z \in L \setminus \Omega_x$
  then $\|z - x\| \geq h/2$. Therefore,
  \begin{align*}
	\int_{L \setminus \Omega_x} \exp\bigl(-n\wt{p}_x(z)\bigr) \,dz &
  \leq \int_{L \setminus \Omega_x} \exp\bigl(-\Leb_d n p_{\min} \|z - x\|^{d}\bigr) \,dz \\
  & \leq \int_{L \setminus B(x,2)} \exp\bigl(-\Leb_d n p_{\min} \|z -
  x\|^{d}\bigr) \,dz + \int_{(L \cap B(x,2)\setminus\Omega_x)}
  \exp\bigl(-\Leb_d p_{\min} n(h/2)^{d}\bigr)\,dz \\
  & \leq \int_{L \setminus B(x,2)} \exp\bigl(-\Leb_d n p_{\min} \|z -
  x\|^{d}\bigr) \,dz + \frac{\mc{H}^{d - 1}(L \cap B(x,2))}{n^3},
	\end{align*}
  with the last inequality following upon choosing $C_1 \geq 2/(\Leb_d
  p_{\min})^{1/d}$ in the definition of $h$. The remaining integral is
  exponentially small in $n$, proving the upper bound $(ii)$.
\end{proof}
\begin{proof}[Proof of~\eqref{eqn:voronoi-boundary-2}]
	Note immediately that 
	\begin{equation*}
  H_{\mathrm{Vor}}(x,y) \leq \int_{L \cap \Omega} \exp(-n p_x(z)) \,dz \leq
  \int_{L} \exp(-n \Leb_d p_{\min} \|x - z\|^d/2d) \,dz. 
	\end{equation*}
  We have already analyzed this integral in the proof
  of~\eqref{eqn:voronoi-boundary-1}, with the analysis implying that 
	\begin{equation*}
  \int_{L} \exp(-n \Leb_d p_{\min} \|x - z\|^d/2d) \,dz = \frac{\eta_{d -
  2}(2d)^{\frac{d - 1}{d}}}{(np_{\min})^{(d - 1)/d}}
  K_{\mathrm{Vor}}\biggl(\frac{\|y - x\|}{(2dn/p_{\min})^{1/d}}\biggr).
	\end{equation*}
  This is exactly~\eqref{eqn:voronoi-boundary-2} with $C_3 = \eta_{d -
  2}(2d/p_{\min})^{(d - 1)/d}$ and $C_4 = (2d/p_{\min})^{1/d}$.
\end{proof}

\subsubsection{Compact kernel approximation}

The kernel function $H_{\mathrm{Vor}}(x,y)$ is not compactly supported, and in
our analysis it will frequently be convenient to approximate it by a compactly
supported kernel. The following lemma does the trick. Let $\varepsilon_{0} :=
(\log n/n)^{1/d}$.
\begin{lemma}
	\label{lem:voronoi-kernel-ub}
  Let $x,y \in \Omega$, and $L = \{z: \|x - z\| = \|y - x\|\}$. For any $a,c >
  0$, there exists  a constant $C > 0$ depending only on $a,c$ and $d$ such
  that
	\begin{equation}
	\label{eqn:voronoi-kernel-ub}
	\int_{L \cap \Omega} \exp(-cn\|x - z\|^d)\,dz
  \leq C\biggl(\frac{\1\{\|x - y\|
  \leq C\varepsilon_{0}\}}{n^{(d - 1)/d}} + \frac{1}{n^a}\biggr)
	\end{equation}
  where $\varepsilon_0 := (\log n/n)^{1/d}$.
\end{lemma}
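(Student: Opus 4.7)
The plan is to collapse the integral over the $(d-1)$-dimensional hyperplane $L$ to a purely one-dimensional one via an orthogonal parametrization, and then to split on whether $\|x-y\|$ is small or large compared to $\varepsilon_0$.

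First I would parametrize $L$ by writing every $z \in L$ as $z = (x+y)/2 + w$, where $w$ ranges over the $(d-1)$-dimensional hyperplane $H$ through the origin orthogonal to $y - x$. By Pythagoras, $\|x-z\|^2 = r^2 + \|w\|^2$ with $r := \|x-y\|/2$. Extending the integration from $L \cap \Omega$ to all of $L$ only enlarges the bound, so it suffices to control $\int_H \exp(-cn(r^2+\|w\|^2)^{d/2})\,dw$.

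The key algebraic input is the elementary superadditivity inequality $(\alpha+\beta)^q \geq \alpha^q + \beta^q$ for $\alpha, \beta \geq 0$ and $q \geq 1$; applied with $q = d/2$ (using $d \geq 2$) and $\alpha = r^2$, $\beta = \|w\|^2$, this yields $(r^2+\|w\|^2)^{d/2} \geq r^d + \|w\|^d$. Hence the integrand factors as $\exp(-cn\|x-z\|^d) \leq e^{-cnr^d}\, e^{-cn\|w\|^d}$, and after converting to polar coordinates on $H \simeq \R^{d-1}$ and substituting $t = (cn)^{1/d} s$, the $w$-integral reduces to
\[
\eta_{d-2}\, (cn)^{-(d-1)/d} \int_0^\infty e^{-t^d}\, t^{d-2}\,dt,
\]
a constant (depending only on $c$ and $d$) times $n^{-(d-1)/d}$. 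Overall this gives $\int_{L \cap \Omega} \exp(-cn\|x-z\|^d)\,dz \leq C(c,d)\, n^{-(d-1)/d}\, e^{-cnr^d}$.

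The final step is a dichotomy on $\|x-y\|$. If $\|x-y\| \leq C\varepsilon_0$, then $e^{-cnr^d} \leq 1$ and I obtain the first term $C/n^{(d-1)/d}$. Otherwise $r > (C/2)(\log n/n)^{1/d}$, so $e^{-cnr^d} \leq \exp(-c(C/2)^d \log n) = n^{-c(C/2)^d}$; choosing $C$ large enough so that $c(C/2)^d \geq a + (d-1)/d$ absorbs this contribution into $n^{-a}$. There is no real analytic obstacle here; the only care needed is bookkeeping, since the same $C$ must simultaneously serve as the prefactor, as the threshold inside the indicator, and as the large constant forcing $c(C/2)^d \geq a + (d-1)/d$ for arbitrary prescribed $a > 0$.
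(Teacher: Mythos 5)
Your proof is correct, and it takes a genuinely different route from the paper's. The paper argues by a disjoint spatial split: it fixes a threshold radius $\wt{\varepsilon}_0 = (a/c)^{1/d}\varepsilon_0$ so that $\|x-z\| \geq \wt{\varepsilon}_0$ forces $\exp(-cn\|x-z\|^d) \leq n^{-a}$, splits $L\cap\Omega$ into the ball $B_{d-1}((x+y)/2,\wt{\varepsilon}_0)$ and its complement, absorbs the complement into the $n^{-a}$ term, and bounds the ball contribution by dropping $r^2$ from the Pythagorean identity and rescaling. You instead keep both pieces of the Pythagorean identity and apply the superadditivity $(r^2+\|w\|^2)^{d/2}\geq r^d+\|w\|^d$ (valid since $d/2\geq 1$) to \emph{factor} the integrand as $e^{-cnr^d}\cdot e^{-cn\|w\|^d}$, integrate the second factor exactly, and obtain the cleaner intermediate estimate $\int_{L\cap\Omega}\exp(-cn\|x-z\|^d)\,dz \leq C(c,d)\,n^{-(d-1)/d}\,e^{-cnr^d}$, from which both terms of the target bound fall out by a case split on $r$ versus $(C/2)\varepsilon_0$. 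The factorization route yields a slightly stronger statement (an explicit multiplicative decay in $r$ rather than just an indicator), avoids any explicit domain decomposition, and the bookkeeping you flag is harmless because enlarging $C$ only loosens all three roles it plays simultaneously. Both arguments are elementary and rely on $d\geq 2$, which holds throughout the paper.
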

\begin{proof}
  Let $\wt{\varepsilon}_{0} = C_1 \varepsilon_{0}$ for $C_1 = (a/c)^{1/d}$. The
  key is that if $\|x - z\| \geq \wt{\varepsilon}_{0}$, then
	\begin{equation*}
	\exp(-cn\|x - z\|^d) \leq \frac{1}{n^{a}}.
	\end{equation*}
  Now suppose $\|y - x\| > 2\wt{\varepsilon}_{0}$. Then $\|x - z\| \geq
  \wt{\varepsilon}_{0}$ for all $z \in L$, and
	\begin{equation*}
	\int_{L \cap \Omega} \exp(-cn\|x - z\|^d)\,dz
  \leq \frac{\mc{H}^{d - 1}(L \cap \Omega)}{n^a}.
	\end{equation*}
	It follows that
	\begin{equation*}
	\begin{aligned}
	\label{pf:voronoi-kernel-ub-1}
	\int_{L \cap \Omega} \exp(-cn\|x - z\|^d)\,dz &
  \leq \1\{\|y - x\| \leq 2\wt{\varepsilon}_{0}\}
  \int_{L \cap \Omega} \exp(-cn\|x - z\|^d)\,dz
  + \frac{\mc{H}^{d - 1}(L \cap \Omega)}{n^a} \\
	& \leq \1\{\|y - x\| \leq 2\wt{\varepsilon}_{0}\}
  \int_{B_{d - 1}((x + y)/2,\wt{\varepsilon}_{0})}
  \exp(-cn\|x - z\|^d)\,dz + 2\frac{\mc{H}^{d - 1}(L \cap \Omega)}{n^a} \\
	& \leq \frac{\1\{\|y - x\|
  \leq 2\wt{\varepsilon}_{0}\}}{n^{(d - 1)/d}}
  \int_{\R^{d - 1}} \exp(-\|z\|^d)\,dz
  + 2\frac{\mc{H}^{d - 1}(L \cap \Omega)}{n^a} \\
	& \leq C_2\biggl(
    \frac{\1\{\|y - x\| \leq 2\wt{\varepsilon}_{0}\}}{n^{(d - 1)/d}}
    + \frac{1}{n^a}\biggr).
	\end{aligned}
	\end{equation*}
	for $C_2 = \max\{\int_{\R^{d - 1}}
  \exp(-\|z\|^d)\,dz,2\mc{H}^{d - 1}(L \cap \Omega)\}$.
  Equation~\eqref{eqn:voronoi-kernel-ub} follows upon taking
  $C = \max\{2C_1,C_2\}$.
\end{proof}

\subsection{Proofs of technical lemmas for Theorem~\ref{thm:wavelet_bd}}%
\label{sub:proofs_of_technical_lemmas_for_theorem_thm_wavelet_bd}
\subsubsection{Proofs of Lemmas~\ref{lem:linf-wavelet} and~\ref{lem:bv-wavelet}}%
\label{ssub:proofs_of_lemmas_linf_wavelet_bv_wavelet}

\begin{proof}[Proof of Lemma~\ref{lem:linf-wavelet}.]
  For each $k \in \mc{K}(\ell)$ and $\bf{i} \in \mc{I}$, it follows from
  H\"{o}lder's inequality that
	\begin{equation*}
	|\theta_{\ell k}^{{\bf i}}(f)| \leq \|f\|_{L^{\infty}(\XDom)}
  \|\Psi_{\ell k}^{{\bf i}}\|_{L^1(\XDom)} = \|f\|_{L^{\infty}(\XDom)} 2^{-\ell d/2},
	\end{equation*}
  and taking supremum over $k \in \mc{K}(\ell)$ and $\bf{i} \in \mc{I}$ gives
  the result.
\end{proof}

\begin{proof}[Proof of Lemma~\ref{lem:bv-wavelet}.]
  The proof hinges on an application of an integration by parts
  identity~\eqref{pf:bv-wavelet1}, valid for all $f \in C^1(\XDom)$. We thus
  first derive~\eqref{eqn:bv-wavelet} for all $f \in C^1(\XDom)$, before
  returning to complete the full proof. 
	
  Now, taking $f \in C^1(\XDom)$, a simple calculation verifies that for each
  $i = 1,\ldots,d$, and all $\Psi_{\ell k}^{{\bf i}}$ such that ${\bf i} \in
  \mc{I}$, we have
	\begin{equation}
	\label{pf:bv-wavelet1}
	\int_{\XDom} f(x) \Psi_{\ell k}^{{\bf i}}(x) \,dx
  = - \int_{\XDom} D_if I_i\Psi_{\ell k}^{{\bf i}}(x) \,dx.
	\end{equation}
  Here $I_1f(x) = \int_{-\infty}^{x_1} f((t,x_2,\ldots,x_d)) \,dt$ is the
  partial integral operator in the $1$st coordinate, and $I_i$ are defined
  likewise. Now, we introduce some notation: for all $x, y \in \XDom$, and for
  each $i = 1,\ldots,d$, take
	\begin{equation*}
	K_{\ell}^{i}(x,y)
  = \sum_{k \in \mc{K}(\ell), {\bf i} \in \mc{I}_i}
  \Psi_{\ell,k}^{{\bf i}}(x) \Psi_{\ell,k}^{{\bf i}}(y),
  \quad \Lambda_{\ell}^{i}(x,y)
  = \sum_{k \in \mc{K}(\ell), {\bf i} \in \mc{I}_i}
  \Psi_{\ell,k}^{{\bf i}}(x) I_i\Psi_{\ell,k}^{{\bf i}}(y).
	\end{equation*}
	By definition, $K_{\ell}^i$ is the integral operator such that
	\begin{equation*}
	P_{\ell,i}f(x)
  := \sum_{k \in \mc{K}(\ell), {\bf i} \in \mc{I}_i}
  \theta_{k\ell}^{{\bf i}}(f) \Psi_{\ell k}^{{\bf i}}(x)
  = \int f(y) K_{\ell}^{i}(x,y) \,dy,
	\end{equation*}
  and we may use the integration by parts identity~\eqref{pf:bv-wavelet1} to
  obtain
	\begin{equation}
	P_{\ell,i}f = - \int D_if(y) \Lambda_{\ell}^{i}(\cdot,y) \,dy.
	\end{equation}
  Taking absolute value, integrating over $\XDom$ and applying Fubini's
  theorem, we determine that
	\begin{equation*}
	\begin{aligned}
	\|P_{\ell,i}f\|_{L^1(\XDom)}
  & \leq \int_{\XDom} \int_{\XDom} |D_if(y)| |\Lambda_{\ell}^{i}(x,y)| \,dx \,dy \\
	& \leq \|D_if\|_{L^1(\XDom)}
  \cdot \sup_{y \in \XDom} \|\Lambda_{\ell}^{i}(\cdot,y)\|_{L^1(\XDom)} \\
	& \leq \|D_if\|_{L^1(\XDom)}
  \cdot \sup_{y \in \XDom} \sum_{k \in \mc{K}(\ell), {\bf i}
  \in \mc{I}_i} \|\Psi_{\ell k}^{{\bf i}}\|_{L^1(\XDom)}
  \cdot |I_i\Psi_{\ell,k}^{{\bf i}}(y)| \\
	& \leq 2^d 2^{-\ell}\|D_if\|_{L^1(\XDom)}.
	\end{aligned}
	\end{equation*}
	Now, for each $i = 1,\ldots,d$, take
	\begin{equation*}
	\theta_{\ell \cdot}^{i}(f)
  = \bigl(\theta_{\ell k}^{{\bf i}}(f): k \in \mc{K}(\ell),
  {\bf i} \in \mc{I}_{i}\bigr),
	\end{equation*}
  where $\mc{I}_i \subset \mc{I}$ contains all indices ${\bf i} \in \mc{I}$ for
  which ${\bf i}_j = 0$ for all $j < i$, and ${\bf i}_i = 1$. 
	
  Using the $L^2(\XDom)$ orthogonality property of the Haar basis and applying
  H\"{o}lder's inequality gives
	\begin{equation*}
	\|\theta_{\ell\cdot}^i(f)\|_1
  = \|\theta_{\ell\cdot}^i(P_{\ell,i}f)\|_1
  \leq \|P_{\ell,i}f\|_{L^1(\XDom)}
  \cdot \|\sum_{k \in \mc{K}, {\bf i} \in \mc{I}_i}
  \Psi_{\ell k}^{{\bf i}}\|_{L^{\infty}(\XDom)}
  \leq 2^{{\ell d/2}} \|P_{\ell,i}f\|_{L^1(\XDom)}.
	\end{equation*}
  and summing up over $i = 1,\ldots,d$ gives the desired upper bound on
  $\|\theta_{\ell \cdot}(f)\|_1$.
	
  Finally, a density argument will imply the same result holds for any $f \in
  \mathrm{BV}(\XDom)$. In particular, there exists a sequence $\{f_n\} \subset
  C^1(\XDom)$ for which 
	\begin{equation}
	\lim_{n \to \infty}\|f_n - f\|_{L^1(\XDom)} \to 0,
  \quad \lim_{n \to \infty} \mathrm{TV}(f_n;\XDom) = \mathrm{TV}(f;\XDom),
	\end{equation}
  see, e.g., \citet[Theorem 5.3]{evans2015measure}; consequently
	\begin{equation*}
	\|\theta_{k\cdot}(f)\|_1 = \lim_{n \to \infty}
  \|\theta_{k\cdot}(f_n)\|_1 \leq d2^d 2^{\ell(1 - d/2)}
  \cdot \lim_{n \to \infty} \mathrm{TV}(f_n;\XDom)
  = d2^d 2^{\ell(1 - d/2)}  \mathrm{TV}(f;\XDom),
	\end{equation*}
	and the proof is complete upon taking $C_1 = d2^d$.
\end{proof}

\subsubsection{Proof of Proposition~\ref{prop:wavelet-estimation-error-ub}}%
\label{ssub:proof_prop_wavelet_estimation_error_ub}
The proof of Proposition~\ref{prop:wavelet-estimation-error-ub} follows in
spirit the analysis of~\citep{donoho1995wavelet}. First we upper bound the
$\ell^2$-loss by the sum of two terms, a modulus of continuity and tail
width, then we proceed to separately upper bound each term. The primary
difference between our situation and that of~\citep{donoho1995wavelet} is
that we are working with respect to intersections of Besov bodies rather than
Besov bodies themselves.\\

For the rest of this proof, take $\wh{\theta} =
\wh{\theta}^{(\lambda,\ell^{\ast})}$ and $\theta_0 = \theta(f_0)$.\\

\noindent \underline{Step 1: Upper bound involving modulus of continuity and
tail width.} We are going to establish that
\begin{equation}
\label{eqn:optimal-recovery}
\|\wh{\theta} - \theta_0\|_2 \leq \XDom\Bigl(\Theta_{\infty}^{0,\infty}(2M)
\cap \Theta_{\infty}^{1,1}(2L), \epsilon_n + \lambda\Bigr) +
\Delta\Bigl(\Theta_{\infty}^{0,\infty}(M) \cap \Theta_{\infty}^{1,1}(L),
\ell^{\ast}\Bigr),
\end{equation}
where $\XDom(\Theta,\epsilon)$ is the modulus of continuity
\begin{equation}
\label{eqn:modulus-continuity}
\XDom(\Theta,\epsilon) := \sup_{\theta,\theta \in \Theta} \bigl\{\|\theta -
\theta'\|_{2}: \|\theta - \theta'\|_{\infty} \leq \epsilon\bigr\},
\end{equation}
and $\Delta(\Theta,\ell)$ is the tail width
\begin{equation}
\label{eqn:tail-width}
\Delta(\Theta,\ell) := \sup_{\theta \in \Theta} \|\theta - \theta_{\leq \ell}\|_2.
\end{equation}
Observe that by the triangle inequality,
\begin{equation*}
\|\wh{\theta} - \theta_0\|_2 \leq \|\wh{\theta} - \theta_{0,\leq
\ell^{\ast}}\|_2 + \|\theta_0 - \theta_{0,\leq \ell^{\ast}}\|_2.
\end{equation*}
The second term on the right hand side of the previous expression is upper
bounded by $\Delta\bigl(\Theta_{\infty}^{0,\infty}(M) \cap
\Theta_{\infty}^{1,1}(L), \ell^{\ast}\bigr)$. We turn now to upper bounding
the first term by the modulus of continuity. Observe that for each index, we
are in one of two situations: either
\begin{equation*}
|\wt{\theta}_{\ell,k}^{{\bf i}}| < \lambda
\Longrightarrow |\wh{\theta}_{\ell k}^{{\bf i}}|
= 0~~\quad\textrm{and}~~|\wh{\theta}_{\ell k}^{{\bf i}}
- \theta_{0,\ell k}^{{\bf i}}|
= |{\theta}_{0,\ell k}^{{\bf i}}| \leq \lambda + \epsilon_n, 
\end{equation*}
or 
\begin{equation*}
|\wt{\theta}_{\ell,k}^{{\bf i}}|
\geq \lambda \Longrightarrow |\theta_{0,\ell k}^{{\bf i}}|
\geq |\wt{\theta}_{\ell,k}^{{\bf i}}| 
- \epsilon_n \geq \frac{1}{2}|\wt{\theta}_{\ell k}|
= \frac{1}{2}|\wh{\theta}_{\ell k}| ~~\textrm{and}~~|\wh{\theta}_{\ell k}
- \theta_{0,\ell k}^{{\bf i}}| \leq \epsilon_n.
\end{equation*}
It follows that for every $\ell \in \mathbb{N} \cup \{0\}, k \in
\mc{K}(\ell)$, we have $|\wh{\theta}_{\ell k}| \leq 2|\theta_{0,\ell k}^{{\bf
i}}|$, and so $\wh{\theta} \in \Theta_{\infty}^{0,\infty}(2M) \cap
\Theta_{\infty}^{1,1}(2L)$. Moreover, the above calculations also confirm
that $\|\wh{\theta} - \theta_{0,\leq \ell^{\ast}}\|_{\infty} \leq \lambda +
\epsilon_n$. Thus 
\begin{equation*}
\|\wh{\theta} - \theta_{0,\leq \ell^{\ast}}\|_2
\leq \XDom\bigl(\Theta_{\infty}^{0,\infty}(2M)
  \cap \Theta_{\infty}^{1,1}(2L), \epsilon_n + \lambda),
\end{equation*}
establishing~\eqref{eqn:optimal-recovery}.\\

\noindent \underline{Step 2: Tail width.}
Fix $\theta \in \Theta_{\infty}^{0,\infty}(M) \cap \Theta_{\infty}^{1,1}(L)$.
For each $\ell \in \mathbb{N} \cup \{0\}$ we have
\begin{equation*}
\|\theta_{\ell \cdot}\|_2^2
\leq \|\theta_{\ell \cdot}\|_1 \|\theta_{\ell \cdot}\|_{\infty}
\leq C_1 2^{-\ell} L M,
\end{equation*}
with the first inequality following from~H\"{o}lder, and the second
inequality from Lemmas~\ref{lem:linf-wavelet} and~\ref{lem:bv-wavelet}.
Summing over $\ell = \ell^{\ast} + 1,\ell^{\ast} + 2,\ldots$ gives
\begin{equation*}
\|\theta - \theta_{\leq \ell^{\ast}}\|_2^2
= \sum_{\ell = \ell^{\ast} + 1}^{\infty} \|\theta_{\ell \cdot}\|_2^2
\leq 2 C_1 L M 2^{-\ell^{\ast}},
\end{equation*}
and it follows that
\begin{equation}
\label{pf:wavelet-estimation-error-ub-1}
\Delta(\Theta_{\infty}^{0,\infty}(M)
\cap \Theta_{\infty}^{1,1}(L), \ell^{\ast})
\leq \sqrt{2 C_1 L M} 2^{-\ell^{\ast}/2}.
\end{equation}
\noindent \underline{Step 3: Modulus of continuity.}
Fix $\theta,\theta' \in \Theta_{\infty}^{0,\infty}(2M) \cap
\Theta_{\infty}^{1,1}(2L)$ such that $\|\theta - \theta'\|_{\infty} \leq
\epsilon$. For each $\ell \in \mathbb{N} \cup \{0\}$ we have 
\begin{equation}
\label{pf:modulus-continuity-1}
\begin{aligned}
\|\theta_{\ell \cdot} - \theta_{\ell \cdot}'\|_2^2
& \leq \min\{2^{\ell d} \|\theta_{\ell \cdot}
- \theta_{\ell \cdot}'\|_{\infty}^2, \|\theta_{\ell \cdot}
- \theta_{\ell \cdot}'\|_{\infty} \|\theta_{\ell \cdot}
- \theta_{\ell \cdot}'\|_{1}\} \\
& \leq \min\{2^{\ell d} \epsilon^2,
2 C_1 \epsilon L 2^{\ell(d/2 - 1)}  ,4 C_1 M L 2^{-\ell} \},
\end{aligned}
\end{equation}
with the final inequality following from the triangle inequality, i.e.
$\|\theta - \theta'\| \leq \|\theta\| + \|\theta'\|$, and
Lemmas~\ref{lem:linf-wavelet} and~\ref{lem:bv-wavelet}. 

The three upper bounds in~\eqref{pf:modulus-continuity-1} divide $\mathbb{N}
\cup \{0\}$ into three zones, based on the indices $\ell$ for which each
upper bound is tightest. 
\begin{itemize}
  \item The first~\emph{dense} zone contains $\ell = 0,\ldots,\lceil
    \log_2(2C_1L/\epsilon) \cdot \frac{2}{2 + d}\rceil =: N_1$. In the dense
    zone, the extremal vectors $\theta$ are dense, i.e. everywhere non-zero. 
  \item The second~\emph{intermediate} zone contains $\ell = N_1 + 1, \ldots,
    \lceil\log_2(2M/\epsilon) \cdot \frac{2}{d}\rceil =: N_2$. In the
    intermediate zone, the extremal vectors $\theta$ are neither dense nor
    sparse. 
  \item The third~\emph{sparse} zone contains $\ell = N_2 + 1, \ldots$. In
    the sparse zone, the extremal vectors are sparse, i.e. they have exactly
    one non-zero entry.
\end{itemize}
Summing over $\ell \in \mathbb{N} \cup \{0\}$ gives
\begin{align}
\label{pf:modulus-continuity-2}
\|\theta - \theta'\|_2^2
& = \sum_{\ell = 0}^{\infty} \|\theta_{\ell \cdot}
- \theta_{\ell \cdot}'\|_2^2 \nonumber \\
& \leq \epsilon^2 \sum_{\ell = 0}^{N_1} 2^{\ell d}
+ 2 C_1 L \epsilon \sum_{\ell = N_1 + 1}^{N_2} 2^{-\ell(1 - d/2)}
+ 4MC_1L\sum_{\ell = N_2 + 1}^{\infty} 2^{-\ell},
\end{align}	
where we use the convention that $\sum_{\ell = N_1 + 1}^{N_2} \cdot = 0$ if
$N_1 \geq N_2$. There are three terms on the right hand side
of~\eqref{pf:modulus-continuity-2}, and we derive upper bounds on each.
\begin{itemize}
  \item For the first term, recalling that $\sum_{\ell = 0}^{N} a^{\ell} =
    \frac{a^{N + 1} - 1}{a - 1}$ for any $a > 1$, and noting that $\lceil
    b\rceil \leq b + 1$, we have
  \begin{equation*}
  \epsilon^2 \sum_{\ell = 0}^{N_1} 2^{\ell d} = \epsilon^2
  \Bigl(\frac{(2^d)^{N_1 + 1} - 1}{2^d  - 1}\Bigr) \leq 4^d \epsilon^2
  \Bigl(\frac{2C_1L}{\epsilon}\Bigr)^{2/d} = 4^d(2C_1)^{2/d} L^{2/d}
  \epsilon^{4/(2 + d)}.
  \end{equation*}
  \item For the second term, we obtain separate upper bounds depending on
    whether $d = 2$ or $d \geq 3$: when $d = 2$,
  \begin{equation*}
  2 C_1 L \epsilon \sum_{\ell = N_1 + 1}^{N_2} 2^{-\ell(1 - d/2)} = 2 C_1 L
  \epsilon (N_2 - (N_1 + 1))_{+} \leq 2 C_1 L \epsilon N_2 \leq 2 C_1
  L\epsilon\bigl(\log_2(2M/\epsilon) + 1\bigr),
  \end{equation*}
  and for $d \geq 3$,
  \begin{equation*}
  2 C_1 L \epsilon \sum_{\ell = N_1 + 1}^{N_2} 2^{-\ell(1 - d/2)} \leq 2 C_1
  L \epsilon 2^{(N_2 + 1)(d/2 - 1)} \leq 2^{d} C_1 L \epsilon
  \Bigl(\frac{2M}{\epsilon}\Bigr)^{1 - 2/d} = 2^{d + 1 - 2/d} C_1 L M
  (M^{-1}\epsilon)^{2/d}.
  \end{equation*}
  \item For the third term,
  \begin{equation*}
  4MC_1L\sum_{\ell = N_2 + 1}^{\infty} 2^{-\ell} = 8MC_1 L 2^{-(N_2 + 1)}
  \leq 4MC_1 L\Bigl(\frac{\epsilon}{2M}\Bigr)^{2/d} = \frac{4C_1}{2^{2/d}} L
  (M^{-1}\epsilon)^{2/d}.
  \end{equation*}
\end{itemize}
Combining these upper bounds, we conclude that for an appropriate choice of
constant \\$C_2 := 3\max\{4^d(2C_1)^{2/d}, 2C_1, 2^{d + 1 - 2/d C_1},
4C_12^{2/d}\}$,
\begin{equation*}
\|\theta - \theta'\|_2^2 \leq C_2 \cdot 
\begin{dcases*}
L \epsilon \max\{1,1/M,\log_2(M/\epsilon)\},& \quad \textrm{if $d = 2$} \\
L^{2/d}\epsilon^{4/(2 + d)} + LM\Bigl(\frac{\epsilon}{M}\Bigr)^{2/d}, & \quad \textrm{if $d \geq 3$.}
\end{dcases*}
\end{equation*}
Since this holds for all $\theta,\theta'$, it follows that the modulus of
continuity is likewise upper bounded, i.e.
\begin{equation}
\label{pf:wavelet-estimation-error-ub-2}
\Bigl\{\XDom\Bigl(\Theta_{\infty}^{0,\infty}(2M) \cap
\Theta_{\infty}^{1,1}(2L), \epsilon_n + \lambda\Bigr)\Bigr\}^2 \leq C_3 \cdot 
\begin{dcases*}
L \lambda \max\{1,1/M,\log_2(M/\lambda)\},& \quad \textrm{if $d = 2$} \\
L^{2/d}\lambda^{4/(2 + d)} + LM\Bigl(\frac{\lambda}{M}\Bigr)^{2/d},
                                          & \quad \textrm{if $d \geq 3$,}
\end{dcases*}
\end{equation}
where $C_3 := 2^{1 + 2/d}C_2$,  and we recall the assumption $\lambda \geq
2\epsilon_n$, which implies $\epsilon_n + \lambda \leq 2\lambda$.
Combining~\eqref{eqn:optimal-recovery},
\eqref{pf:wavelet-estimation-error-ub-1} and
\eqref{pf:wavelet-estimation-error-ub-2} gives the desired upper
bound~\eqref{eqn:wavelet-estimation-error-ub}.
\qed

\subsubsection{Proof of Lemma~\ref{lem:wavelet-convergence}}%
\label{ssub:proof_lemma_wavelet_convergence}
We are going to show that
\begin{equation}
\label{pf:wavelet-convergence-1}
\mathbb{P}(\bigl|\wt{\theta}_{\ell k}^{{\bf i}}(y_{1:n})
- \theta_{\ell k}^{{\bf i}}(f_0)\bigr| \geq \delta_n) \leq \frac{3\delta}{n}.
\end{equation}
From~\eqref{pf:wavelet-convergence-1}, taking a union bound over all $\ell =
0,\ldots,\log_2(n)/d$, $k \in \mc{K}(\ell)$ and ${\bf i} \in \mc{I}$ implies
the claim with $C_4 := 2^{(d + 2)}$, noting that $|\mc{I}| = 2^d - 1$ and so
\begin{equation*}
\sum_{\ell = 0}^{\frac{1}{d}\log_2(n)} |\mc{I}||\mc{K}(\ell)|
\leq 2^d\sum_{\ell = 0}^{\frac{1}{d}\log_2(n)} 2^{\ell} \leq 2^{(d + 2)}n.
\end{equation*}	
It remains to show~\eqref{pf:wavelet-convergence-1}. For ease of notation, in
the remainder of this proof we write $\wt{\theta}(\cdot) = \wt{\theta}_{\ell
k}^{{\bf i}}(\cdot)$ and $\theta(\cdot) = \theta_{\ell k}^{{\bf i}}(\cdot)$.
Decomposing $y_{i} = f_0(x_{i}) + z_{i}$, we have
\begin{equation}
\label{pf:wavelet-convergence-2}
\bigl|\wt{\theta}(y_{1:n}) - \theta(f_0)\bigr|
\leq \bigl|\wt{\theta}(y_{1:n}) - \wt{\theta}(f_0)\bigr|
+ \bigl|\wt{\theta}(f_0) - \theta(f_0)\bigr|,
\end{equation}
and we now proceed to give high-probability upper bounds on each term
in~\eqref{pf:wavelet-convergence-2}. To do so, recall Bernstein's inequality:
if $x_1,\ldots,x_n$ are independent, zero-mean random variables such that
$|x_i| \leq b$ and $\mathbb{E}[x_i^2] \leq \sigma^2$, for all $i =
1,\ldots,n$, then
\begin{equation}
\label{eqn:bernstein}
\mathbb{P}\Biggl(\biggl|\frac{1}{n} \sum_{i = 1}^{n} x_i\biggr|
\geq t\Biggr)
\leq 2 \exp\biggl(-\frac{\frac{1}{2}nt^2}{\sigma^2 + \frac{1}{3}bt}\biggr).
\end{equation}
\underline{Term 1 in~\eqref{pf:wavelet-convergence-2}: response noise.}
To upper bound $|\wt{\theta}(y_{1:n}) - \wt{\theta}(f_0)| =
|\wt{\theta}(z_{1:n})|$, we condition on the event
\begin{equation*}
\mc{Z} = \bigl\{\max_{i = 1,\ldots,n} z_i \leq \sqrt{4 \log(2n/\delta)}\bigr\},
\end{equation*}
which occurs with probability at least $1 - \delta/n$. Note that the
following statements hold conditional on $\mc{Z}$:
\begin{enumerate}
  \item The noise variables $z_i$ are conditionally independent, $z_i \perp
    z_j | \mc{Z}$, and have conditional mean $\mathbb{E}[z_i|\mc{Z}] = 0$.
  \item The conditional variance of $z_i \Psi_{\ell k}^{{\bf i}}(x_i)$ is upper bounded,
  \begin{equation*}
  \mathrm{Var}\bigl(z_i \Psi_{\ell k}^{{\bf i}}(x_i)|\mc{Z}\bigr)
  \leq \mathbb{E}\bigl[z_i^2
    \bigl(\Psi_{\ell k}^{{\bf i}}(x_i)\bigr)^2|\mc{Z}\bigr]
  \leq 2 \log(2n/\delta) \mathbb{E}\bigl[
    \bigl(\Psi_{\ell k}^{{\bf i}}(x_i)\bigr)^2|\mc{Z}
  \bigr] = 2 \log(2n/\delta),
  \end{equation*}
  with the last equality following from the $L^2(\XDom)$ normalization of
  $\Psi_{\ell k}^{{\bf i}}$, along with the independence of $x_i$ and $z_i$.
  \item For each $i = 1,\ldots,n$,
  \begin{equation*}
  |z_i \Psi_{\ell k}^{{\bf i}}(x_i)|
  \leq \sqrt{2 \log(2n/\delta)} 2^{\ell d/2} \leq \sqrt{2 n \log(2n/\delta)}.
  \end{equation*}
\end{enumerate}
We may therefore apply Bernstein's inequality~\eqref{eqn:bernstein}
conditional on $\mc{Z}$, and conclude that for $\delta_{1,n} = 4
\log^{3/2}(2n/\delta)/\sqrt{n}$, 
\begin{equation}
\label{pf:wavelet-convergence-3}
\mathbb{P}\Bigl(|\wt{\theta}(z_{1:n})|
\geq \delta_{1,n}\Bigr) \leq \mathbb{P}(\mc{Z}^c)
+ \mathbb{P}(|\wt{\theta}(z_{1:n})|
\geq \delta_{1,n} |\mc{Z}) \leq \frac{2\delta}{n}.
\end{equation}
\underline{Term 2 in~\eqref{pf:wavelet-convergence-2}: empirical
coefficient.}
To upper bound $|\wt{\theta}(f_0)- \theta(f_0)|$, observe that the random
variables $\Psi_{\ell k}^{{\bf}}(x_i) f_0(x_i) - \theta(f_0)$ for $i =
1,\ldots,n$ are mean-zero and independent. Additionally,
\begin{equation*}
\mathrm{Var}\bigl(f_0(x_i) \Psi_{\ell k}^{{\bf i}}(x_i)\bigr) \leq M^2
\end{equation*}
and 
\begin{equation*}
|f_0(x) \Psi_{\ell k}^{{\bf i}}(x)| \leq 2^{\ell d/2} M.
\end{equation*}
Applying Bernstein's inequality~\eqref{eqn:bernstein} again, unconditionally
this time, we conclude that for $\delta_{2,n} = \sqrt{12} M
\sqrt{\log(2n/\delta)}/\sqrt{n}$, 
\begin{equation}
\label{pf:wavelet-convergence-4}
\mathbb{P}\Bigl(|\wt{\theta}(f_0)- \theta(f_0)|
\geq \delta_{2,n}\Bigr) \leq \frac{\delta}{n}.
\end{equation}
Together~\eqref{pf:wavelet-convergence-3}
and~\eqref{pf:wavelet-convergence-4} imply~\eqref{pf:wavelet-convergence-1},
noting that $\delta_n = \delta_{1,n} + \delta_{2,n}$.
\qed

\subsubsection{Proof of Lemma~\ref{lem:probability-to-expectation}}%
\label{ssub:proof_lem_prob_to_expectation}
Set $f(\delta) = \sum_{k = 1}^{K}A_k \log^{a_k}(b_k/\delta)$. We use the
identity
\begin{equation}
\label{pf:probability-to-expectation-1}
\mathbb{E}[X] = \int_{0}^{\infty}\mathbb{P}(X > t) \,dt \leq f(1) + \int_{f(1)}^{\infty} \mathbb{P}(X > t) \,dt.
\end{equation}
Note $f^{-1}(f(1)) = 1$, $f^{-1}(\infty) = 0$ and
$f'(\delta) = \sum_{k = 1}^{K} A_k a_k (\log b/\delta)^{a_k - 1}/ \delta$.
U-substitution with $t = f(\delta)$ gives
\begin{align*}
\mathbb{E}[X] & = - \int_{0}^{1}
\mathbb{P}(X > f(\delta)) f'(\delta) \,d\delta \\
& = \sum_{k = 1}^{K} a_kA_k \int_{0}^{1} \mathbb{P}(X > f(\delta))
\frac{(\log b_k/\delta)^{a_k - 1}}{\delta} \,d\delta \\
& \leq B \sum_{k = 1}^{K} a_kA_k \Bigl(\int_{0}^1 (\log b_k/\delta)^{a_k - 1}
\,d\delta \Bigr) \\
& \leq B \sum_{k = 1}^{K} a_kA_k \max\{2^{a_k},1\} \Bigl((\log b_k)^{a_k - 1}
+ \int_{0}^1 (\log 1/\delta)^{a_k - 1} \,d\delta\Bigr) \\
& =  B \sum_{k = 1}^{K} a_kA_k \max\{2^{a_k},1\} \Bigl((\log b_k)^{a_k - 1}
+ \Gamma(a_k)\Bigr),
\end{align*}
where the last inequality follows by the algebraic fact $(x + y)^{a} \leq
2^{a}(x^a + y^a)$ for all $a > 0$, and the last equality comes from
substituting $h = \log(1/\delta)$. Combining this
with~\eqref{pf:probability-to-expectation-1}, we conclude that
\begin{align*}
\mathbb{E}[X] & \leq \sum_{k = 1}^{K} A_k
(\log b_k)^{a_k} + B \sum_{k = 1}^{K} a_kA_k \max\{2^{a_k},1\}
\Bigl((\log b_k)^{a_k - 1} + \Gamma(a_k)\Bigr) \\
& \leq C_5\sum_{k = 1}^{K} A_k (\log b_k)^{a_k}
\end{align*}
for $C_5 := 2B \max_{k = 1,\ldots,K} \{a_k 2^{a_k}\}$.
\qed

\end{document}